\newcommand{\B}{\mathcal{{B}}}
\newcommand{\R}{\mathbb{{R}}}
\newcommand{\sR}{\mathbb{{R}}}
\newcommand{\E}{\mathbb{{E}}}
\newcommand{\sN}{\mathbb{{N}}}
\newcommand{\I}{\mathcal{{I}}}
\newcommand{\GANO}{\textnormal{{GANO}}}
\newcommand{\Ft}{F_{\theta}}
\newcommand{\wskew}{w_{\text{skew}}}
\newcommand{\wvar}{w_{\text{var}}}
\newcommand{\wtotal}{w_{\text{total}}}
\newcommand{\skewfn}{\text{skew}}
\newcommand{\varfn}{\text{var}}
\newcommand{\vertrule}[1][1ex]{\,\rule{.6pt}{#1}\,}
\def\eqref#1{Equation~\ref{#1}}
\def\secref#1{Section~\ref{#1}}
\def\twosecrefs#1#2{Sections \ref{#1} and \ref{#2}}
\def\appref#1{Appendix~\ref{#1}}
\newenvironment{updaterequired}[1][red]{\par\color{#1}}{\par}
\newcommand{\updated}[1]{\textcolor{black}{#1}}
\begin{document}

\title{Score-based Diffusion Models in Function Space}

\author{\name Jae Hyun Lim\thanks{Equal contribution.}~\thanks{Majority of the work was completed while the author was at NVIDIA.}
        \email limjaehy@mila.quebec \\
        \addr Universit{\'e} de Montr{\'e}al
        \AND
        \name Nikola B. Kovachki\footnotemark[1]
        \email nkovachki@nvidia.com \\
        \addr NVIDIA Corporation
        \AND
        \name Ricardo Baptista\footnotemark[1]
        \email rsb@caltech.edu \\
        \addr California Institute of Technology
        \AND
        \name Christopher Beckham
        \email christopher.beckham@mila.quebec \\
        \addr Polytechnique Montr{\'e}al
        \AND
        \name Kamyar Azizzadenesheli
        \email kamyara@nvidia.com \\
        \addr NVIDIA Corporation
        \AND
        \name Jean Kossaifi
        \email jkossaifi@nvidia.com \\
        \addr NVIDIA Corporation
        \AND
        \name Vikram Voleti
        \email vikram.voleti@umontreal.ca \\
        \addr Universit{\'e} de Montr{\'e}al
        \AND
        \name Jiaming Song
        \email jiamings@nvidia.com \\
        \addr NVIDIA Corporation
        \AND
        \name Karsten Kreis
        \email kkreis@nvidia.com \\
        \addr NVIDIA Corporation
        \AND
        \name Jan Kautz
        \email jkautz@nvidia.com \\
        \addr NVIDIA Corporation
        \AND
        \name Christopher Pal
        \email christopher.pal@mila.quebec \\
        \addr Polytechnique Montr{\'e}al \& Canada CIFAR AI Chair
        \AND
        \name Arash Vahdat
        \email avahdat@nvidia.com \\
        \addr NVIDIA Corporation
        \AND
        \name Anima Anandkumar
        \email anima@caltech.edu \\
        \addr NVIDIA Corporation \& California Institute of Technology
}

\editor{My editor}

\maketitle

\begin{abstract}
Diffusion models have recently emerged as a powerful framework for generative modeling. They consist of a forward process that perturbs input data with Gaussian white noise and a reverse process that learns a score function to generate samples by denoising. Despite their tremendous success, they are mostly formulated on finite-dimensional spaces, e.g., Euclidean, limiting their applications to many domains where the data has a functional form, such as in scientific computing and 3D geometric data analysis. This work introduces a mathematically rigorous framework called \textit{Denoising Diffusion Operators (DDOs)} for training diffusion models in function space. In DDOs, the forward process perturbs input functions gradually using a Gaussian process. The generative process is formulated by \updated{a function-valued annealed Langevin dynamic}. % integrating a function-valued Langevin dynamic
% A neural operator is used to estimate a score operator and samples are generated using function-valued annealed Langevin dynamics. 
Our approach requires an appropriate notion of the score for the perturbed data distribution, which we obtain by generalizing denoising score matching to function spaces that can be infinite-dimensional. We show that the corresponding discretized algorithm generates accurate samples at a fixed cost independent of the data resolution. We theoretically and numerically verify the applicability of our approach on a set of function-valued problems, including generating solutions to the Navier-Stokes equation viewed as the push-forward distribution of forcings from a Gaussian Random Field (GRF), as well as volcano InSAR and MNIST-SDF.\footnotemark  %\footnote{The code for this project is publicly available at \href{https://github.com/lim0606/ddo}{https://github.com/lim0606/ddo}}.
\end{abstract}

% \footnotetext{The code for this project is publicly available at \href{https://github.com/lim0606/ddo}{https://github.com/lim0606/ddo}}

\begin{keywords}
Diffusion models, Score matching, Generative models, Operator learning, Function spaces
\end{keywords}

\section{Introduction}
\label{sec:intro}
Diffusion models (DMs)~\citep{song2020score,ho2020denoising,sohl-dickstein2015deep} have appeared as a highly successful generative approach for various domains, including images~\citep{saharia2022photorealistic}, 3D data~\citep{poole2022dreamfusion}, audio~\citep{kong2020diffwave}, video~\citep{voleti2022MCVD}, machine learning security~\citep{nie2022DiffPure}, natural language~\citep{li2022diffusion}, proteins~\citep{wu2022protein}, and molecules~\citep{xu2022geodiff}. 
These models consist of two processes: A forward diffusion process that corrupts input data by gradually adding white noise and a reverse generative process that proceeds by iterative denoising. 

Typically, DMs operate on a finite-dimensional space, e.g. \(\sR^n\), limiting their application in domains where the data is represented by infinite-dimensional objects, e.g. continuous functions. For example, in weather forecasting, data samples are functions of temperature, pressure, and wind, defined on the surface of the globe~\citep{pathak2022fourcastnet}. This also extends to seismology, geophysics, oceanography, aerodynamic vehicle design, and fluid dynamics, where we interact with functional data governed by partial differential equations (PDE)~\citep{yang2021seismic,wen2023real}. Additionally, in 3D vision and graphics applications, data is represented as functions in the form of radiance fields~\citep{mildenhall2021nerf} or signed distance functions (SDF)~\citep{park2019deepsdf}.

\begin{figure*}
\centering
\includegraphics[width=1.0\textwidth]{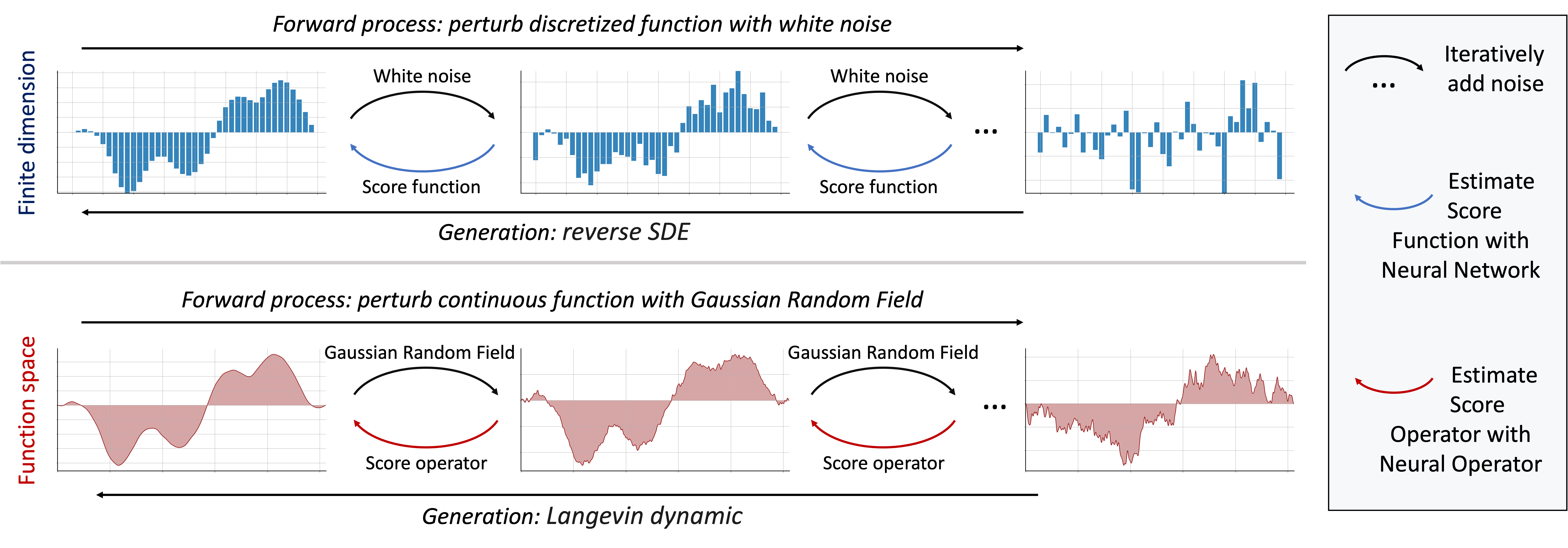}
\caption{\textbf{Overview of our approach}. While in the finite-dimensional case, inputs are discretized, we work directly in function space, on continuous inputs, here 1D functions on $\sR$. Noise is first added to the training samples during the forward process. A Neural Operator is used to estimate a score operator~(Sec.~\ref{subsec:cond_score_matching}) by minimizing the simplified loss in Eq.~\eqref{eq:conditional_score_matching_simple}. Samples are generated using Langevin dynamics~(Sec.~\ref{subsec:lagevin_dynamics}). %An important component of our approach is the type of noise used during the forward process~(Sec.~\ref{sec:kl_expansion}) according to Eq.~\eqref{eq:sampling}. 
Using structured noise enables efficient learning in function space while white noise does not as the model capacity required grows with the resolution.}%, making it impossible to have resolution invariance.}
\label{subfig:structured_noise}
\end{figure*}

Recent attempts at applying DMs to functional data can be grouped into two categories: \textbf{(i)} the application of established algorithms on a discretization of functional data on \(\sR^n\) i.e. conditioning on point-wise values. While this approach can be made to work well at a fixed discretization, the models do not immediately transfer to %it will usually break down at 
variable discretizations of the data, and will not scale to higher resolutions~\citep{dutordoir2022neural, zhou20213d}, %Additionally, this approach often requires the network size to grow as the input resolution increases. 
\textbf{(ii)} the mapping of input functions to a finite-dimensional latent space and modeling the latent embedding of the data with DMs \citep{dupont22afuncta,phillips2022spectral,hui2022neural,bautista2022gaudi,chou2022diffusionsdf}. Such approaches rely on efficient transformations of the data into compactly representable spaces, which limits their general applicability and are not guaranteed to be discretization-independent/convergent~\citep{kovachki2021neural}.  %The performance of these techniques depends on the ability to map continuous infinite-dimensional functions to a compact latent space. 
%\textbf{(iii)}  work with functional data but condition on a fixed, finite number of point-wise observations 

The recently proposed infinite-dimensional diffusion model in~\cite{kerrigan2022diffusion} is closely related to our work. They consider a Gaussian noise corruption process in Hilbert space and derive a loss function to approximate the conditional mean of the reverse process. While the loss function is formulated using infinite-dimensional measures, the difference between the true and approximate means does not satisfy the strict range conditions that are required to have non-singular measures, and thus yields a loss that is almost surely infinite. Numerically, this effect can only be seen through progressive grid refinement which the work does not consider. For further discussion, see \appref{sec:noise_regularity}. % in function space (we show this in \appref{sec:kl_expansion}). While this effect is not seen at a fixed resolution, the work does not consider progressive grid refinement studies to establish resolution invariance. Indeed, in Figure~\ref{fig:kerrigan_loss}, we show the loss function grows as the grid is refined.

Developing a diffusion-based generative framework for functions requires solving several technical challenges. First, instead of the commonly used Gaussian white noise, a new function-valued corruption process must be introduced to gradually map the data functions into random functions. Second, sample generation requires an appropriate notion of the score since infinite-dimensional distributions do not have standard probability density functions (pdf). % logarithmic derivative of the corrupted data measure, termed as \textit{score operator}, so we need a proper objective function to train the operator. 
 Finally, approximating the score requires both careful analysis and generalization of finite-dimensional techniques in order to obtain a well-defined optimization problem as well as approximation architectures that are consistent as mappings between function spaces.
%the neural networks of the parameterized score operator models should be invariant to the discretization. By addressing all three challenges, the measure induced by the generative model will be properly defined wrt a reference measure.

\textbf{In our approach}, we introduce a rigorous framework termed denoising diffusion operators (DDOs) that addresses these challenges. DDOs use a Hilbert space-valued Gaussian process to perturb the input data. % see \eqref{eq:rough_perturbation}. 
To define an appropriate notion of the score, we first consider densities with respect to a Gaussian measure (as opposed to the Lebesgue measure). 
%, see Theorem~\ref{thm:equivalence_informal} and equation \eqref{eq:v_density}. 
The derivative of this density for certain perturbations of the Gaussian measure  %we define an appropriate notion of the score by 
defines the score \emph{operator}. %is well defined for certain additive perturbations to %along certain differentiability properties of Gaussian measures, 
%see~\eqref{eq:score_def}. 
To approximate this score in practice, we generalize the denoising score matching objective of~\cite{pascal2011aconnection} to our setting, %in  Theorem~\ref{thm:conditional_score_informal}
and show how samples can be generated using Langevin dynamics with a learned score operator. % see~\eqref{eq:langevin}.  %\secref{subsec:multiple_noise_scales} makes connections and provides infinite-dimensional generalizations to several widely used DM frameworks. 

For learning the score, we utilize the neural operators~\citep{li2020neural, li2020fourier, kovachki2021neural}, which provide a consistent architecture in function space. We theoretically prove that approximating the score operator using neural operators is feasible. Figure~\ref{subfig:structured_noise} provides an overview of our approach. By working directly in the function space and discretizing only later for the purposes of computation, we obtain scalable and discretization-independent algorithms for generative models in function spaces.

Our primary contributions are summarized below:
% In this work, we extend the notation of score matching to infinite dimensions and formulate a sampling-based algorithm which, when implemented with neural operators, yields discretization invariant results (see Figure~\ref{fig:gaussian_mixture_spectrum}). 
% Our primary contributions are the following.
% \vspace{-1em}
\begin{enumerate} %\itemsep0pt
    \item We develop a mathematically rigorous framework for denoising score matching with function-valued data called DDO by formulating and extending all necessary theory to the abstract Hilbert space setting. % Our approach extends the fundamental result in~\cite{pascal2011aconnection} by modifying the forward operator and using trace-class noise that lives in the Hilbert space of the data.
    %Our framework reveals the fundamental interplay between the regularity of data and noise in diffusion models.
    \item We propose a diffusion model for incrementally sampling from the data distribution by discretizing an infinite-dimensional Langevin equation with a hierarchy of noise corruption Gaussian processes, generalizing several popular finite-dimensional frameworks. % generalizing the algorithm of \cite{song2019generative} to infinite dimensions. We demonstrate how this algorithm can be made consistent with minimal assumptions on the data distribution.
    \item We empirically show DDO learns distributions of function-valued data on various datasets, including generating solutions to the Navier-Stokes equation viewed as the push-forward distribution of forcings from a Gaussian Random Field (GRF), as well as volcano Interferometric Synthetic Aperture Radar (InSAR)~\citep{rosen2012insar} and MNIST-SDF~\citep{sitzmann2019metasdf}.
    \item We empirically verify DDO's invariance to spatial discretization with fixed model capacity, and demonstrate accurate sample generation of a non-Gaussian distribution from the pushforward of random forcings from a GRF under the Navier-Stokes solution operator. % such as those arising as pushforwards of the solution operator to nonlinear PDEs as well as high-resolution images.
    %\item (to be completed) We compare and show superior performance to previous works in infinite dimension~(\GANO) which utilize deterministic transport and adversarial training. 
\end{enumerate}

\section{Related Works}
\label{sec:relatedworks}

Our approach is broadly related to generative models formulated directly in function space instead of finite-dimensional Euclidean space~\citep{rahman2022generative}. Approaches for dealing with functional data include Gaussian processes~\citep{rasmussen2004gaussian}, and neural operators~\citep{li2020neural, li2020fourier, nelsen2021random}. These methods aim to define deep learning models in function spaces, generalizing traditional neural networks.

In the context of generative diffusion models, this complication enters the model complexity and the number of time steps that typically need to grow with the data dimension. To improve sample quality and reduce the cost of sample generation in high dimensions, yet finite, several methods propose to use diffusion models in transformed spaces. These include latent spaces~\citep{vahdat2021score}, hierarchically defined subspaces~\citep{jing2022subspace}, spectral decompositions~\citep{phillips2022spectral}, and extend to multi-scale wavelet transformations~\citep{guth2022wavelet}. Compared to score-based models operating in the original domain, the latter approach shows that the time complexity (i.e., the number of time steps required to achieve a fixed error) grows linearly with the image dimension. However, these models are not formulated in an infinite-dimensional space.

Neural Processes (NP) \citep{garnelo2018neural, kim2019attentive, bruinsma2021gaussian} aim to model distributions consistent with arbitrary discretizations, 
%assuming functional input can be represented in finite-dimensional latent variables.
and \cite{dutordoir2022neural} have examined their extension to Neural Diffusion Processes (NDP). While the NP framework can process arbitrary sets of inputs, they inherit the limitations of using finite-dimensional latent variables; thus, consistency breaks in practice as the resolution grows. Moreover, the induced model distributions in NDP do not exist in function space due to independent noise in the noise process.

An earlier attempt to learn measures on function spaces deploys sequences of delta functions to fully memorize the data points~\citep{craswell1965density}. Such a method is based on pure memorization and ignores possible underlying structures of the data measure. Kernel density estimation was proposed as a heuristic approach in infinite dimensional spaces~\citep{dabo2004kernel}, though requires smoothness, extra regularity, and continuity with respect to an unspecified measure~\citep{dabo2004kernel}. Alternative methods treat a discretized function as a point cloud and aim to maximize the likelihood of the point values~\citep{garnelo2018neural}, similarly to NPs.

Leveraging neural operators,~\cite{rahman2022generative} propose the generative adversarial neural operator (\GANO) for learning function data distribution. 
% As such, it learns a mapping between infinite dimension spaces, enabling learning of the distribution in function spaces. 
%
As such, it enables learning of the distribution in function spaces through learning a mapping from infinite dimension spaces of Gaussian Random Fields (GRFs) to distribution in function spaces data. However, \GANO~ inherently suffers from the major drawbacks of adversarial training, such as limited stability, optimization, and flexibility, as pointed out in prior works~\citep{arjovsky2017towards, lin2018pacgan, song2019generative, berard2019closer}.

The use of GRFs in denoising diffusion models has been discussed but is yet to be explored in the domain of function spaces~\citep{voleti2022score}. The generative adversarial neural network framework~\citep{goodfellow2014generative} was recently used in conjunction with implicit neural network representations of data \citep{dupont2021generative, anokhin2021image, skorokhodov2021adversarial, chen2021learning}. 
These methods are not discretization invariant and fail as the discretization of the data changes~\citep{rahman2022generative}.
\cite{dupont22afuncta} embeds discretized data in function space using implicit neural network representations, but it still inherits the drawbacks of using finite-dimensional latent spaces to encode infinite-dimensional data.

\updated{
% \magenta{[jae: this will address the comment 4 of R2]}
Recently, several continuous-time diffusion models in function space have also been proposed by \citet{pidstrigach2023infinite, baldassari2023conditional, hagemann2023multilevel}. These works define a forward and backward process by a pair of stochastic differential equations (SDEs) where the score operator is given as a conditional expectation, depending on the forward process. Our work offers an alternative viewpoint with the score defined as a logarithmic derivative of a perturbed measure and sampling done by a Langevin process and its annealed version. This allows us to make clearly interpretable assumptions on the data measure that are needed to guarantee convergence in the infinite-dimensional setting and furthermore allows us to study the interplay between the regularity of noise and the data.
}

\section{Background: Denoising Score Matching in Finite Dimensions}
\label{subsec:background}
Historically, \textit{score matching} refers to the notion of approximating the score (i.e., the logarithmic derivative) of some unknown or 
computationally intractable distribution for the purposes of sampling, testing, or density estimation.
Let \(p\colon \sR^d \to \sR\) denote the pdf of a $d$-dimensional distribution and let \(s_\theta : \sR^d \to \sR^d\) be a parametric mapping with 
parameters \(\theta \in \sR^m\). Ideally, score matching aims to solve
\begin{equation}
    \label{eq:finitedim_scorematching}
    \min_{\theta \in \sR^p} \E_{x \sim p(x)} \|s_\theta (x) - \nabla \log p (x) \|^2_2.
\end{equation}

In many applications, we are only given samples from \(p\), but do not know its analytic form. Therefore, solving~\eqref{eq:finitedim_scorematching}
is intractable. Using integration by parts on the objective, \cite{hyvarinen05a} showed that the minimizer of~\eqref{eq:finitedim_scorematching} can be found by
optimizing

\begin{equation}
    \label{eq:finitedim_derivativescorematching}
    \min_{\theta \in \sR^p} \E_{x \sim p(x)} \left[\text{Tr} \big ( \nabla s_\theta (x) \big ) + \|s_\theta (x) \|^2_2 \right].
\end{equation}
Remarkably, the objective in~\eqref{eq:finitedim_derivativescorematching} %does not depend on the analytic form of \(p\) and can 
can be minimized using a Monte-Carlo approximation to the expectation. %without having access to the analytic form of \(p\). 
It was later noted in \cite{pascal2011aconnection} that, up to a perturbation 
of the data distribution, the optimization problem is equivalent to \textit{denoising score matching} where the objective depends on the analytically tractable 
score of the conditional perturbed distribution and no derivatives of the approximating function. In particular, for a Gaussian perturbation of variance $\sigma^2$,
\eqref{eq:finitedim_scorematching} is equivalent to optimizing
\begin{equation}
    \label{eq:finitedim_denoisingscorematching}
    \min_{\theta \in \sR^p} \E_{\eta \sim N(0, \sigma^2 I)} \E_{x \sim p(x)} \left\|\frac{\eta}{\sigma^2} + s_\theta(x + \eta)\right\|_2^2,
\end{equation}
where \(s_\theta\) is now an approximation to the score of the perturbed distribution.
Since \eqref{eq:finitedim_denoisingscorematching} does not require knowledge of \(p\) or computation of any derivatives, denoising score matching 
is attractive for problems in high dimensions where computing derivatives is costly. Furthermore, it is argued in~\cite{song2019generative}, that for many practical applications, 
for example, photorealistic image generation, \(p\) is supported on a lower dimensional manifold and thus approximating the score on the
ambient space can be unstable. Thus perturbing the data distribution gives \emph{both} a more computationally tractable optimization problem and acts as a regularizer by
spreading the support of \(p\) to the entire space. 

We build on this framework by generalizing the notion of score and 
denoising score matching to infinite dimensions. By working directly in the infinite-dimensional setting, we derive a methodology that is consistent and generalizable across  different discretizations of the data.

\section{Denoising Diffusion Operators (DDO)}
\label{sec:methodology}
\label{subsec:cond_score_matching}

We introduce DDO to perform denoising score matching in function space. We work on an infinite-dimensional, real, separable Hilbert space \((H, \langle \cdot, \cdot \rangle, \| \cdot \| )\)
with the Borel \(\sigma\)-algebra of measurable sets denoted \(\B(H)\)\footnote{While a more general formulation on Banach or even 
locally convex spaces is possible, explicit computations for Gaussian measures on Hilbert spaces are more readily available 
and thus we consider this setting.}. Since there is no Lebesgue measure in infinite dimensions, there is no standard notion of 
a probability density; we therefore adopt the more general, measure-theoretic notation to introduce our setting. We denote
by \(\mu\) a probability measure on \(\B(H)\) which we will call our \textit{data measure}. In particular, we assume to have
a dataset of samples \(\{u_j\}_{j=1}^N\) where \(u_j \sim \mu\) are i.i.d. random variables. These samples are considered to be
infinite-dimensional objects, i.e. functions or infinite sequences, before any finite-dimensional discretization is done for the purposes of 
computation. 

For a corruption process, we consider additive Gaussian perturbations to the data in the form of function-valued GRF perturbations. This choice is motivated by the availability of analytical results related to 
Gaussian measures, the ease and efficiency of sampling Gaussians in infinite dimensions by means of the Karhunen-Lo\'{e}ve expansion~\citep{lord2014anintoroduction} (see \appref{sec:kl_expansion}), and the 
plethora of empirically successful results for denoising score matching with Gaussians in finite dimensions~\citep{song2020denoising, ho2020denoising}. We employ the centered 
Gaussian measure on \(H\) denoted by \(\mu_0 = N(0,C)\) with a covariance operator \(C:H \to H\) to be self-adjoint, non-negative, and 
trace-class (nuclear). Indeed, these conditions on \(C\) are necessary and sufficient for \(\mu_0\) to be Gaussian on \(H\)~\citep{da1992stochastic}. We note that since
trace-class implies compact, the identity covariance operator is ruled-out  as \(H\) is infinite-dimensional. In particular, white noise does not live in
\(H\) but must rather be defined on a larger space \citep{da1992stochastic}. We show empirically that by working with noise defined on \(H\) our method remains
discretizationally invariant with respect to the data. On the other hand, working with white noise breaks this property precisely because white noise samples 
are not regular compared to the elements of \(H\).

\subsection{\updated{Denoising score matching on function spaces}}
\label{subsec:denoisings_score_matching}

We consider the perturbation to the data samples
\begin{equation}
    \label{eq:rough_perturbation}
    v = u + \eta, \qquad u \sim \mu, \; \eta \sim \mu_0,
\end{equation}
with \(u \perp \eta\) and denote by \(\nu\) the probability measure induced by
the random variable \(v\) i.e. the convolution \(\nu = \mu * \mu_0\); see \appref{sub:conv_measure} for more details. We show in Lemma~\ref{lemma:wasserstein_approx}, that when the noise \(\eta\) is small in an approximate sense,
\(\mu\) and \(\nu\) are close as measures in the Wasserstein metric. It is therefore reasonable to approximate \(\nu\) instead of \(\mu\) as is done in denoising score matching
since in the limit of vanishing noise, the two become identical.

We define the score of \(\nu\) via an appropriate notion of density, which is %The density will be defined with respect to a
defined with respect to a reference measure. % by applying the Radon-Nikodym theorem. % with respect to which a density can be defined.
In infinite dimensions, much work has been 
focused on studying densities defined with respect to Gaussian measures (as opposed to the Lebesgue measure in finite dimensions) 
as doing so has natural applications in statistics, inverse problems, and quantum field theory \citep{ghosal2017fundamentals, stuart2010inverse, kupiainen2016quantum}. We also 
take this approach as it leads to a well-defined notion of the score that is analytically tractable and comes with an associated 
Langevin equation which can be solved to produce samples from \(\nu\). We choose the reference  to be perturbing measure \(\mu_0\), 
which is natural in this setting since the conditional \(v|u\) is Gaussian with the same covariance as \(\mu_0\).
A density %, with respect to \(\mu_0\), 
is then be obtained by the Radon–Nikodym Theorem under the assumption that $\nu$ is absolutely continuous with respect to $\mu_0$, i.e., \(\nu \ll \mu_0\)~\citep{halmos1976measure}. 

To satisfy the absolute continuity condition with respect to Gaussian $\mu_0$, %Since \(\mu_0\) is Gaussian, 
it is reasonable to expect that the data measure \(\mu\) must satisfy certain assumptions. %related to the Gaussianity of \(\mu_0\) for the required absolute continuity to hold. 
The assumption we make is that \(\mu(H_{\mu_0}) = 1\), i.e., $\mu$ is fully supported on the Cameron-Martin space of \(\mu_0\) that is denoted by \(H_{\mu_0} \coloneqq C^{1/2}(H)\). Cameron-Martin spaces %can be defined more generally than what is given above, (the definitions coincide in the Hilbert space setting) 
play a crucial role in the theory of Gaussian measures as they are an invariant of the measure that gives it 
meaning outside the ambient space \(H\)~\citep{bogachev2015gaussian}. We remark that this assumption can make precise the \say{manifold hypothesis} in~\cite{song2019generative} that is
used to justify the perturbation since \(H_{\mu_0}\) is a proper subspace of \(H\) and, in fact, \(\mu_0 (H_{\mu_0}) = 0\); see Section 6 of~\cite{stuart2010inverse} for more details. In particular,
data samples lie on a measure-zero set of the perturbing measure. The addition of noise thereby spreads out the samples to the whole space. We note that when this
assumption is not satisfied, we can still apply our 
framework using a different form of the perturbation in~\eqref{eq:rough_perturbation}; see \secref{subsec:smoothing_operators}. \updated{Some example measure that satisfies \(\mu(H_{\mu_0}) = 1\) are listed in \appref{sec:further_examples}.}
%the applicability of our 
%framework is not deterred as we can consider a different form of the perturbation \eqref{eq:rough_perturbation}
%to remove it (\secref{subsec:smoothing_operators}).

Under the condition above on the data perturbations, we can now state the following theorem.
\begin{theorem}[Measure Equivalence]
\label{thm:equivalence_informal}
The perturbed measure \(\nu\) and the centered Gaussian \(\mu_0\) are equivalent
in the sense of measures, which we denote by \(\nu \sim \mu_0\).
\end{theorem}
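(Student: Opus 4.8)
The plan is to establish the two one-sided absolute-continuity relations $\nu \ll \mu_0$ and $\mu_0 \ll \nu$; their conjunction is precisely the asserted equivalence $\nu \sim \mu_0$. The workhorse is the Cameron--Martin theorem: for $h \in H_{\mu_0}$ the shifted Gaussian $\mu_0(\cdot - h)$ (the law of $h + \eta$ with $\eta \sim \mu_0$) is equivalent to $\mu_0$, with Radon--Nikodym derivative $\frac{d\mu_0(\cdot - h)}{d\mu_0}(x) = \exp\!\big( \ell_h(x) - \tfrac12 \|C^{-1/2}h\|^2 \big)$, where $\ell_h \in L^2(H,\mu_0)$ is the Paley--Wiener (measurable linear) extension of $x \mapsto \langle C^{-1/2}h, C^{-1/2}x\rangle$; conversely, if $h \notin H_{\mu_0}$ then $\mu_0(\cdot - h) \perp \mu_0$ by the Feldman--H\'ajek dichotomy \cite{bogachev2015gaussian}. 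Since $\nu = \mu * \mu_0 = \int_H \mu_0(\cdot - u)\, d\mu(u)$ and, by hypothesis, $\mu(H_{\mu_0}) = 1$, each mixand is $\mu$-almost surely equivalent to $\mu_0$, and the remaining work is to transfer this to the mixture.

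For $\nu \ll \mu_0$, take $A \in \B(H)$ with $\mu_0(A) = 0$; then $\nu(A) = \int_H \mu_0(A - u)\, d\mu(u)$, and $\mu_0(A - u) = 0$ for every $u \in H_{\mu_0}$ because $\mu_0(\cdot - u) \sim \mu_0$, so $\nu(A) = 0$. For $\mu_0 \ll \nu$ I would first derive the density: by Tonelli (all integrands are nonnegative), for every $A \in \B(H)$,
\[
\nu(A) = \int_H \!\int_A \frac{d\mu_0(\cdot - u)}{d\mu_0}(x)\, d\mu_0(x)\, d\mu(u) = \int_A \rho(x)\, d\mu_0(x), \qquad \rho(x) := \int_H \exp\!\big( \ell_u(x) - \tfrac12 \|C^{-1/2}u\|^2 \big)\, d\mu(u),
\]
so $d\nu/d\mu_0 = \rho$. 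Fixing a real-valued version of $\ell_\cdot(\cdot)$, the integrand defining $\rho$ is strictly positive for $\mu$-a.e.\ $u$ (an exponential of a finite real), and $\mu$ is a probability measure on $H_{\mu_0}$, so $\rho(x) > 0$ for $\mu_0$-a.e.\ $x$; moreover $\int_H \rho\, d\mu_0 = \nu(H) = 1 < \infty$, so $\rho$ is a genuine ($\mu_0$-a.e.\ finite and positive) density. Hence $\{\rho = 0\}$ is $\mu_0$-null, and $\nu(A) = \int_A \rho\, d\mu_0 = 0$ forces $\rho = 0$ $\mu_0$-a.e.\ on $A$, i.e.\ $\mu_0(A) = 0$. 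Combining the two inclusions gives $\nu \sim \mu_0$.

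I expect the main obstacle to be the measure-theoretic bookkeeping around the Cameron--Martin pairing rather than the null-set arguments: $\ell_u(x)$ is defined only as an $L^2(\mu_0)$-limit, so one must fix a jointly $\B(H_{\mu_0}) \otimes \B(H)$-measurable version of $(u,x) \mapsto \ell_u(x)$ (available from the standard construction of the measurable linear extension on a separable Hilbert space) before the Fubini/Tonelli interchange above is justified; with that in hand, the a.e.\ finiteness and positivity of $\rho$ follow as stated. It is worth noting that no moment or integrability hypothesis on $\mu$ beyond $\mu(H_{\mu_0}) = 1$ enters, since $\nu$ is automatically a probability measure and the density integrates to $1$.
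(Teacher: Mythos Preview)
Your argument is correct. The first direction, $\nu \ll \mu_0$, is verbatim the paper's: both of you write $\nu(A) = \int \mu_0(A-u)\, d\mu(u)$ and invoke Cameron--Martin to kill each mixand when $\mu_0(A)=0$. The reverse direction, $\mu_0 \ll \nu$, is where you diverge. The paper does not compute the density: it simply notes that for $u \in H_{\mu_0}$ the shifted Gaussian $\mu_0(\cdot - u)$ is not just absolutely continuous but \emph{equivalent} to $\mu_0$ (since Gaussians are either equivalent or mutually singular by Feldman--H\'ajek), so $\mu_0 \ll \mu_0(\cdot - u)$; then if $\nu(B)=0$, some (indeed $\mu$-a.e.) mixand vanishes on $B$, forcing $\mu_0(B)=0$. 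Your route instead builds the mixture density $\rho$ explicitly from the Cameron--Martin formula and shows it is $\mu_0$-a.e.\ strictly positive. This is a genuine alternative: it costs you the joint-measurability check on $(u,x)\mapsto \ell_u(x)$ needed for Tonelli (which you correctly flag), but it pays off by delivering the explicit form of $d\nu/d\mu_0$ that the paper has to write down separately anyway (their equation for the conditional density and the marginalization lemma). The paper's version is cleaner for the bare equivalence statement; yours is more constructive and front-loads work that is needed downstream.
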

A more general statement and proof of this result are given in \appref{sub:conv_measure}. The importance of Theorem~\ref{thm:equivalence_informal} is that 
it allows us to obtain a density. Indeed, it verifies the assumption of 
the Radon–Nikodym Theorem, which we apply to obtain a strictly positive density of \(\nu\) with respect to \(\mu_0\).
In particular, there exists a Borel measurable mapping \(\Phi\colon H \to \sR\) such that
\begin{equation}
    \label{eq:v_density}
    \frac{d \nu}{d \mu_0}(w) = \text{exp} \big (  \Phi(w) \big ), \qquad \mu_0\text{-a.s. } w \in H.
\end{equation}
%Note that it is convention to define the density with a negative in the potential since, usually, \(-\Phi\) is a positive 
%function. We do not use this convention as the current notation will lead to a cleaner definition of the score.

We will assume that \(\Phi\) is Fr\'{e}chet differentiable along the Cameron-Martin space \(H_{\mu_0}\) which is itself a Hilbert space continuously embedded in \(H\). This is a reasonable assumption since the vectors of differentiability of any Gaussian are precisely those in its Cameron-Martin space  and \(\nu\) is equivalent, in the sense of measures, to the Gaussian \(\mu_0\) \citep{bogachev2015gaussian}. While in finite dimensions differentiability is always ensured since Gaussians have infinitely smooth density and convolutions preserve this regularity, in infinite dimensions, this need not always be the case. We therefore make it an assumption, however, the following example shows that it is true of any Gaussian data measure.
\begin{example}
    Suppose $\mu = N(0,Q)$ for some self-adjoint, non-negative, and trace-class operator $Q: H \to H$. It follows by non-negativity that
    \[\langle h, C h \rangle \leq \langle h, (C+Q) h \rangle \qquad \forall \: h \in H.\]
    Therefore by Lemma 6.15 in \citep{stuart2010inverse}, $C^{1/2}(H) \subseteq (C + Q)^{1/2}(H)$. From definition $\nu = N(0,C + Q)$ and, by Proposition 5.1.6. in \citep{bogachev2015gaussian}, $\nu$ is differentiable along its Cameron-Martin space $(C+Q)^{1/2}(H)$. Therefore $\nu$ is differentiable along $C^{1/2}(H) = H_{\mu_0}$. 
\end{example}
We define the score precisely as the Fr\'{e}chet derivative of \(\Phi\) in the direction of \(H_{\mu_0}\) and denote it \(D_{H_{\mu_0}} \Phi\colon H \to H_{\mu_0}^*\) where \(H_{\mu_0}^*\) is the topological (continuous) dual of \(H_{\mu_0}\). In other words, the score of \(\nu\) with respect to \(\mu_0\) is the Fr\'{e}chet derivative of the logarithm of the density of \(\nu\) with respect to \(\mu_0\),
\begin{equation}
    \label{eq:score_def}
    D_{H_{\mu_0}} \Phi = D_{H_{\mu_0}} \log \frac{d \nu}{d \mu_0}.
\end{equation}
We refer the reader to Chapter 5 in \cite{bogachev2015gaussian} for a general discussion of differentiability in infinite dimensions.

Having appropriately defined the score of \(\nu\), we can introduce a score matching objective. Let \(G_\theta\colon H \to H_{\mu_0}^*\) be a parametric mapping with parameters \(\theta \in \mathbb{R}^p\). We consider the learning problem
\begin{equation}
    \label{eq:score_matching}
    \min_{\theta \in \mathbb{R}^p} \E_{v \sim \nu} \|D_{H_{\mu_0}}\Phi(v) - G_\theta (v)\|_{H^*_{\mu_0}}^2.
\end{equation}
Since \(D_{H_{\mu_0}}\Phi\) is unknown to us, solving~\eqref{eq:score_matching} is computationally intractable. 

To obtain a tractable problem, we generalize the conditioning theorem in~\cite{pascal2011aconnection}. Let us first notice that the measure induced by the conditional \(v|u\) is the Gaussian \(N(u, C) \coloneqq \gamma^u\) for \(\mu\)-almost any \(u \in H\). Since \(\mu(H_{\mu_0}) = 1\), the Feldman–H\'{a}jek Theorem implies that \(\gamma^u \sim \mu_0\) \citep{da1992stochastic}. In particular, we may compute explicitly that, for \(\mu_0\text{-almost any } w \in H\) and \(\mu\)-almost any \(u \in H_{\mu_0}\),
\begin{align}
    \label{eq:v_given_u_density}
    %\begin{split}
    \frac{d \gamma^u}{d \mu_0}(w) &= \text{exp} \left ( \sum_{j=1}^\infty \lambda_j^{-1} \langle w, \varphi_j \rangle \langle u, \varphi_j \rangle - \frac{1}{2} \|C^{-1/2}u\|^2 \right ) \nonumber \\
    &\coloneqq \text{exp} \big( \Psi(w; u) \big ),
    %\end{split}
\end{align}
where \(C \varphi_j = \lambda_j \varphi_j\) for \(j \in \sN\) is an eigendecomposition of \(C\) and \(C^{-1/2}\) denotes the inverse of \(C^{1/2}\) on \(H_{\mu_0}\), see Theorem 2.23 in \cite{da1992stochastic}.
%where we have used the Karhunen-Loeve expansion to express the infinite series; see \appref{sec:kl_expansion} for more detail.
The score of each conditional \(\gamma^u\) is given as the Fr\'{e}chet derivative (in the first argument) of the potential \(\Psi : H \times H_{\mu_0} \to \sR\) in the direction of \(H_{\mu_0}\). % defined by~\eqref{eq:v_given_u_density}.
We can now state the following (informal) theorem relating~\eqref{eq:score_matching} to the solution of a tractable problem.
\begin{theorem}[Denoising Score Matching]
    \label{thm:conditional_score_informal}
    Under some integrability assumptions on \(D_{H_{\mu_0}} \Phi\) and \(G_\theta\), the minimizers of \eqref{eq:score_matching} 
    are the same as the minimizers of
    \begin{equation}
        \label{eq:conditional_score_matching}
        \min_{\theta} \E_{u \sim \mu} \E_{w \sim \gamma^u} \|D_{H_{\mu_0}} \Psi (w;u) - G_\theta(w) \|_{H_{\mu_0}^*}^2.
    \end{equation}
\end{theorem}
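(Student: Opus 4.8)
The plan is to transcribe Vincent's finite‑dimensional identity \cite{pascal2011aconnection} into the present setting, with Lebesgue densities replaced by Radon–Nikodym densities against $\mu_0$ and ordinary gradients replaced by Fréchet derivatives along the Cameron–Martin space $H_{\mu_0}$. First I would expand the squared norm in each objective. Writing $\|D_{H_{\mu_0}}\Phi(v) - G_\theta(v)\|_{H^*_{\mu_0}}^2 = \|D_{H_{\mu_0}}\Phi(v)\|_{H^*_{\mu_0}}^2 - 2\langle D_{H_{\mu_0}}\Phi(v), G_\theta(v)\rangle + \|G_\theta(v)\|_{H^*_{\mu_0}}^2$ and similarly for the conditional potential $\Psi$, the terms $\E_{v\sim\nu}\|D_{H_{\mu_0}}\Phi(v)\|_{H^*_{\mu_0}}^2$ and $\E_{u\sim\mu}\E_{w\sim\gamma^u}\|D_{H_{\mu_0}}\Psi(w;u)\|_{H^*_{\mu_0}}^2$ are finite constants independent of $\theta$ — this is one place the integrability hypotheses are used — and may be discarded. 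It therefore suffices to show the remaining $\theta$‑dependent functionals agree: (a) $\E_{v\sim\nu}\|G_\theta(v)\|_{H^*_{\mu_0}}^2 = \E_{u\sim\mu}\E_{w\sim\gamma^u}\|G_\theta(w)\|_{H^*_{\mu_0}}^2$, and (b) $\E_{v\sim\nu}\langle D_{H_{\mu_0}}\Phi(v), G_\theta(v)\rangle = \E_{u\sim\mu}\E_{w\sim\gamma^u}\langle D_{H_{\mu_0}}\Psi(w;u), G_\theta(w)\rangle$.

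Claim (a) is immediate from the disintegration $\nu = \int_H \gamma^u \, d\mu(u)$, which holds because $v = u + \eta$ with $u \perp \eta \sim \mu_0$ so that the law of $v$ given $u$ is $\gamma^u = N(u,C)$ (see Appendix~\ref{sub:conv_measure}); hence $\int_H f \, d\nu = \int_H \int_H f \, d\gamma^u \, d\mu(u)$ for every nonnegative Borel $f$. For claim (b), the key step is to rewrite this disintegration in terms of densities against $\mu_0$: since $d\nu/d\mu_0 = e^{\Phi}$ and $d\gamma^u/d\mu_0 = e^{\Psi(\cdot\,;u)}$, Tonelli's theorem gives $e^{\Phi(w)} = \int_H e^{\Psi(w;u)} \, d\mu(u)$ for $\mu_0$‑a.e.\ $w$. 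Differentiating both sides in the direction of $H_{\mu_0}$ and interchanging the Fréchet derivative with the $\mu$‑integral yields, in $H^*_{\mu_0}$, the identity $e^{\Phi(w)} D_{H_{\mu_0}}\Phi(w) = \int_H e^{\Psi(w;u)} D_{H_{\mu_0}}\Psi(w;u) \, d\mu(u)$. Pairing this with $G_\theta(w)$, integrating $d\mu_0(w)$, and applying Fubini — using $d\nu = e^{\Phi}\,d\mu_0$ on the left and $d\gamma^u = e^{\Psi(\cdot\,;u)}\,d\mu_0$ on the right — gives exactly (b). Combining (a) and (b), the objectives in~\eqref{eq:score_matching} and~\eqref{eq:conditional_score_matching} differ by a $\theta$‑independent constant, so they have the same minimizers.

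The main obstacle is justifying the interchange of the Cameron–Martin Fréchet derivative with the integral over $\mu$, i.e.\ differentiation under the integral sign in infinite dimensions. This needs a dominated‑convergence argument for the difference quotients $t^{-1}\big(e^{\Psi(w+th;u)} - e^{\Psi(w;u)}\big)$ along directions $h \in H_{\mu_0}$, with a $\mu$‑integrable dominating function uniform in $t$. The explicit form of $\Psi$ in~\eqref{eq:v_given_u_density} — linear in $w$ and of Gaussian type in $u$ — makes this tractable, since $D_{H_{\mu_0}}\Psi(\cdot\,;u)$ is the bounded functional $h \mapsto \sum_{j=1}^\infty \lambda_j^{-1}\langle u, \varphi_j\rangle \langle h, \varphi_j\rangle$ and $e^{\Psi(w;u)}$ has all moments under $\mu_0 \otimes \mu$; this is precisely what the unspecified integrability assumptions on $D_{H_{\mu_0}}\Phi$ and $G_\theta$, together with joint Borel measurability of $(w,u) \mapsto D_{H_{\mu_0}}\Psi(w;u)$, are there to supply for the Fubini steps. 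A secondary point of care is keeping all pairings and norms in the dual $H^*_{\mu_0}$ and ensuring the Riesz‑type identifications used to phrase the derivatives are consistent throughout.
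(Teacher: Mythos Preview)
Your proposal is correct and follows essentially the same route as the paper's proof in Appendix~\ref{sub:conditioning}: expand the squares, match the $\|G_\theta\|^2$ terms via disintegration, and for the cross term use the density identity $\frac{d\nu}{d\mu_0}(w)=\E_{u\sim\mu}\frac{d\gamma^u}{d\mu_0}(w)$ (the paper isolates this as Lemma~\ref{lemma:marginalizing_density}) together with the Leibniz integral rule and Fubini to interchange $D_{H_{\mu_0}}$ with the $\mu$-integral. The paper presents the cross-term computation as a single chain of equalities rather than first differentiating the density identity and then pairing and integrating, but the content is identical.
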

The more general statement (for a broader class of perturbations than~\eqref{eq:rough_perturbation}) and proof are given in \appref{sub:conditioning}. 
 \eqref{eq:conditional_score_matching} gives us an infinite-dimensional analog of \eqref{eq:finitedim_denoisingscorematching}, where we can compute \(D_{H_{\mu_0}} \Psi (w;u)\) from \eqref{eq:v_given_u_density}. That is,
\begin{equation}
    \label{eq:gaussian_score}
    D_{H_{\mu_0}} \Psi(w;u) = \sum_{j=1}^\infty \lambda_j^{-1} \langle u, \varphi_j \rangle \varphi_j,
\end{equation}
where we interpret \eqref{eq:gaussian_score} as
\begin{equation}
    \label{eq:gaussian_score_action}
    D_{H_{\mu_0}} \Psi(w;u) z = \sum_{j=1}^\infty \lambda_j^{-1} \langle z, \varphi_j \rangle \langle u, \varphi_j \rangle,
\end{equation}
for any  \(z \in H_{\mu_0}\). Indeed, Lemma~\ref{lemma:gaussian_derivative} shows that \(D_{H_{\mu_0}} \Psi(w;u) \in H_{\mu_0}^*\) as defined by \eqref{eq:gaussian_score}.

Recall that our objective is to approximate \(D_{H_{\mu_0}} \Phi\) by solving~\eqref{eq:score_matching}, which we have shown is equivalent to \eqref{eq:conditional_score_matching}. Given such an approximation, we can then solve 
a Langevin equation with the learned score in order to obtain samples from \(\nu\). As we will show in the next section, this Langevin equation requires only knowledge of the 
\(D_{H_{\mu_0}} \Phi\) projected onto \(H_{\mu_0}\). We can thus simplify the optimization problem in~\eqref{eq:conditional_score_matching} by considering the Reisz map \(R : H_{\mu_0}^* \to H_{\mu_0}\),
which is the canonical isometric isomorphism between the Hilbert spaces \(H_{\mu_0}^*\) and \(H_{\mu_0}\). Using the isometric property, we find
\[
    \|D_{H_{\mu_0}} \Psi (v;u) - G_\theta(v) \|_{H_{\mu_0}^*}^2 = \|C^{-1/2} \big ( u - RG_\theta(v) \big )\|^2
\]
by noting that \(R\) acts as \(C\) to elements of \(H_{\mu_0}^*\) that are not in \(H_{\mu_0}\) and using \eqref{eq:gaussian_score}.
In particular, we have shown that minimizing \eqref{eq:conditional_score_matching} is equivalent to minimizing 
\begin{equation}
    \label{eq:conditional_score_matching_simple}
    \min_{\theta \in \mathbb{R}^p} \E_{u \sim \mu} \E_{\eta \sim \mu_0} \|C^{-1/2} \big ( u - RG_\theta(u + \eta) \big )\|^2,
\end{equation}
which is a de-noising problem pre-conditioned by \(C^{-1/2}\).
Note that \eqref{eq:conditional_score_matching_simple} is almost surely finite since \(u - RG_\theta (w) \in H_{\mu_0}\)
for any \(w \in H\) by our assumption that \(\mu(H_{\mu_0}) = 1\).

\begin{updaterequired}[black]
To that end, supposing that \(C\) is positive, then \(\text{ker}(C^{-1/2}) = \{0\}\). Therefore, 
optimizing \eqref{eq:conditional_score_matching_simple} is equivalent to optimizing
\begin{equation}
    \label{eq:score_matching_no_C}
    \min_{\theta \in \mathbb{R}^p} \E_{u \sim \mu} \E_{\eta \sim \mu_0} \|u - RG_\theta(u + \eta)\|^2.
\end{equation}
\end{updaterequired}

\subsection{Smoothing Operators}
\label{subsec:smoothing_operators}
When the assumption \(\mu(H_{\mu_0}) = 1\) is not satisfy, we may consider a different form of the perturbation in \eqref{eq:rough_perturbation} to remove this regularity assumption. To that end, let \(A\colon H \to H\) be a linear operator with the property that \(A(H) \subseteq H_{\mu_0}\). Consider the data perturbation
\begin{equation}
    \label{eq:smooth_perturbation}
    v = Au + \eta, \qquad u \sim \mu, \; \eta \sim \mu_0.
\end{equation}
We re-define the measures \(\nu\) and \(\gamma^u\) appropriately according to \eqref{eq:smooth_perturbation}. Corollary~\ref{cor:equivalent_convolution} and the Feldman–H\'{a}jek Theorem imply that \(\nu \sim \mu_0\) and \(\gamma^u \sim \mu_0\) for \(\mu\)-almost every \(u \in H\). Therefore the results of the previous section hold with the mapping \(u \mapsto Au\) implemented in all formulae. \updated{Crucially, the learning problem in~\eqref{eq:score_matching_no_C} becomes}
\begin{equation}
    \label{eq:conditional_smooth_matching_simple_A}
    \min_{\theta \in \mathbb{R}^p} \E_{u \sim \mu} \E_{\eta \sim \mu_0} \|Au - RG_\theta(Au + \eta)\|^2.
    % ,
\end{equation}
% while \eqref{eq:learn_noise} becomes
% \begin{equation}
%     \label{eq:learn_noise_A}
%     \min_{\theta \in \mathbb{R}^p} \E_{u \sim \mu} \E_{\eta \sim \mu_0} \|\eta + F_\theta(Au + \eta)\|^2.
% \end{equation}
Here \(A\) acts as a smoothing operator, bringing the data into a regular enough space for the required absolute continuity to hold.
This makes mathematically precise diffusion models which use heat-dissipation or blurring as a forward operator \citep{rissanen2022generative, hoogeboom2022blurring}.
\updated{We expand on this idea in \secref{subsec:multiple_noise_scales} (See also \appref{sec:num_smoothing}).}

\subsection{Approximation Theory}
\label{subsec:approximation_theory}

We have shown that the pre-conditioned score operator necessary for sampling is a non-linear mapping of the Hilbert space
\(H\) into itself. We therefore need architectures which can approximate such mapping. We employ 
the neural operator framework of~\cite{kovachki2021neural}. The following approximation result then follows by Theorems 11 and 13 in \cite{kovachki2021neural} and the proof
methods therein. 
\begin{theorem}[Score Approximation]
    \label{thm:approximation}
    Let \(D \subset \sR^d\) be a bounded open set with Lipschitz boundary and consider \(H = L^2 (D;\sR)\). 
    Suppose \(\I \subset \sR^n\) is compact and let \(R D_{H_{\mu_t}} \Phi (\cdot, t)\colon H \to H\) be the pre-conditioned score
    of the perturbation in~\eqref{eq:time_pertrubation} for each \(t \in \I\).
    Suppose \(\nu_t\) has a finite second-moment for each \(t \in \I\) and the map \(t \mapsto R D_{H_{\mu_t}} \Phi (\cdot, t)\) is uniformly continuous. Then,
    for any \(\epsilon > 0\), there exists a number \(p = p(\epsilon) \in \sN\) and a parameter vector \(\theta = \theta(\epsilon) \in \sR^p\)
    such that a neural operator \(G_\theta \colon H \times \I \to H\) satisfies
    \[\sup_{t \in \I} \E_{u \sim \nu_t} \|R D_{H_{\mu_t}} \Phi(u,t) - G_\theta (u,t) \|^2 < \epsilon.\]
\end{theorem}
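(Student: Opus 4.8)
\section*{Proof proposal for Theorem~\ref{thm:approximation}}

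The plan is to transfer the known universal approximation property of neural operators in the $L^2$-against-a-probability-measure norm (Theorems 11 and 13 of \cite{kovachki2021neural}) to the present setting, with the extra parameter $t\in\I$ absorbed as additional, constant-in-space input channels of the network, and with every estimate made uniform over the compact index set $\I$. Concretely, I would (i) reduce the infinite-dimensional approximation to a finite-dimensional one via an encoder--decoder sandwich, (ii) solve the finite-dimensional problem on a large ball by a feed-forward network with controlled growth, (iii) reassemble these pieces into a neural operator, and (iv) bound the tail contributed by the complement of the ball using uniform moment estimates. The sampling parameter $t$ enters only through the requirement that all constants in (i)--(iv) be uniform over $\I$.

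First I would establish the uniform moment bounds. Since $H=L^2(D;\R)$ is separable, fix an orthonormal basis and let $P_N$ be the orthogonal projection onto the first $N$ coordinates. Each $\nu_t$ has finite second moment by hypothesis, and this bound is uniform over the compact $\I$ because $t\mapsto\E_{u\sim\nu_t}\|u\|^2=\E_{u\sim\mu}\|A_tu\|^2+\mathrm{tr}(C_t)$ is continuous on $\I$; hence $\sup_{t\in\I}\E_{u\sim\nu_t}\|u-P_Nu\|^2\to 0$ as $N\to\infty$ by a uniform dominated-convergence argument. I would combine this with the analogous uniform bound $\sup_{t\in\I}\E_{u\sim\nu_t}\|R D_{H_{\mu_t}}\Phi(u,t)\|^2<\infty$, which follows from the conditional-expectation representation $R D_{H_{\mu_t}}\Phi(v,t)=\E[A_tu\mid v_t=v]$ implied by Theorem~\ref{thm:conditional_score_informal} together with \eqref{eq:gaussian_score} and Jensen's inequality, giving $\E_{\nu_t}\|R D_{H_{\mu_t}}\Phi(\cdot,t)\|^2\le\E_\mu\|A_tu\|^2\le\sup_t\E_{\nu_t}\|u\|^2$. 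Using the joint (and uniform-in-$t$) continuity of $(u,t)\mapsto R D_{H_{\mu_t}}\Phi(u,t)$ and another uniform dominated-convergence step, I then obtain, for any $\delta>0$, finite ranks $N,M$ and a continuous map $\psi\colon\R^N\times\I\to\R^M$ (the coordinate representation of $P_M\circ R D_{H_{\mu_t}}\Phi\circ P_N$) with $\sup_{t\in\I}\E_{u\sim\nu_t}\|R D_{H_{\mu_t}}\Phi(u,t)-\psi(P_Nu,t)\|^2<\delta$; since $P_N$ has norm one this reduction introduces no growth blow-up.

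Next I would dispatch the finite-dimensional problem. The push-forward of $\nu_t$ under $u\mapsto P_Nu$ has second moment bounded uniformly in $t$, so there is a radius $r>0$ with $\sup_{t\in\I}\E_{u\sim\nu_t}\big[\mathbf{1}_{\{\|P_Nu\|>r\}}\big(\|\psi(P_Nu,t)\|^2+\|u\|^2\big)\big]<\delta$. On the compact set $\bar B_r\times\I\subset\R^N\times\R^n$ the continuous $\psi$ is uniformly approximated to accuracy $\delta$ by a standard feed-forward network, which, precomposed with a bounded cut-off of its argument, can be taken globally bounded by $\sup_{\bar B_{2r}\times\I}\|\psi\|+1$. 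Because a neural operator can exactly realize ``encode via $P_N$, apply a finite-dimensional network, decode from $\R^M$ into $H$'' (this is precisely the construction underlying Theorems 11 and 13 of \cite{kovachki2021neural}), composing these pieces yields a neural operator $G_\theta\colon H\times\I\to H$. Splitting $\E_{u\sim\nu_t}\|R D_{H_{\mu_t}}\Phi(u,t)-G_\theta(u,t)\|^2$ into the encoder--decoder error ($<\delta$), the network error on $\{\|P_Nu\|\le r\}$ ($<\delta$), and the tail on $\{\|P_Nu\|>r\}$ (which, via the uniform moment bounds and the a priori bound on $G_\theta$, is $O(\delta)$), and choosing $\delta$ proportional to $\epsilon$, gives $\sup_{t\in\I}\E_{u\sim\nu_t}\|R D_{H_{\mu_t}}\Phi(u,t)-G_\theta(u,t)\|^2<\epsilon$.

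The main obstacle will be keeping every estimate uniform in $t$ while respecting the order of the quantifiers: the ranks $N,M$ and the radius $r$ — hence the a priori bound on the finite-dimensional network, and thus on $G_\theta$ — must be fixed from the uniform moment and uniform-integrability bounds \emph{before} the network accuracy $\delta$ is chosen, so that the tail term does not secretly depend on $\theta$. The uniformity itself comes from compactness of $\I$ together with the two continuity hypotheses in the theorem, and from the Gaussian structure of $\mu_t$, which supplies moments of all orders bounded uniformly on $\I$ and hence the uniform integrability required for the dominated-convergence steps. A secondary technical point is justifying the conditional-expectation representation of the pre-conditioned score used to bound its second moment; this is exactly where Theorem~\ref{thm:conditional_score_informal} and the explicit Gaussian score in \eqref{eq:gaussian_score} are needed. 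Everything else is a faithful transcription of the proofs of Theorems 11 and 13 in \cite{kovachki2021neural}, now carried out with the parameter $t$ as an auxiliary input.
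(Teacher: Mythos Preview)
Your proposal is essentially correct and follows the same approach as the paper. The paper itself does not give a detailed proof of Theorem~\ref{thm:approximation}: it simply states that the result ``follows by Theorems 11 and 13 in \cite{kovachki2021neural} and the proof methods therein.'' Your sketch is a faithful unpacking of exactly that citation---the encoder/decoder reduction to finite dimensions, finite-dimensional network approximation on a compact set, reassembly into a neural operator, and tail control via second moments are precisely the ingredients of those theorems, and you correctly identify that the only new work is pushing the estimates through uniformly over the compact index set $\I$ using the continuity hypothesis on $t\mapsto R D_{H_{\mu_t}}\Phi(\cdot,t)$.

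One small caution: you invoke continuity of $t\mapsto\E_{u\sim\nu_t}\|u\|^2$ to obtain uniform second-moment bounds, but the theorem as stated only assumes finite second moment \emph{pointwise} in $t$ together with uniform continuity of the score map; continuity of $t\mapsto A_t$ and $t\mapsto C_t$ is not an explicit hypothesis. In the concrete settings of Section~\ref{subsec:multiple_noise_scales} this continuity is available, but in the abstract statement you would either need to add it as an assumption or argue the uniform moment bound differently. This is a gap the paper's one-line proof does not address either, so your proposal is at least as rigorous as the original.
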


\begin{remark} In Theorem~\ref{thm:approximation}, we crucially work in a setting where the map \(t \mapsto R D_{H_{\mu_t}} \Phi (\cdot, t)\) is uniformly continuous and the score is well-defined for every \(t \in \I\) i.e., the perturbing noise has a non-zero covariance uniformly across \(\I\).
This is important in avoiding the well-known singularity in the conditional score in the limit of vanishing noise. See \cite{kim2021soft} for numerical methods for accurately approximating the score at small times for score-based models in finite dimensions.
%Working with a compact set $\mathcal{I}$ is important to avoid the (well-known) singularity in the conditional score as $t \rightarrow 0$. We refer the reader to~\cite{kim2021soft} for numerical methods for accurately approximating the score at small times using finite-dimensional score-based models.
\end{remark}

Theorem~\ref{thm:approximation} suggests that approximating score operators in infinite dimensions is feasible using neural operators.
We demonstrate this numerically in the next section.

\subsection{Langevin Dynamics}
\label{subsec:lagevin_dynamics}
To sample from $\nu$, we consider the infinite-dimensional, pre-conditioned, Langevin equation,
\begin{equation}
    \label{eq:langevin}
    \frac{du}{dt} = -u + R D_{H_{\mu_0}} \Phi(u) + \sqrt{2} \frac{dW}{dt}, \quad u(0) = u_0
\end{equation}
for some \(u_0 \in H\) where \(R D_{H_{\mu_0}} \Phi : H \to H_{\mu_0}\) and W is a \(C\)-Wiener process \citep{da1992stochastic}.
It is shown in~\cite{dashti2017bayesian} that, under appropriate boundedness assumptions on \(R D_{H_{\mu_0}} \Phi\),
equation~\eqref{eq:langevin} has a unique strong solution with continuous paths and an invariant measure \(\nu\). In particular, samples from \(\nu\) can be 
obtained as the long-time solutions of \eqref{eq:langevin}. We will approximate~\eqref{eq:langevin} by using
the learned score \(RG_\theta\) and discretizing in time using the Euler–Maruyama scheme with step-size \(h > 0\). This gives us the update
\begin{equation}
    \label{eq:langevin_discrete}
    u_{n+1} = u_n  + h(RG_\theta (u_n) - u_n) + \sqrt{2h} \xi_n,
\end{equation}
for any \(n \in \mathbb{N}\) where \(\xi_n \sim N(0,C)\) are i.i.d.\thinspace random variables. Equation~\eqref{eq:langevin_discrete} also suggests that
instead of looking for the map \(RG_\theta\), we can re-parameterize and instead directly find the mapping \(v \mapsto R G_\theta (v) - v\).
\updated{
% To that end, suppose that \(C\) is positive then \(\text{ker}(C^{-1/2}) = \{0\}\). 
% Therefore, optimizing \eqref{eq:conditional_smooth_matching_simple_A} is equivalent to optimizing
% \begin{equation}
%     % \label{eq:score_matching_no_C}
%     \min_{\theta \in \mathbb{R}^p} \E_{u \sim \mu} \E_{\eta \sim \mu_0} \|A u - RG_\theta(A u + \eta)\|^2.
% \end{equation}
Defining \(F_\theta\colon H \to H\) by \(F_\theta(v) = R G_\theta (v) - v\), optimizing \eqref{eq:conditional_smooth_matching_simple_A} is equivalent to 
\begin{equation}
    \label{eq:learn_noise}
    \min_{\theta \in \mathbb{R}^p} \E_{u \sim \mu} \E_{\eta \sim \mu_0} \|\eta + F_\theta(Au + \eta)\|^2,
\end{equation}}
which simplifies the sampling update in~\eqref{eq:langevin_discrete} to 
\begin{equation}
    \label{eq:sampling}
    u_{n+1} = u_n  + hF_\theta(u_n) + \sqrt{2h} \xi_n.
\end{equation}
Note that this re-parameterization is only valid when \(C\) is positive, otherwise \eqref{eq:conditional_score_matching_simple} and \eqref{eq:score_matching_no_C}
 are not equivalent and \(\text{Im}(F_\theta) = H\) while \(\text{dom} (C^{-1/2}) = H_{\mu_0}\). In particular, for general \(C\), we may optimize
\eqref{eq:conditional_score_matching_simple} and sample with \eqref{eq:langevin_discrete}, while for \(C\) positive, we can alternatively optimize \eqref{eq:learn_noise}
and sample with \eqref{eq:sampling}. The advantage of \eqref{eq:learn_noise} is that we can parameterize \(F_\theta\) as an arbitrary \(H \to H\) mapping 
without any restrictions on its range space. Furthermore current empirical evidence suggests that learning the noise from the signal instead of the signal from the noise 
yields better sample quality \citep{song2020improved, ho2020denoising}. Since \(C\) is a choice in our method that can be tuned, we always pick it positive and thus utilize this re-parametrization in our experiments.

We remark that we have only picked the Euler–Maruyama discretization of \eqref{eq:langevin} here for the sake of clarity in exposition. 
Many other choices such as~\cite{cotter2013mcmc} are possible; see \appref{sec:disc_langevin} for details.

% while \eqref{eq:learn_noise} becomes
% \begin{equation}
%     \label{eq:learn_noise_A}
%     \min_{\theta \in \mathbb{R}^p} \E_{u \sim \mu} \E_{\eta \sim \mu_0} \|\eta + F_\theta(Au + \eta)\|^2.
% \end{equation}

\subsection{\updated{Multiple Noise Scales and Annealed Langevin Dynamics}}
% \sout{Multiple Noise Scales}}}
\label{subsec:multiple_noise_scales}

As argued in \cite{song2019generative}, the mixing times of Langevin dynamics such as \eqref{eq:langevin} may be slow.
It therefore of practical interest to consider multiple noise processes over different scales and thereby an annealing process for discretizing \eqref{eq:langevin}. To that end, let \(\I\) be
some (possibly uncountable) index set and consider the data perturbations
\begin{equation}
    \label{eq:time_pertrubation}
    v_t = A_t u + \eta_t, \qquad u \sim \mu, \; \eta_t \sim \mu_t
\end{equation}
for a family of linear operators \(\{A_t\colon H \to H\}_{t \in \I}\) and Gaussian measures \(\{\mu_t\}_{t \in \I}\). Let $\nu_t$ be the measure for $v_t$.

% \magenta{[jae: this will partially address the comment 2 of R2]
\updated{
Let us first consider the case \(\mu(H_{\mu_0}) = 1\). Moreover, we assume \(\mu_t = N(0,C_t)\) where \(C_t = g(t) C\) with \(C\) as before, \(A_t = f(t) I\) where \(I\) is the identity operator, and \(f,g : \I \to \sR\) are mappings bounded from above and below away from zero. %}
%Assume that \(\mu \big( C^{1/2}(H) \big ) = 1\) and that there exists constants \(M_1, M_2 > 0\) such that 
%\(0 < M_1 \leq f(t), g(t) \leq M_2\) for all \(t \in \I\). 
Lemma~\ref{lemma:multiple_scales} shows that 
\(A_t(H) \subseteq H_{\mu_t}\) for all $t \in \I$ 
%\(\mu \big( C^{1/2}_t (H) \big ) = 1\) for all \(t \in \I\) 
and therefore our previous theory holds. The choices \(\I = [T]\) for some \(T \in \sN\)
and \(f(t) = 1\), \(g(t) = \sigma_t^2\) for some sequence \(0 < \sigma_T \leq \dots \leq \sigma_1\)
recovers the NCSN framework of \cite{song2019generative}. Similarly, let \(0 < \beta_1 \leq \dots \leq \beta_T < 1\) be some sequence
and define \(\alpha_t = \prod_{s=1}^t (1-\beta_s)\). Then, setting
\(f(t) = \sqrt{\alpha_t}\) and \(g(t) = 1 - \alpha_t\) recovers the DDPM framework of \cite{ho2020denoising}; see \appref{sec:ddpm} for more details on this connection. In particular, we
generalize two widely used diffusion models in infinite-dimensions, up to the method selected for generating samples.
% }
}
%See \appref{sec:disc_langevin} for 
%details on the choice of discretizations for \eqref{eq:langevin}.

\updated{Let us now consider a case where we do not make assumptions on the data measure \(\mu\).} 
For our previous theory to hold, we need that \(A_t(H) \subseteq C^{1/2}_t (H)\) for all \(t \in \I\).
This can be accomplished with various choices of \(A_t\), see \appref{sec:timedeptnoise}. For the current discussion, 
we will take \(H = \dot{L}^2(\mathbb{T}^d;\sR)\) with \(\mu_t\) as before where \(C\)
has the form \eqref{eq:example_C1}. Let \(\I = [T_0, T]\) for some \(0 < T_0 < T < \infty\)
and choose \(A_t = f(t) \text{e}^{t \Delta}\) with the same boundedness assumptions on 
\(f,g\). In particular, the family \(\{A\}_{t \in \I}\) is a re-scaled subset of the semi-group 
associated to the solution operator of the heat equation (assuming \(f\) is continuous so that \(t \mapsto A_t\) is continuous). 
Classical results on the heat equation show that for any \(u \in H\), \(A_t u \in \dot{H}^s (\mathbb{T}^d;\sR)\)
for any \(s > 0\) \cite{evans2010partial}. In particular, by choosing \(\alpha_1 > d/2\) in \eqref{eq:example_C1} so
that \(C\) is trace-class, we find that \(A_t (H) \subset C^{1/2}_t (H) = \dot{H}^{\alpha_1}(\mathbb{T}^d;\sR)\).
We have thus exhibited an infinite-dimensional generalization to the \say{inverse heat-dissipation} framework of~\cite{rissanen2022generative}. 

\begin{algorithm}[!htb]
   \caption{Annealed Langevin Dynamics}
   \label{alg:sampling_alg}
\begin{algorithmic}
   \STATE {\bfseries Input:} \(F_\theta\), \(u_0 \in H\), \(\{\sigma_t\}_{t=1}^T\), \(M \in \sN\), \(\epsilon > 0\)
   \FOR{$t=1$ {\bfseries to} $T$}
   % \STATE \red{[jae: no need to specify noise schedule in here. generalize] $h_t = \epsilon \sigma_t^2/\sigma_T^2$.}
   \STATE $h_t = \epsilon \sigma_t^2/\sigma_T^2$.
   \FOR{$n=0$ {\bfseries to} $M-1$}
   \STATE $\eta^{(t)}_n \sim N(0,C)$
   \STATE $u_{n+1} = u_n + h_t F_\theta (u_n, t) + \sqrt{2 h_t} \eta^{(t)}_n$
   \ENDFOR
   \STATE $u_0 = u_M$
   \ENDFOR
\end{algorithmic}
\end{algorithm}

\begin{updaterequired}[black]
To sample $\{ \nu_t \}_{t \in \I}$ and thus eventually sample \(\nu_T\), 
% To sample $\{ \nu_t \}_{t \in \I}$ and thus eventually sample \(\nu_T\), 
we can apply to the Euler-Maruyama scheme for each \(t \in \I\) to obtain the iteration
\begin{equation}
    \label{eq:em_scheme}
    u_{n+1} = u_n + h_t F(u_n, t) + \sqrt{2 h_t} \eta^{(t)}_n
\end{equation}
for any \(n \in \sN\), where \(\eta^{(t)}_n \sim \mu_t\) form an i.i.d. sequence, with \(h_t > 0\) and \(t_0 \in \I\) are fixed. Here, \(F : H \times \I \to H\) is a model defined by \(F(u,t) = -u + R D_{H_{\mu_t}} \Phi(u,t) \) as discussed in \eqref{eq:sampling}. 
For any \(v \in H\), the iteration in~\eqref{eq:em_scheme} starting with \(u_0 = v\) at \(t = t_0\) transforms 
\(v\) to an approximate sample of \(\nu_{t_0}\). We denote this sample by \(v_{t_0}\). Now fix \(t_1 \in \I\). We again run the 
iteration~\eqref{eq:em_scheme} with \(t = t_1\) and \(u_0  = v_{t_0}\). This will transform \(v_{t_0}\) into an approximate sample 
from \(\nu_{t_1}\) which we denote \(v_{t_1}\). If \(\I = \{1,\dots,T\}\) for some \(T \in \sN\) then repeating this process
yields \(v_T\), which is approximately distributed according to \(\nu_T\). 
% \magenta{[jae: this will address the comment 3 of R2] 
Moreover, according to Lemma~\ref{lemma:wasserstein_approx}, \(v_T\) is approximately distributed according to our original data measure \(\mu\). %}
We outline this annealing process in Algorithm~\ref{alg:sampling_alg}. 

\end{updaterequired}

\begin{updaterequired}[black]
\subsection{Conditional Sampling}
\label{subsec:conditional}

Finally, we demonstrate the straightforward extension of our approach to conditional simulation. We apply DDO to sample the conditional distribution $\mu(\cdot|y)$ for a parameter $u$ supported on $H$ given a relevant observation $y \in \R^m$ for inferring $u$. Specifically, we aim to solve a Bayesian inverse problem---modeling a posterior distribution---where the observation $y$ is typically assumed to arise from the additive noise model $y = \mathcal{F}(u) + \eta$ where $\mathcal{F}\colon H \rightarrow \R^m$ is a forward operator and $\eta \in \R^m$ is a noise random variable that is independent of $u$. The noise model induces a likelihood function $\mu(y|u)$ that together with a prior measure $\mu(u)$ for the parameter yields the posterior measure from Bayes' rule as: $\mu(u|y) \propto \mu(y|u)\mu(u)$.

To extend the DDO framework to sample conditionally, we first consider the  perturbations to the data samples in~\eqref{eq:rough_perturbation} with the data sample drawn from the posterior distribution. That is,
$$v = u + \eta,  \quad u \sim \mu(\cdot|y), \quad \eta \sim \mu_0.$$ 
The resulting random variable $v$ has probability measure $\nu(\cdot|y) = \mu(\cdot|y) * \mu_0(\cdot)$ that depends on the observation $y$. Under the same assumptions as in Section~\ref{subsec:denoisings_score_matching}, one can define the logarithmic derivative of $\nu(\cdot|y)$ denoted by $D_{H_{\mu_0}} \Phi(\cdot;y)= D_{H_{\mu_0}} \log \frac{d\nu(\cdot;y)}{d \mu_0}$. The following result shows that we can approximate the logarithmic derivatives of these measures (depending by $y$) using a parametric mapping \(G_\theta\colon H \times \R^m \to H_{\mu_0}^*\) by solving a denoising score matching problem. The proof follows identically from the one for Theorem~\ref{thm:conditional_score_informal} and hence is omitted. 
\begin{theorem}
    Under integrability assumptions on \(D_{H_{\mu_0}} \Phi\) and \(G_\theta\), the minimizers of the score matching problem for conditional sampling
    \begin{equation}
        \min_{\theta \in \mathbb{R}^p} \E_{y \sim \mu(y)} \E_{v \sim \nu(\cdot|y)} \|D_{H_{\mu_0}}\Phi(v) - G_\theta (v,y)\|_{H^*_{\mu_0}}^2.
    \end{equation}
    are identical to the minimizers of the problem
    \begin{equation} \label{eq:denoising_score_matching_posterior}
        \min_{\theta \in \mathbb{R}^p} \E_{y \sim \mu(y)} \E_{u \sim \mu(\cdot|y)} \E_{w \sim \gamma^u} \|D_{H_{\mu_0}} \Psi (w;u) - G_\theta(w,y) \|_{H_{\mu_0}^*}^2,
    \end{equation}    
    where $D_{H_{\mu_0}} \Psi$ has the form in~\eqref{eq:gaussian_score}.
\end{theorem}
After identifying the parametric approximation to the logarithmic derivative of the conditional measure by solving~\eqref{eq:denoising_score_matching_posterior}, one can use the resulting parametric map to construct a Langevin sampling algorithm whose stationary distribution is (approximately) $\nu(\cdot|y)$. Moreover, as discussed in Sections~\ref{subsec:smoothing_operators} and~\ref{subsec:multiple_noise_scales}, we can introduce smoothing operators in the learning problem and consider multiple noise scales to sample our target posterior measure via a sequence of less noisy target distributions using annealed Langevin dynamics.

\end{updaterequired}

% \begin{updaterequired}[blue]
% \section{Connections to Denoising Diffusions?}
% \label{sec:connections}

\section{Numerical Experiments}
\label{sec:numerics}

In all examples, we use the Fourier neural operator (FNO)~\citep{li2020fourier}, U-shaped neural operator(UNO)~\citep{rahman2023u} as they are well-defined architecture for maps
between Hilbert spaces \cite{li2020fourier,kovachki2021onuniversal}. The goal of our numerics is to showcase the simple 
message that by employing trace-class noise and a consistent architecture for function space data, we obtain dimension (i.e., resolution)-independent results, observed by varying the discretization of the data. All experiments are done by solving
\eqref{eq:langevin} in a way similar to~\cite{song2019generative}, generalized to function spaces; see \appref{sec:disc_langevin}. % For further experiments on smoothing and hyper-parameters of the noise, see \appref{sec:num_smoothing}.

\begin{figure*}[!htbp]
    \vspace{-0.5em}
    \begin{minipage}{0.05\textwidth}
        \vspace{0.5em}
        \centering
        \includegraphics[width=\textwidth]{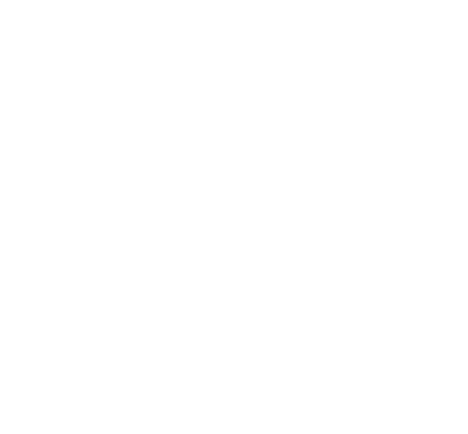}
        \vspace{-2em}
        \caption*{\textcolor{white}{(a)}}
    \end{minipage}\hfill 
    \begin{minipage}{0.27\textwidth}
        \centering
        \includegraphics[width=\textwidth]{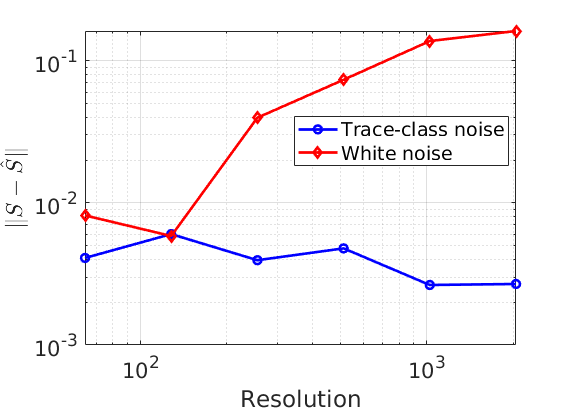}
        \vspace{-2em}
        \caption*{(a)}
        % \label{fig:img1}
    \end{minipage}\hfill 
    \begin{minipage}{0.3\textwidth}
        \centering
        \includegraphics[width=\textwidth]{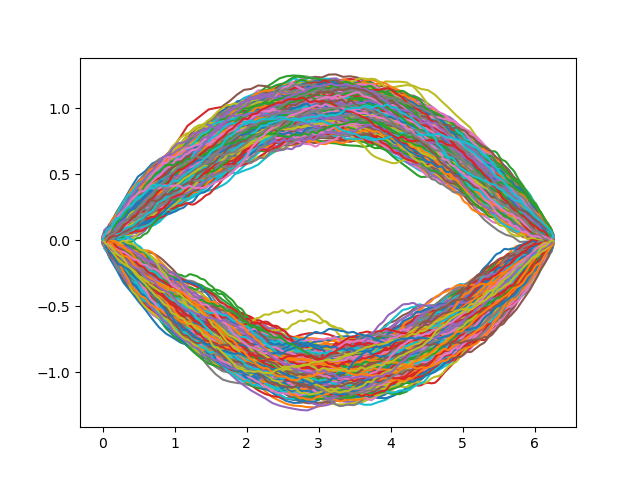}
        \vspace{-2.5em}
        \caption*{(b)}
        % \label{fig:img1}
    \end{minipage}\hfill
    \begin{minipage}{0.3\textwidth}
        \centering
        \includegraphics[width=\textwidth]{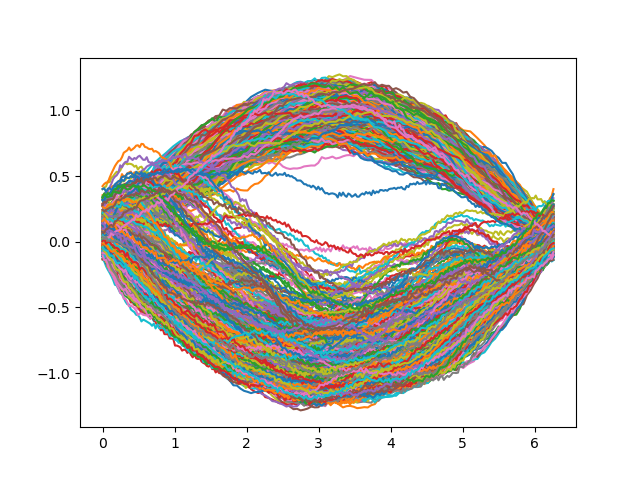}
        \vspace{-2.5em}
        \caption*{(c)}
        % \label{fig:img2}
    \end{minipage}\hfill
    \begin{minipage}{0.05\textwidth}
        \centering
        \includegraphics[width=\textwidth]{figs/background.png}
        \vspace{-2em}
        \caption*{\textcolor{white}{(a)}}
    \end{minipage}
    \vspace{-0.5em}
     \caption{\textbf{Gaussian mixture} (\secref{subsec:num_gaussian_mixture}): \textbf{(a)} Uniform-norm error in the average spectra of samples when using trace-class noise vs. white noise. \textbf{(b)} Generated samples
     at a resolution of 256 with trace-class noise and \textbf{(c)} with white noise.}
     \label{fig:gaussian_mixture_spectrum}
\end{figure*}

\begin{figure*}[!htbp]
    \begin{minipage}{0.22\textwidth}
        \centering
        \includegraphics[width=\textwidth]{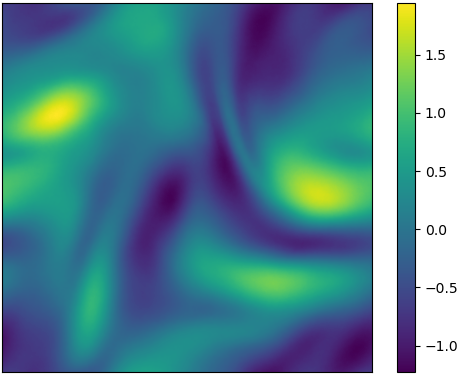}
        \vspace{-1.5em}
        \caption*{(a)}
        % \label{fig:img1}
    \end{minipage}\hfill 
    \begin{minipage}{0.25\textwidth}
        \centering
        \includegraphics[width=\textwidth]{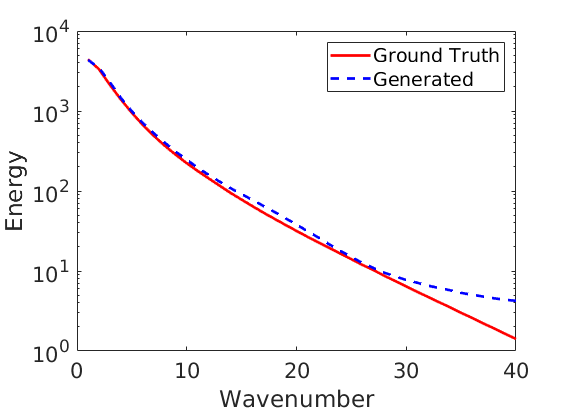}
        \vspace{-2em}
        \caption*{(b)}
        % \label{fig:img1}
    \end{minipage}\hfill
    \begin{minipage}{0.25\textwidth}
        \centering
        \includegraphics[width=\textwidth]{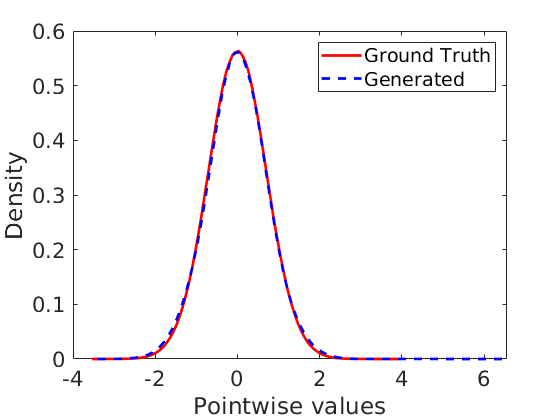}
        \vspace{-2em}
        \caption*{(c)}
        % \label{fig:img2}
    \end{minipage}
    \begin{minipage}{0.25\textwidth}
        \centering
        \includegraphics[width=\textwidth]{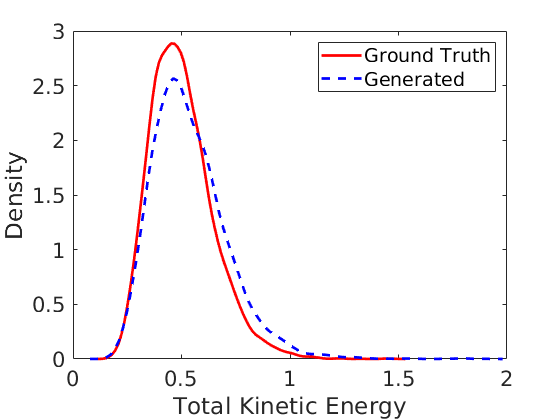}
        \vspace{-2em}
        \caption*{(d)}
        % \label{fig:img2}
    \end{minipage}
    \vspace{-0.5em}
     \caption{\textbf{Navier-Stokes} (\secref{subsec:num_navier_stokes}): \textbf{(a)} Generated sample at the resolution \(1024 \times 1024\) with a model trained at \(128 \times 128\), \textbf{(b)} Spectrum,
     \textbf{(c)} Pointwise value density, \textbf{(d)} Total kinetic energy density of samples from the model vs. the data.}
     \label{fig:num_navier_stokes}
\end{figure*}

\subsection{Gaussian Mixture}
\label{subsec:num_gaussian_mixture}
We consider a Gaussian mixture model by sampling a Gaussian random field (GRF) on the domain \((0,2\pi)\) and assigning it one of two
mean functions with a fixed probability. Details on the precise construction can be found in \appref{subsec:exp_gaussianmixutre}.
We fix a FNO model architecture and train DDO on various discretizations of the data using either trace-class noise 
or white noise. In Figure~\ref{fig:gaussian_mixture_spectrum}(a) we compare the uniform (or sup) norm error in the spectrum 
of the true and generated data for the two types of noise. We see that while white noise achieves small errors at low resolutions,
its error grows as we refine the resolution. On the other hand, trace-class noise  achieves a consistent error across many resolutions. 
Indeed even at a resolution of 256, the trace-class noise model captures the right 
distribution in Figure~\ref{fig:gaussian_mixture_spectrum}(b), unlike the white noise model in Figure~\ref{fig:gaussian_mixture_spectrum}(c); see \appref{subsec:exp_gaussianmixutre}
for further visualizations and \appref{sec:num_smoothing} for an example that uses the smoothing operators described in \secref{subsec:smoothing_operators}. This is because as
we refine the resolution, the model trained with white noise has to capture progressively higher frequency functions %but its capacity is fixed 
and thus it fails to do so with a fixed capacity model. Trace-class noise, on the other hand, has a convergent Fourier spectrum that the model can capture
independently of the discretization. The white noise issue can be fixed by designing larger architectures and more sampling steps, but this yields a model where both the number of parameters and sampling steps need to increase with dimension. Therefore algorithms designed with white noise cannot be expected to scale to arbitrarily large resolutions. 

\subsection{Navier-Stokes}
\label{subsec:num_navier_stokes}
Next, we consider the vorticity form of the Navier-Stokes equation on the 2D-torus with a Reynolds number of 500. We develop a solver for this problem and solve it up to a fixed time for a fixed initial condition with different forcing functions generated by a GRF.
The data distribution is therefore a pushforward of a Guassian under a non-linear map and is therefore
non-Gaussian. Details are given in \appref{subsec:exp_navierstokes}. We train a DDO with FNO based model with data 
at a fixed \(128 \times 128\) resolution and \(L^2 (\mathbb{T}^2; \sR)\) valued noise. We observe that the trained DDO accurately generate the function valued data learned from the underlying distribution. In Figure~\ref{fig:num_navier_stokes}(b-d), we compare statistics
relevant for turbulence analysis from the data and the samples from the model, verifying that we are able to
capture the true distribution~\cite{li2021learningdissipative}. In Figure~\ref{fig:num_navier_stokes}(a) we show a
sample from the model generated at a \(1024 \times 1024\) resolution without any re-training; more samples are visualized
in \appref{subsec:exp_navierstokes}. In particular,
our model generalizes to high resolutions at no extra cost, performing super-resolution natively. Such a method
has powerful applications for learning the invariant measures of dissipative dynamical systems which can used
for turbulance analysis and climate science~\cite{temam1988infinite}.

\subsection{Volcano Dataset}
\label{subsec:volcano}

For the following experiments we use the volcano dataset originally proposed in GANO \cite{rahman2022generative}, and UNO as the base architecture. The volcano InSAR dataset consists of 4096 data points of spatial resolution $128 \times 128$, derived from raw interferograms produced from satellites covering the Long Valley Caldera near Mammoth Lakes, California, United States. Since the dataset consists of relatively few examples, we employ a light amount of data augmentation during training in the form of random horizontal and vertical flips.
We present key elements of our loss function and architecture below and provide further details about these experiments in \appref{app:volcano}, e.g. learning and hyperparameter details.

Instead of manually comparing histograms of these evaluation metrics to their respective statistics computed on the training set, here we quantitatively measure how close their histograms are by measuring the 1D Wasserstein distance between them. That is, we define:
\begin{align} \label{eq:volcano_metrics}
\wvar & = W_1( \varfn(\theta(\mathbf{u})), \varfn(\theta(\tilde{\mathbf{u}})) ) \\
\wskew & = W_1( \skewfn(\theta(\mathbf{u})), \skewfn(\theta(\tilde{\mathbf{u}})) ) \\
\wtotal & = \wvar + \wskew
\end{align}
where $\mathbf{u} = \{ \mathbf{u}_{i} \}_{i=1}^{N}$ denotes the training set and conversely $\tilde{\mathbf{u}} = \{ \tilde{\mathbf{u}}_j \}_{j=1}^{M}$ generated samples from the diffusion model. We set $M = 256$ for fast metric tracking since it is computationally expensive to generate many samples. During training, we periodically evaluate both evaluation metrics and keep track of checkpoints corresponding to the smallest values seen so far, then when we perform a final evaluation of the model we use the checkpoint corresponding to the smallest $\wtotal$ seen so far.

For these experiments we sample from a 2D GRF based on the RBF kernel (\secref{app:volcano}), and therefore an important hyperparameter to tune is $\gamma$, the smoothness of the noise.

When the best model has been selected, we re-compute the Wasserstein metrics using $M = 1024$ instead. While $M = 256$ was used during training, we find it did not show significant changes in the computed statistics.

\begin{figure}[h!]
    \centering
    % trim = left bottom right top
    \begin{subfigure}[b]{\textwidth}
        \centering
        \includegraphics[width=0.9\textwidth,trim=0 280 0 0,clip]{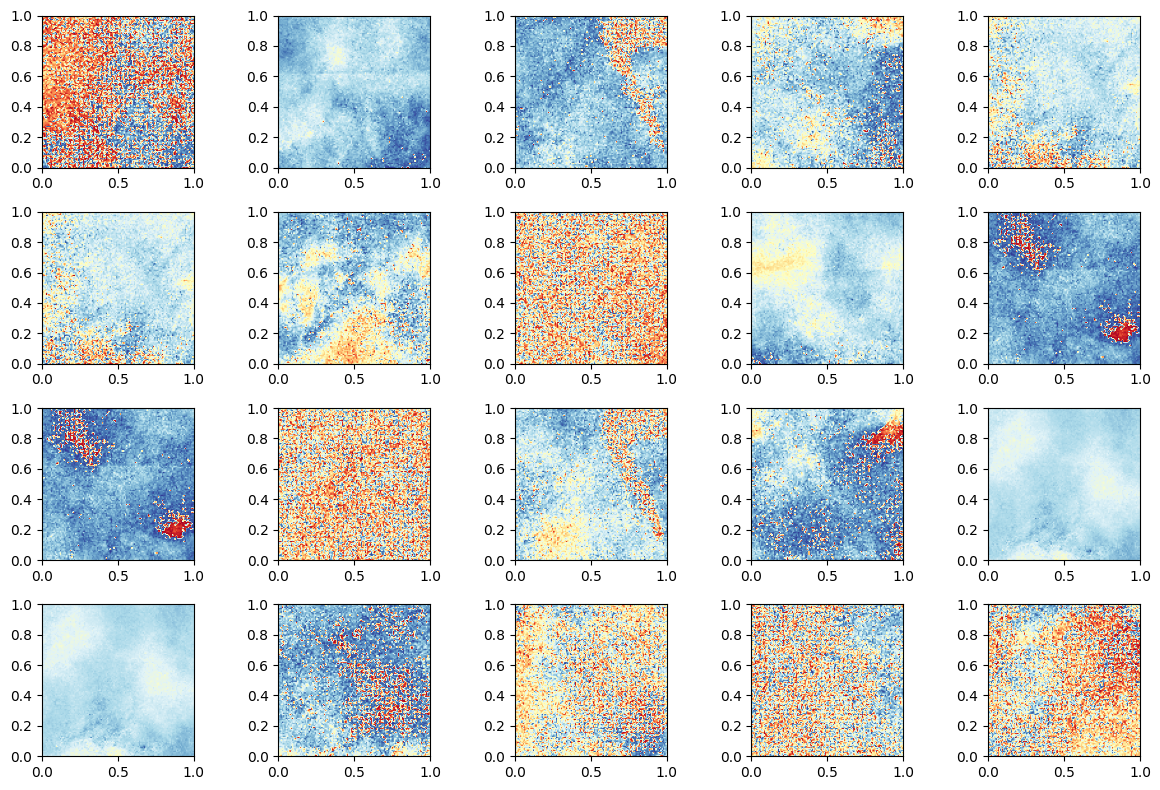}
        \caption{Random samples from the best performing diffusion model. %(histograms are shown in \ref{fig:sbgm_hist}).
        }
        \label{fig:sbgm_samples}
    \end{subfigure} \\
    \vspace{0.1cm}
    \begin{subfigure}[b]{0.45\textwidth}
        \centering
        \includegraphics[width=\textwidth]{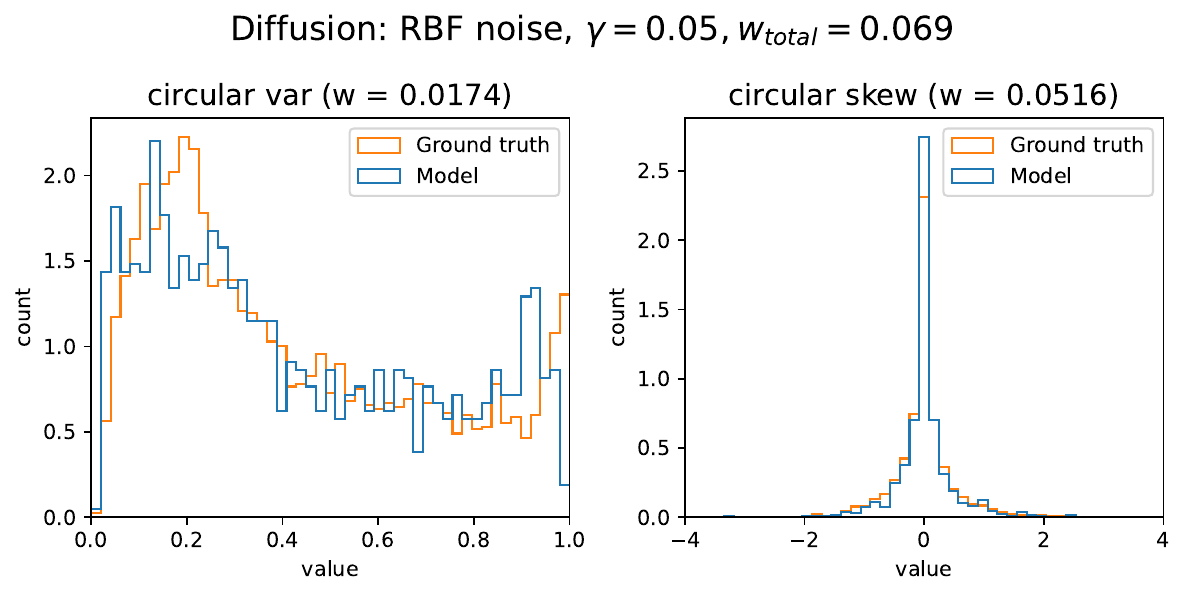}
        \caption{Best performing diffusion model}
        \label{fig:sbgm_hist}
    \end{subfigure} \begin{subfigure}[b]{0.45\textwidth}
        \centering
        \includegraphics[width=\textwidth]{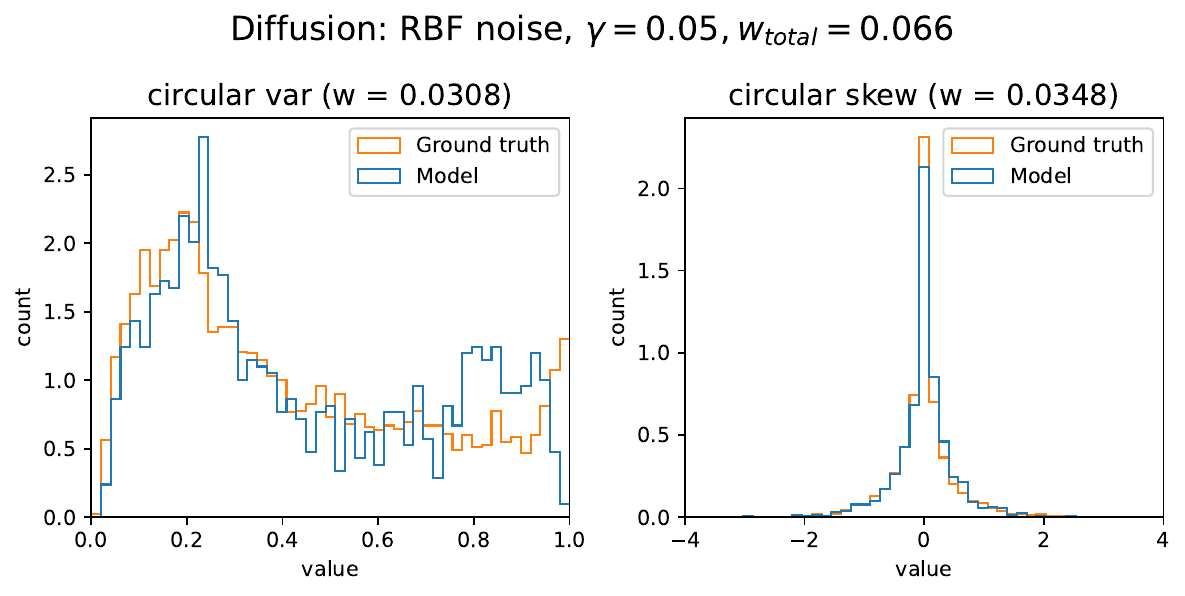}
        \caption{Best performing GANO model}
        \label{fig:gano_hist}
    \end{subfigure}   
    \caption{\textbf{Results of Volcano dataset experiments} (\secref{subsec:volcano}): Samples from the best performing FNO diffusion model with an RBF scale of $\gamma = 0.05$. For both histograms, $M = 1024$ generated samples were used to compute skew and variance.}
    \label{fig:sbgm}
\end{figure}

% \paragraph{Architecture Used for the Volcano Experiments} The architecture we use is a U-shaped neural operator (UNO). This architecture consists of a series of Fourier neural operator blocks (FNOBlocks) which progressively downsample or upsample the input. We use the implementation of FNOBlocks from the Neural Operators library \cite{li2020fourier, kovachki2021neural}. Concretely, for the Volcano dataset the example $u$ has a spatial dimension of $120 \times 120$ with two channels, and is lifted from $2 \rightarrow 128$ channels with a preprocessing convolutional layer, which also pads the 120px image to 128px. Afterwards, the lifted input is run through four FNOBlocks which progressively lift the channel dimension and spatial dimensions to $(128 \rightarrow 256 \rightarrow 512 \rightarrow 512)$ and $(128 \rightarrow 96 \rightarrow 64 \rightarrow 32)$, respectively. A similar set of blocks is used in the decoder block along with skip connections.

% For FNOBlock, we use Tucker factorisation, with a rank of 0.1. In order to ensure the number of learnable parameters does not explode, we also constraint the number of Fourier modes in each FNOBlock to be 50\% of the number of input channel dimensions for each block. The resulting U-shaped neural operator contains a total of ~142M learnable parameters.

% \subsection{Volcano Dataset Results and Discussion}
\paragraph{Results}
In Figure \ref{fig:sbgm} we demonstrate samples and histograms produced by our best performing diffusion model, with smoothness parameter $\lambda = 0.05$. These are shown in Figures \ref{fig:sbgm_samples} and \ref{fig:sbgm_hist} respectively, and a reference GANO model is also shown in \ref{fig:gano_hist}. We can see that our model is able to accurately learn the ground truth function, as indicated by the histograms shown in Figure \ref{fig:sbgm_hist}. Due to the noisiness of this dataset we found that the best results were achieved with an RBF scale parameter of $\gamma = 0.05$ (see \secref{app:volcano}), which corresponds to very rough levels of noise.

At generation time, in order to generate at twice the resolution we construct a meshgrid that is twice as granular as that used in training. For example, if $s^2$ is the original resolution then we compute a meshgrid $x \in [0,1]^{(s^2)^2 \times 2}$ and use that to sample RBF noise at twice the resolution. Concretely, we train DDO on a downsampled version of Volcano at $60 \times 60$ resolution and double the resolution at generation time to $120 \times 120$. In order to quantitatively evaluate this task we still compute skew and variance metrics as previously described (Equation \ref{eq:volcano_metrics}), but now these are computed between the super-resolution samples and the original $120 \times 120$ resolution dataset. These results are shown in Figure \ref{fig:diffusion_superres}, and we demonstrate results comparing independent Gaussian noise to different values of RBF smoothness $\gamma$ used during training. As expected, independent noise performs abysmally (Figure \ref{fig:d_sr_hist_wn}), and we also found that $\gamma = 0.05$ did not perform well (Figure \ref{fig:d_sr_hist_005}). However, smoother levels of RBF noise performed well, and the best results were achieved with $\gamma = 0.2$ (Figure \ref{fig:d_sr_hist_02}). This demonstrates the ability of our model to query the sampled function.

%and sample from the RBF kernel conditioned on that meshgrid. For computational reasons, we demonstrate experiments training on a downsampled version of Volcano at $60 \times 60$ resolution and double the resolution at generation time to $120 \times 120$ as per the aforementioned procedure. We then compute skew and variance metrics but now $\tilde{\mathbf{u}}$ denotes $120 \times 120$ resolution samples and $\mathbf{u}$ the \emph{original} $120 \times 120$ images from the dataset. We show these results in Figure \ref{fig:diffusion_superres}. We see that the smoothest level of noise considered -- $\lambda = 0.2$ -- achieves the best results.

\subsection{MNIST-SDF Dataset}
\label{subsec:mnist-sdf}

% For the following experiments we use the Volcano dataset originally proposed in GANO \cite{rahman2022generative}. This dataset consists of 4096 data points of spatial resolution $128 \times 128$, derived from raw interferograms produced from satellites covering the Long Valley Caldera near Mammoth Lakes, California. Since the dataset consists of relatively few examples, we employ a light amount of data augmentation during training in the form of random horizontal and vertical flips.

Next, to demonstrate the efficacy of the proposed method on function generation in conjunction with images, \updated{we conduct experiments on MNIST-SDF \citep{sitzmann2019metasdf} and compare the proposed method to GANO \citep{rahman2022generative} and MultilevelDiff \citep{hagemann2023multilevel}.
GANO is an adversarial training-based function space generative model. 
% MultilevelDiff is one of the diffusion models designed for function space. 
As compared to other concurrent works on diffusion models designed for function spaces~\citep{kerrigan2022diffusion, pidstrigach2023infinite, baldassari2023conditional}, we selected MultilevelDiff since it was already tested on similar two-dimensional datasets, while other models have been limited to one-dimensional datasets. 
% MultilevelDiff is an \red{some word} in being tested on 2D data, while other models have been limited to 1D datasets. 
In this setting, \citep{bond2024infty} is another a function space diffusion model which aims to model 2D image datasets. However, this model relies on frameworks tailored for high-fidelity image modeling, which includes performing diffusion in latent spaces, thereby making direct comparisons unfair. An indirect comparison with this model is discussed in \appref{app:alias-free}.
}

%and an independent Gaussian noise version.
% MultilevelDiff is one of diffusion models 
% concurrent work 
% Amonst other similar works, it is the only method that has tested on 2d data while others limited to 1d datase
% one of diffusion models in function space Unlike similar works 
% \citep{kerrigan2022diffusion, pidstrigach2023infinite, baldassari2023conditional}

% Rather than resizing finite-dimensional datasets for various resolutions, we select MNIST-SDF. 
\updated{
MNIST-SDF is a collection of 2D signed distance functions (SDFs), each of which is extracted by applying a distance transform to every image in the MNIST dataset. As compared to resizing finite-dimensional datasets for various resolutions, this conversion makes the dataset defined on function space, and allows us to consistently compute evaluation statistics like FID \citep{heusel2017gans} and precision-recall \citep{kynkaanniemi2019improved} metrics across different resolutions. Examples of the 2D SDFs are shown in Figure \ref{fig:mnist-sdf-data}(a).
}

\updated{
We aim to train models on 32$\times$32-resolutions and evaluate the FID and precision-recall at different resolutions. Specifically for image datasets, we choose to upsample 32$\times$32 resolution to a 64$\times$64 resolution, since this ensures we can select the number of Fourier modes to represent the data or noise to be higher than the discretization. We will discuss the necessity of the upsampling in the following section. The other experimental details, including the architecture and training procedure, are described in the \appref{app:mnist-sdf}
}

As we find that the classifiers pre-trained for the evaluation metrics are more suitable for the original MNIST-like binary digits than 2D SDFs, we evaluate the metrics after generating binary masks by thresholding the sample SDFs to larger than $0$. The masked 2D SDFs images are illustrated in Figure \ref{fig:mnist-sdf-data}.(b). We follow the styleGAN3's evaluation protocol  \citep{karras2021alias} for the FID and precision-recall.

\begin{figure}[H]

    \centering
    % \hfill
    \begin{minipage}[b]{0.28\textwidth}
        \centering
        \includegraphics[width=\textwidth]{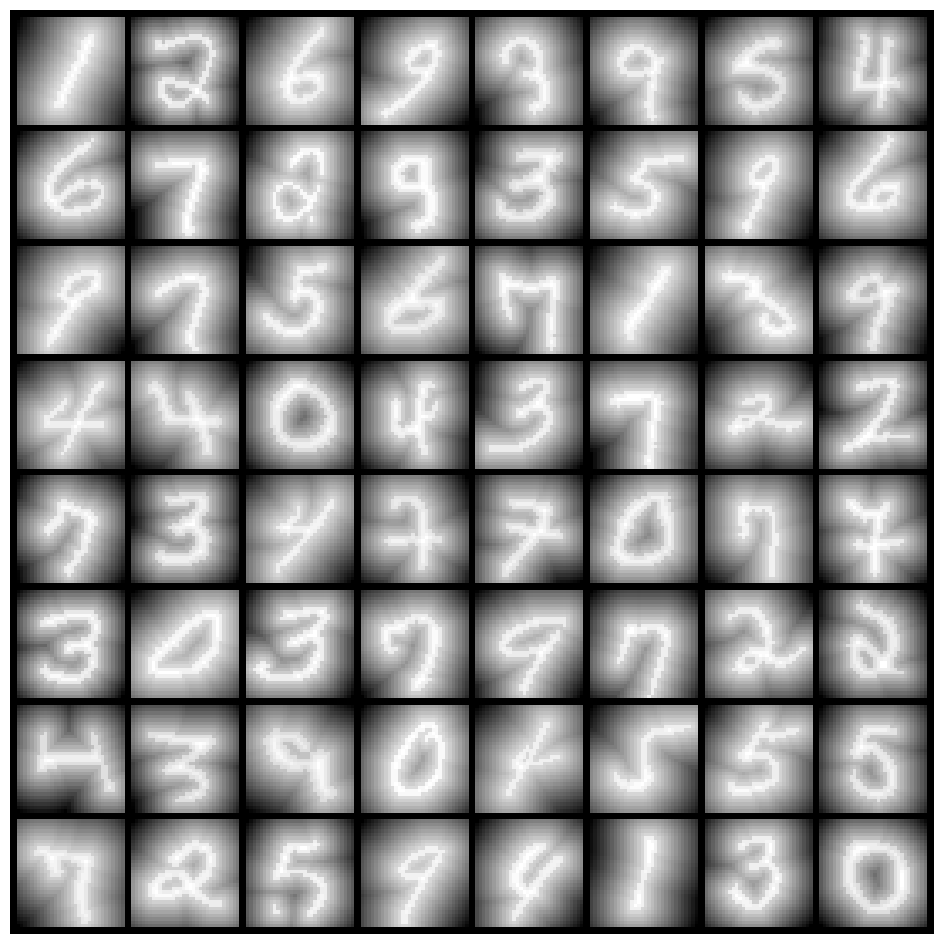}
        \vspace{-6mm}
        \caption*{(a) MNIST-SDF}
    \end{minipage}
    \begin{minipage}[b]{0.02\textwidth}
        \centering
        \includegraphics[width=\textwidth]{figs/background.png}
    \end{minipage}
    \begin{minipage}[b]{0.28\textwidth}
        \centering
        \includegraphics[width=\textwidth]{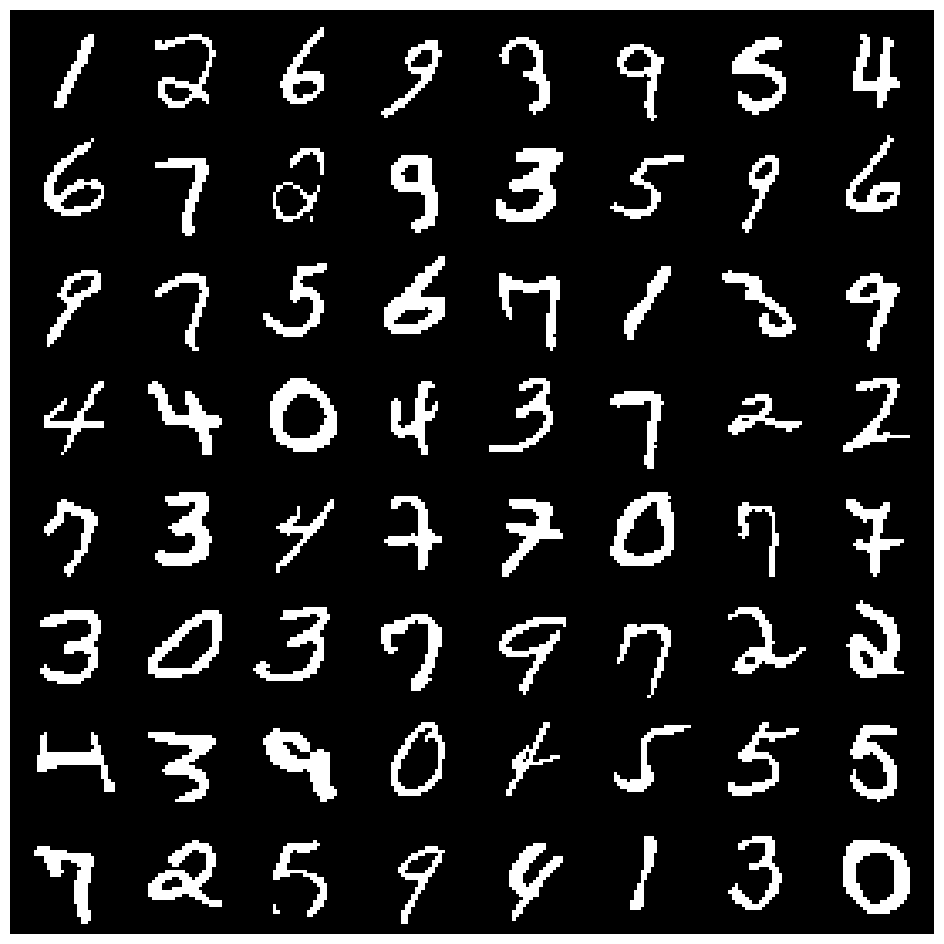}
        \vspace{-6mm}
        \caption*{(b) Masked}
    \end{minipage}
    \vspace{-1mm}

    \caption{\textbf{MNIST-SDF Dataset.} (a) Samples of 2D SDFs in the datasets. A 2D SDF is generated by applying a distance transform to each image in the MNIST dataset. (b) Binary masks extracted by thresholding the sample SDFs where the value is larger than $0$. Every generated samples from the models will be masked before running FID and precision-recall metrics.}
    \label{fig:mnist-sdf-data}
\end{figure}

\paragraph{Upsampling the Finite-dimensional Observations}
% \label{app:upsample}

Our theory suggests that any sample function in the target data distribution should be smoother than the samples of the noise distribution to satisfy \(\mu(H_{\mu_0}) = 1\). This assumption implies that when we represent the data and noise in Fourier space, the Fourier bases required to describe all noise samples include the basis set of the data. This requirement leaves us some implementation constraints when dealing with the finite observations of the data or noises, especially when the number of bases representing a noise sample is larger than the discretization size. This is often the case since useful distributions like Gaussian measures are often obtained in $d$-dimensional observations while its basis set size is much larger than $d$. In this section, we will discuss how to address the constraints efficiently. %We will denote the sizes of the Fourier basis sets of the data and noise as $N_\textup{data}$ and $N_\textup{noise}$ respectively.}

For most experimental scenarios, we assume that we can only access $d$-dimensional observations of the data. This means that we can treat the size of the data's basis set to be $d$. While the true size could be larger than $d$, we won't be able to model anything other than observations on the $d$ basis. However, we can learn such discrete observations on the Fourier bases by discrete Fourier transforms and generate arbitrary discretization from them.

On the contrary to data, for some useful distributions in the infinite-dimensional space, the size of basis sets is often larger than the observation size $d$; for example, Gaussian measures. For Gaussian, there exists a positive integer $M < \infty$ such that the $M$-size of the basis set can represent all its samples. When we select $M$ to be higher than $d$ to model the $d$-dimensional data, the components on $M$-$d$ number of bases won't be observed at the given discretization. Thus, any model may fail to generate proper images at any resolution larger than $d$, as the unseen noise components will be introduced.
% For a image-like data distribution, its $d$-dimensional observations can represented by the $d$ number of Fourier basis functions through discrete Fourier transform.
% \textcolor{red}{Note that for the Gaussian measures in the infinite-dimensional Hilbert space, there exists a positive integer $M < \infty$ such that the $M$-number of basis functions can represent its samples. }%Thus, $d < M$ is required to represent the practice's finite-dimensional observations.
% However, when we sample a Gaussian noise at the $d$-dimensional observations, but whose mode $M$ is larger than $d$, $M$-$d$ number of modes won't be observed at $d$-dimensional discretization. Thus, any model would fall short of generating proper images at any resolution higher than $d$ due to the unseen noise components.

To address this, for a given $d$-dimensional observations, we propose upsampling to $d'$ such that $d'$ is large enough to $M$. For the 32$\times$32-resolution observation of the MNIST-SDF, we choose $d'$ to 64$\times$64. For the upsampling of the finite observations, this paper follows the filtered upsampling implementation discussed in \citet{karras2021alias}. \updated{While the models are trained on 64$\times$64-resolution, we use only 32 modes in the spectral convolutions (at the lowest level) of the architectures used in DDO, MultilevelDiff, and GANO, which enables them to generate images at the 32$\times$32-resolution as well.}

Note that one can truncate the modes of Gaussians up to $d$ so that the unseen noise components won't be introduced. However, we find that such truncations often generate artifacts in super-resolution tasks; periodic waves are drawn, which are supposed to be straight lines.

\begin{table}[htb!]
\centering
\begin{tabular}{cccccccccc}
\toprule
         & \multicolumn{3}{c}{$64\times64$} & \multicolumn{3}{c}{$128\times128$} & \multicolumn{3}{c}{$256\times256$} \\
\midrule
         % & \textbf{FID}   & \textbf{Precision}      & \textbf{Recall}  & \textbf{FID}     & \textbf{Precision}      & \textbf{Recall}  & \textbf{FID}      & \textbf{Precision}    & \textbf{Recall}   \\
         & FID   & Prec$^*$      & Rec$^\dagger$  & FID     & Prec      & Rec  & FID      & Prec    & Rec   \\
\midrule
% GANO & 3.41  & \textbf{0.75}  & 0.63    & 13.05   & \textbf{0.68}  & 0.50    & 23.89    & 0.60         & 0.32    \\
% DDO (Ours) & \textbf{3.36} & 0.72 & \textbf{0.67} & \textbf{11.30} & \textbf{0.68} & \textbf{0.57} & \textbf{21.90} & \textbf{0.62} & \textbf{0.35} \\
GANO & 3.41  & \textbf{0.75}  & 0.63    & 13.05   & 0.68  & 0.50    & 23.89    & 0.60         & 0.32    \\
\updated{MultilevelDiff} & \updated{35.09} % 35.09315223231306
& \updated{0.03}  % 0.030880000442266464
& \updated{0.06} % 06351999938488007
& \updated{201.08} % 201.076030274303
& \updated{0.00} % 
& \updated{0.00} % 
& \updated{365.90} % 365.90393069596735
& \updated{0.00} %       
& \updated{0.00} %  
\\
DDO (Ours) & \textbf{2.74} & 0.73 & \textbf{0.68} & \textbf{7.96} & \textbf{0.71} & \textbf{0.60} & \textbf{17.76} & \textbf{0.65} & \textbf{0.39} \\
\bottomrule
\multicolumn{1}{l}{\scriptsize $^*$Precision. $^\dagger$Recall.}
\end{tabular}
\caption{Results of MNIST-SDF experiments.}
\captionsetup{justification=centering}
\label{table:mnist-sdf-results}
\end{table}

% \begin{figure}[H]
\begin{figure}[htb!]
    \centering
    \begin{minipage}[b]{0.02\textwidth}
        \centering
        \includegraphics[width=\textwidth]{figs/background.png}
    \end{minipage}
    % \hfill
    \begin{minipage}[b]{0.05\textwidth}
        \centering
        \includegraphics[width=\textwidth]{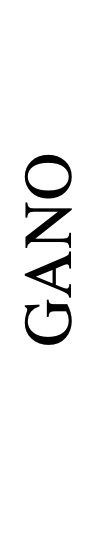}
    \end{minipage}
    %\begin{minipage}[b]{0.01\textwidth}
    %    \centering
    %    \includegraphics[width=\textwidth]{figs/background.png}
    %\end{minipage}
    % \hfill
    \begin{minipage}[b]{0.29\textwidth}
        \centering
        \includegraphics[width=\textwidth]{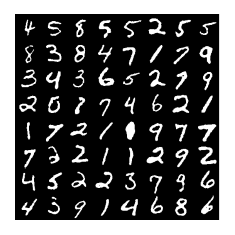}
    \end{minipage}
    \hfill
    \begin{minipage}[b]{0.29\textwidth}
        \centering
        \includegraphics[width=\textwidth]{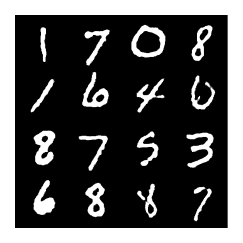}
    \end{minipage}
    \hfill
    \begin{minipage}[b]{0.29\textwidth}
        \centering
        \includegraphics[width=\textwidth]{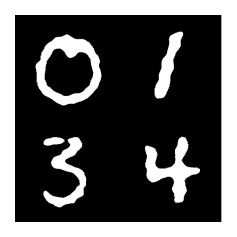}
    \end{minipage}
    \hfill
    \begin{minipage}[b]{0.02\textwidth}
        \centering
        \includegraphics[width=\textwidth]{figs/background.png}
        %\vspace{-5mm}
        %\caption*{\textcolor{white}{()}}
    \end{minipage}

    \begin{minipage}[b]{0.02\textwidth}
        \centering
        \includegraphics[width=\textwidth]{figs/background.png}
    \end{minipage}
    % \hfill
    \begin{minipage}[b]{0.05\textwidth}
        \centering
        \includegraphics[width=\textwidth]{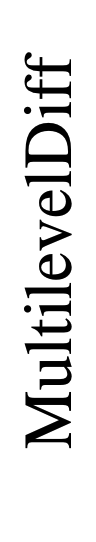}
    \end{minipage}
    \begin{minipage}[b]{0.29\textwidth}
        \centering
        \includegraphics[width=\textwidth]{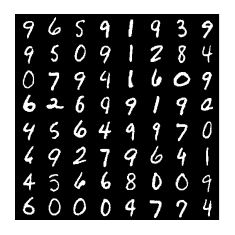}
    \end{minipage}
    \hfill
    \begin{minipage}[b]{0.29\textwidth}
        \centering
        \includegraphics[width=\textwidth]{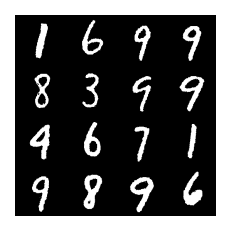}
    \end{minipage}
    \hfill
    \begin{minipage}[b]{0.29\textwidth}
        \centering
        \includegraphics[width=\textwidth]{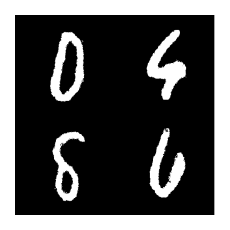}
    \end{minipage}
    \hfill
    \begin{minipage}[b]{0.02\textwidth}
        \centering
        \includegraphics[width=\textwidth]{figs/background.png}
    \end{minipage}

    \begin{minipage}[b]{0.02\textwidth}
        \centering
        \includegraphics[width=\textwidth]{figs/background.png}

        \vspace{-4mm}

        \caption*{\textcolor{white}{()}}
    \end{minipage}
    % \hfill
    \begin{minipage}[b]{0.05\textwidth}
        \centering
        \includegraphics[width=\textwidth]{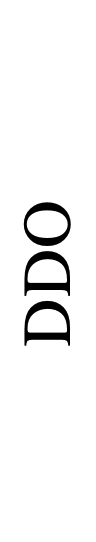}

        \vspace{-4mm}

        \caption*{\textcolor{white}{()}}
    \end{minipage}
    %\begin{minipage}[b]{0.01\textwidth}
    %    \centering
    %    \includegraphics[width=\textwidth]{figs/background.png}
    %    \vspace{-5mm}
    %    \caption*{\textcolor{white}{()}}
    %\end{minipage}
    \begin{minipage}[b]{0.29\textwidth}
        \centering
        \includegraphics[width=\textwidth]{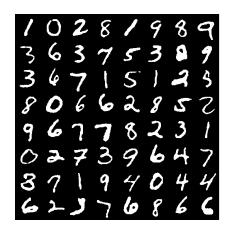}
        %\label{fig:three sin x}

        \vspace{-4mm}

        \caption*{(a) \,$64\times64$}
    \end{minipage}
    \hfill
    \begin{minipage}[b]{0.29\textwidth}
        \centering
        \includegraphics[width=\textwidth]{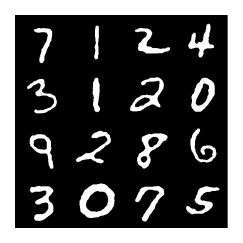}

        \vspace{-4mm}

        \caption*{(b) \,$128\times128$}
    \end{minipage}
    \hfill
    \begin{minipage}[b]{0.29\textwidth}
        \centering
        \includegraphics[width=\textwidth]{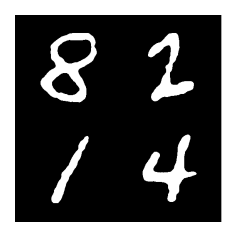}

        \vspace{-4mm}

        \caption*{(c) \,$256\times256$}
    \end{minipage}
    \hfill
    \begin{minipage}[b]{0.02\textwidth}
        \centering
        \includegraphics[width=\textwidth]{figs/background.png}

        \vspace{-4mm}

        \caption*{\textcolor{white}{()}}
    \end{minipage}

    \caption{\textbf{Generated Samples} (\secref{subsec:mnist-sdf}): Generated samples (masked) of the learned GANO, MultilevelDiff, and DDO models at various resolutions; (a) 64$\times$64, (b) 128$\times$128, and (c) 256$\times$256-resolutions. All images are plotted according to their relative resolutions. All models are trained on 64$\times$64-resolution images, which are upsampled from 32$\times$32-resolution observations of 2D SDFs.}
    \label{fig:mnist-sdf-samples}
\end{figure}

\paragraph{Results}
\begin{updaterequired}[black]
Table \ref{table:mnist-sdf-results} shows the FID and precision-recall metrics evaluated from learned models in the MNIST-SDF experiments. DDO outperforms both GANO and MultilevelDiff baselines, except for the precision at the training resolution (64$\times$64). Moreover, DDO demonstrates higher recall at all resolutions, which is coherent with the general property of diffusion-based models, whose objectives are to minimize the KL divergence between the data distribution and the model. Note that such connections to the DDO's objective are briefly discussed in Lemma \ref{lemma:wasserstein_approx}. Interestingly, MultilevelDiff exhibits a notable performance lag in comparison to the other two. The discussion related to this performance gap will be presented while examining the generated samples.

Figure \ref{fig:mnist-sdf-samples} illustrates the generated samples at various resolutions produced by our DDO model and the baselines\footnote{Figure \ref{fig:mnist-sdf-samples-8x8} illustrates additional samples at various resolutions produced by all models.}. Visually, all models appear to achieve high-quality generation of all digits across all resolutions. However, in the case of MultilevelDiff, the variation in digit shapes and styles is noticeably lower. This tendency becomes more pronounced at higher resolutions (see Figure \ref{fig:mnist-sdf-samples-8x8}), which only leads to generating a few digits. This characteristic explains the poor FID scores and precision-recall metrics observed for MultilevelDiff above. In contrast, both DDO and GANO not only achieve high-quality generation across all resolutions but also maintain the variation in styles and digits observed at the training resolution.  

On the other hand, the generated samples from both DDO and GANO show curved boundaries, which the MNIST-SDF dataset doesn't have. This artifact originates from the spectral convolution, as it cut off higher frequency components than its parameters' highest mode. Again, this results in the loss of some high-frequency components which would be necessary to represent arbitrary curved lines with no artifacts. This observation also emphasizes the importance of upsampling during training instead of truncating the noise. We leave addressing such artifacts for future works. 
\end{updaterequired}

\begin{updaterequired}[black]
\subsection{Darcy Flow Bayesian Inverse Problem}
\label{subsec:darcyflow}

We apply our method to the geophysical inverse problem of recovering a subsurface permeability field from pointwise observations of the pressure at the surface, which is also known as the Darcy flow inverse problem. We refer to Section~\ref{subsec:conditional} for the abstract formulation of such problems and how our framework can be applied to solve them. In this setting, the forward model is defined as the solution of the following elliptic partial differential equation (PDE) on the domain $\mathcal{D} = (0,1)^2$ given by
\begin{subequations}
\begin{align}
-\nabla (a(s) \nabla p(s)) &= 1, \quad s \in \mathcal{D} \\
p(s) &= 0, \quad s \in \partial\mathcal{D},
\end{align}
\end{subequations}
where $a(s) \in \R_{+}$ represents the strictly positive permeability and $p(s)$ is the pressure. We consider the parameter $u = \log(a)$ defining the log-permeability, which we recover from 64 observations on an $8\times 8$ grid, i.e., $y(s_i) = p(s_i)$ for locations $s_i$ in the interior of domain $\mathcal{D}$ on a regular grid. To invert for the solution, we consider a log-normal prior distribution for the log-permeability $u = \log(a) \sim \mathcal{N}(0,(-\Delta + \tau I)^{-2}),$ where $\Delta$ is the Laplacian operator and we set $\tau = 9$. We assume the observations are corrupted with  Gaussian observational noise  $\eta \sim \mathcal{N}(0,\Sigma_m)$ where $\Sigma_m$ is a diagonal matrix with entries $\mathbb{E}[p(s_i)^2] / 5$, implying a signal-to-noise ratio of 5.

To generate data at a resolution of $64 \times 64$ for training, we solve the PDE on a regular grid of resolution $1024 \times 1024$ for each realization of the permeability field $a^i = \exp(u^i)$ for $u^i \sim \mu(\cdot)$ to obtain the observations $y^i = \mathcal{F}(u^i) + \eta^i$. This process yields pairs of observations $(u^i,y^i)$ drawn from the joint measure for the parameter and observation. Finally, we downsample the high-resolution observation to a resolution of $64 \times 64$ for the training data. %, that are used to discretize the expectation and learn the score operator.
% \red{where should we say $8 \times 8$ conditional observation here?}

To quantify the performance of the learned posteriors, we evaluate the relative errors of the sample mean and variance relative to those of the posterior obtained through a Markov chain Monte Carlo (MCMC) simulation based on a pre-conditioned Crank Nicolson method that is consistent in function space~\citep{cotter2013mcmc}; see the first row of Figure \ref{fig:darcyflow-samples-64x64} for the MCMC results. The relative errors are defined by
\begin{align*}
     \mathcal{E}_{\textrm{mean}} \coloneqq \frac{ \Vert m_{\scriptscriptstyle \mathrm{MCMC}} - m_{\mathrm{model}} \Vert^2_{L^2}} { \Vert m_{\scriptscriptstyle \mathrm{MCMC}} \Vert^2_{L^2} } \quad\textrm{and}\quad
     \mathcal{E}_{\textrm{var}} \coloneqq \frac{ \Vert \sigma^2_{\scriptscriptstyle \mathrm{MCMC}} - \sigma^2_{\mathrm{model}} \Vert^2_{L^2} }{ \Vert \sigma^2_{\scriptscriptstyle \mathrm{MCMC}} \Vert^2_{L^2} },
\end{align*}
where $(m_{\scriptscriptstyle \mathrm{MCMC}}, \sigma^2_{\scriptscriptstyle \mathrm{MCMC}})$ and $(m_{\mathrm{model}}, \sigma^2_{\mathrm{model}})$ are the sample mean and variance pairs for MCMC and the model, respectively. To compute the sample means and variances, we use 10,000 samples generated by the model and 10,000 from MCMC. Similar to the MNIST-SDF experiments, we compare the proposed method to GANO and MultilevelDiff.

\begin{table}[htb!]
\captionsetup{skip=2pt}
\centering
\begin{tabular}{ccccccc}
\toprule
         & \multicolumn{2}{c}{$64\times64$} & \multicolumn{2}{c}{$128\times128$} & \multicolumn{2}{c}{$256\times256$} \\
\midrule
         & $\mathcal{E}_{\textrm{mean}}^*$ & $\mathcal{E}_{\textrm{var}}^\dagger$ 
         & $\mathcal{E}_{\textrm{mean}}$ & $\mathcal{E}_{\textrm{var}}$
         & $\mathcal{E}_{\textrm{mean}}$ & $\mathcal{E}_{\textrm{var}}$   \\
\midrule
% GANO & 3.41  & \textbf{0.75}  & 0.63    & 13.05   & \textbf{0.68}  & 0.50    & 23.89    & 0.60         & 0.32    \\
% DDO (Ours) & \textbf{3.36} & 0.72 & \textbf{0.67} & \textbf{11.30} & \textbf{0.68} & \textbf{0.57} & \textbf{21.90} & \textbf{0.62} & \textbf{0.35} \\
GANO %& \bf{30.35}
& 0.28
& 0.75
%& \bf{62.38}
& 0.32
& 0.76
%& \bf{130.03}
& 0.34
& 0.77 \\
MultilevelDiff %& 
& 0.26
& 0.52
%& 
& 0.23
& 0.61
%& 
& 0.23
& 0.72
\\
DDO (Ours) %& 28.30
& 0.26
& 0.69
%& 64.57
& 0.27
& 0.78
%& 97.14
& 0.28
& 0.79 \\
\bottomrule
\multicolumn{5}{l}{\scriptsize $^*$Scaled error of mean. $^\dagger$Scaled error of variance.}
\end{tabular}
\caption{Results of Darcy flow Bayesian inverse problem.}
\captionsetup{justification=centering}
\label{table:darcyflow-results}
\end{table}

\paragraph{Results}

% Table \ref{table:darcyflow-results} presents the scaled errors of the sample mean and variance relative to those of the true posterior obtained through Markov chain Monte Carlo (MCMC) simulation. Figure \ref{fig:darcyflow-mean-var} illustrates the sample mean and variance at various resolutions produced by our DDO model and the baselines, including MCMC. In general, DDO outperforms the GANO baseline. Interestingly, GANO exhibits a notable performance lag compared to the other two; in particular, its sample variance is very low, suggesting potential mode collapse. In contrast, the MultilevelDiff baseline outperforms all others, contrary to the MNIST-SDF experiments, as discussed in Section \ref{subsec:mnist-sdf}.

% \begin{updaterequired}[red]
% \begin{itemize}
%     \item Table first / add description about the table first and then move on to the Figure / Overall explanation later
%     \item Figure 7 / Figure 8 <- change the order
%     \item make it look more template-ty Table - description / Figure - description / overall description
%     \item notify to all authors to check out the update version / cover letter / giving the guides (what we've changed)
% \end{itemize}
% \end{updaterequired}

Table \ref{table:darcyflow-results} presents the relative errors of the sample mean and variance relative to those of the posterior computed through the Markov chain Monte Carlo (MCMC) simulation. In general, DDO outperforms the GANO baseline, showcasing better alignment with the true posterior. In particular, our model maintains consistently lower relative mean errors at all resolutions in contrast to GANO, highlighting DDO's consistency for sampling at higher resolutions. However, unlike the results from the MNIST-SDF experiments (Section \ref{subsec:mnist-sdf}), the MultilevelDiff baseline achieves the best performance among all methods. Notably, this method also demonstrates better mean and variance errors compared to the other two methods. We will revisit the performance analysis of MultilevelDiff in contrast to MNIST-SDF experiments later in this section.

Figure \ref{fig:darcyflow-mean-var} illustrates the sample mean and variance at various resolutions generated by our DDO model and the baselines, including MCMC. These results help to identify the trends observed in the relative errors. Notably, as shown in Figure \ref{fig:darcyflow-mean-var} (b), GANO's sample variance is significantly lower than MCMC, potentially indicating that mitigating mode collapse in GANO remains challenging. Although the other two models perform better than GANO, neither our DDO nor the MultilevelDiff models the sample variations accurately in comparison to the MCMC's statistics. In addition, DDO and MultilevelDiff also exhibit a tendency for decreased variation as the resolution increases. To further analyze these statistical differences, we examine the generated samples in more detail.

Figure \ref{fig:darcyflow-samples-64x64} depicts the generated samples of the trained models at the training resolution, while the generated samples at higher resolutions are shown in Figures \ref{fig:darcyflow-samples-128x128} and \ref{fig:darcyflow-samples-256x256} (in \appref{app:darcyflow}). For MCMC samples, the reverse C-shaped valley in the center retains low values, with strong variation at the edges across different samples. In contrast, GANO fails to maintain sufficiently low values in the reverse C-shaped region, exhibiting only minor local variations. DDO demonstrates a pattern similar to the MCMC samples but does not manage to achieve sufficiently low values in the center of the field, which appears to contribute to its error. 

As we discussed above, MultilevelDiff demonstrates improved performance on this dataset unlike with MNIST-SDF. This appears to result from its noise design and the inherent characteristics of the Darcy flow solutions. While GANO and DDO rely on Gaussian-based trace-class noise, the distribution of the Darcy flow solutions exhibits heavier tails, which decay more slowly than a Gaussian distribution. In contrast, MultilevelDiff utilizes a combination of a spectral convolution-based kernel and a fixed kernel, where the spectral convolution-based noise facilitates the generation of high-frequency noise more effectively.

Interestingly, the MNIST-SDF experiment results also support the hypothesis that MultilevelDiff's noise design has an advantage in modeling high-frequency components. Despite its notable underperformance (see Figure \ref{fig:mnist-sdf-samples}), MultilevelDiff avoids producing wavy aliasing artifacts, even when it is underfit on the MNIST-SDF dataset. This observation underscores the importance of trace-class noise for diffusion-based models on function space. This suggests that further refinements to noise design could be a promising direction for future research. 
\end{updaterequired}

% \begin{figure}[H]
\begin{figure}[htb!]
\captionsetup{skip=4pt}
\centering
\begin{subfigure}[b]{0.43\textwidth}
    \captionsetup{skip=3pt}
    \centering
    \begin{minipage}[b]{0.07\textwidth}
        \centering
        \includegraphics[width=\textwidth, trim={0 2.9cm 0 3.1cm}, clip=true, bmargin=1mm]{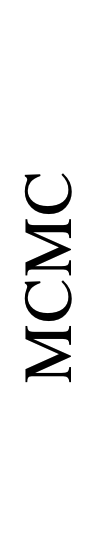}
    \end{minipage}
    \begin{minipage}[b]{0.29\textwidth}
        \centering
        \includegraphics[width=\textwidth, bmargin=1mm]{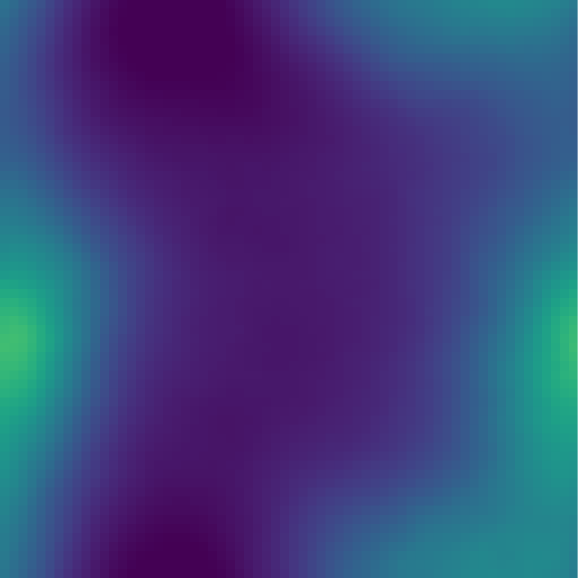}
    \end{minipage}
    \hfill
    \begin{minipage}[b]{0.29\textwidth}
        \centering
        \includegraphics[width=\textwidth, bmargin=1mm]{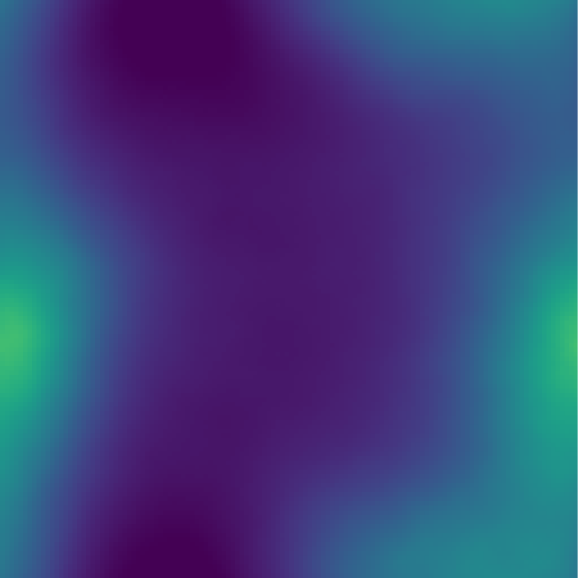}
    \end{minipage}
    \hfill
    \begin{minipage}[b]{0.29\textwidth}
        \centering
        \includegraphics[width=\textwidth, bmargin=1mm]{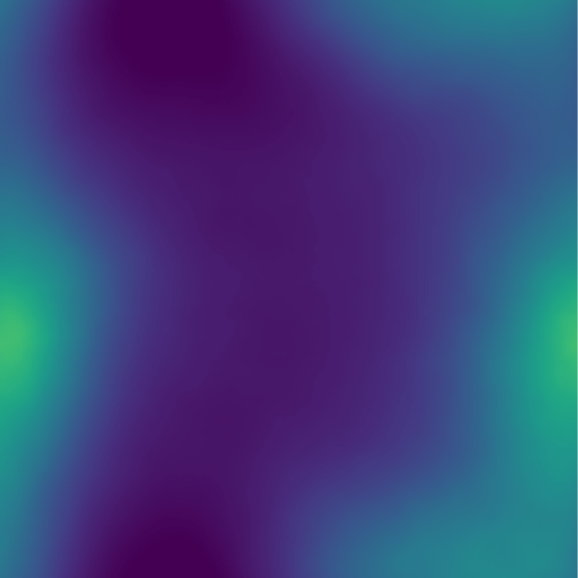}
    \end{minipage}
    % \hfill
    % \begin{minipage}[b]{0.053\textwidth}
    %     \centering
    %     \includegraphics[width=\textwidth, trim={9.35cm 0 0 0}, clip=true, bmargin=0mm]{}
    % \end{minipage}

    \begin{minipage}[b]{0.07\textwidth}
        \centering
        \includegraphics[width=\textwidth, trim={0 3.5cm 0 2.5cm}, clip=true, bmargin=1mm]{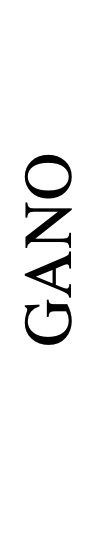}
    \end{minipage}
    \begin{minipage}[b]{0.29\textwidth}
        \centering
        \includegraphics[width=\textwidth, bmargin=1mm]{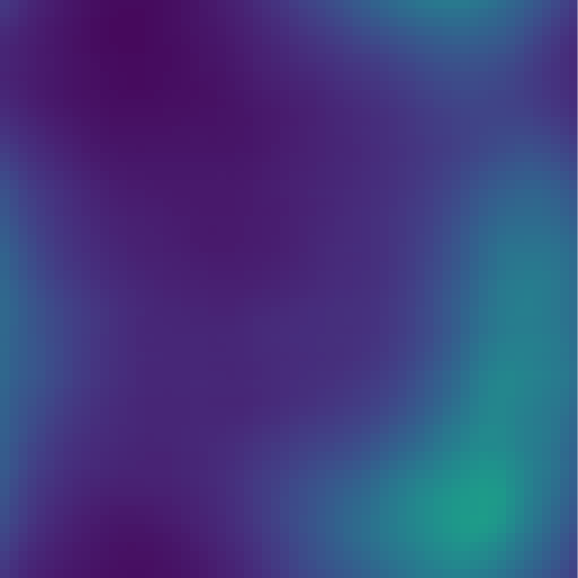}
    \end{minipage}
    \hfill
    \begin{minipage}[b]{0.29\textwidth}
        \centering
        \includegraphics[width=\textwidth, bmargin=1mm]{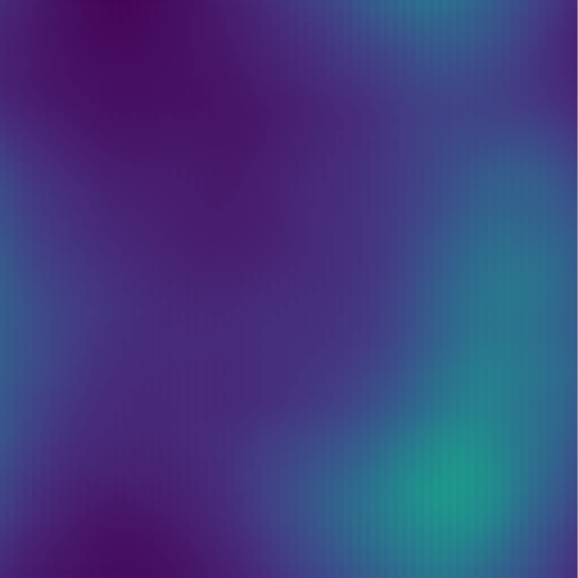}
    \end{minipage}
    \hfill
    \begin{minipage}[b]{0.29\textwidth}
        \centering
        \includegraphics[width=\textwidth, bmargin=1mm]{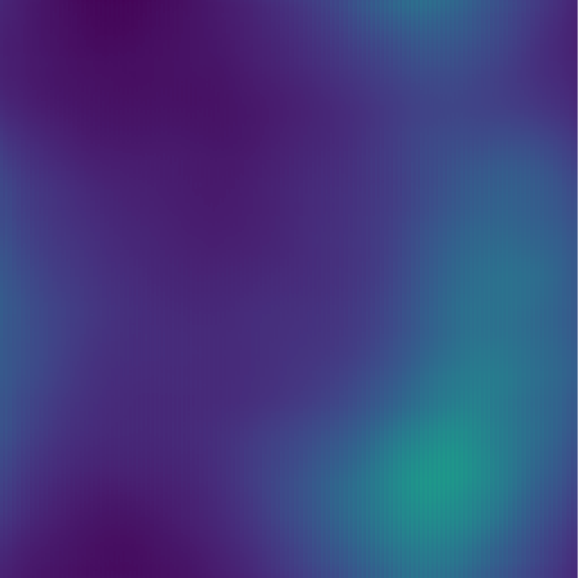}
    \end{minipage}
    % \hfill
    % \begin{minipage}[b]{0.053\textwidth}
    %     \centering
    %     \includegraphics[width=\textwidth, trim={9.35cm 0 0 0}, clip=true, bmargin=0mm]{figs/darcyflow/darcy_mcmc_mean_64x64_colorbar.pdf}
    % \end{minipage}

    \begin{minipage}[b]{0.07\textwidth}
        \centering
        \includegraphics[width=0.9\textwidth, trim={0 2.5cm 0 1.9cm}, clip=true, bmargin=0mm]{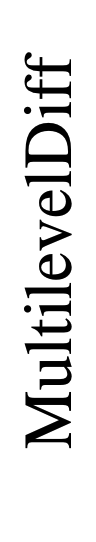}
    \end{minipage}
    \begin{minipage}[b]{0.29\textwidth}
        \centering
        \includegraphics[width=\textwidth, bmargin=1mm]{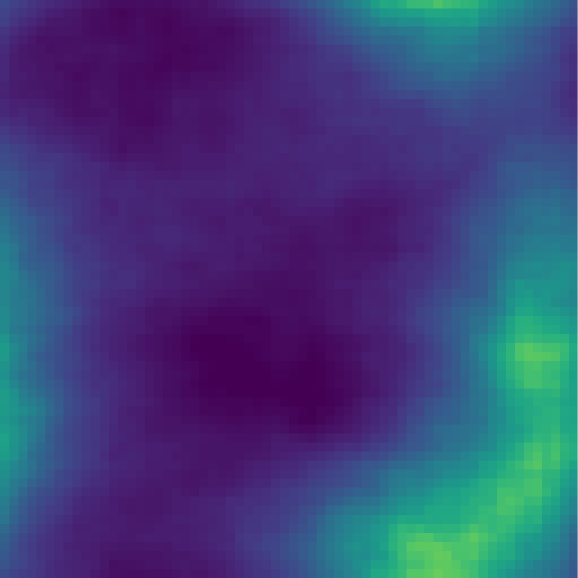}
    \end{minipage}
    \hfill
    \begin{minipage}[b]{0.29\textwidth}
        \centering
        \includegraphics[width=\textwidth, bmargin=1mm]{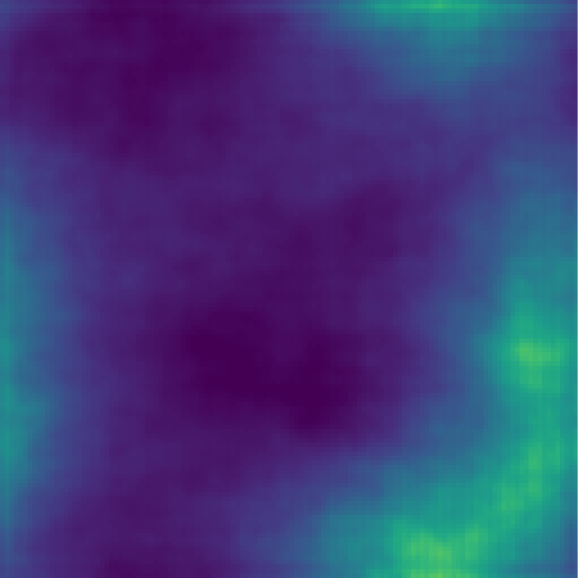}
    \end{minipage}
    \hfill
    \begin{minipage}[b]{0.29\textwidth}
        \centering
        \includegraphics[width=\textwidth, bmargin=1mm]{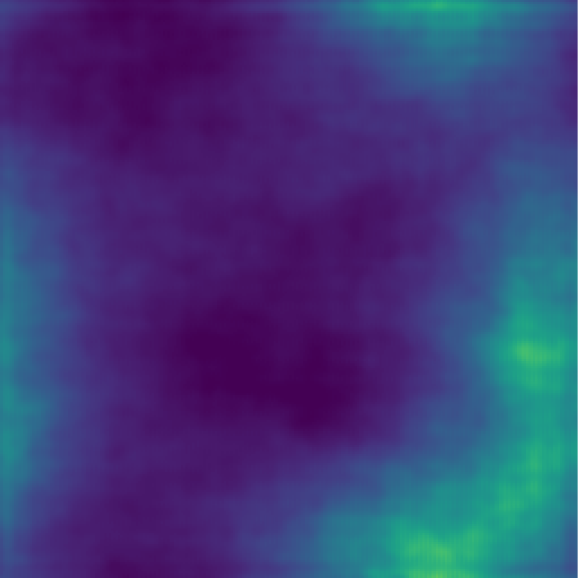}
    \end{minipage}
    % \hfill
    % \begin{minipage}[b]{0.053\textwidth}
    %     \centering
    %     \includegraphics[width=\textwidth, trim={9.35cm 0 0 0}, clip=true, bmargin=0mm]{figs/darcyflow/darcy_mcmc_mean_64x64_colorbar.pdf}
    % \end{minipage}

    \begin{minipage}[b]{0.07\textwidth}
        \captionsetup{skip=2pt}
        \centering
        \includegraphics[width=\textwidth, trim={0 2.5cm 0 3.5cm}, clip=true, bmargin=0.5mm]{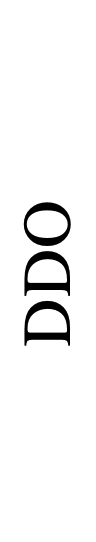}
        \caption*{\textcolor{white}{()}}
    \end{minipage}
    \begin{minipage}[b]{0.29\textwidth}
        \captionsetup{skip=2pt}
        \centering
        \includegraphics[width=\textwidth, bmargin=0.5mm]{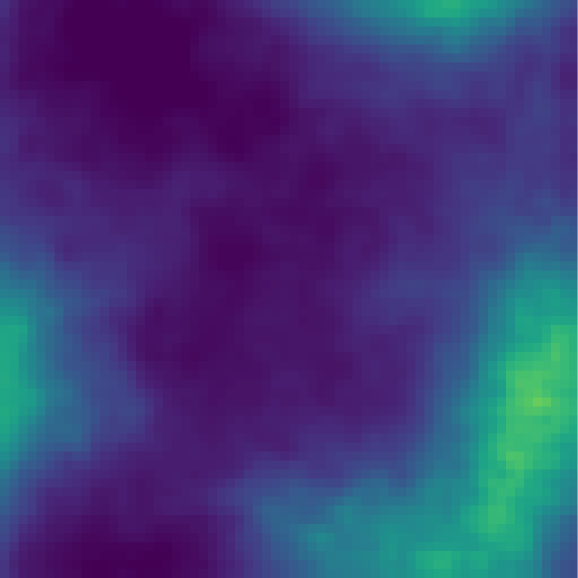}
        \caption*{$64\times64$}
    \end{minipage}
    \hfill
    \begin{minipage}[b]{0.29\textwidth}
        \captionsetup{skip=2pt}
        \centering
        \includegraphics[width=\textwidth, bmargin=0.5mm]{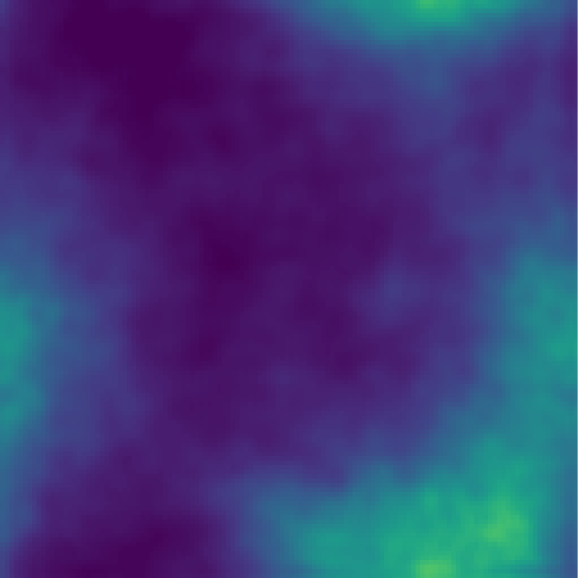}
        \caption*{$128\times128$}
    \end{minipage}
    \hfill
    \begin{minipage}[b]{0.29\textwidth}
        \captionsetup{skip=2pt}
        \centering
        \includegraphics[width=\textwidth, bmargin=0.5mm]{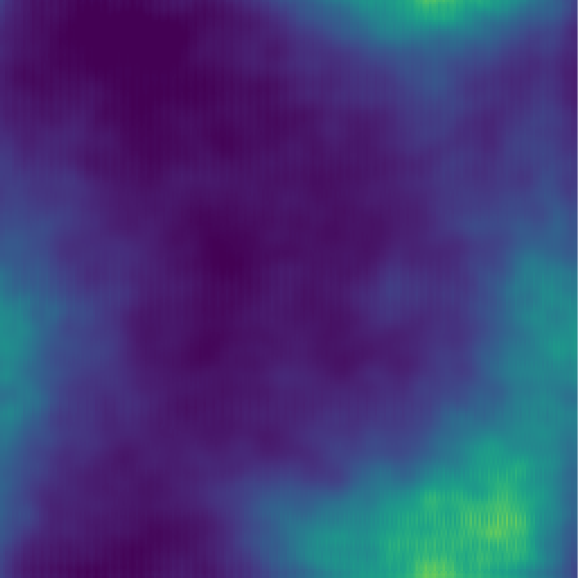}
        \caption*{$256\times256$}
    \end{minipage}
    % \hfill
    % \begin{minipage}[b]{0.054\textwidth}
    %     \captionsetup{skip=0pt}
    %     \centering
    %     \includegraphics[width=\textwidth, trim={9.35cm 0 0 0}, clip=true, bmargin=0.0mm]{figs/darcyflow/darcy_mcmc_mean_64x64_colorbar.pdf}
    %     \caption*{\scriptsize\textcolor{white}{()}}
    % \end{minipage}

    \caption*{(a) Mean}
\end{subfigure}
\hfill
\begin{subfigure}[b]{0.05\textwidth}
    \captionsetup{skip=3pt}
    \begin{minipage}[b]{\textwidth}
    \captionsetup{skip=2pt}
    \includegraphics[width=\textwidth]{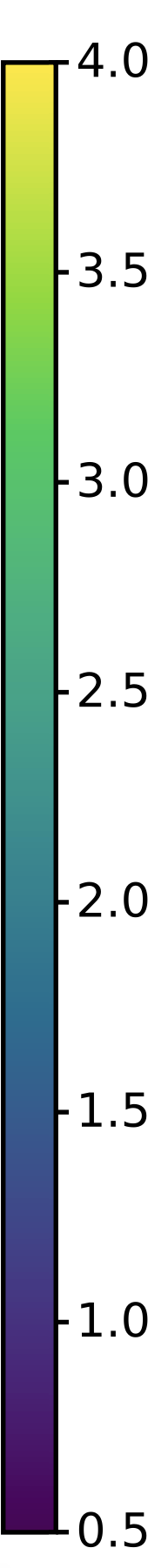}
    \caption*{\textcolor{white}{$\times$}}
    \end{minipage}
    \caption*{\textcolor{white}{a}}
\end{subfigure}
\hfill
\begin{subfigure}[b]{0.43\textwidth}
    \captionsetup{skip=3pt}
    \centering
    \begin{minipage}[b]{0.07\textwidth}
        \centering
        \includegraphics[width=\textwidth, bmargin=1mm]{figs/background.png}
    \end{minipage}
    \begin{minipage}[b]{0.29\textwidth}
        \centering
        \includegraphics[width=\textwidth, bmargin=1mm]{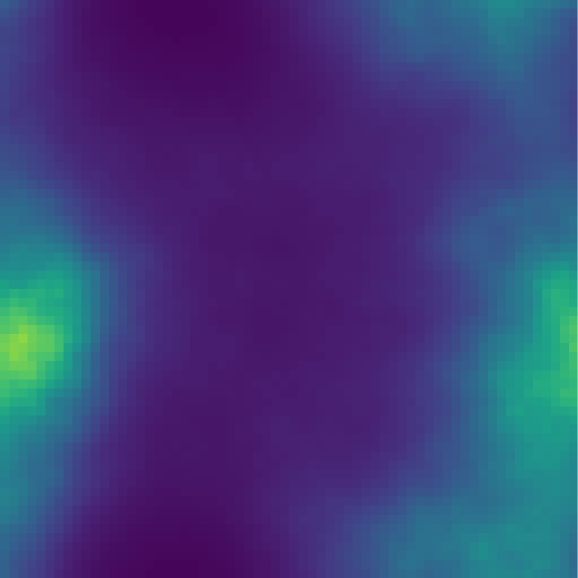}
    \end{minipage}
    \hfill
    \begin{minipage}[b]{0.29\textwidth}
        \centering
        \includegraphics[width=\textwidth, bmargin=1mm]{figs/darcyflow/darcy_mcmc_mean_128x128.pdf}
    \end{minipage}
    \hfill
    \begin{minipage}[b]{0.29\textwidth}
        \centering
        \includegraphics[width=\textwidth, bmargin=1mm]{figs/darcyflow/darcy_mcmc_mean_256x256.pdf}
    \end{minipage}
    % \hfill
    % \begin{minipage}[b]{0.053\textwidth}
    %     \centering
    %     \includegraphics[width=\textwidth, trim={9.35cm 0 0 0}, clip=true, bmargin=0mm]{}
    % \end{minipage}

    \begin{minipage}[b]{0.07\textwidth}
        \centering
        \includegraphics[width=\textwidth, bmargin=1mm]{figs/background.png}
    \end{minipage}
    \begin{minipage}[b]{0.29\textwidth}
        \centering
        \includegraphics[width=\textwidth, bmargin=1mm]{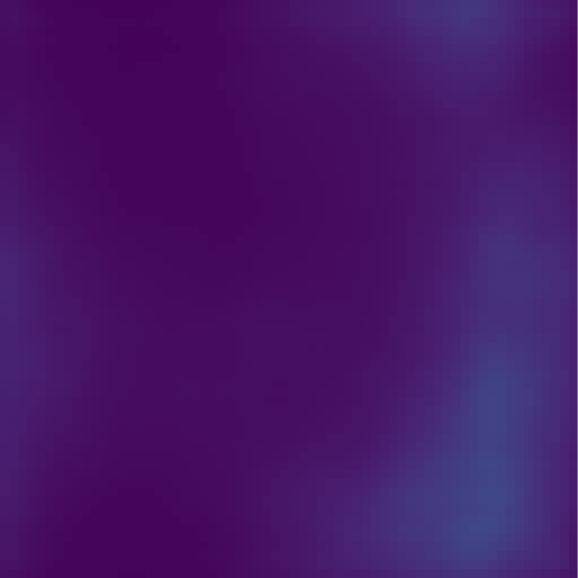}
    \end{minipage}
    \hfill
    \begin{minipage}[b]{0.29\textwidth}
        \centering
        \includegraphics[width=\textwidth, bmargin=1mm]{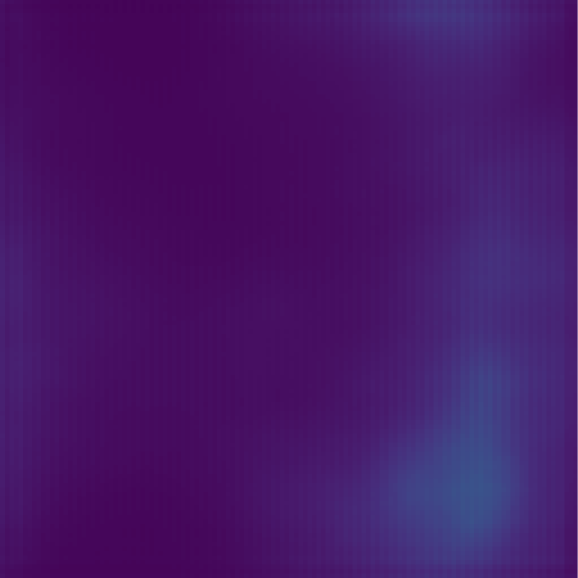}
    \end{minipage}
    \hfill
    \begin{minipage}[b]{0.29\textwidth}
        \centering
        \includegraphics[width=\textwidth, bmargin=1mm]{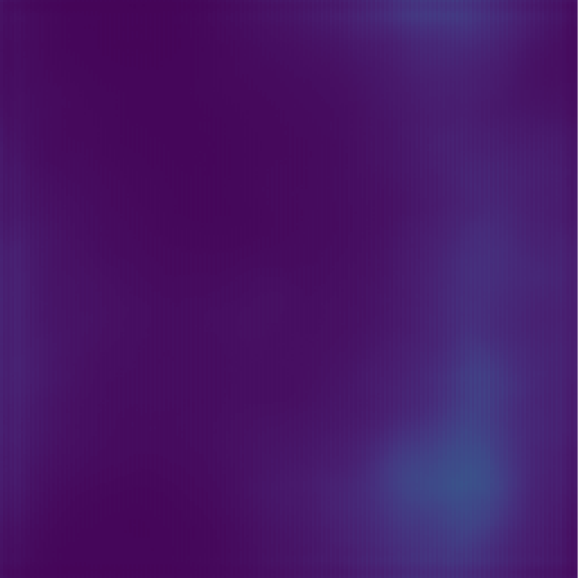}
    \end{minipage}
    % \hfill
    % \begin{minipage}[b]{0.053\textwidth}
    %     \centering
    %     \includegraphics[width=\textwidth, trim={9.35cm 0 0 0}, clip=true, bmargin=0mm]{figs/darcyflow/darcy_mcmc_var_64x64_colorbar.pdf}
    % \end{minipage}

    \begin{minipage}[b]{0.07\textwidth}
        \centering
        \includegraphics[width=\textwidth, bmargin=1mm]{figs/background.png}
    \end{minipage}
    \begin{minipage}[b]{0.29\textwidth}
        \centering
        \includegraphics[width=\textwidth, bmargin=1mm]{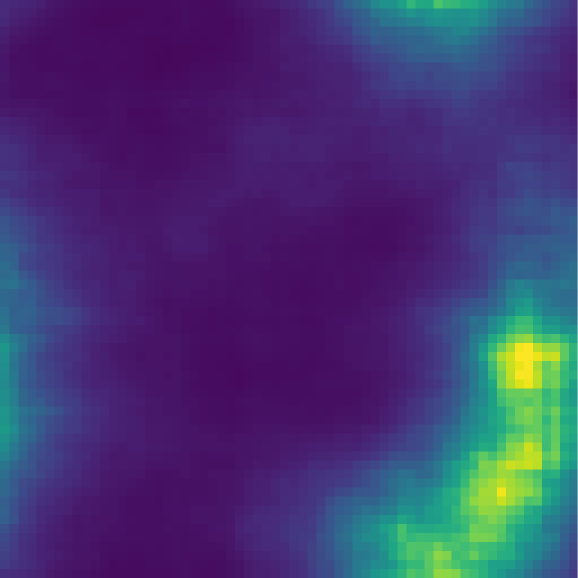}
    \end{minipage}
    \hfill
    \begin{minipage}[b]{0.29\textwidth}
        \centering
        \includegraphics[width=\textwidth, bmargin=1mm]{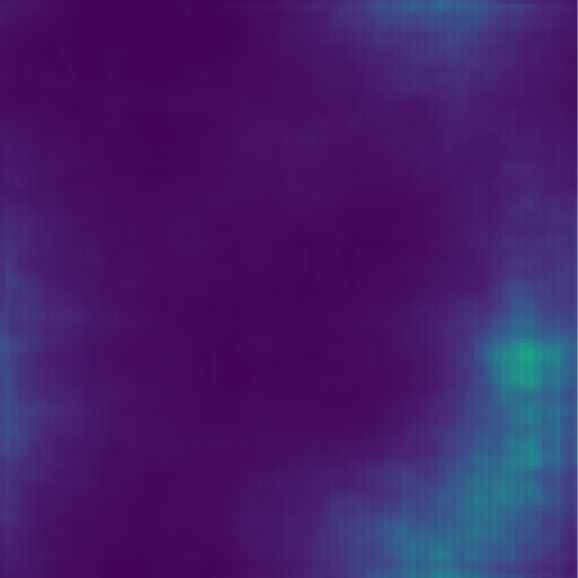}
    \end{minipage}
    \hfill
    \begin{minipage}[b]{0.29\textwidth}
        \centering
        \includegraphics[width=\textwidth, bmargin=1mm]{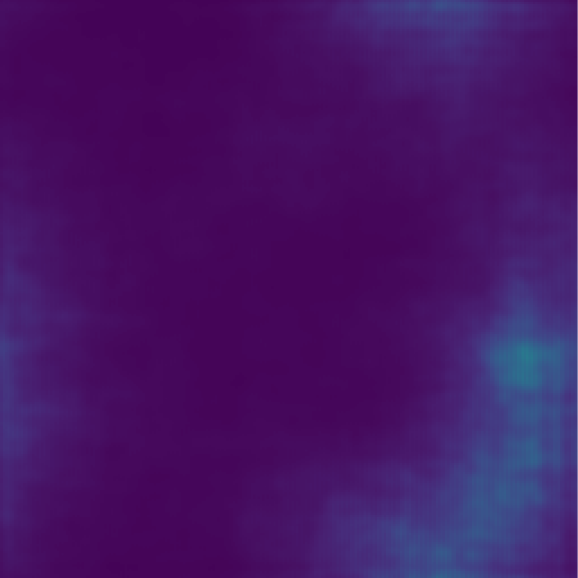}
    \end{minipage}
    % \hfill
    % \begin{minipage}[b]{0.053\textwidth}
    %     \centering
    %     \includegraphics[width=\textwidth, trim={9.35cm 0 0 0}, clip=true, bmargin=0mm]{figs/darcyflow/darcy_mcmc_var_64x64_colorbar.pdf}
    % \end{minipage}

    \begin{minipage}[b]{0.07\textwidth}
        \captionsetup{skip=2pt}
        \centering
        \includegraphics[width=\textwidth, bmargin=0.5mm]{figs/background.png}
        \caption*{\textcolor{white}{()}}
    \end{minipage}
    \begin{minipage}[b]{0.29\textwidth}
        \captionsetup{skip=2pt}
        \centering
        \includegraphics[width=\textwidth, bmargin=0.5mm]{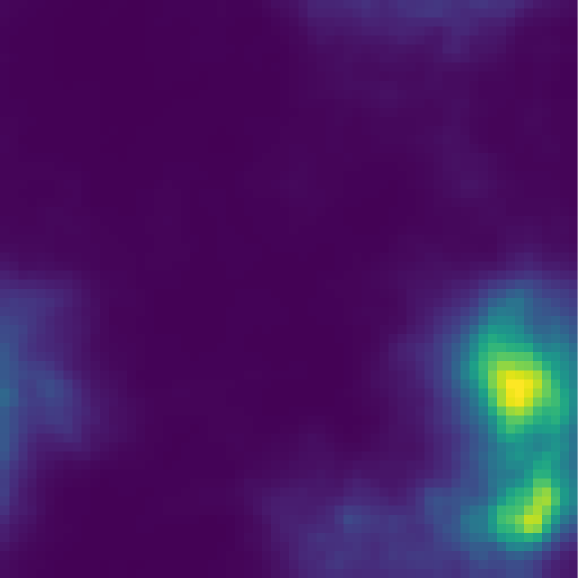}
        \caption*{$64\times64$}
    \end{minipage}
    \hfill
    \begin{minipage}[b]{0.29\textwidth}
        \captionsetup{skip=2pt}
        \centering
        \includegraphics[width=\textwidth, bmargin=0.5mm]{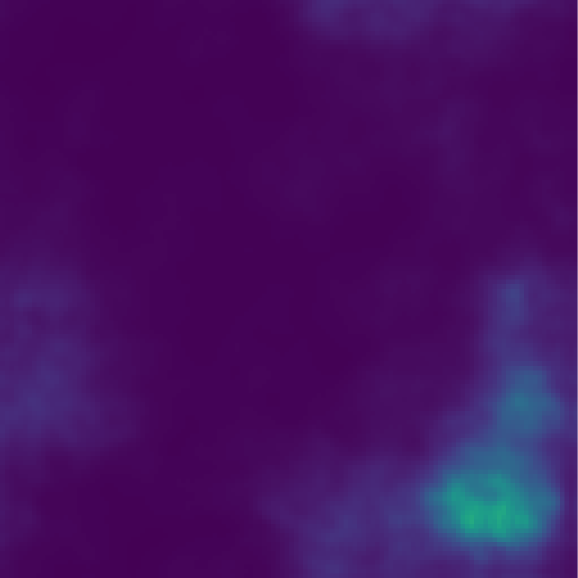}
        \caption*{$128\times128$}
    \end{minipage}
    \hfill
    \begin{minipage}[b]{0.29\textwidth}
        \captionsetup{skip=2pt}
        \centering
        \includegraphics[width=\textwidth, bmargin=0.5mm]{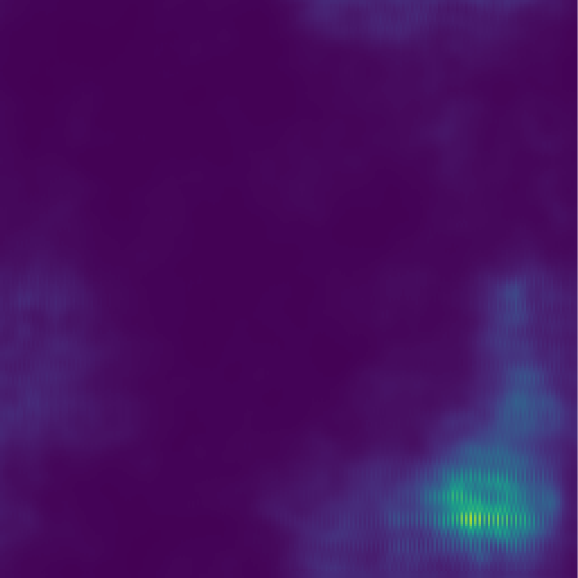}
        \caption*{$256\times256$}
    \end{minipage}
    % \hfill
    % \begin{minipage}[b]{0.053\textwidth}
    %     \captionsetup{skip=2pt}
    %     \centering
    %     \includegraphics[width=\textwidth, trim={9.35cm 0 0 0}, clip=true, bmargin=0.0mm]{figs/darcyflow/darcy_mcmc_var_64x64_colorbar.pdf}
    %     \caption*{\scriptsize\textcolor{white}{()}}
    % \end{minipage}

    \caption*{(b) Variance}
\end{subfigure}
\hfill
\begin{subfigure}[b]{0.05\textwidth}
    \captionsetup{skip=3pt}
    \begin{minipage}[b]{\textwidth}
    \captionsetup{skip=2pt}
    \includegraphics[width=\textwidth]{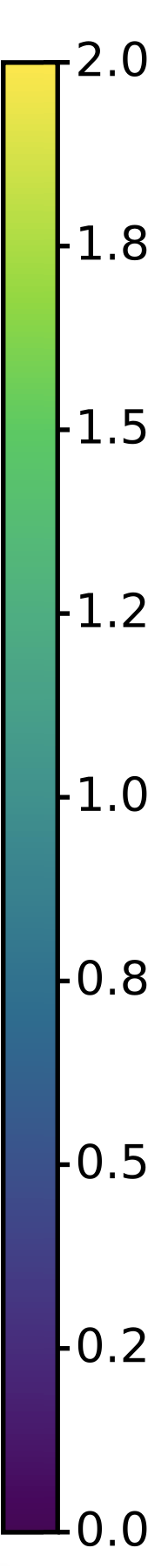}
    \caption*{\textcolor{white}{$\times$}}
    \end{minipage}
    \caption*{\textcolor{white}{a}}
\end{subfigure}
\caption{\textbf{Posterior Sample Statistics} (\secref{subsec:darcyflow}): The sample (a) mean and (b) variance of posterior samples of the MCMC as well as the learned GANO, MultilevelDiff, and DDO models at various resolutions. 10,000 samples are used. %(a) 64$\times$64, (b) 128$\times$128, and (c) 256$\times$256-resolutions. 
% All images are plotted according to their relative resolutions. All models are trained on 64$\times$64-resolution images, which are upsampled from 32$\times$32-resolution observations of 2D SDFs.
}
\label{fig:darcyflow-mean-var}
\end{figure}

% \begin{figure}[H]
\begin{figure}[htb!]
\captionsetup{skip=8pt}
\centering
\begin{subfigure}[b]{0.92\textwidth}
    \centering
    \begin{minipage}[b]{0.029\textwidth}
        \centering
        \includegraphics[width=\textwidth, trim={0 2.5cm 0 3cm}, clip=true]{figs/darcyflow/mcmc.png}
    \end{minipage}
    \begin{minipage}[b]{0.95\textwidth}
        \centering
        \includegraphics[width=\textwidth]{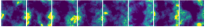}
    \end{minipage}
    % \begin{minipage}[b]{0.021\textwidth}
    %     \centering
    %     \includegraphics[width=\textwidth, trim={9.35cm 0 0 0}, clip=true]{figs/darcyflow/darcy_mcmc_mean_64x64_colorbar.pdf}
    % \end{minipage}

    \begin{minipage}[b]{0.029\textwidth}
        \centering
        \includegraphics[width=\textwidth, trim={0 3.2cm 0 2.3cm}, clip=true]{figs/darcyflow/gano.png}
    \end{minipage}
    \begin{minipage}[b]{0.95\textwidth}
        \centering
        \includegraphics[width=\textwidth]{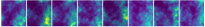}
    \end{minipage}
    % \begin{minipage}[b]{0.021\textwidth}
    %     \centering
    %     \includegraphics[width=\textwidth, trim={9.35cm 0 0 0}, clip=true]{figs/darcyflow/darcy_mcmc_mean_64x64_colorbar.pdf}
    % \end{minipage}

    \begin{minipage}[b]{0.029\textwidth}
        \centering
        \includegraphics[width=0.9\textwidth, trim={0 2.5cm 0 1.6cm}, clip=true, bmargin=0.0mm]{figs/darcyflow/multileveldiff.png}
    \end{minipage}
    \begin{minipage}[b]{0.95\textwidth}
        \centering
        \includegraphics[width=\textwidth]{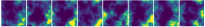}
    \end{minipage}
    % \begin{minipage}[b]{0.021\textwidth}
    %     \centering
    %     \includegraphics[width=\textwidth, trim={9.35cm 0 0 0}, clip=true]{figs/darcyflow/darcy_mcmc_mean_64x64_colorbar.pdf}
    % \end{minipage}

    \begin{minipage}[b]{0.029\textwidth}
        \centering
        \includegraphics[width=\textwidth, trim={0 3.0cm 0 3.0cm}, clip=true]{figs/darcyflow/ddo.png}
    \end{minipage}
    \begin{minipage}[b]{0.95\textwidth}
        \centering
        \includegraphics[width=\textwidth]{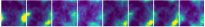}
    \end{minipage}
    % \begin{minipage}[b]{0.021\textwidth}
    %     \centering
    %     \includegraphics[width=\textwidth, trim={9.35cm 0 0 0}, clip=true]{figs/darcyflow/darcy_mcmc_mean_64x64_colorbar.pdf}
    % \end{minipage}
\end{subfigure}
\begin{subfigure}[b]{0.063\textwidth}
    % \captionsetup{skip=0pt}
    % \centering
    \includegraphics[width=\textwidth]{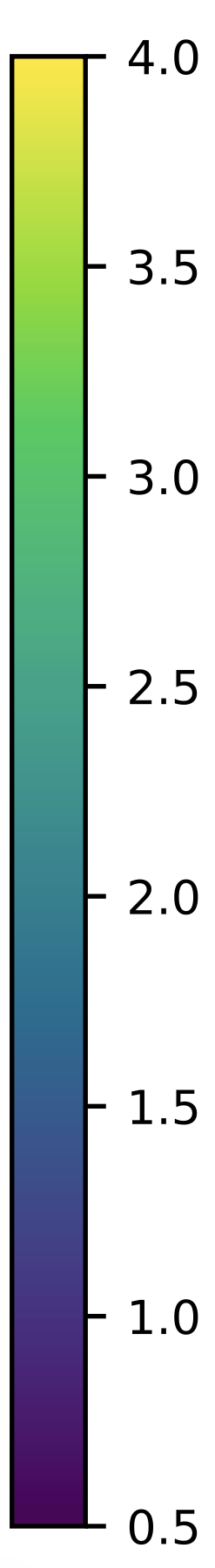}
    % \includegraphics[width=\textwidth, trim={9.35cm 0 0 0}, clip=true, bmargin=0.0mm]{}
    % \caption*{\scriptsize \textcolor{white}{a a a a a a a a a a a a a}}
\end{subfigure}
\caption{\textbf{Posterior Samples at 64$\times$64 resolution} (\secref{subsec:darcyflow}): The samples of the MCMC as well as the learned GANO, MultilevelDiff, and DDO models at the training resolution (64$\times$64).
}
\label{fig:darcyflow-samples-64x64}
\end{figure}

% \end{updaterequired}

\begin{updaterequired}[black]
\subsection{Additional Results}
Finally, we analyze the trade-offs inherent in different neural operator approaches, including spectral and continuous convolution methods, and demonstrate their practical implications through comparative experiments. This is because, unlike finite-dimensional models, function space models require parametric designs that not only enhance expressivity but also satisfy discretization invariance. Consequently, in function-valued deep generative models, the design of the neural operator is a critical factor in determining overall model performance. In this context, in \appref{app:alias-free}, we provide a comprehensive analysis of various neural operator designs introduced in function-valued generative models, examining their respective strengths and weaknesses, such as aliasing issues in spectral convolution and overfitting tendencies in continuous convolution methods. Readers interested in understanding the impact of these design choices on model performance are encouraged to explore this section. 
\end{updaterequired}

\section{Discussion and Conclusions}

We propose DDOs, the first theoretical framework and numerical demonstration of resolution invariant diffusion generative models on function space. Our approach generalizes denoising score matching for trace-class noise corruptions that live in the Hilbert space of the data, and considers a discrete-time diffusion model for sampling using infinite-dimensional Langevin dynamics. Future work will connect this framework with noise scales that depend continuously on time (as in  \appref{sec:timedeptnoise}) to the forward and backward SDEs in~\cite{song2020score}. Defining the backward SDE will require satisfying conditions that guarantee time reversibility of infinite-dimensional diffusions; see~\cite{follmer1986time} for examples of these conditions. Adapting the covariance of the reference noise process based on the data distribution may also be helpful for generative modeling with other functional datasets, and to extend this framework to solve inverse problems with infinite-dimensional parameters~\cite{stuart2010inverse}. Lastly, rigorous error analysis (e.g. using an approximate score) %different sources of approximations
will be important to understand the class of data distributions that can be accurately characterized with infinite-dimensional diffusion models.

\newpage

\vskip 0.2in
\bibliography{references}

\newpage
\appendix

\section{Notation}
\label{sec:notation}%\todo{Move to after the intro?}

We denote by \(\sR\) the real numbers and by \(\sR^n\) their \(n\)-fold Cartesian product and write \(\|\cdot\|_2\) for Euclidean norm. We write \(\sN\) for the set of natural numbers.
We denote by \(H\) a real, separable, Hilbert space and by \(\langle \cdot, \cdot \rangle\), \(\|\cdot\|\) its inner-product and norm respectively.
We write \(\mathcal{B}(H)\) for the Borel sets of \(H\) generated by the open sets induced from the norm topology. For probability measures 
\(\mu, \nu\) on \( \big ( H, \mathcal{B}(H) \big )\), we say \(\mu\) is absolutely continuous with respect to \(\nu\) and denote it \(\mu \ll \nu\)
if, for any \(B \in \mathcal{B}(H)\), \(\nu (B) = 0\) implies \(\mu(B) = 0\). If \(\mu \ll \nu\) and \(\nu \ll \mu\) hold then we \(\mu\) and \(\nu\)
are equivalent and denote it \(\mu \sim \nu\). If neither \(\mu \ll \nu\) or \(\nu \ll \mu\) hold then we say \(\mu\) and \(\nu\) are mutually 
singular and denote it \(\mu \perp \nu\). We say \(u\) is a random variable distributed according to \(\mu\) and denote it \(u \sim \mu\)
if the law of \(u\) is \(\mu\). Given two random variables \(u, v\), we write \(u \perp v\) if they are independent. For any mapping,
\(f : H \to \sR\), we denote by \(\E_{u \sim \mu} [f(u)]\) the expected value of \(f\) under \(\mu\).

For any bounded operator \(C : H \to H\), we say \(C\) is self-adjoint if \(\langle Cu,  v \rangle = \langle u, Cv \rangle\) for all \(u,v \in H\).
We say an operator is positive if \(\langle C u, u \rangle > 0 \) for all \(u \in H \setminus \{0\}\) (equivalently non-negative if \(\langle C u, u \rangle \geq 0 \)). We say \(C\) is trace-class, or nuclear,
if for any orthonormal basis \(\{\phi_j\}_{j=1}^\infty\) of \(H\), we have \(\text{Tr}(C) = \sum_{j=1}^\infty \langle C \phi_j, \phi_j \rangle < \infty\).
For any self-adjoint, non-negative, trace-class operator, we denote by \(C^{1/2}\) the unique operator such that \(C = C^{1/2} C^{1/2}\). 
We denote by \(H^*\) the topological (continuous) dual \(H\) which is itself a separable Hilbert space consisting of all bounded linear functionals \(l : H \to \sR\)
with an inner-product induced by the Reisz map. Since it follows by the Riesz representation theorem that for any \(l \in H^*\), there exists a unique element 
\(v \in H\) such that \(l(u) = \langle u, v \rangle\) for any \(u \in H\), we define the Riesz map \(R : H^* \to H\) by \(l \mapsto v\).

\begin{table*}[!ht]
\small
\centering
\caption{\textbf{Score-based diffusion models:} from finite to infinite dimension.}
\captionsetup{justification=centering}
 \hspace{-1.1em}
 \begin{tabular}{l l l}
 \toprule
 \textbf{Setting} & \textbf{Finite dimension} & \textbf{Infinite dimension} \\ 
 \midrule
 Data space 
 & Euclidean spaces
 & Function spaces 
 \\
 Base measures 
 & Lebesgue measure
 & Gaussian random fields 
 \\
 Noise in diffusion 
 & Multivariate random variables
 & Gaussian random fields 
 \\
 Score 
 & Score function 
 & Score operator  
 \\
 Process 
 & Langevin process in finite dimensions 
 $y$ & Langevin process in function spaces 
 \\
 Learning loss 
 & Euclidean norm 
 & Norm on function spaces \\ & & (discretization invariant)  
 \\
 Controls 
 & Variance 
 & Length scale, variance, energy, etc.  
 \\
 %Learning loss & Norms on function spaces (discretization invariant)  & Euclidean norm \\
 Base model 
 & Neural networks 
 & Neural operators  
 \\
  \bottomrule
 \end{tabular}
 \label{table:Setting}
 % \vspace{-1em}
 \end{table*}

\section{Proofs of Theorem}

\subsection{Convolution of measures}
\label{sub:conv_measure}

The following results holds more generally for Radon Gaussian measures on locally convex spaces. We show them here in the Hilbert space setting to avoid introducing extra notation but refer the reader to \cite{bogachev2015gaussian} for a thorough overview of the more general setting.

Let \((H, \langle \cdot, \cdot \rangle)\) be a real, separable, Hilbert space and \(\mu\) and \(\nu\) be two probability measures on the Borel \(\sigma\)-algebra \(\mathcal{B}(H)\). Then the product measure \(\mu \otimes \nu \) is defined on \(\mathcal{B}(H) \otimes \mathcal{B}(H) = \mathcal{B}(H \times H)\). Define the mapping \(T : H \times H \to H\) by
\(T(u,v) = u + v\). Then the pushforward of \(\mu \otimes \nu \) under \(T\) is called the convolution of \(\mu\) and \(\nu\) and is denoted \(\mu * \nu\). In particular, given two independent random variables \(u \sim \mu\) and \(v \sim \nu\), the random variable \(u + v\) is distributed according to \(\mu * \nu\). It can be shown that, for any \(B \in \mathcal{B}(H)\), we have
\begin{equation}
    \label{eq:convolution_of_measures}
    (\mu * \nu)(B) = \int_H \mu(B-v) \: d\nu(v) = \int_H \nu(B - u) \: d\mu(u) = (\nu * \mu)(B),
\end{equation}
for example, see Appendix A.3 in \cite{bogachev2015gaussian} and references therein. The following result shows that if \(\nu\) is a centered Gaussian and \(\mu\) charges its Cameron-Martin space, the convolution is equivalent, in the sense of measures, to \(\nu\).

\begin{theorem}
    \label{thm:equivalent_convolution}
    Let \(\mu, \nu\) be two probability measures on \((H, \mathcal{B}(H))\) with \(\nu = N(0,C)\) for some \(C: H \to H\) self-adjoint, positive, and trace-class. If \(\mu (C^{1/2}(H)) = 1\), then \(\nu_u \sim \nu\) where $\nu_u$ is the conditional for $v|u$ and \(\nu * \mu \sim \nu\).
\end{theorem}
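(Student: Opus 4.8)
The plan is to prove the two equivalences \(\nu_u \sim \nu\) and \(\nu * \mu \sim \nu\) separately, both as consequences of the classical Cameron--Martin (equivalently, Feldman--H\'{a}jek) theorem applied to translates of the Gaussian \(\nu = N(0,C)\). For the first claim, observe that since \(v = u + \eta\) with \(\eta \sim \nu\) independent of \(u\), the conditional law \(\nu_u\) of \(v \mid u\) is the translated Gaussian \(N(u,C)\). The hypothesis \(\mu(C^{1/2}(H)) = 1\) means that \(u\) lies in the Cameron--Martin space of \(\nu\) for \(\mu\)-almost every \(u\), so the Cameron--Martin theorem gives \(N(u,C) \sim N(0,C) = \nu\) for \(\mu\)-a.e. \(u\), with strictly positive density \(w \mapsto \exp(\Psi(w;u))\) written in the eigenbasis \(C\varphi_j = \lambda_j \varphi_j\) exactly as in~\eqref{eq:v_given_u_density}.

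For the second claim, I would start from the convolution identity~\eqref{eq:convolution_of_measures}, \((\nu * \mu)(B) = \int_H \nu(B - u)\, d\mu(u)\), and note that \(\nu(B - u) = N(u,C)(B)\). Restricting the integral to the full-\(\mu\)-measure set \(C^{1/2}(H)\) (on which \(C^{-1/2}u\) and hence \(\Psi(\cdot;u)\) are well-defined) and inserting the Cameron--Martin density gives \(\nu(B - u) = \int_B \exp(\Psi(w;u))\, d\nu(w)\). Since the integrand \((w,u) \mapsto \exp(\Psi(w;u))\) is non-negative, Tonelli's theorem allows exchanging the order of integration, yielding
\[
  (\nu * \mu)(B) = \int_B \Big( \int_H \exp\big(\Psi(w;u)\big)\, d\mu(u) \Big)\, d\nu(w),
\]
so \(\nu * \mu \ll \nu\) with Radon--Nikodym density \(w \mapsto \int_H \exp(\Psi(w;u))\, d\mu(u)\); this density is \(\nu\)-a.e. finite because \(\nu * \mu\) is a probability measure. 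For the reverse absolute continuity \(\nu \ll \nu * \mu\), it suffices that this density be \(\nu\)-a.e. strictly positive. For each fixed \(u \in C^{1/2}(H)\) the section \(w \mapsto \exp(\Psi(w;u))\) is positive \(\nu\)-a.e., so the "bad" set where the density vanishes or is undefined has \(\mu \otimes \nu\)-measure zero; a further application of Fubini shows that for \(\nu\)-a.e. \(w\) the function \(u \mapsto \exp(\Psi(w;u))\) is positive for \(\mu\)-a.e. \(u\), and integrating a strictly positive function over the full-\(\mu\)-measure set \(C^{1/2}(H)\) gives a strictly positive value. Hence \(\nu \sim \nu * \mu\), which together with the first part proves the theorem; taking \(\mu_0 = \nu\) recovers Theorem~\ref{thm:equivalence_informal}.

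The main obstacle is not the structure of the argument but a measurability and well-definedness point underlying the two Tonelli steps. The exponent \(\Psi(w;u)\) contains the term \(\sum_{j=1}^\infty \lambda_j^{-1} \langle w, \varphi_j \rangle \langle u, \varphi_j \rangle\), which for \(w \notin C^{1/2}(H)\) is not a genuine inner product but must be interpreted as a measurable linear functional (Paley--Wiener integral) on \((H,\nu)\); one must check that this series converges in \(L^2(\nu)\) for each fixed \(u \in C^{1/2}(H)\) and, crucially, that \((w,u) \mapsto \Psi(w;u)\) admits a jointly \(\B(H)\otimes\B(H)\)-measurable version. This is handled by working in the eigenbasis of \(C\): the partial sums are manifestly jointly measurable, and the tail is controlled using \(u \in C^{1/2}(H)\) together with \(\sum_j \lambda_j < \infty\), after which a standard completeness argument produces the desired measurable version. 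Once this is established, the remaining steps are routine applications of Tonelli's theorem and the classical Cameron--Martin formula.
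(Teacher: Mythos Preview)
Your proof is correct, but it takes a genuinely different route from the paper's. The paper argues entirely at the level of sets and measures, never writing down the Cameron--Martin density: for \(\nu * \mu \ll \nu\) it simply observes that if \(\nu(B) = 0\) then \(\nu_u(B) = \nu(B-u) = 0\) for every \(u \in C^{1/2}(H)\) by \(\nu_u \ll \nu\), hence for \(\mu\)-a.e.\ \(u\), so \((\nu * \mu)(B) = 0\). For the reverse direction \(\nu \ll \nu * \mu\) the paper invokes the Feldman--H\'{a}jek dichotomy: since \(\nu_u \ll \nu\) and Gaussians are either equivalent or mutually singular, in fact \(\nu \ll \nu_u\); so if \((\nu*\mu)(B)=0\) then \(\nu_u(B)=0\) for \(\mu\)-a.e.\ \(u\), and for any such \(u\) one concludes \(\nu(B)=0\). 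This two-line argument sidesteps entirely the joint-measurability issue for \(\Psi(w;u)\) that you correctly flag as the main technical obstacle in your approach.

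What your approach buys is the explicit Radon--Nikodym derivative \(\tfrac{d(\nu*\mu)}{d\nu}(w) = \int_H \exp(\Psi(w;u))\,d\mu(u)\), which the paper only obtains later (Lemma~\ref{lemma:marginalizing_density}) and which is ultimately needed for the score-matching theory. The paper's route is shorter and avoids the Paley--Wiener measurability check; yours is more constructive and front-loads a formula that is needed anyway.
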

\begin{proof}
    For any \(B \in \mathcal{B}(H)\), we have by \eqref{eq:convolution_of_measures},
    \[(\nu * \mu)(B) = \int_H \nu(B-u) \: d\mu(u).\]
    Therefore \((\nu * \mu)(B) = 0\) if and only if \(\nu (B-u) = 0\) for \(\mu\)-almost any \(u \in H\) since \(\nu\) is non-negative. For any \(u \in H\), define the measures
    \[\nu_u (B) = \nu(B-u), \qquad \forall B \in \mathcal{B}(H)\]
    which are Gaussian \(\nu_u = N(u,C)\). By the Cameron-Martin Theorem, given as Proposition 2.26 in \cite{da1992stochastic}, \(\nu_u \ll \nu\) for any \(u \in C^{1/2}(H)\).
    Let \(B \in \mathcal{B}(H)\) be such that \(\nu(B) = 0\). Since, \(\nu_u \ll \nu\) for any \(u \in C^{1/2}(H)\), 
    we have that \(\nu(B-u) = 0\). Since \(\mu(C^{1/2}(H)) = 1\), \(\nu(B-u) = 0\) for \(\mu\)-almost any \(u \in H\) and therefore \(\nu * \mu \ll \nu\).

    Now let \(B \in \mathcal{B}(H)\) be such that \((\nu * \mu)(B) = 0\) then \(\nu_u (B) = 0\) for \(\mu\)-almost any \(u \in H\). Since \(\mu(C^{1/2}(H)) = 1\), again by the Cameron-Martin Theorem, \(\nu_u \ll \nu\). But, by Theorem 2.25 in \cite{da1992stochastic},  Gaussians are either equivalent or mutually singular , therefore \(\nu \ll \nu_u\) and thus \(\nu(B) = 0\) hence the result follows. 
\end{proof}

Let \(A : H \to H\) be a linear operator. If \(u \sim \mu\), then from definition, the random variable \(Au\) is distributed according to the measure \(\mu \circ A^{-1}\) where \(A^{-1}\) denotes the pre-image of \(A\). In particular, for any \(B \in \mathcal{B}(H)\),
\begin{equation}
    \label{eq:linear_operator_on_rv}
    (\mu \circ A^{-1})(B) = \mu(\{u \in H : Au \in B\}).
\end{equation}
The following corollary of Theorem~\ref{thm:equivalent_convolution} addresses random variables of the form \(Au + v\) where \(u \sim \mu\) and \(v \sim \nu\) are independent.

\begin{corollary}
    \label{cor:equivalent_convolution}
    Let \(\mu, \nu\) be two probability measures on \((H, \mathcal{B}(H))\) with \(\nu = N(0,C)\) for some \(C: H \to H\) self-adjoint, positive, and trace-class. Let \(A : H \to H\) be a linear operator such that \(A(H) \subseteq C^{1/2}(H)\) then \(\nu * (\mu \circ A^{-1}) \sim \nu\).
\end{corollary}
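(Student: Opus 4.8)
The plan is to reduce the statement directly to Theorem~\ref{thm:equivalent_convolution} by transporting \(\mu\) through \(A\) and checking that the transported measure still charges the Cameron--Martin space \(C^{1/2}(H)\). First I would record that, since \(A\colon H\to H\) is a bounded (hence Borel measurable) linear operator, the push-forward \(\tilde\mu \coloneqq \mu\circ A^{-1}\) defined by \eqref{eq:linear_operator_on_rv} is a well-defined Borel probability measure on \(\big(H,\mathcal B(H)\big)\); it is precisely the law of the random variable \(Au\) when \(u\sim\mu\). I would also recall here that \(C^{1/2}(H)\) is itself a Borel subset of \(H\): \(C^{1/2}\) is continuous and injective between separable Hilbert spaces, so by the Lusin--Souslin theorem its image is Borel, and hence the quantities \(\tilde\mu\big(C^{1/2}(H)\big)\) and \(\nu\big(C^{1/2}(H)\big)\) make sense.

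Next I would verify the single hypothesis of Theorem~\ref{thm:equivalent_convolution} for \(\tilde\mu\), namely \(\tilde\mu\big(C^{1/2}(H)\big)=1\). By \eqref{eq:linear_operator_on_rv},
\[
\tilde\mu\big(C^{1/2}(H)\big)=\mu\big(\{u\in H: Au\in C^{1/2}(H)\}\big),
\]
and the assumption \(A(H)\subseteq C^{1/2}(H)\) forces \(\{u\in H: Au\in C^{1/2}(H)\}=H\). Therefore \(\tilde\mu\big(C^{1/2}(H)\big)=\mu(H)=1\).

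Finally I would apply Theorem~\ref{thm:equivalent_convolution} with \(\tilde\mu\) playing the role of ``\(\mu\)'': since \(\nu=N(0,C)\) with \(C\) self-adjoint, positive, and trace-class, and since \(\tilde\mu\big(C^{1/2}(H)\big)=1\), the theorem yields \(\nu*\tilde\mu\sim\nu\), i.e.\ \(\nu*(\mu\circ A^{-1})\sim\nu\), which is the claim.

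I do not expect a genuine obstacle here — the corollary is essentially a change-of-variables wrapper around Theorem~\ref{thm:equivalent_convolution}. The only point requiring care is the measurability bookkeeping: that \(\mu\circ A^{-1}\) is a bona fide Borel probability measure and that \(C^{1/2}(H)\) is measurable. This in turn hinges on what regularity is assumed on \(A\); continuity of \(A\) (the natural reading of ``linear operator'' in this setting) is more than enough, and if one wished to allow unbounded densely-defined \(A\) one would instead posit Borel measurability of \(A\) on a full \(\mu\)-measure domain as a standing assumption rather than prove it.
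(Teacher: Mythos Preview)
Your proposal is correct and matches the paper's own proof essentially line for line: compute \((\mu\circ A^{-1})\big(C^{1/2}(H)\big)=\mu(H)=1\) via \eqref{eq:linear_operator_on_rv} and the range condition on \(A\), then invoke Theorem~\ref{thm:equivalent_convolution}. The only difference is that you add some measurability bookkeeping (Borel-ness of \(C^{1/2}(H)\), measurability of \(A\)) that the paper leaves implicit.
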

\begin{proof}
    From equation \eqref{eq:linear_operator_on_rv} and the assumption that \(A(H) \subseteq C^{1/2}(H)\), we have 
    \begin{align*}
        (\mu \circ A^{-1})\big ( C^{1/2}(H) \big)
        &= \mu \big ( \{u \in H : Au \in C^{1/2}(H)\} \big ) \\
        &= \mu(H) \\
        &= 1.
    \end{align*}
Thus the result follows by Theorem~\ref{thm:equivalent_convolution}.
\end{proof}

\subsection{Conditional scores}
\label{sub:conditioning}

Let \(H\) be a real, separable, Hilbert space and denote by \(\mathcal{B}(H)\) its Borel \(\sigma\)-algebra. Let \(\gamma\) be a probability measure on \(\big (H \times H, \mathcal{B}(H) \otimes \mathcal{B}(H) \big )\). We introduce the coordinates \((u,v) \sim \gamma\). Denote by \(\mu\) marginal of \(u\), by \(\nu\) the marginal of \(v\), and by \(\gamma^u\) the conditional \(v|u\) for \(\mu\)-almost any \(u \in H\). Let \(\mu_0\) be a probability measure on \(\big ( H, \mathcal{B}(H) \big )\) and suppose that \(\nu \ll \mu_0\) and \(\gamma^u \ll \mu_0\) for \(\mu\)-almost any \(u \in H\).

\begin{lemma}
    \label{lemma:marginalizing_density}
    The Radon–Nikodym derivatives of \(\nu\) and \(\gamma^u\) with respect to \(\mu_0\) satisfy
    \[\frac{d \nu}{d \mu_0}(v) = \E_{u \sim \mu} \frac{d \gamma^u}{ d \mu_0} (v), \qquad \text{for } \mu_0\text{-almost any } v \in H.\]
\end{lemma}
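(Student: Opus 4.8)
The plan is to verify the claimed identity by testing both sides against indicators of arbitrary Borel sets and invoking the disintegration of \(\gamma\) together with Tonelli's theorem. First I would recall that, since \(H\) is a separable Hilbert space and hence Polish, the conditionals \(\{\gamma^u\}\) form a genuine regular conditional probability: the map \(u \mapsto \gamma^u(B)\) is \(\mathcal{B}(H)\)-measurable for every \(B \in \mathcal{B}(H)\), and for every bounded measurable \(\phi : H \to \R\),
\[
\int_H \phi(v)\, d\nu(v) = \int_H \int_H \phi(v)\, d\gamma^u(v)\, d\mu(u).
\]
Taking \(\phi = \mathbf{1}_B\) yields \(\nu(B) = \int_H \gamma^u(B)\, d\mu(u)\) for all \(B \in \mathcal{B}(H)\).

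Next I would rewrite each term using the absolute continuity hypotheses. Since \(\gamma^u \ll \mu_0\) for \(\mu\)-almost every \(u\), we have \(\gamma^u(B) = \int_B \frac{d\gamma^u}{d\mu_0}(v)\, d\mu_0(v)\), and since \(\nu \ll \mu_0\), \(\nu(B) = \int_B \frac{d\nu}{d\mu_0}(v)\, d\mu_0(v)\). Substituting into the previous display gives
\[
\int_B \frac{d\nu}{d\mu_0}(v)\, d\mu_0(v) = \int_H \left( \int_B \frac{d\gamma^u}{d\mu_0}(v)\, d\mu_0(v) \right) d\mu(u).
\]
Because the integrand \((u,v) \mapsto \frac{d\gamma^u}{d\mu_0}(v)\) is nonnegative, Tonelli's theorem permits swapping the order of integration on the right, producing \(\int_B \big( \E_{u \sim \mu} \frac{d\gamma^u}{d\mu_0}(v) \big)\, d\mu_0(v)\); taking \(B = H\) shows this inner expectation is \(\mu_0\)-integrable, hence finite \(\mu_0\)-a.e. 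As \(B \in \mathcal{B}(H)\) is arbitrary and both \(\frac{d\nu}{d\mu_0}\) and \(\E_{u\sim\mu}\frac{d\gamma^u}{d\mu_0}\) are \(\mu_0\)-integrable, they must coincide \(\mu_0\)-almost everywhere, which is the assertion.

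The one point requiring genuine care — and the main obstacle — is that applying Tonelli presupposes \((u,v) \mapsto \frac{d\gamma^u}{d\mu_0}(v)\) admits a jointly \(\mathcal{B}(H) \otimes \mathcal{B}(H)\)-measurable version, whereas \emph{a priori} the Radon–Nikodym derivative is only defined \(\mu_0\)-a.e. for each fixed \(u\). I would resolve this by constructing a jointly measurable representative: fix a refining sequence of finite Borel partitions of \(H\) generating \(\mathcal{B}(H)\), form the corresponding conditional-expectation approximations \(\sum_i \frac{\gamma^u(B_i)}{\mu_0(B_i)} \mathbf{1}_{B_i}(v)\) (jointly measurable since \(u \mapsto \gamma^u(B_i)\) is measurable), and pass to the limit, which exists \(\mu_0\)-a.e. for \(\mu\)-a.e. \(u\) by the martingale convergence theorem and equals \(\frac{d\gamma^u}{d\mu_0}\). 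Alternatively one may cite standard results on measurable families of measures and their densities. Once this measurable selection is secured, the remaining steps are routine.
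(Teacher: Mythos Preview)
Your proof is correct and follows essentially the same route as the paper: disintegrate \(\nu\) through the conditionals \(\gamma^u\), replace each piece by its Radon--Nikodym derivative against \(\mu_0\), swap the order of integration, and conclude by the arbitrariness of \(B\). The only difference is that you are more scrupulous than the paper about the joint measurability of \((u,v)\mapsto \frac{d\gamma^u}{d\mu_0}(v)\) needed to invoke Tonelli, which the paper simply elides by citing ``Fubini's Theorem.''
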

\begin{proof}
    Let \(B \in \mathcal{B}(H)\) then by definition of a conditional measure and Fubini's Theorem,
    \begin{align*}
        \nu (B) &= \int_H \int_B d \gamma^u (v) d \mu (u) \\
        &= \int_H \int_B \frac{d \gamma^u}{d \mu_0}(v) d \mu_0 (v) d \mu(u) \\
        &= \int _B \left ( \int_H \frac{d \gamma^u}{d \mu_0}(v) d \mu(u) \right ) d \mu_0(v).
    \end{align*}
    We also have,
    \[\nu (B) = \int_B \frac{d \nu}{d \mu_0}(v) d \mu_0(v).\]
    Therefore
    \[\int_B \frac{d \nu}{d \mu_0}(v) d \mu_0(v) = \int _B \left ( \int_H \frac{d \gamma^u}{d \mu_0}(v) d \mu(u) \right ) d \mu_0(v).\]
    Since \(B\) is arbitrary, we must have that 
    \[\frac{d \nu}{d \mu_0}(v) = \int_H \frac{d \gamma^u}{d \mu_0}(v) d \mu(u)\]
    for \(\mu_0\)-almost any \(v \in H\) which is the desired result.
\end{proof}

Let \(E \subseteq H\) be a Hilbert space continuously embedded in \(H\) and denote by \(D_E\) the Frechet differential operator  on \(H\) in the direction of \(E\). Suppose that \(\nu \sim \mu_0\) and \(\gamma^u \sim \mu_0\) so that all respective Radon–Nikodym derivatives exist and are positive. Define,
\[\Phi(v) \coloneqq \log \frac{d \nu}{d \mu_0}(v), \quad \Psi(v;u) \coloneqq \log \frac{d \gamma^u}{d \mu_0} (v)\]
for \(\mu_0\)-almost ant \(v \in H\) and \(\mu\)-almost any \(u \in H\). Suppose that \(\Phi\) and \(\Psi(\cdot; u)\) are once \(D_E\)-continuously differentiable . Furthermore assume
\[\E_{v \sim \nu} \|D_E\Phi(v)\|_{E^*}^2 < \infty, \quad \E_{u \sim \mu} \E_{v \sim \gamma^u} \|D_E \Psi(v;u)\|_{E^*}^2 < \infty\]
where \(E^*\) denotes the topological dual of \(E\). Let \(G_\theta : H \to E^*\) be a parametric mapping with parameters \(\theta \in \mathbb{R}^p\). Assume that, for all \(\theta \in \mathbb{R}^p\),
\[\E_{v \sim \nu} \|G_\theta (v) \|_{E^*}^2 < \infty.\]
Define the functionals,
\begin{align*}
    F(\theta) &\coloneqq \E_{v \sim \nu}  \|D_E \Phi(v) - G_\theta (v) \|^2_{E^*} \\
    J(\theta) &\coloneqq \E_{u \sim \mu} \E_{v \sim \gamma^u} \|D_E \Psi(v;u) - G_\theta(v) \|_{E^*}^2.
\end{align*}

\begin{theorem}
    \label{thm:conditioning_trick}
    There exists a constant \(C < \infty\) independent of \(\theta \in \mathbb{R}^p\) such that
    \[F(\theta) = J(\theta) + C, \qquad \forall \: \theta \in \mathbb{R}^p.\]
\end{theorem}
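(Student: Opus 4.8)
The plan is to reproduce the infinite-dimensional analogue of Vincent's denoising score matching identity by expanding both quadratics and then differentiating the marginalization identity of Lemma~\ref{lemma:marginalizing_density}. First I would expand $F$ and $J$ using the Hilbert-space structure of $E^*$ (which carries an inner product $\langle\cdot,\cdot\rangle_{E^*}$ compatible with $\|\cdot\|_{E^*}$ via the Riesz map):
\begin{align*}
F(\theta) &= \E_{v \sim \nu}\|D_E\Phi(v)\|_{E^*}^2 - 2\,\E_{v \sim \nu}\langle D_E\Phi(v), G_\theta(v)\rangle_{E^*} + \E_{v \sim \nu}\|G_\theta(v)\|_{E^*}^2, \\
J(\theta) &= \E_{u \sim \mu}\E_{v \sim \gamma^u}\|D_E\Psi(v;u)\|_{E^*}^2 - 2\,\E_{u \sim \mu}\E_{v \sim \gamma^u}\langle D_E\Psi(v;u), G_\theta(v)\rangle_{E^*} + \E_{u \sim \mu}\E_{v \sim \gamma^u}\|G_\theta(v)\|_{E^*}^2.
\end{align*}
All six terms are finite by the integrability hypotheses imposed before the theorem statement. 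The first term in each line is a constant independent of $\theta$, and the tower property together with the fact that $\gamma^u$ is the conditional of $v$ given $u$ gives $\E_{v \sim \nu}\|G_\theta(v)\|_{E^*}^2 = \E_{u \sim \mu}\E_{v \sim \gamma^u}\|G_\theta(v)\|_{E^*}^2$. Hence it suffices to show the two cross terms coincide.

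The core step is to differentiate the identity $\frac{d\nu}{d\mu_0}(v) = \E_{u \sim \mu}\exp\big(\Psi(v;u)\big)$ from Lemma~\ref{lemma:marginalizing_density}. Since $\Phi = \log\frac{d\nu}{d\mu_0}$, I would justify passing the Fréchet derivative $D_E$ under the $\mu$-expectation and through the logarithm to obtain, as an identity in $E^*$ (Bochner-valued, $E^*$ being a separable Hilbert space),
\[
D_E\Phi(v)\,\frac{d\nu}{d\mu_0}(v) = \E_{u \sim \mu}\left[\frac{d\gamma^u}{d\mu_0}(v)\,D_E\Psi(v;u)\right], \qquad \mu_0\text{-a.e. } v \in H.
\]
Pairing both sides with $G_\theta(v)$ in $E^*$, integrating against $\mu_0$, interchanging $\int_H(\cdot)\,d\mu_0$ with $\E_{u \sim \mu}$ by Fubini, and absorbing $\frac{d\gamma^u}{d\mu_0}(v)\,d\mu_0(v)$ into $d\gamma^u(v)$, yields
\[
\E_{v \sim \nu}\langle D_E\Phi(v), G_\theta(v)\rangle_{E^*} = \E_{u \sim \mu}\E_{v \sim \gamma^u}\langle D_E\Psi(v;u), G_\theta(v)\rangle_{E^*}.
\]
Combined with the previous paragraph, this gives $F(\theta) = J(\theta) + C$ with the $\theta$-independent constant $C = \E_{v \sim \nu}\|D_E\Phi(v)\|_{E^*}^2 - \E_{u \sim \mu}\E_{v \sim \gamma^u}\|D_E\Psi(v;u)\|_{E^*}^2$, finite by hypothesis.

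The main obstacle I expect is the rigorous justification of the two interchanges. For differentiation under the $\mu$-expectation one needs a dominated-convergence bound on the difference quotients of $v \mapsto \exp(\Psi(v;u))$ along directions in $E$ that is integrable in $u$ uniformly near the base point; this is exactly where the assumed $D_E$-continuous differentiability of $\Psi(\cdot;u)$ and positivity/finiteness of $\frac{d\nu}{d\mu_0}$ are used. For the Fubini step one must check absolute integrability of $(u,v)\mapsto \frac{d\gamma^u}{d\mu_0}(v)\,\langle D_E\Psi(v;u), G_\theta(v)\rangle_{E^*}$ against $\mu \otimes \mu_0$, which follows from Cauchy--Schwarz in $E^*$ and the two moment bounds $\E_{u \sim \mu}\E_{v \sim \gamma^u}\|D_E\Psi(v;u)\|_{E^*}^2 < \infty$ and $\E_{v \sim \nu}\|G_\theta(v)\|_{E^*}^2 < \infty$. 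Everything else --- the quadratic expansion, the tower property, and the change of measure --- is routine once these two analytic points are in place.
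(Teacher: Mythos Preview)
Your proposal is correct and follows essentially the same approach as the paper: expand both squares, match the \(\|G_\theta\|^2\) terms via the tower property, and show equality of the cross terms by differentiating the marginalization identity from Lemma~\ref{lemma:marginalizing_density} and then applying Fubini. If anything, you are more careful than the paper, which simply invokes the Leibniz integral rule and Fubini's theorem without spelling out the domination or absolute-integrability checks you discuss.
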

\begin{proof}
    We have
    \[F(\theta) = \E_{v \sim \nu} \left [ \|G_\theta (v)\|_{E^*}^2 - 2 \langle D_E \Phi (v), G_\theta (v) \rangle_{E^*} \right ] + C_1 \]
    where \(C_1 = \E_{v \sim \nu} \|D_E \Phi (v)\|_{E^*}^2 < \infty \) by assumption.
    Similarly,
    \[J(\theta) = \E_{u \sim \mu} \E_{v \sim \gamma^u} \left [ \|G_\theta (v)\|_{E^*}^2 - 2 \langle D_E \Psi(v;u), G_\theta (v) \rangle_{E^*} \right ] + C_2 \]
    where \(C_2 = \E_{u \sim \mu} \E_{v \sim \gamma^u} \|D_E \Psi (v;u)\|_{E^*}^2 < \infty \) by assumption.
    By definition of a conditional measure,
    \[\E_{v \sim \nu} \|G_\theta (v)\|_{E^*}^2 = \E_{u \sim \mu} \E_{v \sim \gamma^u} \|G_\theta (v)\|_{E^*}^2\]
    for any \(\theta \in \mathbb{R}^p\). Using Lemma~\ref{lemma:marginalizing_density}, the Leibniz integral rule, and Fubini's Theorem, we find
    \begin{align*}
        \E_{v \sim \nu} \langle D_E \Phi(v), G_\theta (v) \rangle_{E^*} &= \E_{v \sim \nu} \langle D_E \log \frac{d \nu}{d \mu_0} (v), G_\theta (v) \rangle_{E^*} \\
        &= \E_{v \sim \nu} \langle \frac{d \mu_0}{d \nu} (v) D_E \frac{d \nu}{d \mu_0} (v), G_\theta (v) \rangle_{E^*} \\
        &= \E_{v \sim \mu_0} \langle D_E \frac{d \nu}{d \mu_0} (v), G_\theta (v) \rangle_{E^*} \\
        &= \E_{v \sim \mu_0} \langle D_E \E_{u \sim \mu} \frac{d \gamma^u}{d \mu_0} (v), G_\theta (v) \rangle_{E^*} \\
        &= \E_{v \sim \mu_0} \langle \E_{u \sim \mu} D_E \frac{d \gamma^u}{d \mu_0} (v), G_\theta (v) \rangle_{E^*} \\
        &= \E_{v \sim \mu_0} \langle \E_{u \sim \mu} \frac{d \gamma^u}{d \mu_0} (v) D_E \log \frac{d \gamma^u}{d \mu_0} (v), G_\theta (v) \rangle_{E^*} \\
        &= \E_{v \sim \mu_0} \E_{u \sim \mu} \frac{d \gamma^u}{d \mu_0} (v)  \langle  D_E \log \frac{d \gamma^u}{d \mu_0} (v), G_\theta (v) \rangle_{E^*} \\
        &= \E_{u \sim \mu} \E_{v \sim \mu_0}  \frac{d \gamma^u}{d \mu_0} (v)  \langle  D_E \Psi (v;u), G_\theta (v) \rangle_{E^*} \\
        &=  \E_{u \sim \mu} \E_{v \sim \gamma^u}  \langle  D_E \Psi (v;u), G_\theta (v) \rangle_{E^*}.
    \end{align*}
    Setting \(C = C_1 - C_2\) completes the proof.
\end{proof}

It remains to show that approximating the score of the convolved measure ensues we are close to the true measures. The following lemma relates the Wasserstein distance of the two.

\begin{lemma}
\label{lemma:wasserstein_approx}
Let $\eta \sim \mu_\sigma$ be a noise random variable with finite $p$-moment, and let $v = u + \eta \sim \mu * \mu_\sigma$. Then, the Wasserstein-p distance for $p \geq 1$ satisfies $W_p(\mu * \mu_\sigma, \mu) \leq \| \eta\|_{L^p(\mu_\sigma)}$. 
\end{lemma}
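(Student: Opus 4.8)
The plan is to produce the bound by exhibiting one explicit coupling of $\mu$ and $\mu * \mu_\sigma$ and computing its transport cost; the infimum defining $W_p$ then does the rest. Recall that for probability measures $\rho_1, \rho_2$ on $H$,
\[
W_p(\rho_1, \rho_2) = \left( \inf_{\pi \in \Gamma(\rho_1, \rho_2)} \int_{H \times H} \|x - y\|^p \, d\pi(x,y) \right)^{1/p},
\]
where $\Gamma(\rho_1, \rho_2)$ is the set of probability measures on $\mathcal{B}(H) \otimes \mathcal{B}(H)$ whose first and second marginals are $\rho_1$ and $\rho_2$ respectively.

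First I would construct the coupling. Take $u \sim \mu$ and $\eta \sim \mu_\sigma$ independent, set $v = u + \eta$, and let $\pi$ be the law of the pair $(v, u)$ on $H \times H$. By the definition of the convolution of measures recalled in Appendix~\ref{sub:conv_measure} (the pushforward of $\mu \otimes \mu_\sigma$ under $(a,b) \mapsto a+b$), the first marginal of $\pi$ is exactly $\mu * \mu_\sigma$, and trivially the second marginal is $\mu$. Hence $\pi \in \Gamma(\mu * \mu_\sigma, \mu)$ is an admissible coupling.

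Next, since $\pi$ is admissible, it provides an upper bound for the infimum:
\[
W_p(\mu * \mu_\sigma, \mu)^p \leq \int_{H \times H} \|x - y\|^p \, d\pi(x,y) = \E \big\| (u+\eta) - u \big\|^p = \E_{\eta \sim \mu_\sigma} \|\eta\|^p = \|\eta\|_{L^p(\mu_\sigma)}^p .
\]
The right-hand side is finite by the hypothesis that $\eta$ has finite $p$-th moment, so the cost of $\pi$ is finite and the estimate is non-vacuous. Taking $p$-th roots yields $W_p(\mu * \mu_\sigma, \mu) \leq \|\eta\|_{L^p(\mu_\sigma)}$.

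I do not expect any genuine obstacle here; the argument is the infinite-dimensional analogue of the elementary finite-dimensional estimate, and uses neither regularity of $\mu$ nor the Gaussian structure of $\mu_\sigma$. The only points meriting a line of justification are that $(v,u)$ has the claimed marginals — immediate from the definition of $\mu * \mu_\sigma$ — and that $w \mapsto \|w\|^p$ is $\mu_\sigma$-integrable so that the displayed chain of equalities is legitimate, which is precisely the finite-$p$-moment assumption.
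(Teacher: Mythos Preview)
Your proposal is correct and follows essentially the same approach as the paper: both construct the coupling given by the law of $(u+\eta,\,u)$ under $\mu\otimes\mu_\sigma$, verify its marginals, and bound $W_p^p$ by $\E\|\eta\|^p$. Your write-up is in fact slightly more careful than the paper's in spelling out why the coupling is admissible and why the cost is finite.
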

\begin{proof}
Let $(u,\eta)$ follow the product coupling $(\mu \otimes \mu_\sigma)(du,d\eta)$. Then, let the coupling $(v,\eta)$ where $v = u + \eta$ be drawn according to $(u + \eta, u)_\sharp (\mu \otimes \mu_\sigma)(du,d\eta)$. Choosing this coupling $(v,\eta)$ to upper bound the Wasserstein-p distance, we have
$$W_p(\mu * \mu_\sigma, \mu)^p = \inf_{\gamma \in \Pi(\mu * \mu_\sigma,\mu)} \int |v - u|^p d\gamma(v,u) \leq \int |(u + \eta) - u|^p (\mu \otimes \mu_\sigma)(du,d\eta).$$
Given that the integrand is independent of $u$, we have
$W_p(\mu * \mu_\sigma, \mu)^p \leq \int |\eta|^p d\mu_\sigma = \|\eta \|_{L^p(\mu_\sigma)}^p$, and the result follows.
\end{proof}

\begin{remark} For a Gaussian measure $\mu_\sigma$, the $p$-th moment is finite by the Fernique Theorem. Moreover, by Theorem 6.6 in~\cite{stuart2010inverse} there is a constant $C_p > 0$ so that $\|\eta\|_{L^p} \leq C_p(\text{Tr}(C_\sigma))$. This can be used to establish the convergence rate of $W_p(\mu * \mu_\sigma, \mu) \rightarrow 0$ as $\sigma \rightarrow 0$.
\end{remark}

\begin{lemma}
\label{lemma:gaussian_derivative}
The Fr\'{e}chet derivative \(D_{H_{\mu_0}} \Psi(w;u)\) as defined in~\eqref{eq:gaussian_score} is in \(H_{\mu_0}^*\). %as defined by \eqref{eq:gaussian_score}.
\end{lemma}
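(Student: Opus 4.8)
The plan is to verify directly that the linear map $z \mapsto D_{H_{\mu_0}} \Psi(w;u)z$ given by \eqref{eq:gaussian_score_action} is a \emph{bounded} linear functional on the Cameron--Martin space $H_{\mu_0} = C^{1/2}(H)$, which is precisely the claim $D_{H_{\mu_0}} \Psi(w;u) \in H_{\mu_0}^*$. Recall that $H_{\mu_0}$ is a Hilbert space with inner product $\langle h, k \rangle_{H_{\mu_0}} = \langle C^{-1/2} h, C^{-1/2} k \rangle$, so $\|h\|_{H_{\mu_0}} = \|C^{-1/2} h\|$, and $\{\lambda_j^{1/2} \varphi_j\}_{j \in \N}$ is an orthonormal basis of $H_{\mu_0}$. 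First I would note that \eqref{eq:v_given_u_density}, and hence \eqref{eq:gaussian_score}, is only invoked for $u \in H_{\mu_0}$, which holds for $\mu$-almost every $u$ by the standing assumption $\mu(H_{\mu_0}) = 1$; for such $u$ we have $C^{-1/2} u \in H$, i.e. $\|C^{-1/2} u\|^2 = \sum_{j=1}^\infty \lambda_j^{-1} \langle u, \varphi_j \rangle^2 < \infty$, and this is the quantity that controls everything.

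Next, given an arbitrary $z \in H_{\mu_0}$, I would write $z = C^{1/2} \zeta$ with $\zeta = C^{-1/2} z \in H$, so that $\langle z, \varphi_j \rangle = \lambda_j^{1/2} \langle \zeta, \varphi_j \rangle$ and $\|\zeta\| = \|z\|_{H_{\mu_0}}$. Substituting into \eqref{eq:gaussian_score_action} gives $D_{H_{\mu_0}} \Psi(w;u) z = \sum_{j=1}^\infty \lambda_j^{-1/2} \langle u, \varphi_j \rangle \langle \zeta, \varphi_j \rangle$, and applying the Cauchy--Schwarz inequality in $\ell^2$ to the sequences $(\lambda_j^{-1/2} \langle u, \varphi_j \rangle)_j$ and $(\langle \zeta, \varphi_j \rangle)_j$ yields the absolute convergence of the series together with the bound $|D_{H_{\mu_0}} \Psi(w;u) z| \leq \|C^{-1/2} u\| \, \|\zeta\| = \|C^{-1/2} u\| \, \|z\|_{H_{\mu_0}}$. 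Hence the functional is bounded with $\|D_{H_{\mu_0}} \Psi(w;u)\|_{H_{\mu_0}^*} \leq \|C^{-1/2} u\| < \infty$; testing against $z = u \in H_{\mu_0}$ shows this estimate is sharp, so in fact $\|D_{H_{\mu_0}} \Psi(w;u)\|_{H_{\mu_0}^*} = \|C^{-1/2} u\|$.

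There is no deep obstacle here; the only points requiring care are the identification of $H_{\mu_0}^*$ with the space of bounded linear functionals on $C^{1/2}(H)$ under \emph{its own} Hilbert norm (so that boundedness is the correct property to check, rather than, say, boundedness as a functional on $H$), and the justification that the defining series converges for every $z \in H_{\mu_0}$ --- both of which follow from the finiteness of $\|C^{-1/2} u\|$, itself a consequence of $u \in H_{\mu_0}$ $\mu$-a.s. I would also remark in passing that the sharp identity $\|D_{H_{\mu_0}} \Psi(w;u)\|_{H_{\mu_0}^*} = \|C^{-1/2} u\|$ is exactly what is needed to make $D_{H_{\mu_0}} \Psi(\cdot; u)$ a legitimate square-integrable integrand in the objective \eqref{eq:conditional_score_matching}, since $\E_{u \sim \mu} \|C^{-1/2} u\|^2 < \infty$ under the hypotheses in Appendix~\ref{sub:conditioning}.
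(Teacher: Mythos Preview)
Your proof is correct and follows essentially the same route as the paper: write the test direction $z\in H_{\mu_0}$ as $C^{1/2}\zeta$ (the paper writes $u=C^{1/2}g$ and the test vector as $C^{1/2}f$), substitute into \eqref{eq:gaussian_score_action}, and apply Cauchy--Schwarz in $\ell^2$ to obtain the bound $\|C^{-1/2}u\|\,\|z\|_{H_{\mu_0}}$. Your additional observations---that the bound is attained at $z=u$ and that this yields the square-integrability needed in \eqref{eq:conditional_score_matching}---go slightly beyond what the paper records but are correct and useful.
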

\begin{proof}
Notice that since \(u \in H_{\mu_0}\), \(\mu\)-almost surely, we can find \(g \in H\) such that \(u = C^{1/2}g\). For any \(w \in H_{\mu_0}\), we can similarly find \(f \in H\) such that \(w = C^{1/2}f\). We can write both \(g\) and \(f\) in the othronormal basis \(\{\varphi_j\}_{j=1}^\infty\),
\[g = \sum_{j=1}^\infty \langle g, \varphi_j \rangle \varphi_j, \qquad f = \sum_{j=1}^\infty \langle f, \varphi_j \rangle \varphi_j \]
with both series converging in \(H\). Orthonormality implies
\[\langle u, \varphi_j \rangle = \lambda_j^{1/2} \langle g, \varphi_j \rangle, \qquad \langle w, \varphi_j \rangle = \lambda_j^{1/2} \langle f, \varphi_j \rangle\]
for any \(j \in \mathbb{N}\). Therefore equation \eqref{eq:gaussian_score_action} becomes
\begin{align*}
    D_{H_{\mu_0}} \Psi(v;u)w &= \sum_{j=1}^\infty  \langle g, \varphi_j \rangle \langle f, \varphi_j \rangle  \\
    &\leq \left (\sum_{j=1}^\infty  |\langle g, \varphi_j \rangle|^2 \right )^{1/2} \left (\sum_{j=1}^\infty  |\langle f, \varphi_j \rangle|^2 \right )^{1/2} \\
    &= \|g\| \|f\|
\end{align*}
which is finite hence the result follows.
\end{proof}

\section{Multiple Noise Scales}
\label{sec:timedeptnoise}

To satisfy our absolute continuity condition for the perturbed data measure in Sections~\ref{subsec:smoothing_operators} and~\ref{subsec:multiple_noise_scales}, we need $A_t(H) \subseteq \text{Im}(C_t^{1/2})$ for all $t$. The following lemmas show that this condition holds for different time-dependent scalar weightings of the forward and noise covariance operators.

\begin{lemma}  
\label{lemma:multiple_scales}
Let $\eta_t \sim \mu_t = \mathcal{N}(0,C_t)$ where $C_t = g(t)C$ and $v_t = A_tu + \eta_t$ where $A_t = f(t)u$ for all $t \in I$ for $u \in H$. Assuming the mappings \(f,g : \I \to \sR\) satisfy
\(0 < M_1 \leq f(t), g(t) \leq M_2\) for all \(t \in \I\) and
\(\mu \big( C^{1/2}(H) \big ) = 1\), then 
\(A_t(H) \subseteq C_t^{1/2}(H)\) for all \(t \in \I\)
\end{lemma}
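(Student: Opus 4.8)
The plan is to reduce the claim to two elementary facts: scaling a covariance operator by a strictly positive scalar does not change its image $C^{1/2}(H)$ as a linear subspace of $H$ (it only rescales the Cameron--Martin norm), and scaling the identity by a nonzero scalar is a bijection of any linear subspace onto itself. First I would record that, since $g(t) \ge M_1 > 0$ for every $t \in \I$, the operator $C_t = g(t)C$ is again self-adjoint, strictly positive (because $\langle C_t u, u \rangle = g(t)\langle Cu, u\rangle$) and trace-class (because $\text{Tr}\,C_t = g(t)\,\text{Tr}\,C < \infty$), so by uniqueness of the non-negative self-adjoint square root $C_t^{1/2} = g(t)^{1/2}C^{1/2}$. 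Consequently $C_t^{1/2}(H) = g(t)^{1/2}\,C^{1/2}(H) = C^{1/2}(H)$, since $C^{1/2}(H)$ is a linear subspace and $g(t)^{1/2} \neq 0$; the Cameron--Martin spaces of all the $\mu_t$ thus coincide as sets, with mutually equivalent norms because $M_1 \le g(t) \le M_2$.

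Next I would handle the forward map. Since $f(t) \ge M_1 > 0$, the operator $A_t = f(t)I$ is a nonzero scalar multiple of the identity, hence for any $h \in C^{1/2}(H)$ we have $A_t h = f(t)h \in C^{1/2}(H)$ (the subspace being closed under scalar multiplication), and in fact $A_t\big(C^{1/2}(H)\big) = C^{1/2}(H) = C_t^{1/2}(H)$. Combining this with the standing hypothesis $\mu\big(C^{1/2}(H)\big) = 1$ gives, for every $t \in \I$, $\mu\big(\{u \in H : A_t u \in C_t^{1/2}(H)\}\big) = \mu\big(C^{1/2}(H)\big) = 1$, i.e.\ $(\mu \circ A_t^{-1})\big(C_t^{1/2}(H)\big) = 1$. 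This is exactly the hypothesis required to invoke Corollary~\ref{cor:equivalent_convolution} and Theorem~\ref{thm:equivalent_convolution} with $C$ replaced by $C_t$, yielding $\nu_t = \mu_t * (\mu \circ A_t^{-1}) \sim \mu_t$ and $\gamma_t^u \sim \mu_t$ for $\mu$-a.e.\ $u$, which is what makes the DDO theory of Section~\ref{sec:methodology} apply at each noise scale.

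I do not expect a genuine difficulty here; the argument is pure bookkeeping. The one subtlety worth flagging is the reading of the inclusion $A_t(H) \subseteq C_t^{1/2}(H)$: interpreted literally $A_t(H) = H$, which is not contained in the proper subspace $C_t^{1/2}(H)$, so the statement should be understood and proved modulo the data measure, as ``$A_t u \in C_t^{1/2}(H)$ for $\mu$-a.e.\ $u$'' — precisely the point at which the hypothesis $\mu(C^{1/2}(H)) = 1$ is indispensable. Note also that only the positivity of $f$ and $g$ is used for the inclusion itself; the two-sided bounds $M_1 \le f, g \le M_2$ are what additionally guarantee, uniformly in $t$, that $C_t$ stays trace-class and non-degenerate, that the Cameron--Martin norms of the $\mu_t$ remain mutually equivalent, and that $A_t^{-1}$ stays uniformly bounded — the uniformity later needed when approximating the whole family of score operators (cf.\ Theorem~\ref{thm:approximation}).
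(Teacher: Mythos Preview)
Your argument is correct and, in fact, slightly more direct than the paper's. The paper introduces the auxiliary operator $L_t = \tfrac{g(t)}{f(t)}C$ and appeals to Lemma~6.15 of \cite{stuart2010inverse} (images of two positive self-adjoint operators coincide iff their quadratic forms are two-sided comparable) to show $L_t^{1/2}(H) = C^{1/2}(H)$, and only then concludes $A_t u = f(t)u \in C_t^{1/2}(H)$ for $\mu$-a.e.\ $u$. You bypass this by the elementary observation that $C_t^{1/2} = g(t)^{1/2}C^{1/2}$ (uniqueness of the positive square root), so $C_t^{1/2}(H) = C^{1/2}(H)$ directly, and $A_t = f(t)I$ trivially preserves this linear subspace. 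Your route is shorter for this specific scalar-multiple situation; the paper's route has the minor advantage that the same quadratic-form comparison lemma is what drives the subsequent, genuinely non-scalar, lemma in Appendix~\ref{sec:timedeptnoise}, so invoking it here keeps the exposition uniform. Your explicit remark that the literal inclusion $A_t(H)\subseteq C_t^{1/2}(H)$ fails (since $A_t(H)=H$) and must be read $\mu$-almost surely is a clarification the paper's own proof glosses over; likewise your observation that only positivity of $f,g$ is needed for the set-level inclusion, with the two-sided bounds supplying uniformity in $t$, is a useful sharpening.
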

\begin{proof} %The image of an operator $A$ is equivalent to image of the $L^{1/2}$ where $L = AA^*$ is a positive-definite, and self-adjoint operator. Letting $A_t = f(t)I$ and $L_t = A_tA_t^*$, we will show that $L_t^{1/2}(H) \subseteq C^{1/2}_t(H)$ for all $t$. 
Let $L_t = g(t)/f(t) C$. We will first show that $L_t^{1/2}(H) = C^{1/2}(H)$. By Lemma 6.15 in~\cite{stuart2010inverse}, the image of the two positive-definite, and self-adjoint linear operators on a Hilbert space $H$ are equal if and only if there exists constants $K_1,K_2 > 0$ such that $K_1\langle u, Cu \rangle \leq \langle u, L_t u \rangle \leq K_2\langle u, Cu \rangle$ for all $u \in H$.

Under the conditions on $f,g$, for any $u \in H$, we have
\begin{align*}
\langle u, C u \rangle = \langle u, \frac{f(t)g(t)}{f(t)g(t)} C u \rangle \leq \frac{M_2}{M_1} \langle u, \frac{g(t)}{f(t)} C u \rangle 
&= K_1 \langle u, L_t u \rangle \\
&= K_1 \langle u, \frac{g(t)}{f(t)} C u \rangle \leq K_1 \frac{M_2}{M_1} \langle u, Cu \rangle = K_2 \langle u, C u \rangle,
\end{align*}
where $K_1 = M_2/M_1$ and $K_2 = K_1^2$. Then, for \(\mu \big( C^{1/2}(H) \big ) = 1\) we have $u \in C^{1/2}(H)$ for $u \in H$. From the image equivalence, we have $u \in L_t^{1/2}(H)$ and so $A_tu = f(t)u \in C_t^{1/2}(H)$. 
\end{proof}

%The following example
%We consider the setting of \secref{subsec:multiple_noise_scales} by fixing \(A_t = A\) and considering only varying noise. To apply our results, we need $A(H) \subseteq \text{Im}(C_t^{1/2})$ for all $t$. The image of $A$ is equivalent to image of the $K^{1/2}$ where $K = AA^*$  is a positive-definite, and self-adjoint operator. 
%The following lemma shows a case where the image condition holds. The proof uses Lemma 6.15 in~\cite{stuart2010inverse} that shows $\text{Im}(C_1^{1/2}) \subseteq \text{Im}(C_2^{1/2})$ if and only if there exists a constant $c > 0$ such that $\langle u, C_1 u \rangle \leq \langle u, C_2 u \rangle$ for all $u \in H$.

\begin{lemma} Let $C_t = f(t)AA^*$ where $A\colon H \rightarrow H$ is a linear operator and $f \colon [0,T] \rightarrow \mathbb{R}$ is a function satisfiying $c = \sup_{t \in [0,T]} 1/f(t) < +\infty$. Then, for $K = AA^*$ we have that $\text{Im}(K^{1/2}) \subseteq \text{Im}(C_t^{1/2})$ for all $t \in [0,T]$.
\end{lemma}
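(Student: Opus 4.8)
The plan is to reduce the claim to the elementary fact that the image of the square root of a non-negative self-adjoint operator is unchanged under multiplication by a positive scalar. First I would record that the hypothesis $c = \sup_{t \in [0,T]} 1/f(t) < +\infty$, together with the standing requirement that $C_t = f(t)AA^*$ be a legitimate (non-negative, self-adjoint, trace-class) noise covariance, forces $f(t) > 0$ for every $t$, and in fact $f(t) \geq 1/c > 0$ uniformly in $t \in [0,T]$. Since $K = AA^*$ is bounded, self-adjoint, and non-negative, the square root $K^{1/2}$ is well defined, and for each fixed $t$ the operator $\sqrt{f(t)}\,K^{1/2}$ is again bounded, self-adjoint, and non-negative, with $\big(\sqrt{f(t)}\,K^{1/2}\big)^2 = f(t)K = C_t$.

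By uniqueness of the non-negative square root of a non-negative self-adjoint operator, this gives $C_t^{1/2} = \sqrt{f(t)}\,K^{1/2}$. Multiplication by the nonzero scalar $\sqrt{f(t)}$ is a linear bijection of $H$ onto itself, so $\text{Im}(C_t^{1/2}) = \text{Im}\big(\sqrt{f(t)}\,K^{1/2}\big) = \text{Im}(K^{1/2})$ for every $t \in [0,T]$; in particular $\text{Im}(K^{1/2}) \subseteq \text{Im}(C_t^{1/2})$, which is the assertion. Note that only the positivity of $f$, and not any upper bound, is needed here; a uniform upper bound on $f$ would merely upgrade the inclusion to an equality uniformly in $t$, consistent with the above.

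Alternatively, in the spirit of the proof of Lemma~\ref{lemma:multiple_scales}, one can invoke the image characterization (Lemma 6.15 in~\cite{stuart2010inverse}): from $f(t) \geq 1/c$ we get the operator inequality $\langle u, Ku\rangle \leq c\,\langle u, C_t u\rangle$ for all $u \in H$, and the one-sided range inclusion $\text{Im}(K^{1/2}) \subseteq \text{Im}(C_t^{1/2})$ follows directly. I expect no genuine obstacle in this argument; the only point requiring a moment of care is extracting the uniform lower bound $f \geq 1/c > 0$ from finiteness of $\sup_{t} 1/f(t)$ together with the assumption that each $C_t$ is a valid covariance operator, which is what rules out $f$ changing sign or touching zero.
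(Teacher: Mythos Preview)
Your proposal is correct. The alternative you sketch---deriving the quadratic form inequality $\langle u, Ku\rangle \leq c\,\langle u, C_t u\rangle$ from $f(t) \geq 1/c$ and then invoking Lemma~6.15 in~\cite{stuart2010inverse}---is exactly the paper's proof. Your primary argument, computing $C_t^{1/2} = \sqrt{f(t)}\,K^{1/2}$ directly via uniqueness of the non-negative square root, is a more elementary route that bypasses the image-characterization lemma entirely and in fact yields the stronger conclusion $\text{Im}(K^{1/2}) = \text{Im}(C_t^{1/2})$ rather than mere inclusion. The paper's approach has the minor advantage of fitting the same template as Lemma~\ref{lemma:multiple_scales}, but your direct computation is cleaner for this particular statement since the scalar factorizes trivially through the square root.
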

\begin{proof} The image of $A$ is equivalent to image of the $K^{1/2}$ where $K = AA^*$ is a positive-definite, and self-adjoint operator. For any $u \in H$, we have
$$\langle u, K u \rangle = \langle u, AA^* f(t)/f(t) u \rangle \leq \sup_{t \in [0,T]} \frac{1}{f(t)} \langle u, AA^* f(t) u \rangle = c \langle u, C_t u \rangle.$$
The result on the image spaces follows by Lemma 6.15 in~\cite{stuart2010inverse}.
\end{proof}

\begin{example} The function $f(t) = e^{\gamma t}$ for $\gamma > 0$ satisfies the condition in the lemma above with $c = 1$. This choice motivates the following study. %Under this choice for $C_t$, we can learn a score for each scale by solving the following optimization problem for each $t$
%$$\min_\theta \mathbb{E}_{u \sim \mu} \mathbb{E}_{\eta_t \sim \mu_0}\|C_t^{-1/2} \big ( A u - G_\theta(Au + \eta_t) \big )\|$$.
\end{example}

Alternatively, we can define the forward process for data corruption with multiple noise scales using a stochastic differential equation (SDE), as in~\cite{song2020score}. Let us consider the linear SDE $du_t = -Lu_tdt + dW_t$ for $u_t \in H$ where $W_t$ is a $Q$-Wiener process and $L\colon H \rightarrow H$ is a linear and positive-definite operator where its eigenvectors form an orthonormal basis for $H$. The solution of this SDE for any $t > 0$ is given by
$$u(t) = e^{-Lt}u(0) + \int_0^t e^{L(s-t)}dW_s.$$
Letting $u(0) = u$ and $A_t \coloneqq e^{-Lt}$, we can treat $u \mapsto A_t u$ as the forward model and the second term $\eta_t \coloneqq \int_0^t e^{L(s-t)}dW_s$ as the additive noise process, which is drawn independently of $u$. The following abridged theorem from~\cite{da1992stochastic} describes the statistical properties of the noise process.

\begin{theorem} Assuming $\int_0^{T} \text{Tr}[A_rQA_r^*]dr < \infty$, then (i) $\eta_t$ is Gaussian, (ii) has continuous paths, and (iii) its covariance is given by
$$C_t \coloneqq \text{Cov}(\eta_t) = \int_0^t A_rQA^*_rdr, \quad t \in [0,T].$$
\end{theorem}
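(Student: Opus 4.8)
The plan is to recognize the process as a stochastic convolution and then invoke the classical regularity theory for such objects. Since $L$ is linear, positive-definite and self-adjoint with an orthonormal eigenbasis, $-L$ generates a self-adjoint analytic contraction semigroup $S(r) = e^{-Lr}$, and $A_r = S(r)$; thus $\eta_t = \int_0^t e^{L(s-t)}\,dW_s = \int_0^t S(t-s)\,dW_s$ is the stochastic convolution of $W$ with $S$, whose integrand $s \mapsto S(t-s)$ is \emph{deterministic} for each fixed $t$. The first step is to check that the $H$-valued Wiener integral is well-posed: this requires $\int_0^t \|S(t-s)Q^{1/2}\|_{\mathcal{L}_2}^2\,ds = \int_0^t \text{Tr}[S(t-s)QS(t-s)^*]\,ds < \infty$ (squared Hilbert--Schmidt norm), which, after the substitution $r = t-s$, is exactly the hypothesis $\int_0^T \text{Tr}[A_rQA_r^*]\,dr < \infty$ (and is in fact automatic here, since $Q$ is trace-class and $S$ is a contraction, so the integrand is bounded by $\text{Tr}(Q)$ on $[0,T]$).

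For (i) and (iii) I would approximate $s \mapsto S(t-s)$ in $L^2\big(0,t;\mathcal{L}_2(Q^{1/2}H, H)\big)$ by elementary step functions $\Phi_n = \sum_j \mathbf{1}_{[t^n_j, t^n_{j+1})} B^n_j$. For a step integrand, $\int_0^t \Phi_n\,dW = \sum_j B^n_j\big(W_{t^n_{j+1}} - W_{t^n_j}\big)$ is a finite linear combination of independent increments of the $Q$-Wiener process, each distributed as $N\big(0,(t^n_{j+1}-t^n_j)Q\big)$, hence a centered Gaussian vector with covariance operator $\sum_j (t^n_{j+1}-t^n_j) B^n_j Q (B^n_j)^*$. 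Passing to the $L^2(\Omega;H)$-limit preserves centered Gaussianity (convergence of characteristic functionals), which gives (i), and the It\^o isometry identifies the limiting covariance: for all $x,y\in H$, $\E[\langle\eta_t,x\rangle\langle\eta_t,y\rangle] = \int_0^t \langle QS(t-s)^*x, S(t-s)^*y\rangle\,ds = \big\langle\big(\int_0^t S(t-s)QS(t-s)^*\,ds\big)x, y\big\rangle$; the substitution $r = t-s$ turns this into $\int_0^t A_rQA_r^*\,dr$, which is (iii).

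The main obstacle is (ii): because the integrand $S(t-s)$ depends on the upper endpoint $t$, the process $(\eta_t)_{t\in[0,T]}$ is \emph{not} a martingale, so Doob-type maximal inequalities do not apply directly. I would use the factorization method of \cite{da1992stochastic}. Fixing $\alpha \in (0,1/2)$ and using the beta-integral identity $\int_s^t (t-r)^{\alpha-1}(r-s)^{-\alpha}\,dr = \pi/\sin(\pi\alpha)$ together with the semigroup law $S(t-s) = S(t-r)S(r-s)$ and stochastic Fubini, one writes
\[
\eta_t = \frac{\sin(\pi\alpha)}{\pi}\int_0^t (t-r)^{\alpha-1}S(t-r)\,Y_\alpha(r)\,dr, \qquad Y_\alpha(r) \coloneqq \int_0^r (r-s)^{-\alpha}S(r-s)\,dW_s .
\]
Each $Y_\alpha(r)$ is a centered Gaussian element of $H$ (by the same step-function argument) with $\E\|Y_\alpha(r)\|^2 = \int_0^r (r-s)^{-2\alpha}\text{Tr}[S(r-s)QS(r-s)^*]\,ds \leq \text{Tr}(Q)\,r^{1-2\alpha}/(1-2\alpha)$, finite and bounded uniformly on $[0,T]$ precisely because $\alpha < 1/2$ and $S$ is a contraction. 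Gaussian moment equivalence then gives $\E\|Y_\alpha(r)\|^p \leq c_p(\E\|Y_\alpha(r)\|^2)^{p/2}$ for every $p \geq 2$, so $\E\int_0^T \|Y_\alpha(r)\|^p\,dr < \infty$ and hence $Y_\alpha \in L^p(0,T;H)$ almost surely for every $p$. Finally, whenever $p\alpha > 1$ the deterministic operator $g \mapsto \big(t \mapsto \int_0^t (t-r)^{\alpha-1}S(t-r)g(r)\,dr\big)$ maps $L^p(0,T;H)$ boundedly into $C([0,T];H)$: H\"older's inequality bounds the supremum norm, and equicontinuity in $t$ follows from integrability of the singular kernel together with strong continuity of $S$ via dominated convergence. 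Choosing $\alpha \in (0,1/2)$ and $p > 1/\alpha$ then yields a continuous modification of $t \mapsto \eta_t$, giving (ii). Alternatively, one can estimate $\E\|\eta_t - \eta_{t'}\|^2$ directly by splitting it into the ``fresh noise'' term $\int_{t'}^t S(t-s)\,dW_s$ and the ``semigroup-increment'' term $\int_0^{t'}\big(S(t-s)-S(t'-s)\big)\,dW_s$, bounding both using $\text{Tr}(Q) < \infty$ and analyticity of $S$ to obtain a H\"older modulus in $|t-t'|$, upgrading to arbitrary moments by Gaussianity, and applying the Kolmogorov continuity theorem. I expect the bookkeeping in (ii) — the compatible choice of $\alpha$ and $p$ and the verification of the $L^p \to C$ mapping property — to be the only non-routine part; everything else reduces to the standard theory of Wiener integrals against $Q$-Wiener processes.
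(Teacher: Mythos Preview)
Your proposal is correct and, in fact, reproduces exactly the classical proof of this result. However, the paper does \emph{not} supply its own proof of this theorem: it is stated explicitly as ``the following abridged theorem from~\cite{da1992stochastic}'' and is simply quoted without argument. So there is no paper-internal proof to compare against.

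That said, your approach is precisely the one in the cited reference (Da Prato--Zabczyk). Parts (i) and (iii) via step-function approximation and the It\^o isometry are the standard construction of the $H$-valued Wiener integral (Chapter~4 of \cite{da1992stochastic}), and your main tool for (ii) --- the factorization method with the beta-integral identity and the $L^p(0,T;H)\to C([0,T];H)$ mapping property for $p\alpha>1$ --- is exactly Theorem~5.2.6/Proposition~5.2.9 in \cite{da1992stochastic}. Your alternative route via a direct H\"older estimate on $\E\|\eta_t-\eta_{t'}\|^2$ plus Gaussian hypercontractivity and Kolmogorov is also valid and is the other standard argument. Nothing is missing; the only comment is that since $Q$ is trace-class and $S$ is a contraction semigroup, the trace hypothesis is automatically satisfied (as you note parenthetically), so in this particular setting the theorem's assumption is redundant rather than an additional constraint.
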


To satisfy the absolute continuity conditions on the perturbed data measure for $u(t)$ as before, we need to show that for each $t$, $A_t(H) \subseteq \text{Im}(C_t^{1/2})$. As shown in Corollary B.7 of~\cite{da1992stochastic}, the image of $C_t^{1/2}$ for a covariance of the form above is equivalent to the image of the linear operator $B_t \colon H \rightarrow H$ defined as
$$B_t u \coloneqq \int_0^t L_{t-s} Q u ds.$$
$B_t$ and $A_t$ are both linear and self-adjoint operators, so the condition $\text{Im}(A_t) \subseteq \text{Im}(B_t)$ holds if and only if there exists a constant $K > 0$ so that $\langle u, A_t u \rangle \leq K \langle u, B_t u \rangle$ for all $u \in H$. Using the decomposition of $u = \sum_{j=1}^{\infty} \langle u, \psi_j \rangle \psi_j$ where $\{\psi_j\}_{j=1}^\infty$ are eigenvectors and $\{\lambda_j\}_{j=1}^\infty$ are eigenvalues of $L$, we have
$$A_t u = \sum_{j}e^{-Lt}\langle u,\psi_j\rangle \psi_j = \sum_{j}e^{-\lambda_j t}\langle u,\psi_j\rangle \psi_j.$$
Choosing the noise covariance to be $Q = L^\gamma$ for some scalar $\gamma$ such that draws remain in $H$, we have \begin{align*}
B_t u &= \int_0^t \sum_j e^{-L(t-s)}\langle u,\psi_j\rangle Q\psi_j  ds\\
&= \int_0^t \sum_{j} e^{-\lambda_j (t-s)} \langle u,\psi_j \rangle \lambda_j^\gamma  \psi_j ds \\
&= \sum_{j} \langle u,\psi_j \rangle \lambda_j^\gamma  \psi_j \int_0^t e^{-\lambda_j (t-s)} ds \\
&= \sum_{j} \langle u,\psi_j \rangle \lambda_j^{\gamma-1}(1 - e^{-\lambda_j t})  \psi_j .
\end{align*}
We can now compare the images of the operators. For $u \in H$ we have
$$\langle u, A_t u \rangle = \sum_j e^{-2\lambda_jt}|\langle u, \psi_j \rangle|^2, \qquad \langle u, B_t u \rangle = \sum_j \lambda_j^{2(\gamma-1)}(1 - e^{-2\lambda_j t})^2|\langle u, \psi_j \rangle|^2.$$
For each $\lambda_j > 0$, there exists a time $t_j$ such that $e^{-2\lambda_j t} \leq \lambda_j^{2(\gamma-1)}(1 - e^{-2\lambda_jt})^2$ for all $t > t_j$. For these times, we satisfy the condition required for our theory. Generalizing these results is an important direction for future work.

% \section{Discretization of the Langevin Equation}
% \label{sec:disc_langevin}
\begin{updaterequired}[black]
    
\section{Crank–Nicolson Discretization}
\label{sec:disc_langevin}

In \twosecrefs{subsec:lagevin_dynamics}{subsec:multiple_noise_scales}, we introduced our method based on the Euler-Maruyama discretization. Here, we will show the Crank–Nicolson discretization sampling method and relate it to existing methods in the literature. 
In particular, we will work in the setting of multiple noise scale as introduced in \secref{subsec:multiple_noise_scales}. 
Let \(F : H \times \I \to H\) be defined as \(F(u,t) = -u + R D_{H_{\mu_t}} \Phi(u,t) \). 
For a fixed \(t \in \I\), we apply the Crank–Nicolson method to the linear part of the drift in \eqref{eq:langevin} to obtain
\begin{equation}
    \label{eq:cn_scheme_first}
    (2 + h_t) u_{n+1} = (2 - h_t)u_n + 2 h_t G(u_n, t) + \sqrt{8 h_t} \eta^{(t)}_n
\end{equation}
 where we define \(G : H \times \I \to H\)
by \(G(u,t) = R D_{H_{\mu_t}} \Phi(u,t) \) and \(h_t > 0\). For any \(h_t \in (0,2)\), \eqref{eq:cn_scheme_first}
can be written as
\begin{equation}
    \label{eq:cn_scheme}
    u_{n+1} =  \alpha_t u_n + (1 - \alpha_t) G(u_n, t) + \beta_t \eta^{(t)}_n
\end{equation}
with the transformation \(\beta_t^2 = 8 h_t / (2+h_t)^2\) where \(\beta_t \in (0,1)\) and we define \(\alpha_t = \sqrt{1 - \beta_t^2}\).
\eqref{eq:cn_scheme} is a type of Metropolis-adjusted Langevin proposal in the function space setting and is related 
to the celebrated pre-conditioned Crank–Nicolson MCMC method \cite{cotter2013mcmc}. We remark that \eqref{eq:cn_scheme} resembles
the exact, single-step, Gaussian approximation sampling method of~\cite{ho2020denoising}. We leave the design and analysis of 
algorithms based on this approach for future work.
\end{updaterequired}

\section{Denoising Diffusion Probabilistic Models}
\label{sec:ddpm}

In \secref{subsec:multiple_noise_scales} we showed that for a particular choice of data and noise 
scaling, we may recover the forward process of the DDPM framework proposed in~\citet{ho2020denoising}. Let us recall the noise process in DDPM: for some sequence \(0 < \beta_1 \leq \dots \leq \beta_T < 1\), let \(\alpha_t = \prod_{s=1}^t (1-\beta_s)\). Then, for \(u_0 \sim \mu\) we define 
\begin{equation} \label{eq:forwardprocess_DDPM}
    u_t = \sqrt{\alpha_t} u_0 + \sqrt{1 - \alpha_t} \eta, \qquad \eta \sim \mu_0 =  N(0,C)
\end{equation}
for \(t \geq 1\) with \(\eta \perp u_0\). Under the assumption that \(\mu \big( C^{1/2}(H) \big ) = 1\), we have that the measure \(\nu_t\) 
defined as the law of \(u_t\), is equivalent to the measure \(\mu_t = N(0, (1-\alpha_t)C)\)
for any \(t\). % under the assumption that \(\mu \big( C^{1/2}(H) \big ) = 1\). 
Furthermore the law of the conditional
\(u_t | u_0\) is equivalent to \(\mu_t\), \(\mu\)-almost surely. We may therefore apply the theory presented in \secref{sec:methodology}
to obtain a sequence of tractable score-matching problems. Once solved, we obtain a sequence of approximate scores which can
be used within an annealed Langevin algorithm similar to Algorithm~\ref{alg:sampling_alg} to obtain samples.
This procedure, however, is not equivalent to the sampling procedure in \cite{ho2020denoising} which compares 
the backwards conditionals \(u_{t-1}|u_t,u_0\) to Gaussian parameterizations and therefore an exact backwards 
sampling method is derived. We show now how a similar scheme may be derived in infinite dimensions.

We may compute directly that the law of \(u_{t-1}|u_t,u_0\), denoted by \(\pi_{t-1}\), is the Gaussian \(N(u_{t-1}; m_t, c_tC)\) where 
\[m_t = \frac{\sqrt{\alpha_{t-1}} \beta_t}{1-\alpha_t} u_0 + \frac{\sqrt{1-\beta_t}(1-\alpha_{t-1})}{1 - \alpha_t} u_t, \qquad c_t = \frac{(1-\alpha_{t-1})\beta_t}{1 - \alpha_t}. \]
We can therefore consider the parametric Gaussian measure \(\rho_{t-1} (u_{t-1}; u_t,t) = N(u_{t-1}; G_\theta(u_t,t), c_tC)\) for some \(G_\theta\colon H \times \I \to H\).
If we are to compare \(\pi_{t-1}\) and \(\rho_{t-1}\) using the Kullback-Liebler (KL) divergence, as done in \cite{ho2020denoising},
we need that \(\pi_{t-1}\) and \(\rho_{t-1}\) are equivalent otherwise their KL divergence is infinite. Since \(\pi_{t-1}\) and \(\rho_{t-1}\)
have the same covariance, by the Feldman-H\'{a}jek theorem we need only that \(m_t - G_\theta (u_t,t) \in C^{1/2}(H)\) for the measures to be equivalent. Using the forward process 
in~\eqref{eq:forwardprocess_DDPM}, we can also write
\[m_t = \frac{\sqrt{\alpha_{t-1}} \beta_t + \sqrt{(1-\beta_t)\alpha_t} (1 - \alpha_{t-1}) }{1-\alpha_t} u_0 + \frac{\sqrt{1 - \beta_t} (1-\alpha_{t-1})}{\sqrt{1 - \alpha_t}} \eta,\]
and even with the assumption \(\mu \big( C^{1/2}(H) \big ) = 1\), we have that \(m_t \notin C^{1/2}(H)\), \(\mu \otimes \mu_0\)-almost surely because \(\eta \notin C^{1/2}(H)\), \(\mu_0\)-almost surely; see \appref{sec:noise_regularity}.
It is therefore not enough to constrain the range of \(G_\theta\) to \(C^{1/2}(H)\); we need instead that, for every realization of the data and noise, \(G_\theta\) yields from \(u_t\) 
precisely a direction so that \(m_t - G_\theta (u_t,t) \in C^{1/2}(H)\). We may accomplish this with the following re-parameterization,
\[G_\theta (u_t,t) = \frac{\sqrt{1-\beta_t} (1-\alpha_{t-1})}{1 - \alpha_t} u_t + F_\theta(u_t,t)\]
for some \(F_\theta : H \times \I \to C^{1/2}(H)\). Then
\[m_t - G_\theta (u_t,t) = \frac{\sqrt{\alpha_{t-1}} \beta_t}{1-\alpha_t} u_0 - F_{\theta}(u_t,t)\]
which is an element of \(C^{1/2}(H)\), \(\mu\)-almost surely. It now follows that \(\pi_{t-1}\) and \(\rho_{t-1}\)
are equivalent measures and we may therefore compute their KL divergence, in particular,
\begin{align*}
    D_{\text{KL}}(\pi_{t-1},\rho_{t-1}) &= \|C_t^{-1/2} \big( m_t - G_\theta(u_t,t) \big ) \|^2 \\
    &= \frac{1 - \alpha_t}{(1-\alpha_{t-1})\beta_t} \left \| C^{-1/2} \left ( \frac{\sqrt{\alpha_{t-1}} \beta_t}{1-\alpha_t} u_0 - F_{\theta}(u_t,t) \right ) \right \|^2 .
\end{align*}
Moreover, we may optimize the following joint objective that minimizes the KL divergence at all times $t > 1$,
\[\sum_{t > 1} \E_{u \sim \mu} \E_{\eta \sim \mu_0}  \frac{1 - \alpha_t}{(1-\alpha_{t-1})\beta_t} \left \| C^{-1/2} \left ( \frac{\sqrt{\alpha_{t-1}} \beta_t}{1-\alpha_t} u - F_{\theta} \big ( \sqrt{\alpha_t}u + \sqrt{1-\alpha_t} \eta, t \big ) \right ) \right \|^2 \]
We note  that \(F_\theta\) cannot be further re-parameterized so that it learns the noise from the signal, even in the case \(C\) is positive so
that \(C^{-1/2}\) can be dropped from the objective, because re-parameterizing it this way will violate its range condition which is crucial for obtaining measure equivalence. Once \(F_\theta\) is learned,
sampling from the approximate backwards conditional \(u_{t-1}|u_t\) amounts to evaluating
\[u_{t-1} = \frac{\sqrt{1-\beta_t} (1-\alpha_{t-1})}{1 - \alpha_t} u_t + F_\theta(u_t, t) + \sqrt{c_t} \eta, \qquad \eta \sim \mu_0.\]

An alternative approach is to not require measure equivalence and instead compare \(\pi_{t-1}\) and \(\rho_{t-1}\) in a metric which is 
finite when comparing singular measures. One example of such a metric is the Wasserstein-\(p\) distance. Since we do not require equivalence,
we may parameterize \(\rho_{t-1}\) more generally by allowing a different scaling for the covariance or even learning a different covariance operator. We will
not pursue this here for the sake of simplicity in exposition and will consider \(\pi_{t-1}\) and \(\rho_{t-1}\) to have the same covariance. In this case, the Wasserstein-\(2\) distance~\cite{gelbrich1990onaformula} is %, we obtain,
\[W_2^2 (\pi_{t-1}, \rho_{t-1}) = \|m_t - G_\theta(u_t,t)\|^2.\]
%see, for example, \cite{gelbrich1990onaformula}.
Similarly to before, we may use the forward process in~\eqref{eq:forwardprocess_DDPM} and write
\[m_t = \frac{1}{\sqrt{1 - \beta_t}} \left ( u_t - \frac{\beta_t}{\sqrt{1-\alpha_t}} \eta \right ).\]
Therefore, by re-parameterizing, 
\[G_\theta(u_t,t) = \frac{1}{\sqrt{1 - \beta_t}} \left ( u_t - \frac{\beta_t}{\sqrt{1-\alpha_t}} F_\theta (u_t,t) \right )\]
for some \(F_\theta : H \times \I \to H\) yields 
\[m_t - G_\theta(u_t,t) = \frac{\beta_t}{\sqrt{(1-\beta_t)(1-\alpha_t)}} \bigg ( F_\theta(u_t,t) - \eta \bigg ).\]
Moreover, we may optimize the following joint objective for all times $t > 1$,
\begin{equation}\label{eq:objective_DDPM}
\sum_{t > 1} \E_{u \sim \mu} \E_{\eta \sim \mu_0} \frac{\beta_t^2}{(1-\beta_t)(1-\alpha_t)} \|F_\theta \big( \sqrt{\alpha_t}u + \sqrt{1-\alpha_t} \eta , t \big ) - \eta \|^2.
\end{equation}
Once \(F_\theta\) is learned,
sampling from the approximate backwards conditional \(u_{t-1}|u_t\) amounts to evaluating
\begin{equation} \label{eq:backwardprocess_DDPM}
u_{t-1} = \frac{1}{\sqrt{1 - \beta_t}} \left ( u_t - \frac{\beta_t}{\sqrt{1-\alpha_t}} F_\theta (u_t, t) \right ) + \sqrt{c_t} \eta, \qquad \eta \sim \mu_0.
\end{equation}
The above derivation precisely yields the framework in \cite{ho2020denoising} with the only difference that \(\mu_0\) does not have an identity covariance.
Note that we did not even require the assumption \(\mu \big( C^{1/2}(H) \big ) = 1\) because we allowed ourselves to compare mutually singular measures.
It is unclear whether such a formulation is preferred in practical applications or if it may eventually yields instabilities in the algorithm or lack of convergence.
Furthermore, we note that both derivations worked directly with conditionals \(u_{t-1}|u_t,u_0\) instead of the true backwards 
conditionals \(u_{t-1}|u_t\) and did not establish equivalence between their optimization objectives. Empirical comparisons of  these two objectives for diffusion modeling is an important direction to explore in future work.

\begin{updaterequired}[black]

\section{Examples of \(\mu(H_{\mu_0}) = 1\)}
\label{sec:further_examples}

% In this section we give further examples where the condition \(\mu(H_{\mu_0}) = 1\) is satisfied.
In this section, we give examples of data distributions $\mu$ that satisfies the condition \(\mu(H_{\mu_0}) = 1\), where $H_{\mu_0}$ is a Cameron-Martin space of a perturbation noise. These will help us gaining intuition on the condition.

\subsection{Gaussian}
\label{subsec:gaussian_example}

When we can expect the assumption \(\mu(H_{\mu_0}) = 1\) to hold, one of the simplest examples is when the data measure is Gaussian. 
Consider the space
\[H = \dot{L}^2(\mathbb{T}^d;\mathbb{R}) \coloneqq \left \{ u \in L^2(\mathbb{T}^d;\mathbb{R}) : \int_{\mathbb{T}^d} u \: dx = 0 \right \}, \]
 where \(\mathbb{T}^d\) is the \(d\)-dimensional unit torus. We denote by \(\dot{H}^s (\mathbb{T}^d;\mathbb{R})\) for any \(s > 0\) as the corresponding
 periodic, mean-zero Sobolev spaces \citep{adams2003sobolev}. Let \(\mu = N(0, C_1)\) where
 \begin{equation}
    \label{eq:example_C1}
     C_1 = \sigma_1^2 (-\Delta + \tau_1^2 I)^{-\alpha_1}.
 \end{equation}
Here \(-\Delta\) is the negative Laplacian with periodic boundary conditions, \(I\) is the identity operator, and \(\sigma_1, \tau_1, \alpha_1\) are positive scalars.
Covariances of the type \eqref{eq:example_C1} are said to be of the Mat\'{e}rn-type 
because Gaussian processes defined by Mat\'{e}rn kernels are the only stationary solutions
to certain SPDEs with differential operator \(C_1^{-1}\) \citep{whittle1954onstationary,lindgren2011anexplicit}. We make extensive use of 
such covariances throughout the rest of this work as the Gaussian measures defined by them 
are amenable to analysis and efficient sampling.
When \(\alpha_1 > d/2\), Lemma 6.27 in \cite{stuart2010inverse}
implies that \(\mu \big( \dot{H}^{s}(\mathbb{T}^d;\mathbb{R}) \big) = 1 \) for any \(s \in [0, \alpha_1 - d/2)\). We will assume that 
\(\alpha_1 > d\). Let \(\mu_0 = N(0,C_2)\) where
\begin{equation}
    \label{eq:example_C2}
    C_2 = \sigma_2^2 (-\Delta + \tau_2^2 I)^{-\alpha_2}
\end{equation}
 with \(\alpha_2 > d/2\) so that \(C_2\) is trace-class. It is easy to compute that \(C_2^{1/2} (H) = \dot{H}^{\alpha_2}(\mathbb{T}^d;\mathbb{R})\).
Therefore, the assumption \(\mu (H_{\mu_0}) = 1\) is satisfied for any \(\alpha_2 \in (d/2, \alpha_1 - d/2)\).

The above analysis reveals that there is a gap of size \(d/2\) between the regularity of the data and the noise.
In particular, \(\mu \big( \dot{H}^s(\mathbb{T}^d;\sR) \big ) = 1\) for \(s \in [0, \alpha_1 - d/2)\) while \(\mu_0 \big( \dot{H}^m(\mathbb{T}^d;\sR) \big ) = 1\)
for \(m \in [0, \alpha_1 - d)\).
Therefore, in order to consider perturbations with Gaussians of the form \eqref{eq:rough_perturbation}, the noise 
must be at least \(d/2\) \say{less smooth} than the data, in a Sobolev sense. Furthermore since we want to consider noise with a trace-class covariance 
so that it is amenable to approximation, we have a fundamental limit on the regularity of the data. That is, the data must 
live in \(\dot{H}^s (\mathbb{T}^d;\sR)\) for some \(s > d/2\). This assumption can be satisfied, for example, when the data measure is defined  
as the pushforward of some PDE solution operator; we show explicit examples below (in \appref{sec:further_examples}). %We will now show that by considering 
%a different perturbation than \eqref{eq:rough_perturbation}, we may remove this regularity assumption. 
\end{updaterequired}

\subsection{Gaussian Mixture}
\label{subsec:further_gaussian_mixture}

Let \(D \subset \mathbb{R}^d\) be a bounded, open set with Lipschitz boundary and consider \(H = L^2(D;\mathbb{R})\).
We will consider the covariances \eqref{eq:example_C1} and \eqref{eq:example_C2} where \(-\Delta\) is instead the 
negative Laplacian with zero Dirichlet boundary conditions on \(D\). Suppose \(\alpha_1 - d/2 > s\) for some \(s \geq d\) and let \(f_1, f_2 \in H^1_0(D;\mathbb{R}) \cap H^s(D;\mathbb{R})\). Define \(\mu\) so that, if \(u \sim \mu\), then
\[
\begin{cases}
u \sim N(f_1, C_1) & \text{w.p. } p, \\
u \sim N(f_2, C_1) & \text{w.p. } 1-p,
\end{cases}
\]
for some \(0 \leq p \leq 1\). By Lemma 6.27 in \cite{stuart2010inverse}, \(u \in H^1_0(D;\mathbb{R}) \cap H^s(D;\mathbb{R})\) \(\mu\)-almost surely. 
Therefore \(\mu_0 = N(0,C_2)\) with \(\alpha_2 \in (1/2, s]\) implies \(\mu(H_{\mu_0}) = 1\).

\subsection{Pushforwards}
\subsubsection{Navier-Stokes}
\label{subsec:further_navier_stokes}

 Consider the vorticity form of the two-dimensional Navier-Stokes equations on the unit torus,
 \begin{align}
    \label{eq:ns_eq}
     \begin{split}
         \partial_t u + \nabla^\perp \phi \cdot u - \epsilon \Delta u  &= f, \qquad \text{in } \mathbb{T}^2 \times (0,\infty), \\
         -\Delta \phi &= u, \qquad \text{in } \mathbb{T}^2 \times (0,\infty),
     \end{split}
 \end{align}
 with initial condition \(u(\cdot, 0) = u_0\) for some \(u_0, f \in \dot{L}^2(\mathbb{T}^2;\mathbb{R})\) and \(\epsilon > 0\). It is shown in \cite{temam1988infinite} 
 that for any \(\epsilon > 0\), \eqref{eq:ns_eq} has a unique weak solution such that \(u(\cdot, t) \in \dot{H}^s(\mathbb{T}^2;\mathbb{R})\) for any \(s > 0\) and \(t > 0\).
 We may thus define the flow map \(Q : \dot{L}^2(\mathbb{T}^2;\mathbb{R}) \times (0,\infty) \to \dot{H}^s(\mathbb{T}^2;\mathbb{R})\)
 for any \(s > 0\) by \((u_0, t) \mapsto u(\cdot, t)\). Let \(\rho = N(0,C_1)\) with \(C_1\) given by \eqref{eq:example_C1} for any \(\alpha_1 > 1\). 
 Let \(\mu = Q(\cdot, T) _\sharp \rho\) for some \(T > 0\). Then, for \(u \sim \mu\), we have \(u \in \dot{H}^s(\mathbb{T}^2;\mathbb{R})\), for any \(s > 0\), \(\mu\)-almost surely.
 Therefore \(\mu_0 = N(0,C_2)\) with any \(\alpha_2 > 1\) implies \(\mu(H_{\mu_0}) = 1\) where \(C_2\) is given by \eqref{eq:example_C2}.

 \subsubsection{Burgers' Equation}
 Consider the one-dimensional Burgers' equation on the unit torus,
 \begin{align}
    \label{eq:burgers_eq}
     \begin{split}
         \partial_t u + \frac{1}{2} \partial_x (u^2) - \epsilon \partial^2_{xx} u &= f, \qquad \text{in } \mathbb{T} \times (0,\infty), \\
         u(\cdot, 0) &= u_0, \quad \:\: \text{in } \mathbb{T},
     \end{split}
 \end{align}
 for some \(u_0, f \in \dot{L}^2(\mathbb{T};\mathbb{R})\) and \(\epsilon > 0\). By Theorem 1.1 in \cite{kiselev2008blow}, the solution 
 \(u(\cdot, t)\) is real analytic for all times \(t > 0\), so we may define the flow map \(Q : \dot{L}^2(\mathbb{T};\mathbb{R}) \times (0,\infty) \to \dot{H}^s(\mathbb{T};\mathbb{R})\)
 for any \(s > 0\) by \((u_0, t) \mapsto u(\cdot, t)\). Let \(\rho = N(0,C_1)\) with \(C_1\) given by \eqref{eq:example_C1} for any \(\alpha_1 > 1/2\). 
 Let \(\mu = Q(\cdot, T) _\sharp \rho\) for some \(T > 0\). Then, for \(u \sim \mu\), we have \(u \in \dot{H}^s(\mathbb{T};\mathbb{R})\), for any \(s > 0\), \(\mu\)-almost surely.
 Therefore \(\mu_0 = N(0,C_2)\) with any \(\alpha_2 > 1/2\) implies \(\mu(H_{\mu_0}) = 1\) where \(C_2\) is given by \eqref{eq:example_C2}.

 \subsubsection{Darcy Flow}
Let \(D \subset \mathbb{R}^d\) be a bounded, open set with Lipschitz boundary and consider the steady-state of the Darcy flow equation,
 \begin{align}
    \label{eq:darcy_eq}
     \begin{split}
         \nabla \cdot (a \nabla u)  &= f, \qquad \text{in } D, \\
         u &= 0, \qquad \text{in } \partial D,
     \end{split}
 \end{align}
 for some \(a \in L^\infty (D;\mathbb{R}_+)\) and \(f \in L^2(D;\mathbb{R})\). It is shown in \cite{evans2010partial} that 
 \eqref{eq:darcy_eq} has a unique weak solution \(u \in H^1_0(D;\mathbb{R})\) and thus we can define the mapping
 \(Q: L^\infty(D;\mathbb{R}_+) \to H_0^1(D;\mathbb{R})\) by \(a \mapsto u\). Let \(\rho = N(0,C_1)\) with \(C_1\) given 
 by \eqref{eq:example_C1} where \(-\Delta\) is instead the negative Laplacian with zero Neumann boundary conditions on \(D\).
 Assume that \(\alpha_1 > d/2\).
For some \(0 < c_- < c_+ < \infty\), define \(T: \mathbb{R} \to \mathbb{R}_+\) by
\[T(x) = \begin{cases}
    c_-, & x < 0, \\
    c_+, & x \geq 0.
\end{cases}\]
We may view \(T : L^2(D;\mathbb{R}) \to L^\infty(D;\mathbb{R}_+)\) as a Nemistkii operator, that is, 
\[(Tf)(x) = T(f(x)), \qquad \forall \: f \in L^2(D;\mathbb{R}).\]
Let \(\mu = (Q \circ T)_\sharp \rho\) then, for \(u \sim \mu\), we have \(u \in H^1_0(D;\mathbb{R})\) \(\mu\)-almost surely.
Therefore \(\mu_0 = N(0,C_2)\) with \(\alpha_2 \in (1/2, 1]\) implies \(\mu(H_{\mu_0}) = 1\) where \(C_2\) is given by \eqref{eq:example_C2}
and \(-\Delta\) is instead the negative Laplacian with zero Dirichlet boundary conditions on \(D\). Notice that will this condition on
\(\alpha_2\), \(C_2\) is trace-class only when \(d=1\).

\section{The Karhunen-Lo\'{e}ve Expansion}
\label{sec:kl_expansion}

Let $C$ be a self-adjoint, positive, semi-definite operator in a Hilbert space $H$ with an orthonormal set of eigenvectors (functions) $\phi_j \in H$ and corresponding eigenvalues $\lambda_j$ in a decreasing order, i.e., $\lambda_1 \geq \lambda_2 \geq ...$. The Karhunen-Lo\'{e}ve (KL) expansion represents a Gaussian random variable $u \sim \mathcal{N}(m,C)$ with mean $m \in H$ and covariance operator $C$ as
$$u = m + \sum_{j=1}^{\infty} \sqrt{\lambda_j} \phi_j \xi_j,$$
where $\{\xi_j\}_{j=1}^\infty$ is an i.i.d.\thinspace sequence of $\mathcal{N}(0,1)$ random variables. This construction allows us to sample a Gaussian measure on a Hilbert space and, for certain domains and boundary conditions, can be implemented with fast FFT-based methods \cite{lord2014anintoroduction}. For more details on the KL expansion, we refer the reader to~\cite{da2006introduction, adler2010geometry}.

\section{Noise Regularity}
\label{sec:noise_regularity}
Consider the Gaussian \(\mu_0\) as defined in \secref{sec:methodology}. It follows by Lemma 6.10 in \cite{stuart2010inverse} that \(\mu_0 (H_{\mu_0}) = 0\).
In particular, any random variable \(u \sim \mu_0\) is not contained in \(H_{\mu_0}\) with probability one. This makes quantities of the form
\[\|C^{-1/2}(u - g)\| = \infty, \qquad \mu_0\text{-almost surely}\]
for any fixed \(g \in H\) since \(u - g \notin H_{\mu_0}\) with probability one.
To see this, consider the following formal calculation. Let
\[C \phi_j = \lambda_j \phi_j, \quad \|\phi_j\| = 1\]
by an eigendecomposition of \(C\). By the spectral theorem, \(\{\phi_j\}_{j=1}^\infty\) forms a complete orthonormal basis for \(H\) and for positive-definite $C$ we have \(\lambda_j > 0\). Suppose \(u \sim N(0,C)\). Then, by the Karhunen-Loeve expansion (see \appref{sec:kl_expansion}) we have 
\[u = \sum_{j=1}^\infty \sqrt{\lambda_j} \xi_j \phi_j, \quad \xi_j \sim N(0,1).\]
%We have that
From $C^{-\frac{1}{2}} \phi_j = \frac{1}{\sqrt{\lambda_j}} \phi_j$, we have that %therefore
$C^{-\frac{1}{2}} u = \sum_{j=1}^\infty \xi_j \phi_j$,
and thus
\[\|C^{-\frac{1}{2}}u \|^2 = \sum_{j=1}^\infty |\xi_j|^2 = \infty, \qquad \text{ a.s. }\]
The same result holds if we subtract some element \(g \in H\) from \(u\). Writing %We can write
$g = \sum_{j=1}^\infty g_j \phi_j$, where $g_j = \langle g, \phi_j \rangle,$ then %we have 
%therefore
\[\|C^{-\frac{1}{2}} (u-g)\|^2 = \sum_{j=1}^\infty \big | \xi_j - \frac{g_j}{\sqrt{\lambda_j}} \big  |^2 = \infty, \qquad \text{ a.s. }\]
And the situation is not improved even if \(g \in H_{\mu_0}\) so that \(g_j/\sqrt{\lambda_j} \to 0\) and the series \(\sum_{j=1}^\infty (g_j/\sqrt{\lambda_j}) \phi_j\) converges in \(H\).
This is a fundamental difficulty of the infinite-dimensional setting. We refer the reader to Section 3.5 in \cite{stuart2010inverse} for a further discussion.

We perform a simple numerical experiment to demonstrate this. We fix the data measure \(\mu = N(0,C_1)\) where \(C_1\) has form \eqref{eq:example_C1} with \(\alpha_1 = 3\), \(\sigma_1 = 4\), and \(\tau_1 = 1\).
We set \(\mu_0 = N(0,C_2)\) where \(C_2\) has the form \eqref{eq:example_C2} with \(\alpha_2 = 2\), \(\sigma_2 = 0.2\), and \(\tau_2 = 1\). We fix a FNO architecture wich retains 32 modes and has a width of 64 and re-train it at different 
resolutions of the data and noise so as to minimize either
\[\E_{u \sim \mu} \E_{\eta \sim \mu_0} \|\eta - G_\theta(u + \eta)\|^2\]
or
\[\E_{u \sim \mu} \E_{\eta \sim \mu_0} \|C^{-1/2} \big ( \eta - G_\theta(u + \eta) \big )\|^2.\]
We then compare the test errors, simply defined as the same quantity as the training loss but evaluated on new draws from the data and noise distributions.
The results are shown in Figure~\ref{fig:kerrigan_loss}.  We see that the blue curve stays constant, confirming that the FNO can learn to represent 
noise from the Sobolev space \(\dot{H}^{3/2}(\mathbb{T};\sR)\) in a discretization invariant way. On the other hand, when training with the pre-conditioner \(C^{-1/2}\), we see 
the test error grow as we increase the resolution. This demonstrates the effect of the infinity in the loss function.
\begin{figure}[!ht]
     \centering
     \includegraphics[width=0.5\textwidth]{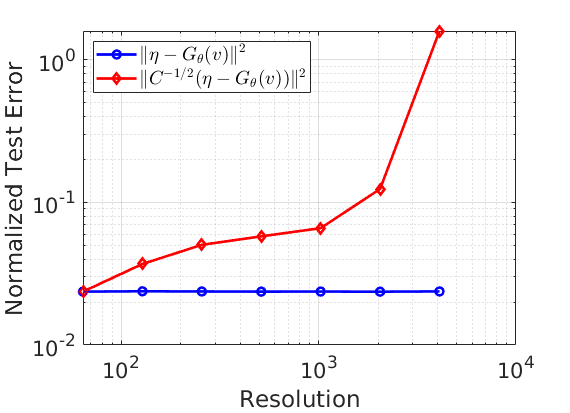}
     \caption{\textbf{Noise Regularity}~~Test error when training with two different loss functions across different resolutions. Red curve is re-scaled so that it matches the error of the blue curve at the lowest resolution for the sake of visualization.}
     \label{fig:kerrigan_loss}
\end{figure}

\section{\updated{Smoothing Operators}}
\label{sec:num_smoothing}

% \subsection{\updated{Effect of Smoothing Operators}}
% \label{subsec:effect_smoothing}

To illustrate the effect of the choice of the Camerion-Martin space (CM-space) of a (forward) noise process, we train an FNO architecture with varying training conditions. Specifically, we simulate that samples from data distribution, \(u \sim \mu\), which may or may not be in the noise's CM-space.

Similar to \secref{subsec:further_gaussian_mixture}, we consider a mixture of two Gaussians as data distribution \(\mu\) where \(d=1\), \(D = (0,2\pi)\), \(f_1 = -10/6 x + 5\), \(f_2 = -f_1\), and \(p=0.5\). For its covariance \(C_1\), we choose \(\alpha_1 = 1.5\), \(\sigma_1 = 3\), and \(\tau_1 = 3\).

We compare four training conditions by varying the noise covariance \(C_2\); (a) noise process uses white noise, (b) all data samples lie on the CM-space of the noise covariance,  (c) there exist samples \(u \notin C_2^{1/2}(H)\), and (d) apply a smoothing operator \(A\), while the same covariance as in (c). For (b), we choose  \(\alpha_1 = 1\), \(\sigma_1 = 1.73\), and \(\tau_1 = 3\), and for (c) and (d), we use \(\alpha_1 = 2\), \(\sigma_1 = 10\), and \(\tau_1 = 3\). For smoothing operator \(A\) in (d), we use a Gaussian blur such that $A(H) \subseteq C_2^{1/2}(H)$.

We train models at a resolution of \(512\) for \(5,000\) iterations. Unlike \secref{subsec:further_gaussian_mixture}, we only trained in the resolution \(512\) and sample with varying resolutions (See Figure~\ref{fig:smoothing}).

From (a), we can observe that the trained model successfully generates the samples in resolution 512, the same resolution during training. However, when the trained model tries to generate higher-resolution samples, its samples collapse into modes. While the parametric score operator is discretization-invariant, due to independent Gaussian noise, the induced distribution from the model is not in function-valued space.

On the contrary, when \(C_2^{1/2}(H)\) is sufficiently large enough to include all samples from the data distribution, the proposed method learns the data distribution. Moreover, the model successfully generates samples in higher dimensions, as the model distribution is a measure in a function-valued space. 

If samples are not in \(C_2^{1/2}(H)\), the model fails to learn the data distribution as in Figure~\ref{fig:smoothing} (c). As we discussed in \secref{subsec:smoothing_operators}, however, one can apply a smoothing operator $A$ so that \(A(H) \subseteq C_2^{1/2}(H)\). This results in losing some information about the data, depending on the choice of the smoothing operator. Here, high-frequency noises are cut out as we use a Gaussian blur.

The results demonstrate that the proposed method will learn the distributions in function space; thus, it is\ discretization invariant. Moreover, the result further implies that the choice of noise process and smoothing operator will determine which perspectives of data distributions the models will learn. Furthermore, the experiment led to several open questions for choosing noising processes most suitable for applications.

\begin{figure}[!htb]

     \centering
     \begin{minipage}[b]{0.04\textwidth}
         \centering
         \includegraphics[width=\textwidth]{figs/background.png}
     \end{minipage}
     \hfill
     \begin{minipage}[b]{0.18\textwidth}
         \centering
         \includegraphics[width=\textwidth]{figs/background.png}
     \end{minipage}
     \hfill
     \begin{minipage}[b]{0.23\textwidth}
         \centering
         \includegraphics[width=\textwidth]{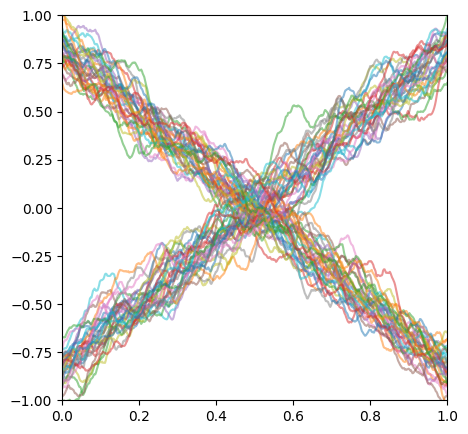}
         % \label{fig:three sin x}
     \end{minipage}
     \hfill
     \begin{minipage}[b]{0.23\textwidth}
         \centering
         \includegraphics[width=\textwidth]{figs/background.png}
     \end{minipage}
     \vspace{-0.5em}
     %~

     \begin{minipage}[b]{0.04\textwidth}
         \centering
         \includegraphics[width=\textwidth]{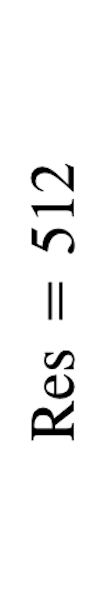}
         % \label{fig:three sin x}
     \end{minipage}
     \hfill
     \begin{minipage}[b]{0.23\textwidth}
         \centering
         \includegraphics[width=\textwidth]{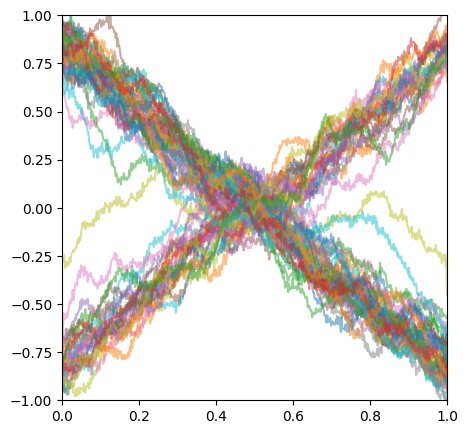}
         %\label{fig:three sin x}
     \end{minipage}
     \hfill
     \begin{minipage}[b]{0.23\textwidth}
         \centering
         \includegraphics[width=\textwidth]{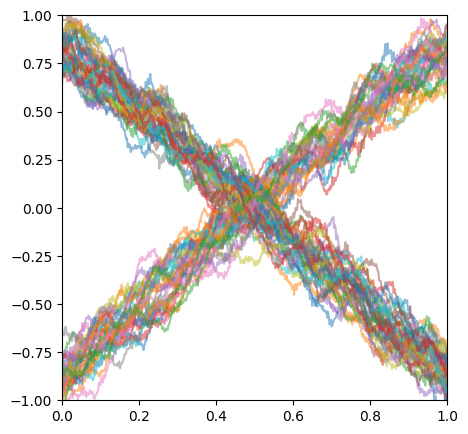}
         %\label{fig:three sin x}
     \end{minipage}
     \hfill
     \begin{minipage}[b]{0.23\textwidth}
         \centering
         \includegraphics[width=\textwidth]{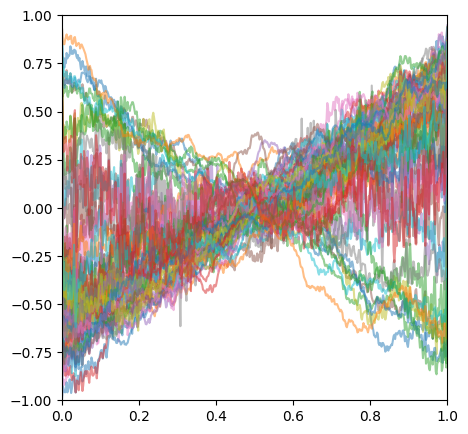}
         %\label{fig:three sin x}
     \end{minipage}
     \hfill
     \begin{minipage}[b]{0.23\textwidth}
         \centering
         \includegraphics[width=\textwidth]{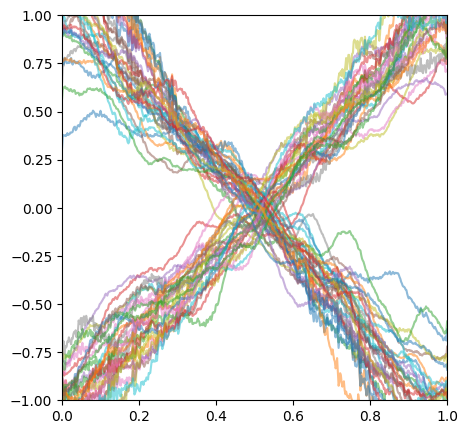}
         %\label{fig:three sin x}
     \end{minipage}
     \vspace{-0.5em}
     %~

     \begin{minipage}[b]{0.04\textwidth}
         \centering
         \includegraphics[width=\textwidth]{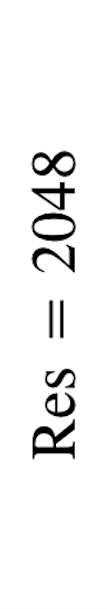}
         %\label{fig:three sin x}
         \vspace{-2em}
         \caption*{\textcolor{white}{()}}
     \end{minipage}
     \hfill
     \begin{minipage}[b]{0.23\textwidth}
         \centering
         \includegraphics[width=\textwidth]{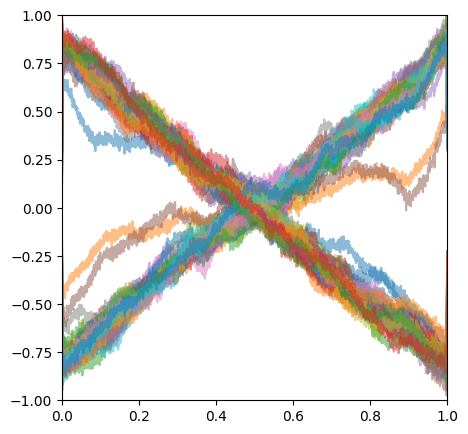}
         %\label{fig:three sin x}
         \vspace{-2em}
         \caption*{(a)}
     \end{minipage}
     \hfill
     \begin{minipage}[b]{0.23\textwidth}
         \centering
         \includegraphics[width=\textwidth]{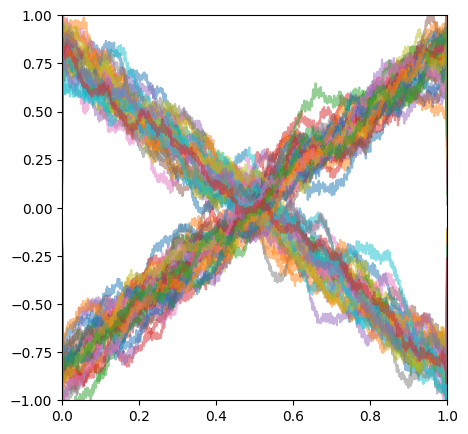}
         %\label{fig:three sin x}
         \vspace{-2em}
         \caption*{(b)}
     \end{minipage}
     \hfill
     \begin{minipage}[b]{0.23\textwidth}
         \centering
         \includegraphics[width=\textwidth]{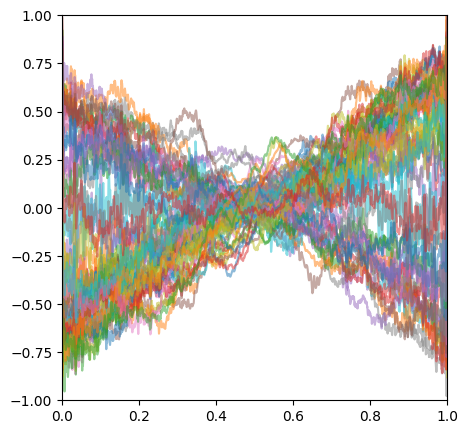}
         %\label{fig:three sin x}
         \vspace{-2em}
         \caption*{(c)}
     \end{minipage}
     \hfill
     \begin{minipage}[b]{0.23\textwidth}
         \centering
         \includegraphics[width=\textwidth]{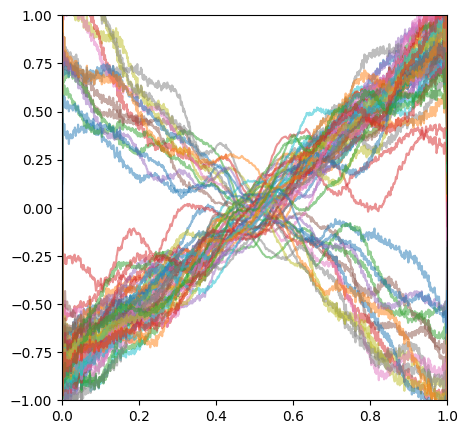}
         %\label{fig:three sin x}
         \vspace{-2em}
         \caption*{(d)}
     \end{minipage}
     %~

     % \begin{minipage}[b]{0.24\textwidth}
     %     \centering
     %     \includegraphics[width=\textwidth]{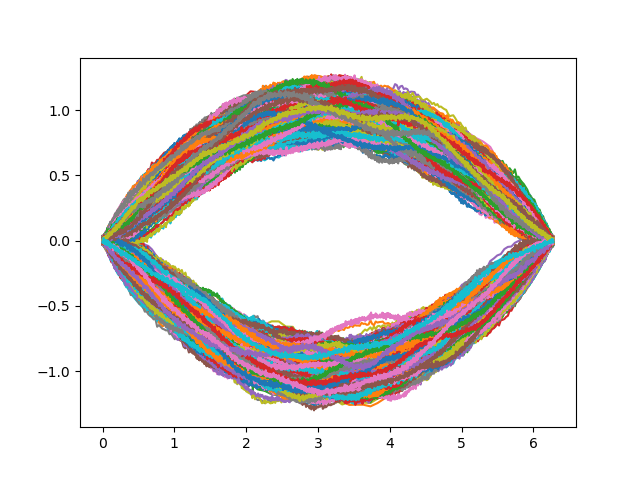}
     %     \label{fig:three sin x}
     % \end{minipage}
     % \hfill
     % \begin{minipage}[b]{0.24\textwidth}
     %     \centering
     %     \includegraphics[width=\textwidth]{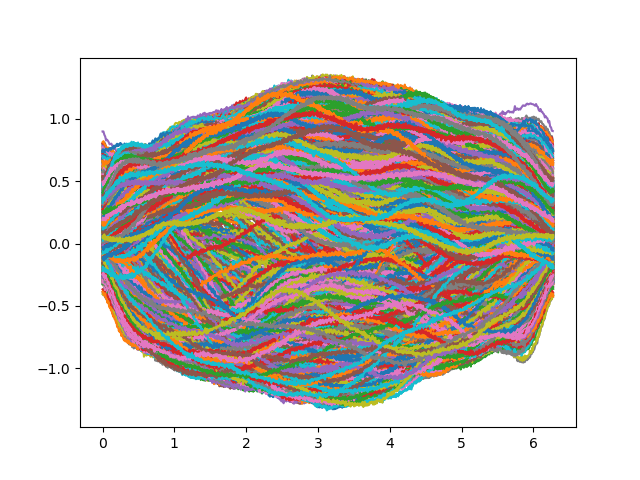}
     %     \label{fig:three sin x}
     % \end{minipage}
     % \hfill
     % \begin{minipage}[b]{0.24\textwidth}
     %     \centering
     %     \includegraphics[width=\textwidth]{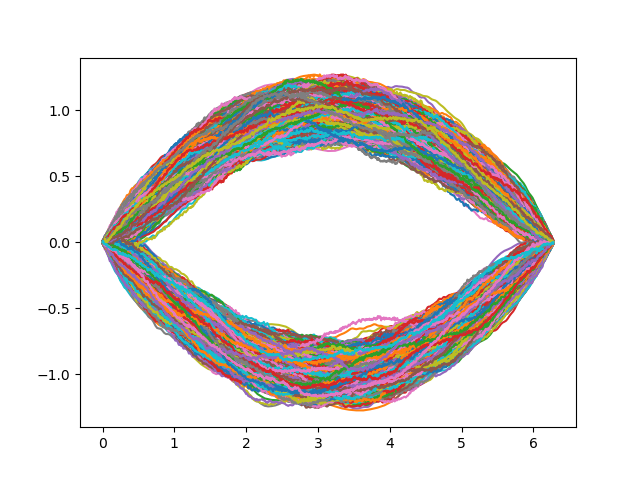}
     %     \label{fig:three sin x}
     % \end{minipage}
     % \hfill
     % \begin{minipage}[b]{0.24\textwidth}
     %     \centering
     %     \includegraphics[width=\textwidth]{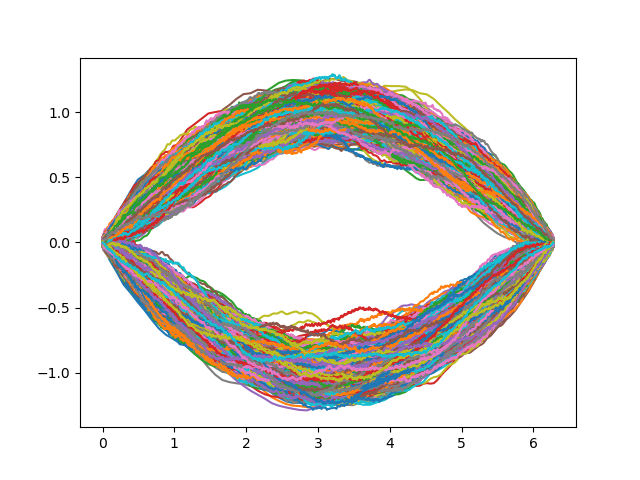}
     %     \label{fig:three sin x}
     % \end{minipage}

        \caption{\textbf{Smoothing Operators (\secref{subsec:smoothing_operators})} First row illustrates 128 sample paths from the data distribution \(\mu\), a Mixture of two Gaussians. The second and third row illustrate generated samples from trained models. We train a FNO architecture with varying choice of the noise' covariance \(C_2\); (a) independent Gaussian noise process, (b) for all \(u\in \mu\), \(u \in C_2^{1/2}(H)\),  (c) \(\exists \,u \notin C_2^{1/2}(H)\), and (d) apply a smoothing operator while the same covariance as in (c). For smoothing operator \(A\) in (d), we use a Gaussian blur such that \(A(H) \subseteq C_2^{1/2}(H)\). The models are trained in a \(512\) resolution, and generate with varying resolutions, such as \(512\) (Second row) and \(2048\) (Third row).}
        \label{fig:smoothing}
\end{figure}

% \begin{updaterequired}[red]
% \subsection{Selecting Smoothing Operators}
% % \label{subsec:select_smoothing}
% Move to \secref{app:alias-free}
% % \noindent [jae: to address the comment 5 of R2]

% \end{updaterequired}

% \newpage

\begin{figure}[!htb]
     \centering
     \begin{minipage}[b]{0.05\textwidth}
         \centering
         \includegraphics[width=\textwidth]{figs/background.png}
     \end{minipage}
     \hfill
     \begin{minipage}[b]{0.40\textwidth}
         \centering
         \includegraphics[width=\textwidth]{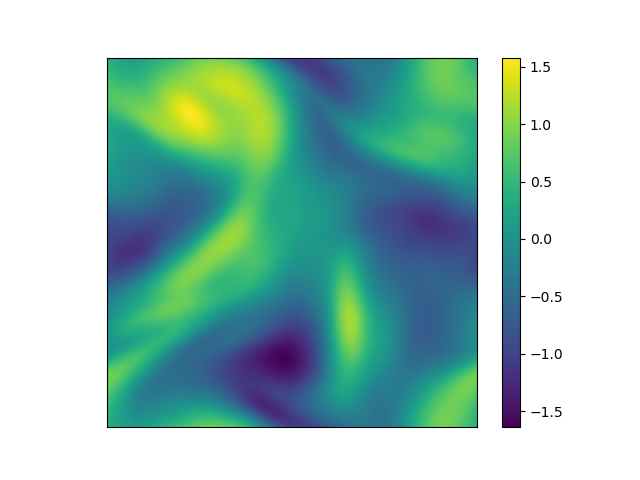}
         % \label{fig:three sin x}
     \end{minipage}
     \hspace{-6em}
     \hfill
     \begin{minipage}[b]{0.40\textwidth}
         \centering
         \includegraphics[width=\textwidth]{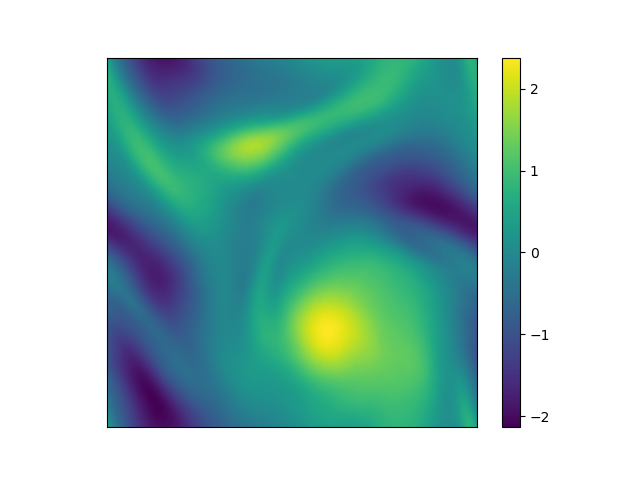}
         % \label{fig:three sin x}
     \end{minipage}
     \hfill
     \begin{minipage}[b]{0.05\textwidth}
         \centering
         \includegraphics[width=\textwidth]{figs/background.png}
     \end{minipage}
     \vspace{-2em}
     %~

     \begin{minipage}[b]{0.05\textwidth}
         \centering
         \includegraphics[width=\textwidth]{figs/background.png}
     \end{minipage}
     \hfill
     \begin{minipage}[b]{0.40\textwidth}
         \centering
         \includegraphics[width=\textwidth]{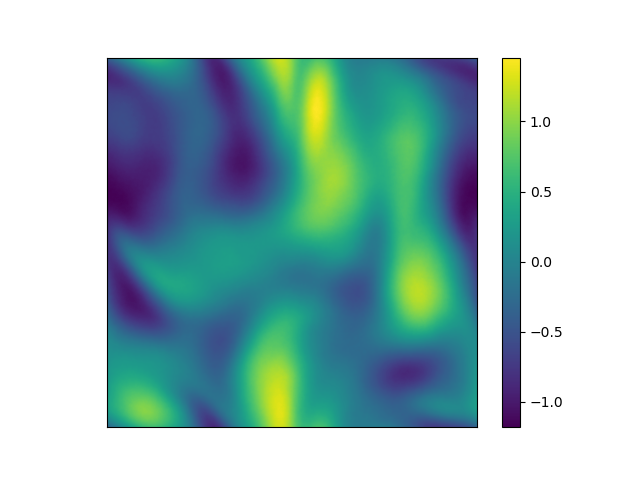}
         % \label{fig:three sin x}
     \end{minipage}
     \hspace{-6em}
     \hfill
     \begin{minipage}[b]{0.40\textwidth}
         \centering
         \includegraphics[width=\textwidth]{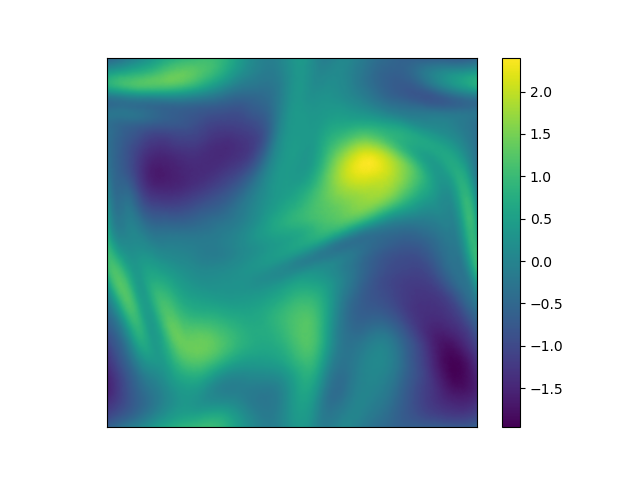}
         % \label{fig:three sin x}
     \end{minipage}
     \hfill
     \begin{minipage}[b]{0.05\textwidth}
         \centering
         \includegraphics[width=\textwidth]{figs/background.png}
     \end{minipage}
     \vspace{-2em}
     %~

     \begin{minipage}[b]{0.05\textwidth}
         \centering
         \includegraphics[width=\textwidth]{figs/background.png}
     \end{minipage}
     \hfill
     \begin{minipage}[b]{0.40\textwidth}
         \centering
         \includegraphics[width=\textwidth]{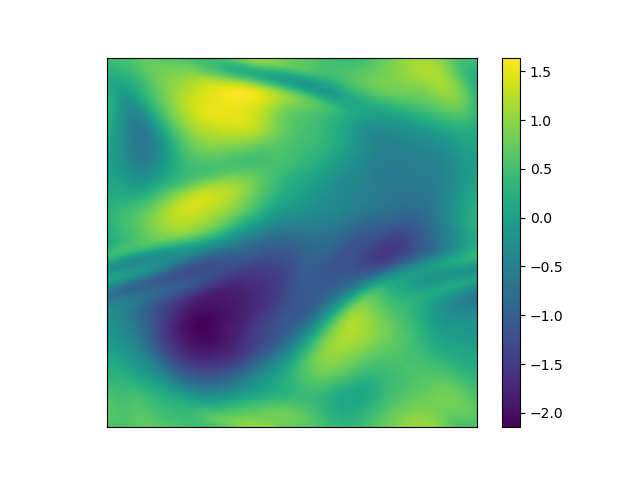}
         % \label{fig:three sin x}
     \end{minipage}
     \hspace{-6em}
     \hfill
     \begin{minipage}[b]{0.40\textwidth}
         \centering
         \includegraphics[width=\textwidth]{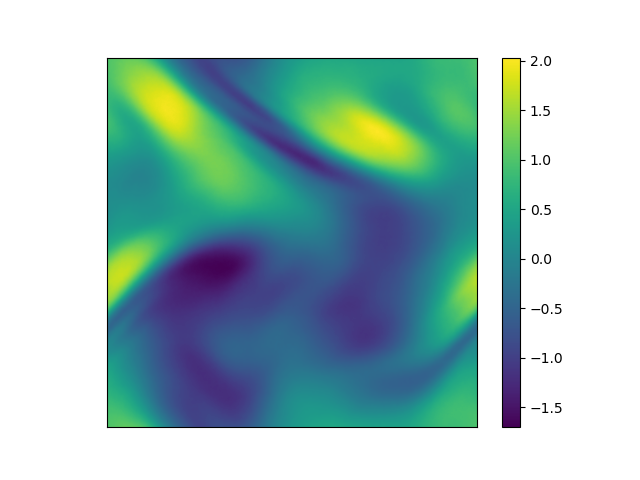}
         % \label{fig:three sin x}
     \end{minipage}
     \hfill
     \begin{minipage}[b]{0.05\textwidth}
         \centering
         \includegraphics[width=\textwidth]{figs/background.png}
     \end{minipage}
     \vspace{-2em}
     %~

     \begin{minipage}[b]{0.05\textwidth}
         \centering
         \includegraphics[width=\textwidth]{figs/background.png}
     \end{minipage}
     \hfill
     \begin{minipage}[b]{0.40\textwidth}
         \centering
         \includegraphics[width=\textwidth]{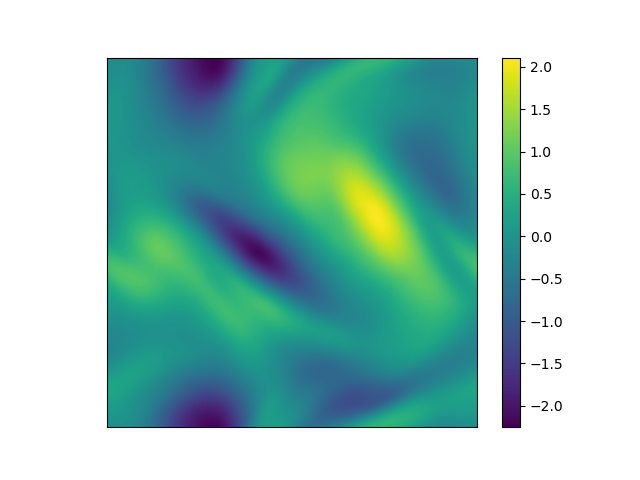}
         % \label{fig:three sin x}
     \end{minipage}
     \hspace{-6em}
     \hfill
     \begin{minipage}[b]{0.40\textwidth}
         \centering
         \includegraphics[width=\textwidth]{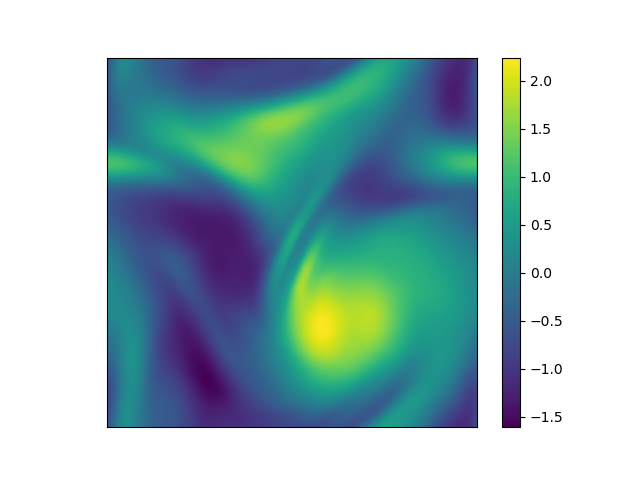}
         % \label{fig:three sin x}
     \end{minipage}
     \hfill
     \begin{minipage}[b]{0.05\textwidth}
         \centering
         \includegraphics[width=\textwidth]{figs/background.png}
     \end{minipage}
     \vspace{-0.5em}
     %~

     \caption{\textbf{Navier-Stokes.} Uncurated samples at the resolution \(1024 \times 1024\) from our diffusion model trained on a dataset at the resolution \(128 \times 128\).}
    \label{fig:ns_allvis}
\end{figure}

\section{Experimental Details}
\label{sec:experimental_details}
In all examples we train by picking \(\I = [10]\) and sample with Algorithm~\ref{alg:sampling_alg} by fixing \(M=200\) and \(\epsilon = 2 \times 10^{-5} \). We choose \(\sigma_1 = 1.0\) and \(\sigma_{10} = 0.01\) with all other \(\sigma\) parameters defined by a geometric sequence.
We train with a combined loss defined by \eqref{eq:learn_noise} where we re-scale the the noise by \(\sigma_t^{-1}\) and the score by \(\sigma_t\), following \cite{song2019generative}. In particular,
our model learns to approximate \(v \mapsto \sigma_t^{-2} \big( R D_{H_{\mu_t}} \Phi(u,t) - v \big )\). Note that the \(\sigma_t^{-2}\) term is canceled by the adaptive time-step in Algorithm~\ref{alg:sampling_alg}, however, as
in \cite{song2019generative}, we find that this re-scaling significantly improves performance for all models. We leave a theoretical analysis of this for future work. To be explicit, our loss function is
$$\min_{\theta} \frac{1}{|\I|}\sum_{t \in \I} \mathbb{E}_{u \sim \mu}\mathbb{E}_{\eta_t \sim \mu_t} \left\| \frac{\eta_t}{\sigma_t} + \sigma_t F_\theta(u + \eta_t,t)\right\|^2$$
where \(\|\cdot\|\) is the \(L^2(D;\sR)\) norm and \(D\) is problem-dependent. We train with the Adam optimizer for a total of \(300\) epochs and an initial learning rate \(10^{-3}\), which is decayed by half every \(50\) epochs.

\subsection{Gaussian Mixture}
\label{subsec:exp_gaussianmixutre}

We consider the problem setting of \secref{subsec:further_gaussian_mixture} with \(d=1\) and \(D = (0,2\pi)\). We pick \(f_1 = \sin (x/2)\) and \(f_2 = -f_1\) as well as \(p=0.5\).
We generate \(N=10,000\) samples for training at a resolution of \(2048\) and subsample these to obtain all other datasets.
For the data covariance \(C_1\), we choose \(\alpha_1 = 3\), \(\sigma_1 = 3\), and \(\tau_1 = 3\), and, for the noise covariance \(C_2\), we choose \(\alpha_1 = 0.6\), \(\sigma_1 = 0.5\), and \(\tau_1 = 0.1\).
We train with a FNO architecture wich retains 48 modes and has a width of 128. We re-train the model at each separate resolution. In Figure~\ref{fig:gm_allvis} we visualize samples from each of the models trained 
with white noise as well as the trace-class noise with covariance \(C_2\) at the resolutions \(64,512\), and \(2048\). We point out that the models trained at high resolutions with white noise completely
fail to capture the right distribution.

\begin{figure}
     \centering
     \begin{minipage}[b]{0.24\textwidth}
         \centering
         \includegraphics[width=\textwidth]{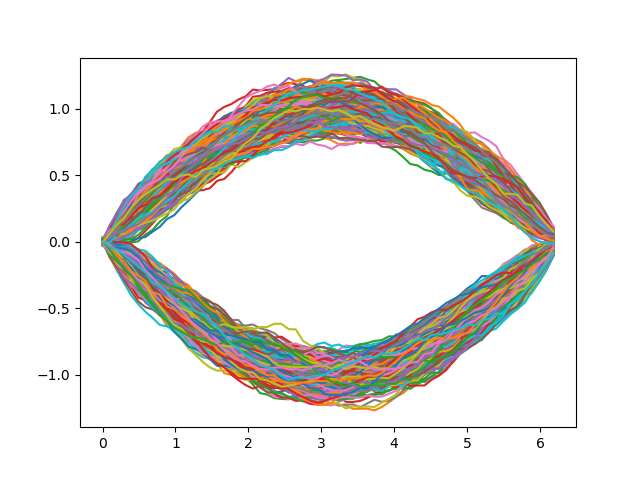}
         % \label{fig:three sin x}
     \end{minipage}
     % \hspace{-2.5em}
     \hfill
     \begin{minipage}[b]{0.24\textwidth}
         \centering
         \includegraphics[width=\textwidth]{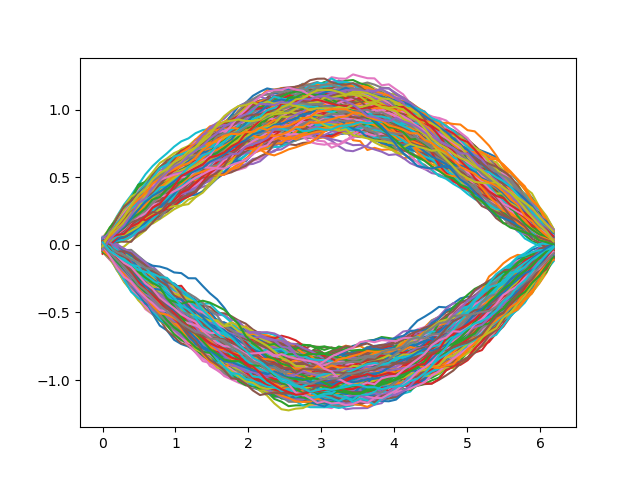}
         % \label{fig:three sin x}
     \end{minipage}
     % \hspace{-2.5em}
     \hfill
     \begin{minipage}[b]{0.24\textwidth}
         \centering
         \includegraphics[width=\textwidth]{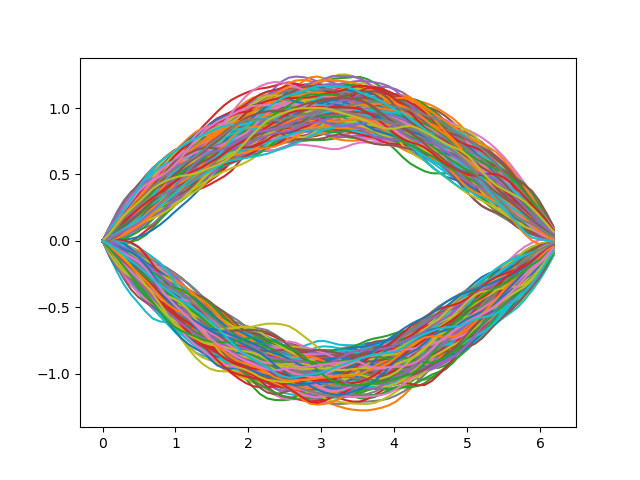}
         % \label{fig:three sin x}
     \end{minipage}
     % \hspace{-2.5em}
     \hfill
     \begin{minipage}[b]{0.24\textwidth}
         \centering
         \includegraphics[width=\textwidth]{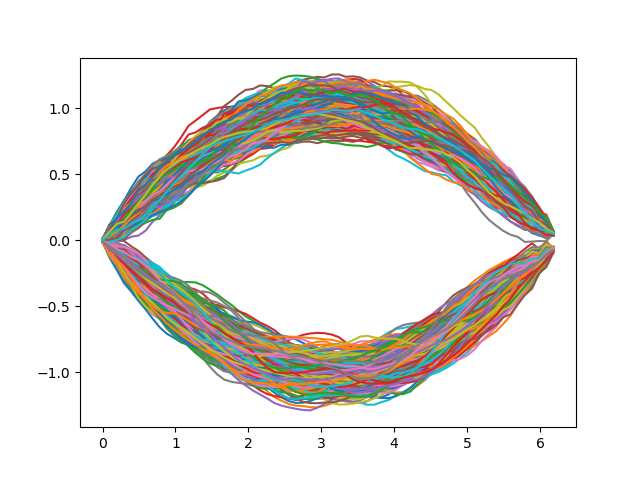}
         % \label{fig:three sin x}
     \end{minipage}
     \vspace{-0.5em}
     %~

     \begin{minipage}[b]{0.24\textwidth}
         \centering
         \includegraphics[width=\textwidth]{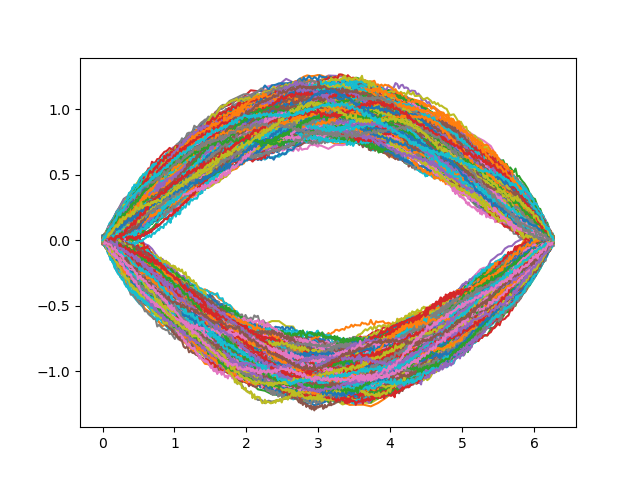}
         % \label{fig:three sin x}
     \end{minipage}
     % \hspace{-2.5em}
     \hfill
     \begin{minipage}[b]{0.24\textwidth}
         \centering
         \includegraphics[width=\textwidth]{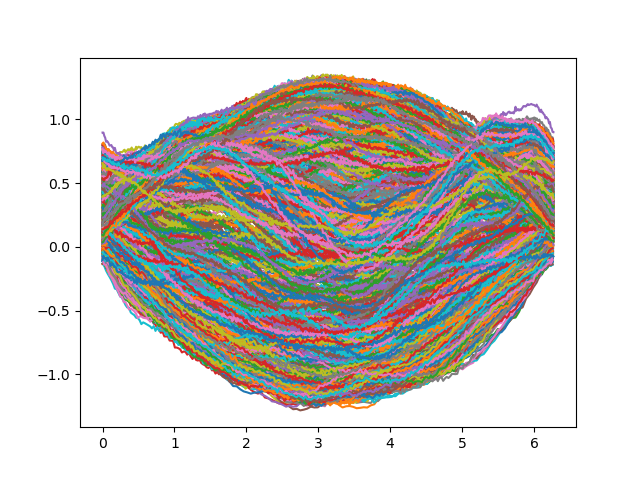}
         % \label{fig:three sin x}
     \end{minipage}
     % \hspace{-2.5em}
     \hfill
     \begin{minipage}[b]{0.24\textwidth}
         \centering
         \includegraphics[width=\textwidth]{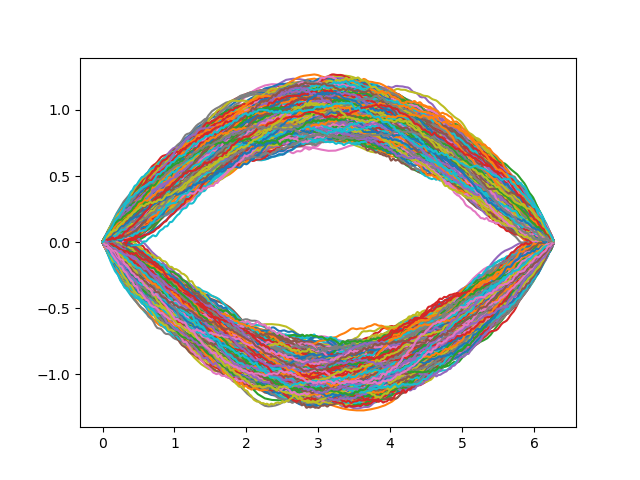}
         % \label{fig:three sin x}
     \end{minipage}
     % \hspace{-2.5em}
     \hfill
     \begin{minipage}[b]{0.24\textwidth}
         \centering
         \includegraphics[width=\textwidth]{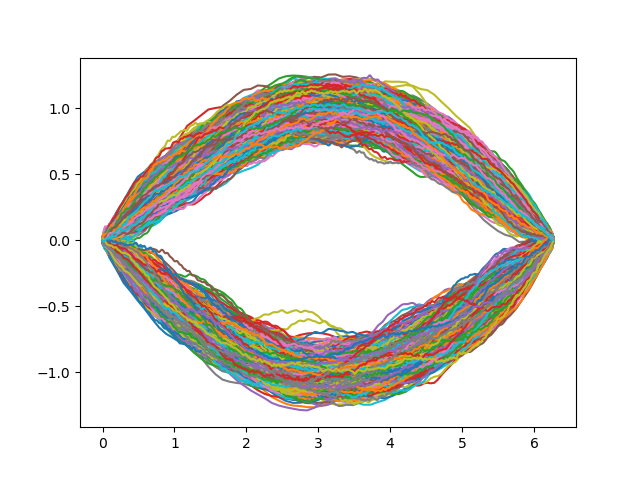}
         % \label{fig:three sin x}
     \end{minipage}
     \vspace{-0.5em}
     %~

     \begin{minipage}[b]{0.24\textwidth}
         \centering
         \includegraphics[width=\textwidth]{figs/gm_nu_white_samples_s2048.png}
         % \label{fig:three sin x}
     \end{minipage}
     % \hspace{-2.5em}
     \hfill
     \begin{minipage}[b]{0.24\textwidth}
         \centering
         \includegraphics[width=\textwidth]{figs/gm_white_samples_s2048.png}
         % \label{fig:three sin x}
     \end{minipage}
     % \hspace{-2.5em}
     \hfill
     \begin{minipage}[b]{0.24\textwidth}
         \centering
         \includegraphics[width=\textwidth]{figs/gm_nu_trace_samples_s2048.png}
         % \label{fig:three sin x}
     \end{minipage}
     % \hspace{-2.5em}
     \hfill
     \begin{minipage}[b]{0.24\textwidth}
         \centering
         \includegraphics[width=\textwidth]{figs/gm_trace_samples_s2048.png}
         % \label{fig:three sin x}
     \end{minipage}
     \vspace{-0.5em}
     
    \caption{\textbf{Guassian mixture.} Each row represents the resolutions \(64,512,2048\) respectively. The first and third columns show samples from \(\nu_T\) (the perturbed data measure with the smallest amount of noise added) for the white noise and trace-class noise respectively. The second and fourth columns show samples generated from the trained model using white noise and trace-class noise respectively.}
        \label{fig:gm_allvis}
\end{figure}

\subsection{Navier-Stokes}
\label{subsec:exp_navierstokes}

We consider a problem setting similar to \secref{subsec:further_navier_stokes}. We fix the initial condition \(u(\cdot,0) = 0\) and instead generate random forcings \(f\) from the Gaussian \(\rho\). 
The same theory as in \secref{subsec:further_navier_stokes} still applies and the pushforward is non-Gaussian due to the non-linearity of the PDE. We solve it up to the final time \(T=5\) with the pseudo-spectral scheme of \cite{chandler2013invariant} 
with \(\epsilon = 1/500\) and \(N=10,000\) samples for training. We pick \(\alpha_1 = 4\), \(\sigma_1 = 3\sqrt{3}\) and \(\tau_1 = 3\) for the reference Gaussian \(\rho\), following \citep{de2022cost}. All data is generated with a \(1024 \times 1024\) resolution 
and the \(128 \times 128\) is created from it by sub-sampling. With train our mode with a FNO architecture retaining \(80 \times 80\) modes with a width of \(64\). We pick the parameters \(\alpha_2 = 1.5\), \(\sigma_2 = 4\), and \(\tau_2 = 5\)
for our noise covariance. In Figure~\ref{fig:ns_allvis}, we show more samples generated by the model, performing zero-shot super-resolution.

\subsection{Volcano Dataset}
\label{app:volcano}

For the volcano experiments, we use the loss formulation from Equation (\ref{eq:learn_noise}) but employ a noise schedule $\{\sigma_1, \dots, \sigma_L\}$ (as is standard with SBGMs). Inspired by \cite{song2020improved} we optimize a weighted variant of it where $F_{\theta}$ is preconditioned with $\sigma_i$:
\begin{subequations} \label{eq:sm}
\begin{align}
& \min_{\theta \in \mathbb{R}^{p}} \mathbb{E}_{u \sim \mu} \mathbb{E}_{\eta_i} \| \eta_i + \sigma_i \Ft(u+\eta_i, \sigma_i) \|^{2} \\
= & \min_{\theta \in \mathbb{R}^{p}} \mathbb{E}_{u \sim \mu} \mathbb{E}_{\epsilon \sim \mathcal{N}(0,\mathbf{I})} \mathbb{E}_{i \sim U(1,L)} \|  \mathbf{L}_i \epsilon + \sigma_i \Ft(u+\mathbf{L}_i \epsilon, \sigma_i) \|^{2} \\
= & \min_{\theta \in \mathbb{R}^{p}} \mathbb{E}_{u \sim \mu} \mathbb{E}_{\epsilon \sim \mathcal{N}(0,\mathbf{I})} \mathbb{E}_{i \sim U(1,L)} \|  \sigma_i \mathbf{L} \epsilon + \sigma_i \Ft(u+ \sigma_i \mathbf{L} \epsilon, \sigma_i) \|^{2}, \label{eq:sm_1c} \\
= & \min_{\theta \in \mathbb{R}^{p}} \mathbb{E}_{u \sim \mu} \mathbb{E}_{\epsilon \sim \mathcal{N}(0,\mathbf{I})} \mathbb{E}_{i \sim U(1,L)} \|  \sigma_i\big( \mathbf{L} \epsilon + \Ft(u+ \sigma_i \mathbf{L} \epsilon, \sigma_i) \big) \|^{2}
\end{align}
\end{subequations}
where $\eta_i \sim \mathcal{N}(0, \mathbf{C}_i^{2}) = \mathcal{N}(0, \sigma_i^{2}\mathbf{C}^{2})$, and via the reparameterisation trick this can be re-written as $\eta_i = \sigma_i \mathbf{L} \epsilon$, where $\mathbf{L} = \text{chol}(\mathbf{C})$ and $\epsilon \sim \mathcal{N}(0,\mathbf{I})$. Here, we use a covariance computed using the RBF kernel over a 2D meshgrid representing the coordinates of the image (function): %computed with an RBF kernel over a 2D coordinate meshgrid representing the coordinates of the image:
\begin{align}
\mathbf{C}(x)_{ij} = \exp\Big( -\frac{\| x_i - x_j \|^{2}}{2\gamma^{2}} \Big), \ \, i, j \in \{1, \dots, s^2\}
\end{align}
where $x \in [0,1]^{s^2 \times 2}$ for a spatial resolution of $s$, and $\mathbf{C} \in \mathbb{R}^{s^2 \times s^2}$. The hyperparameter $\gamma$ controls the smoothness of the noise, with larger values indicating higher levels of smoothness. Example sample noises are illustrated in Figure \ref{fig:rbf_samples}.

\begin{figure}[H]
    \centering
    % Eqn 1c section
    % L B R T
    \begin{subfigure}[b]{0.3\textwidth}
        \centering
        \includegraphics[width=\textwidth,trim=570 0 0 0,clip]{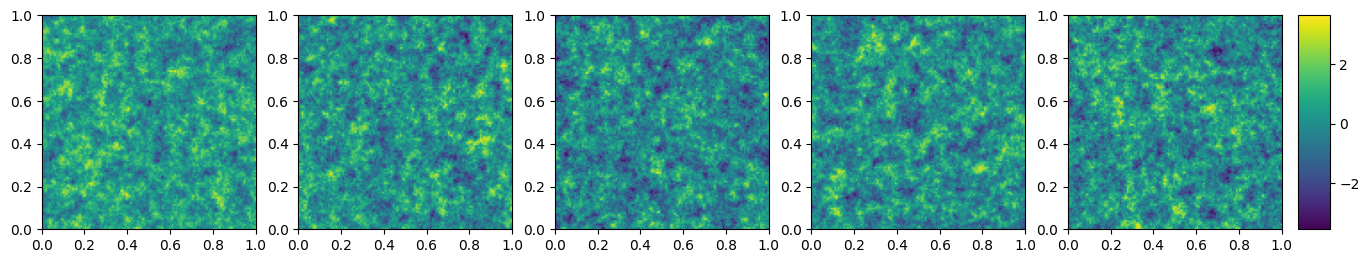}
        \caption{$\gamma = 0.1$.}
        \label{fig:rbf_samples_01}
    \end{subfigure}
    \begin{subfigure}[b]{0.3\textwidth}
        \centering
        \includegraphics[width=\textwidth,trim=570 0 0 0,clip]{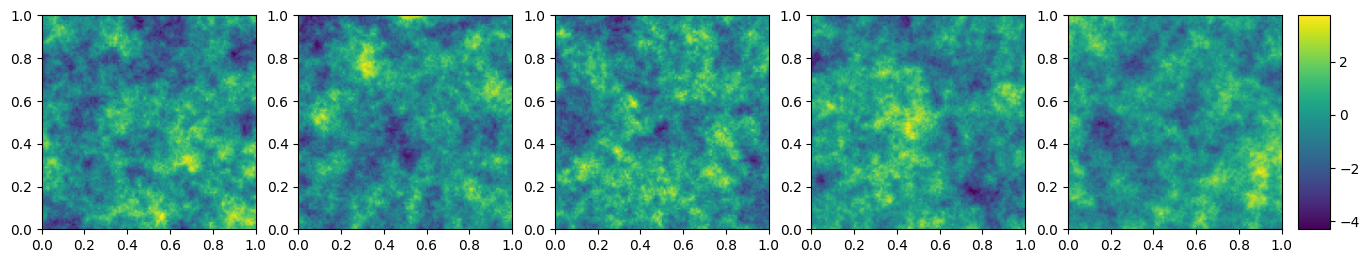}
        \caption{$\gamma = 0.2$.}
        \label{fig:rbf_samples_02}
    \end{subfigure}
    \begin{subfigure}[b]{0.3\textwidth}
        \centering
        \includegraphics[width=\textwidth,trim=570 0 0 0,clip]{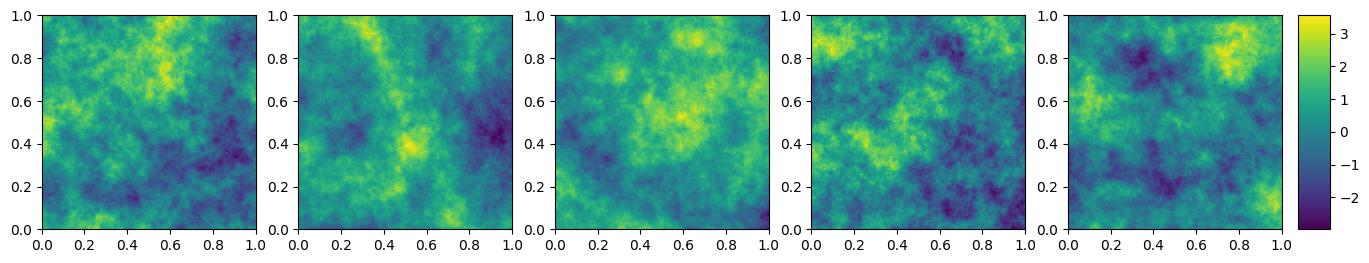}
        \caption{$\gamma = 0.3$.}
        \label{fig:rbf_samples_03}
    \end{subfigure}
    \caption{\textbf{Samples with RBF kernels}: Sample noises for varying smoothness parameters $\gamma$ of RBF kernels.}
    \label{fig:rbf_samples}
\end{figure}

As suggested in \cite{song2020improved}, we redefine $\Ft(\cdot, \sigma) = \Ft(\cdot) / \sigma$ since the authors noted that this makes the noise prediction task more robust to a wide range of noise scales. 
\iffalse
Therefore, Equation \ref{eq:sm_1c} can be simplified even further to:
\begin{align}
\min_{\theta \in \mathbb{R}^{p}} \mathbb{E}_{u \sim \mu} \mathbb{E}_{\epsilon \sim \mathcal{N}(0,\mathbf{I})} \mathbb{E}_{i \sim U(1,L)} \| \sqrt{\sigma_i} \mathbf{L}_i \epsilon + \Ft(u+ \sqrt{\sigma_i}\mathbf{L}_i\epsilon) \|^{2}.
\end{align}
\fi

\paragraph{Architecture} The architecture we use is a U-shaped neural operator (UNO). This architecture consists of a series of Fourier neural operator blocks (FNOBlocks) which progressively downsample or upsample the input. We use the implementation of FNOBlocks from the Neural Operators library \cite{li2020fourier, kovachki2021neural}. Concretely, for the Volcano dataset the example $u$ has a spatial dimension of $120 \times 120$ with two channels, and is lifted from $2 \rightarrow 128$ channels with a preprocessing convolutional layer, which also pads the 120 pixel image (function) to 128 pixels. Afterwards, the lifted input is run through four FNOBlocks which progressively lift the channel dimension and spatial dimensions to $(128 \rightarrow 256 \rightarrow 512 \rightarrow 512)$ and $(128 \rightarrow 96 \rightarrow 64 \rightarrow 32)$, respectively. A similar set of blocks is used in the decoder block along with skip connections.

For FNOBlock, we use Tucker factorisation with a rank of 0.1. In order to ensure the number of learnable parameters does not explode, we also constrain the number of Fourier modes in each FNOBlock to be 50\% of the number of input channel dimensions for each block. The resulting U-shaped neural operator contains a total of ~142M learnable parameters.

\paragraph{Training Details}
We train the models using ADAM \cite{kingma2014adam} with default moving average hyperparameters $\beta = (0.9, 0.999)$, with a learning rate of $2e^{-4}$. For the noise schedule, we employ a geometric schedule using $(\sigma_1, \sigma_L) = (30, 0.01)$, for $L = 500$ time steps. The number of SGLD iterations per timestep is $T = 3$ and we use a step size of $6e^{-6}$. At generation time, we use annealed SGLD and run the Markov chain on $u_0 \sim \mathcal{N}(0, \mathbf{C})$.

% \paragraph{Dataset} We use the Volcano dataset originally proposed in GANO \cite{gano}. This dataset consists of 4096 data points of spatial resolution $128 \times 128$, derived from raw interferograms produced from satellites covering the Long Valley Caldera near Mammoth Lakes, California. Since the dataset consists of relatively few examples, we employ a light amount of data augmentation during training in the form of random horizontal and vertical flips.

\paragraph{Hyperparameters} We use circular skewness and circular variance, moments of circular variables which were originally proposed in GANO \citep{rahman2022generative} Given an image (function) $u \in \mathbb{R}^{s^2 \times 2}$ we can define its angle as $\theta = \text{atan2}(u_{:,1}, u_{:,2})$. If we define $R_p(\theta) = \frac{1}{s^2} \sqrt{z_p(\theta)}$ for $z_p(\theta) = (\sum_{k}^{s^2}( \cos(p\theta_k) + i \sin(p\theta_k))^2$ and $\varphi_p(\theta) = \text{arg}(z_p(\theta))$ (for $s^2$ spatial dimensions) then:
\begin{align} \label{eq:metrics}
\wvar(\theta) = 1 - R_1(\theta), \ \ \ \wskew(\theta) = \frac{R_2(\theta)\sin(\varphi_{2}(\theta) - 2\varphi_{1}(\theta))}{(1 - R_1(\theta))^{3/2}}
\end{align}
where $i = \sqrt{-1}$ and $k$ is a summation over the spatial dimensions $s^2$. 

%$\sqrt{(\sum_{i} \cos(\theta)_i)^{2} + (\sum_{i} \sin(\theta)_i)^{2}}$ 

\newpage
\null
\vfill
\begin{figure}[htb!]
    \centering
    % trim = left bottom right top
    % old: 0 280 280 0,
    \begin{subfigure}[b]{0.7\textwidth}
        \centering
        \includegraphics[width=0.5\textwidth]{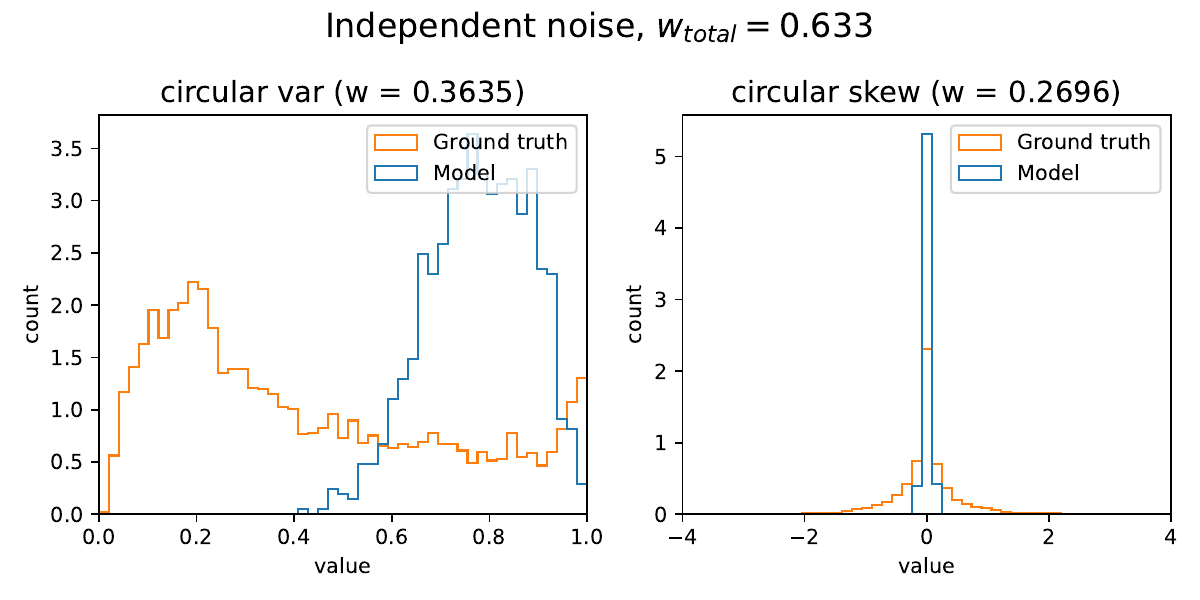} \ \ \includegraphics[width=0.4\textwidth,trim=0 280 140 0,clip]{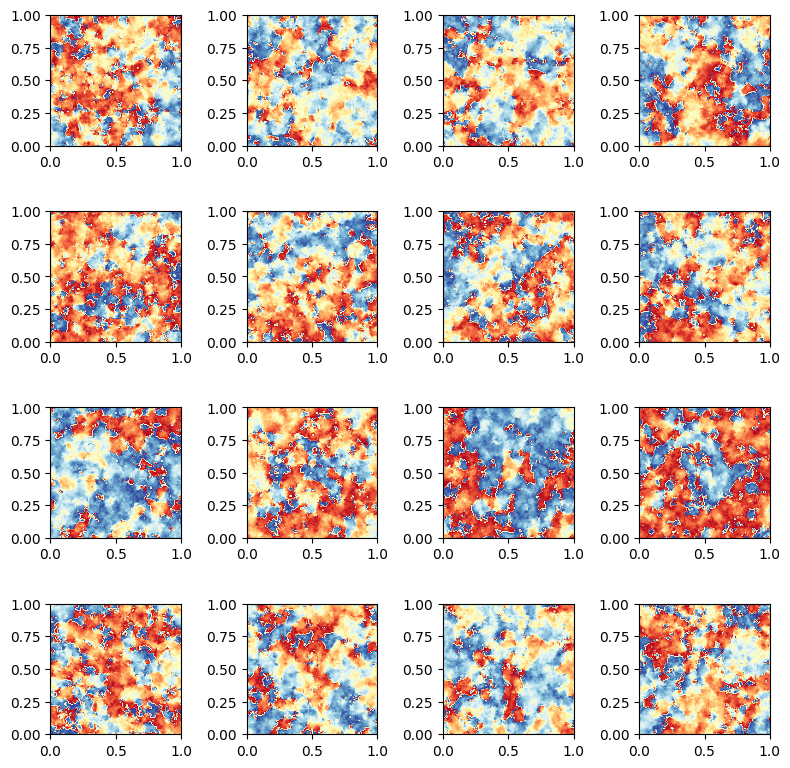}
        \caption{Independent Gaussian noise, $\wtotal = 0.633$}
        \label{fig:d_sr_hist_wn}
    \end{subfigure} \\
    \vspace{0.4cm}
    % Eqn 1c section
    \begin{subfigure}[b]{0.7\textwidth}
        \centering
        \includegraphics[width=0.5\textwidth]{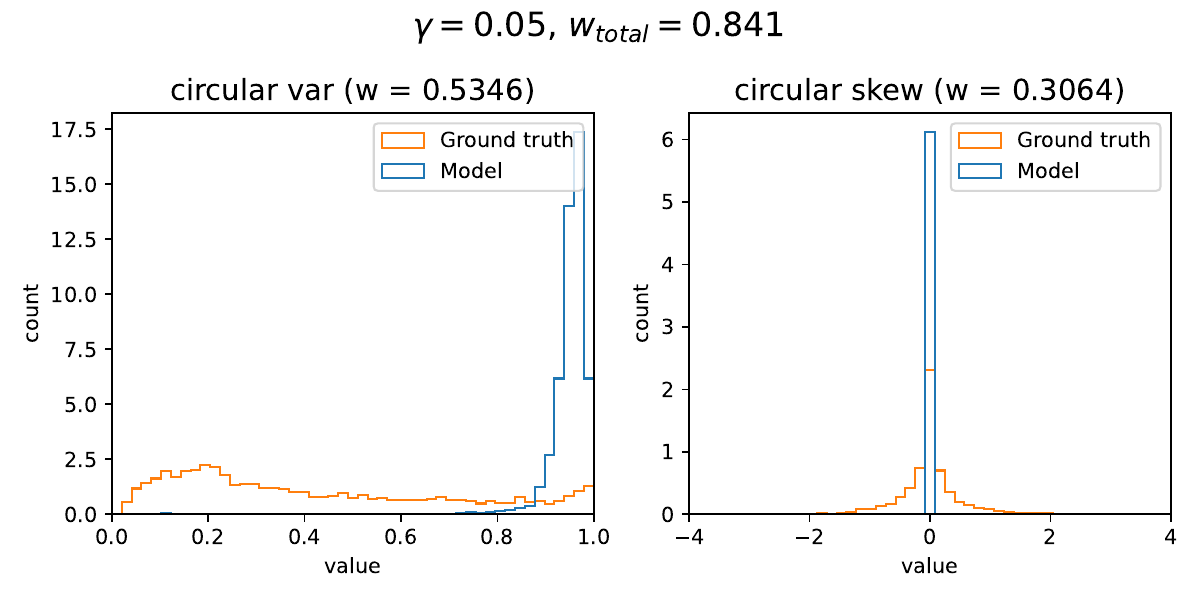} \ \ \includegraphics[width=0.4\textwidth,trim=0 280 140 0,clip]{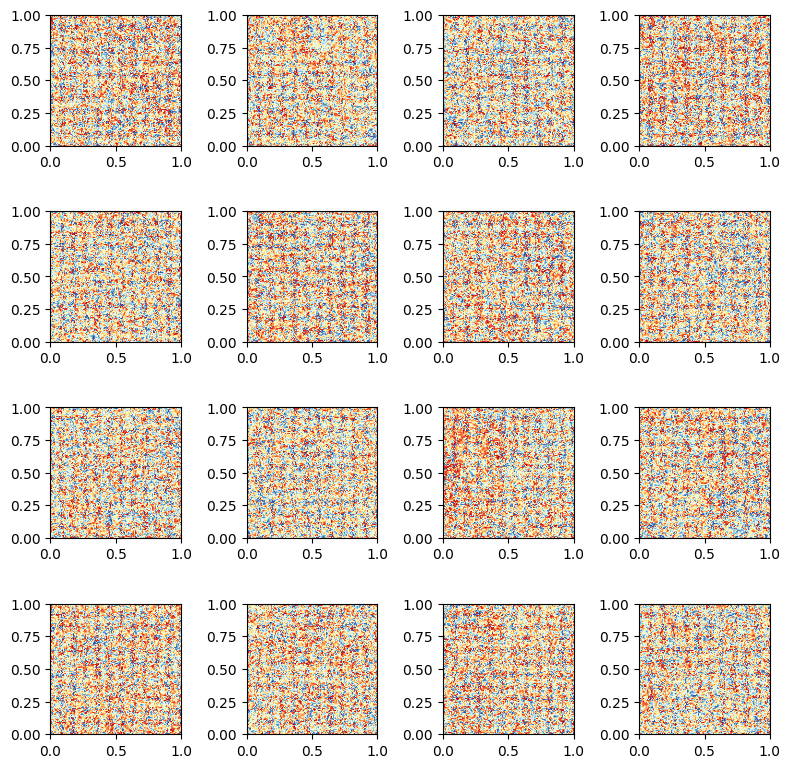}
        \caption{RBF noise, $\gamma = 0.05$, $\wtotal = 0.841$}
        \label{fig:d_sr_hist_005}
    \end{subfigure} \\
    \vspace{0.4cm}
    \begin{subfigure}[b]{0.7\textwidth}
        \centering
        \includegraphics[width=0.5\textwidth]{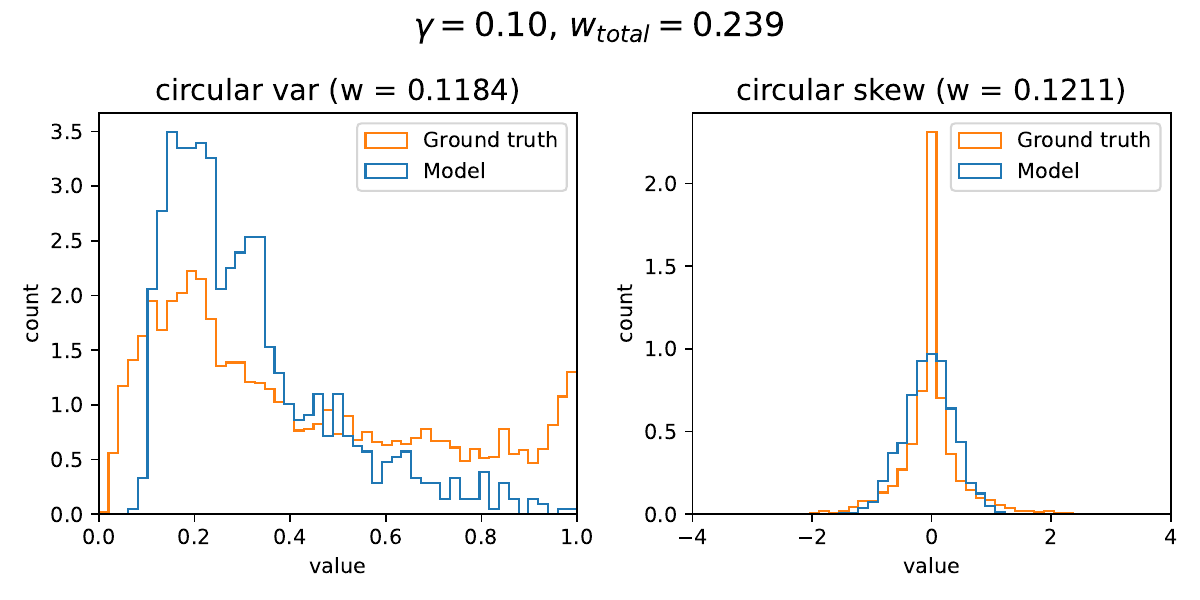} \ \ \includegraphics[width=0.4\textwidth,trim=0 280 140 0,clip]{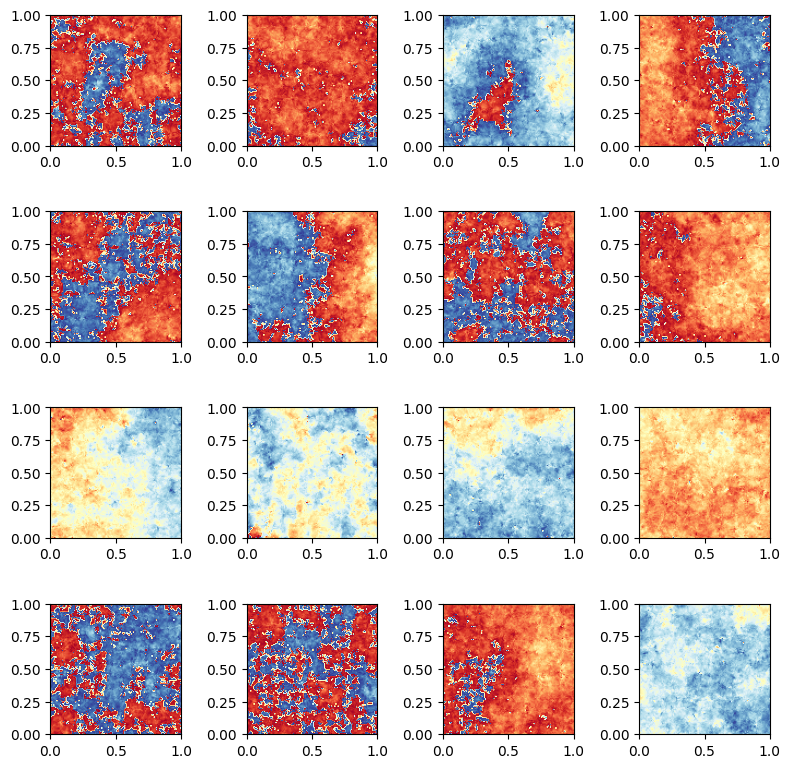}
        \caption{RBF noise, $\gamma = 0.1$, $\wtotal = 0.239$}
        \label{fig:d_sr_hist_01}
    \end{subfigure} \\
    \vspace{0.4cm}
    \begin{subfigure}[b]{0.7\textwidth}
        \centering
        \includegraphics[width=0.5\textwidth]{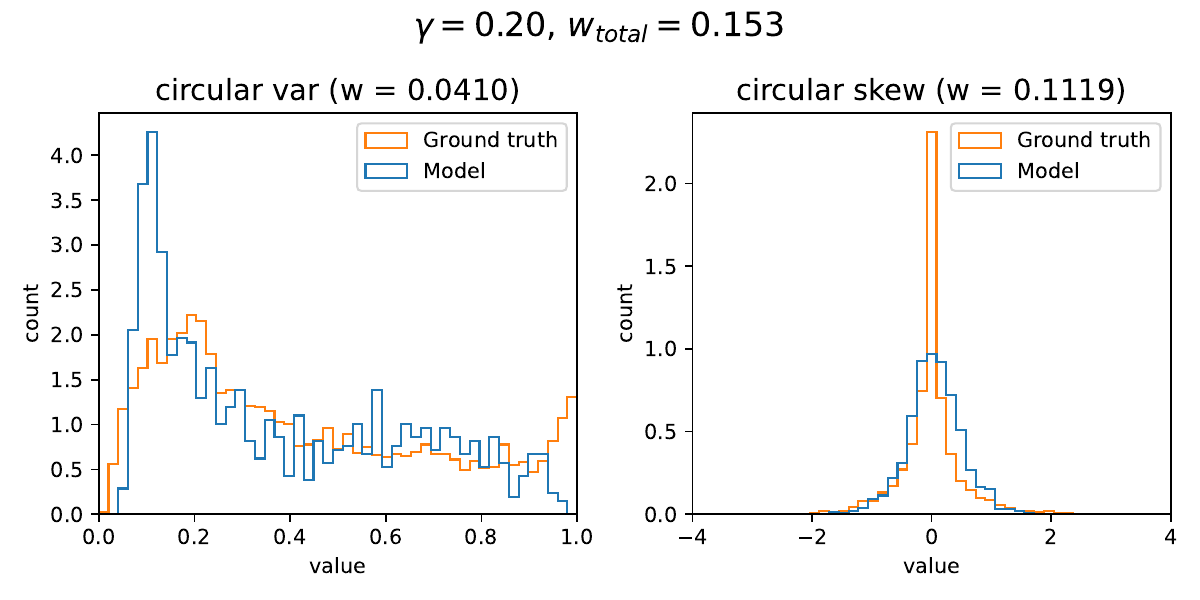} \ \ \includegraphics[width=0.4\textwidth,trim=0 280 140 0,clip]{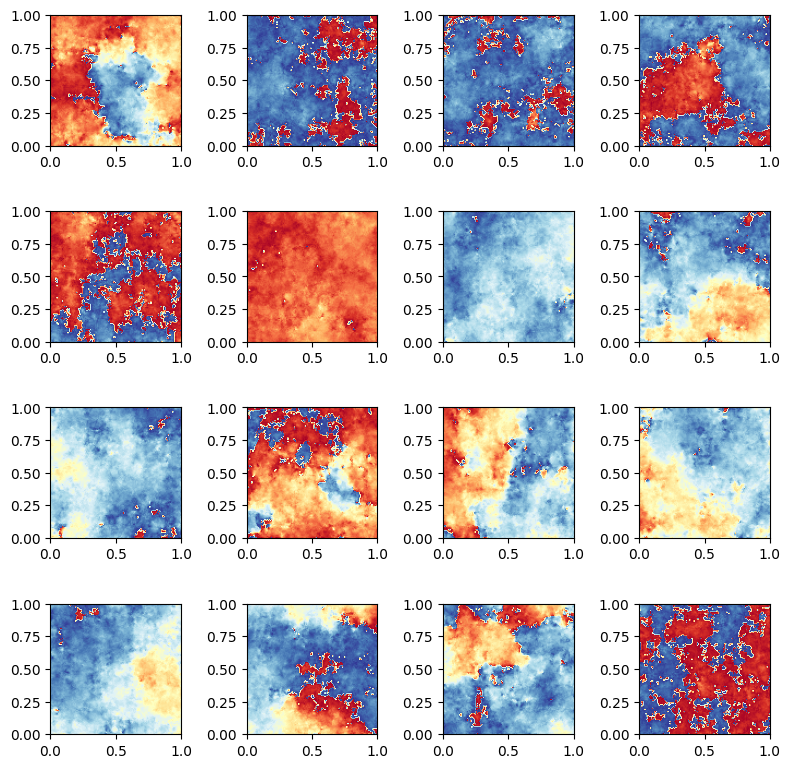}
        \caption{RBF noise, $\gamma = 0.2$, $\wtotal = 0.153$}
        \label{fig:d_sr_hist_02}
    \end{subfigure}
    \caption{Super-resolution experiments for DDO. Each subplot corresponds to a different level of RBF smoothness, which is denoted by $\gamma$ (larger $\gamma$ corresponds to smoother noise). DDO is trained on $60 \times 60$ resolution functions, and $120 \times 120$ functions are produced as per \secref{subsec:volcano}. Variance and skew statistics are computed from $M=1024$ samples and the Wasserstein distance is computed between those samples and the corresponding statistics from the $120 \times 120$ resolution version of the training set (i.e. the original resolution). We can see that both the independent noise experiment and $\gamma = 0.05$ experiment fail to produce plausible examples.}
    \label{fig:diffusion_superres}
\end{figure}
\vfill

\newpage
\null
\vfill
\begin{figure}[htb!]
    \centering
    % trim = left bottom right top
    % old: 0 280 280 0, 
    \begin{subfigure}[b]{0.7\textwidth}
        \centering
        \includegraphics[width=0.55\textwidth]{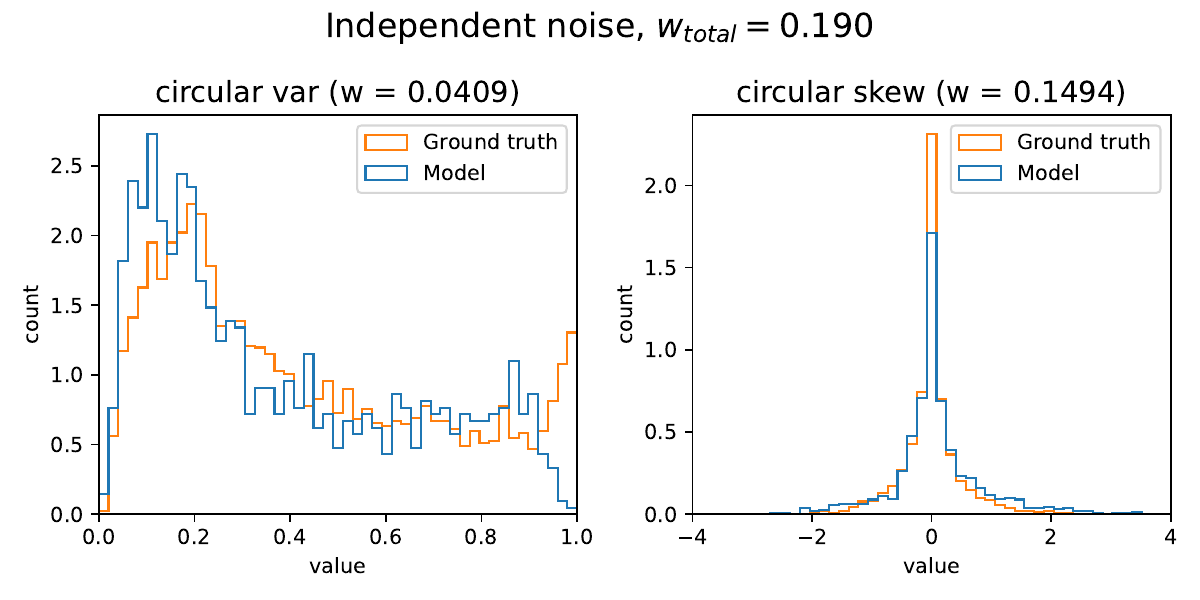} \ \ \includegraphics[width=0.4\textwidth,trim=0 280 140 0,clip]{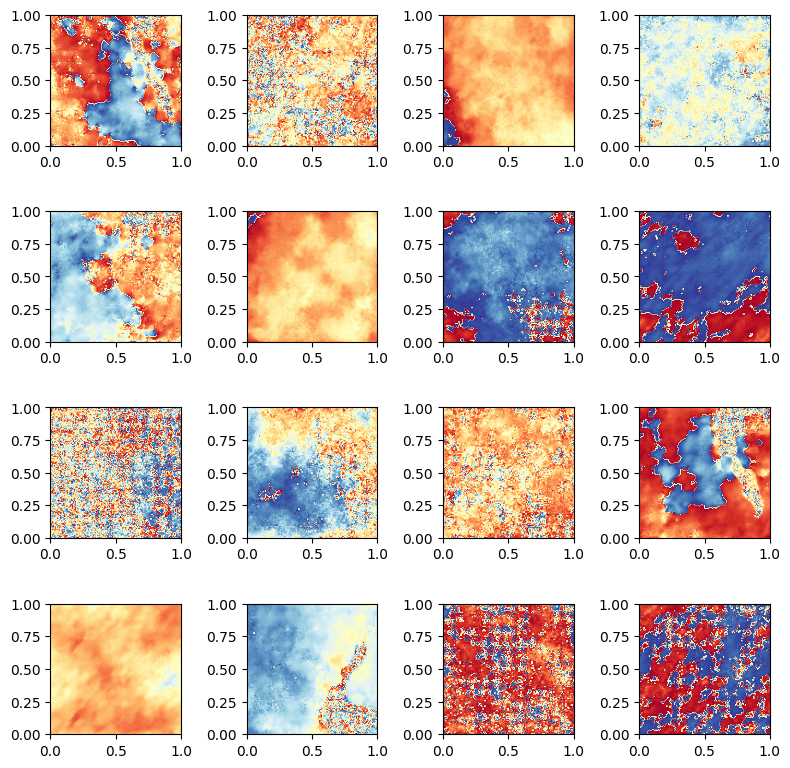}
        \caption{Independent Gaussian noise, $\wtotal = 0.187$}
        \label{fig:g_sr_hist_wn}
    \end{subfigure} \\
    \vspace{0.4cm}
    % Eqn 1c section
    \begin{subfigure}[b]{0.7\textwidth}
        \centering
        \includegraphics[width=0.55\textwidth]{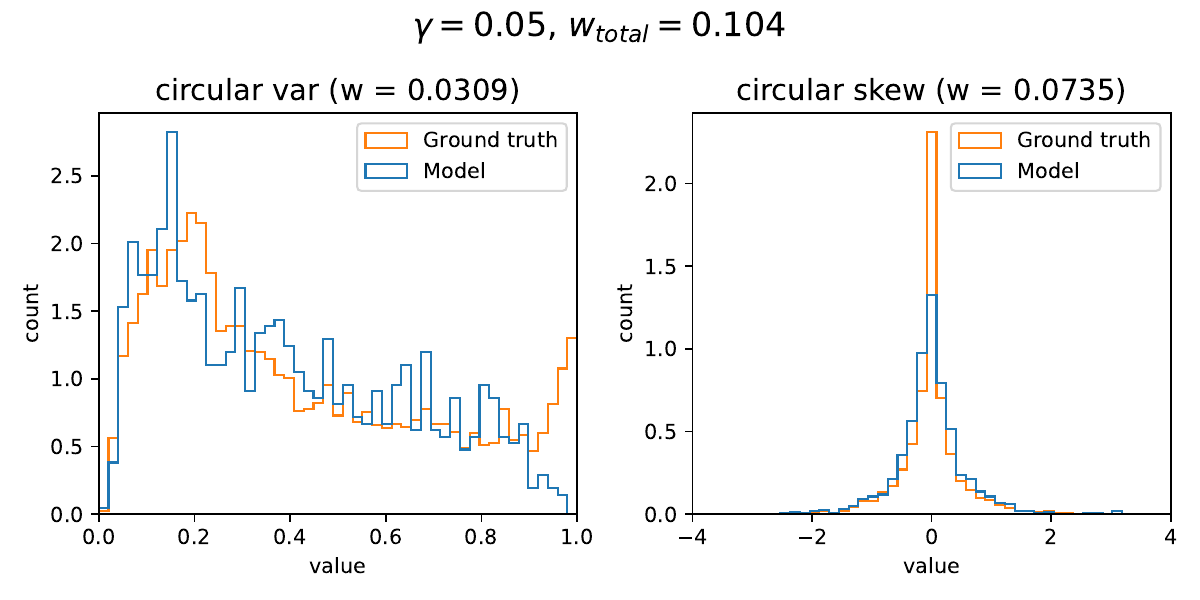} \ \ \includegraphics[width=0.4\textwidth,trim=0 280 140 0,clip]{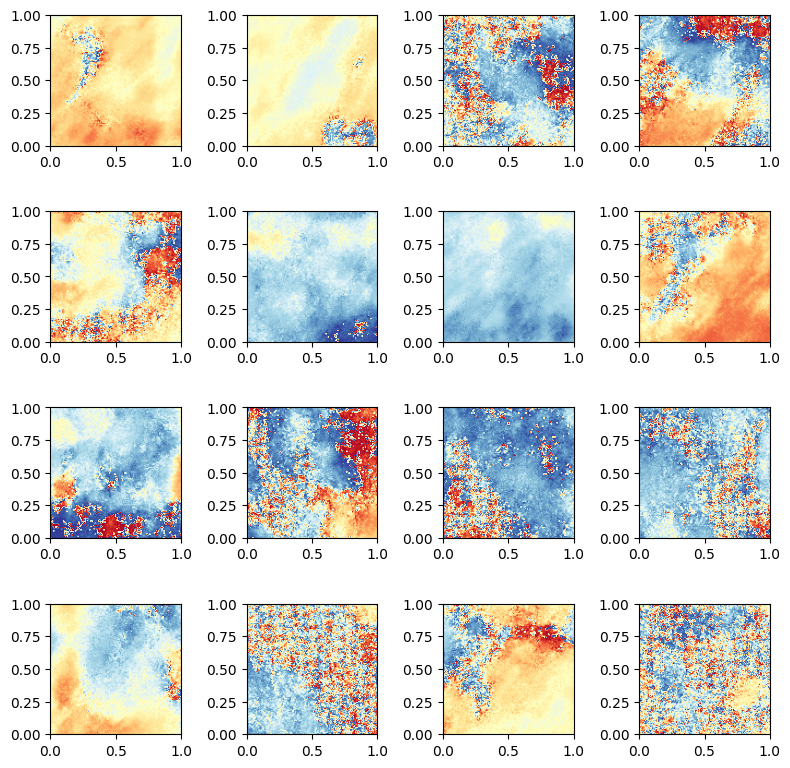}
        \caption{RBF noise, $\lambda = 0.05$, $\wtotal = 0.104$}
        \label{fig:g_sr_hist_005}
    \end{subfigure} \\
    \vspace{0.4cm}
    \begin{subfigure}[b]{0.7\textwidth}
        \centering
        \includegraphics[width=0.55\textwidth]{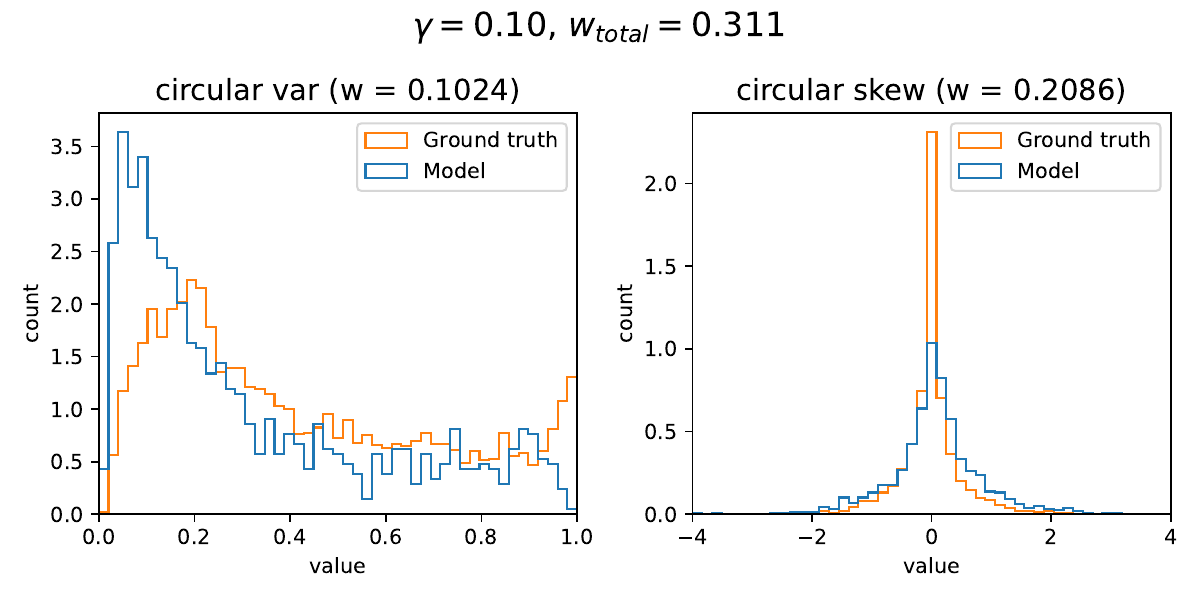} \ \ \includegraphics[width=0.4\textwidth,trim=0 280 140 0,clip]{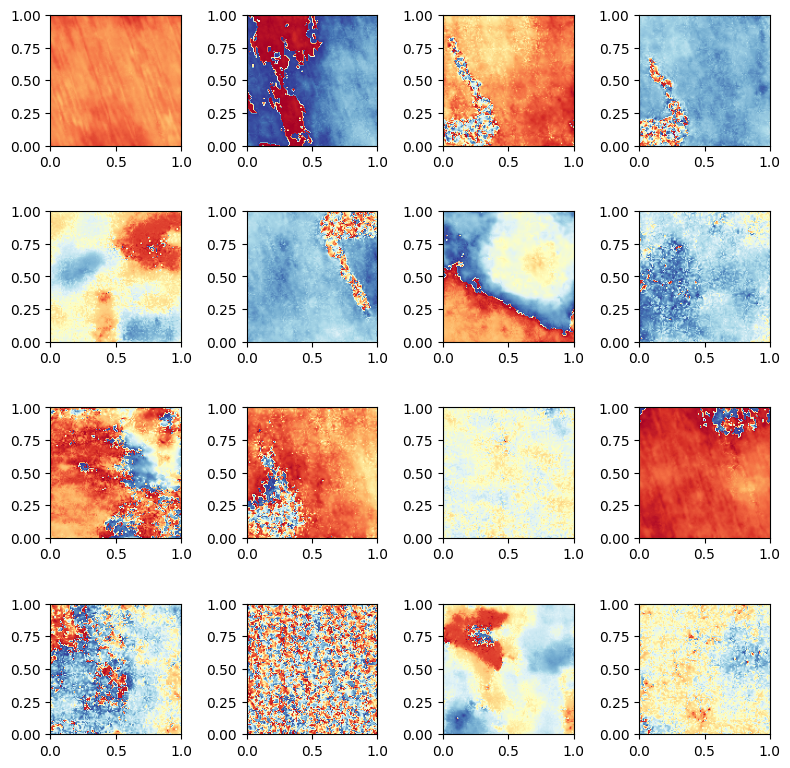}
        \caption{RBF noise, $\lambda = 0.1$, $\wtotal = 0.311$}
        \label{fig:g_sr_hist_01}
    \end{subfigure} \\
    \vspace{0.4cm}
    \begin{subfigure}[b]{0.7\textwidth}
        \centering
        \includegraphics[width=0.55\textwidth]{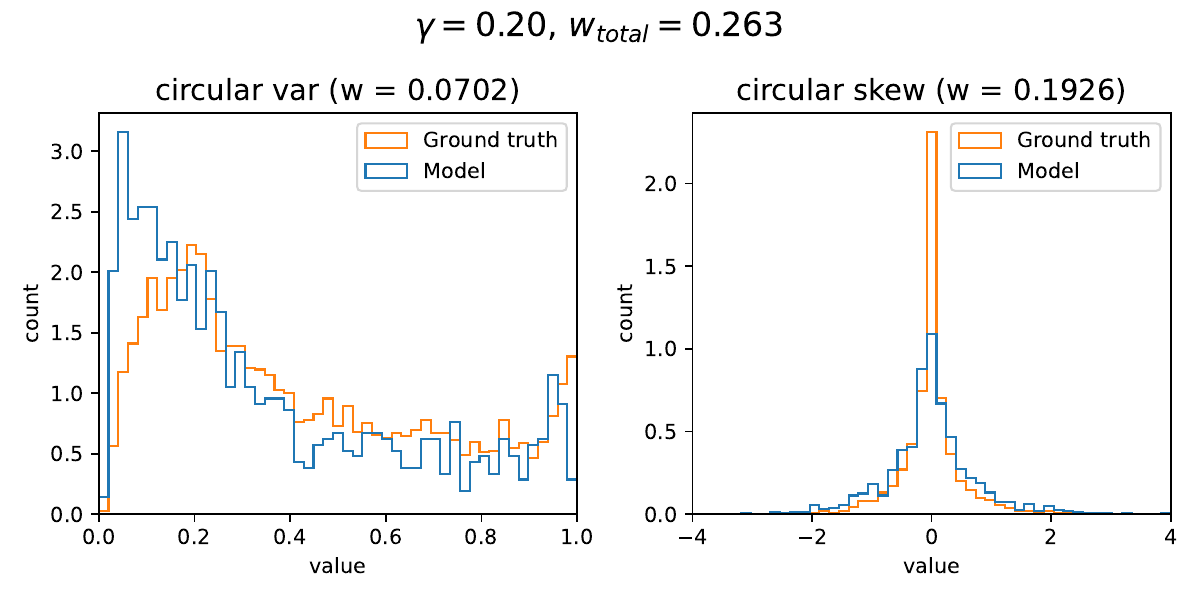} \ \ \includegraphics[width=0.4\textwidth,trim=0 280 140 0,clip]{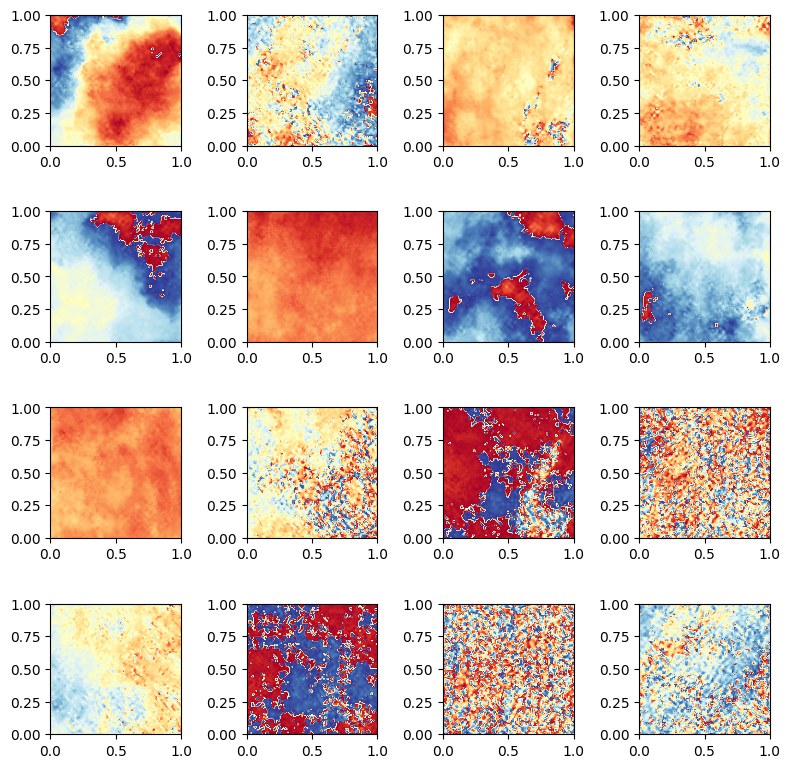}
        \caption{RBF noise, $\lambda = 0.2$, $\wtotal = 0.263$}
        \label{fig:g_sr_hist_02}
    \end{subfigure}
    \caption{Super-resolution experiments for GANO. Each subplot corresponds to a different level of RBF smoothness, which is denoted by $\gamma$ (larger $\gamma$ corresponds to smoother noise). GANO is trained on $60 \times 60$ resolution functions, and $120 \times 120$ resolution functions are produced as per \secref{subsec:volcano}. Variance and skew statistics are computed from $M=1024$ samples and the Wasserstein distance is computed between those samples and the corresponding statistics from the $120 \times 120$ resolution version of the training set (i.e. the original resolution).}
    \label{fig:gano_superres}
\end{figure}
\vfill

\begin{figure}[htb!]
    \centering
    \begin{minipage}[b]{0.02\textwidth}
        \centering
        \includegraphics[width=\textwidth]{figs/background.png}
    \end{minipage}
    % \hfill
    \begin{minipage}[b]{0.05\textwidth}
        \centering
        \includegraphics[width=\textwidth]{figs/mnistsdf/gano.png}
        %\label{fig:three sin x}
    \end{minipage}
    %\begin{minipage}[b]{0.01\textwidth}
    %    \centering
    %    \includegraphics[width=\textwidth]{figs/background.png}
    %    %\label{fig:three sin x}
    %\end{minipage}
    % \hfill
    \begin{minipage}[b]{0.29\textwidth}
        \centering
        \includegraphics[width=1.05\textwidth]{figs/mnistsdf/gano_64x64_masked_n64.pdf}

        \vspace{1mm}

    \end{minipage}
    \hfill
    \begin{minipage}[b]{0.29\textwidth}
        \centering
        \includegraphics[width=0.985\textwidth]{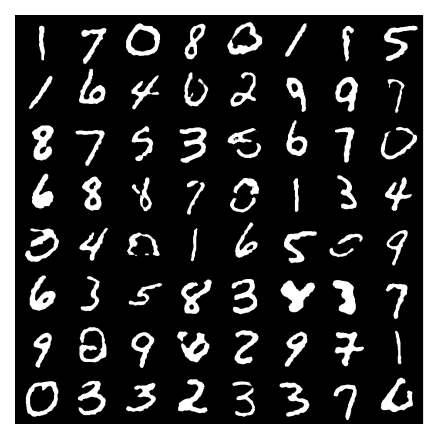}

        \vspace{2.5mm}

    \end{minipage}
    \hfill
    \begin{minipage}[b]{0.29\textwidth}
        \centering
        \includegraphics[width=0.945\textwidth]{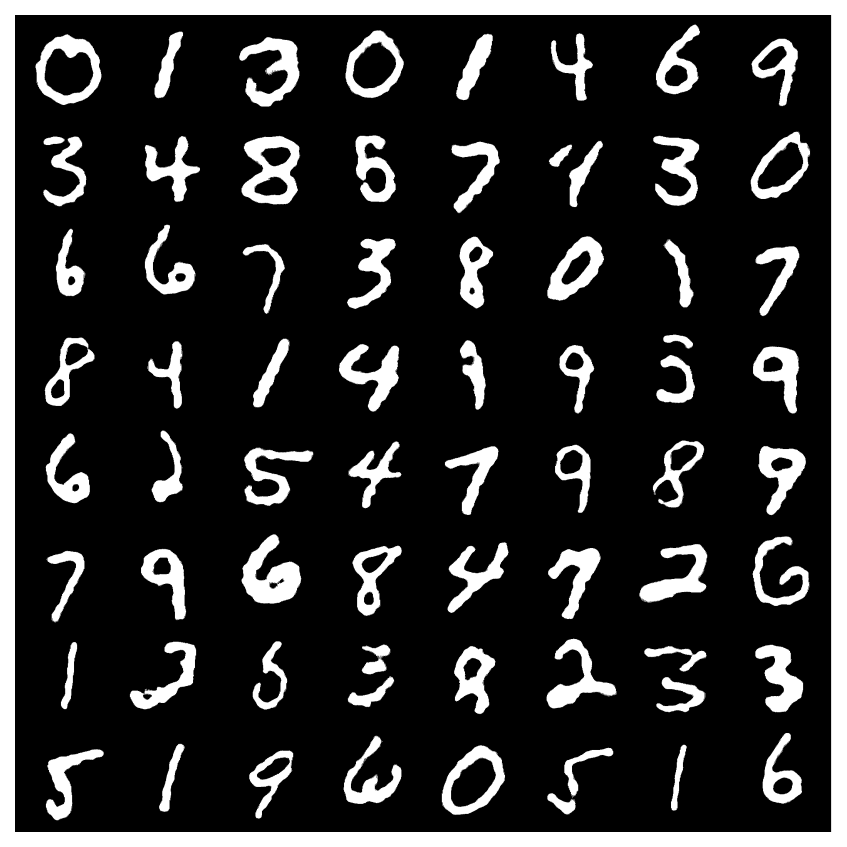}

        \vspace{3.5mm}

    \end{minipage}
    \hfill
    \begin{minipage}[b]{0.02\textwidth}
        \centering
        \includegraphics[width=\textwidth]{figs/background.png}
        \vspace{-5mm}
        \caption*{\textcolor{white}{()}}
    \end{minipage}
    %~

    \begin{minipage}[b]{0.02\textwidth}
        \centering
        \includegraphics[width=\textwidth]{figs/background.png}
    \end{minipage}
    % \hfill
    \begin{minipage}[b]{0.05\textwidth}
        \centering
        \includegraphics[width=\textwidth]{figs/mnistsdf/multileveldiff.png}
        %\label{fig:three sin x}
    \end{minipage}
    %\begin{minipage}[b]{0.01\textwidth}
    %    \centering
    %    \includegraphics[width=\textwidth]{figs/background.png}
    %    %\label{fig:three sin x}
    %\end{minipage}
    % \hfill
    \begin{minipage}[b]{0.29\textwidth}
        \centering
        \includegraphics[width=1.05\textwidth]{figs/mnistsdf/multileveldiff_64x64_masked_n64.pdf}

        \vspace{1mm}

    \end{minipage}
    \hfill
    \begin{minipage}[b]{0.29\textwidth}
        \centering
        \includegraphics[width=0.985\textwidth]{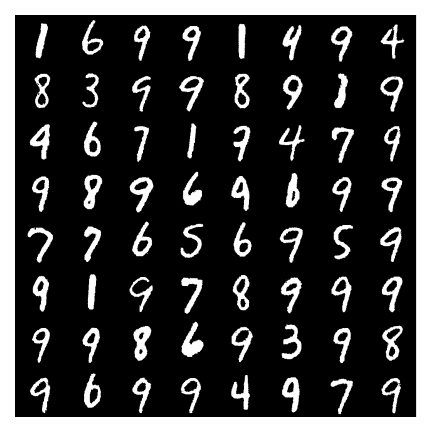}

        \vspace{2.5mm}

    \end{minipage}
    \hfill
    \begin{minipage}[b]{0.29\textwidth}
        \centering
        \includegraphics[width=0.945\textwidth]{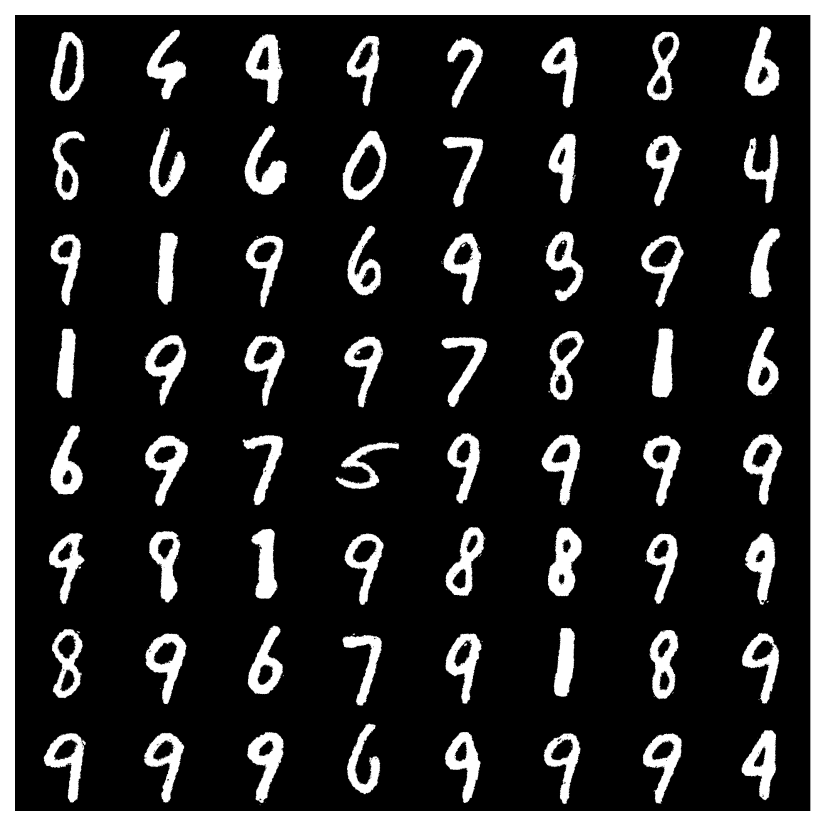}

        \vspace{3.5mm}

    \end{minipage}
    \hfill
    \begin{minipage}[b]{0.02\textwidth}
        \centering
        \includegraphics[width=\textwidth]{figs/background.png}
        \vspace{-5mm}
        \caption*{\textcolor{white}{()}}
    \end{minipage}

    \begin{minipage}[b]{0.02\textwidth}
        \centering
        \includegraphics[width=\textwidth]{figs/background.png}
        \vspace{-5mm}
        \caption*{\textcolor{white}{()}}
    \end{minipage}
    % \hfill
    \begin{minipage}[b]{0.05\textwidth}
        \centering
        \includegraphics[width=\textwidth]{figs/mnistsdf/ddo.png}
        \vspace{-5mm}
        \caption*{\textcolor{white}{()}}
    \end{minipage}
    %\begin{minipage}[b]{0.01\textwidth}
    %    \centering
    %    \includegraphics[width=\textwidth]{figs/background.png}
    %    \vspace{-5mm}
    %    \caption*{\textcolor{white}{()}}
    %\end{minipage}
    \begin{minipage}[b]{0.29\textwidth}
        \centering
        \includegraphics[width=1.05\textwidth]{figs/mnistsdf/ddo_64x64_masked_n64.pdf}
        %\vspace{-5mm}

        \vspace{-4mm}

        \caption*{(a) \,$64\times64$}
    \end{minipage}
    \hfill
    \begin{minipage}[b]{0.29\textwidth}
        \centering
        \includegraphics[width=0.985\textwidth]{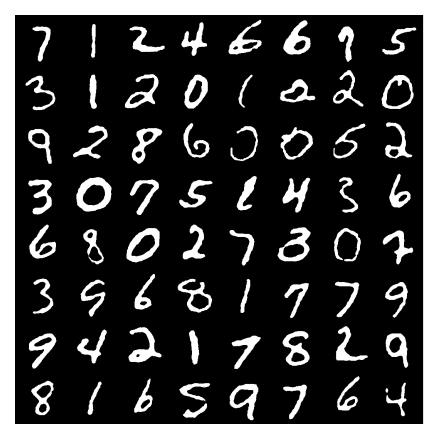}
        %\vspace{-5mm}

        \vspace{-2.5mm}

        \caption*{(b) \,$128\times128$}
    \end{minipage}
    \hfill
    \begin{minipage}[b]{0.29\textwidth}
        \centering
        \includegraphics[width=0.945\textwidth]{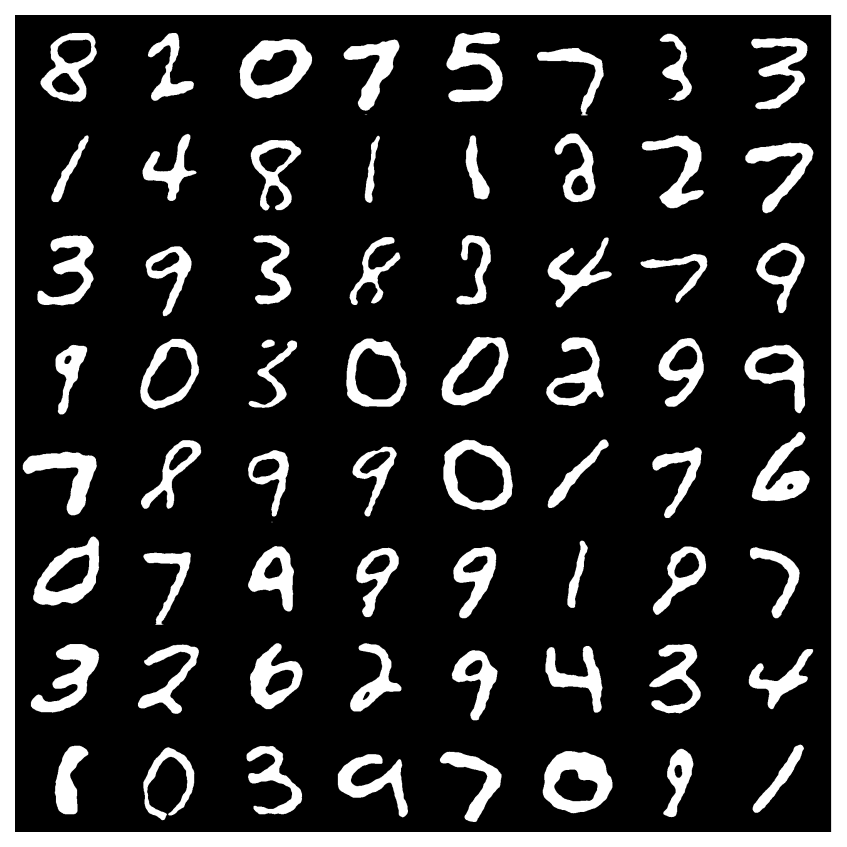}
        %\vspace{-5mm}

        \vspace{-1.5mm}

        \caption*{(c) \,$256\times256$}
    \end{minipage}
    \hfill
    \begin{minipage}[b]{0.02\textwidth}
        \centering
        \includegraphics[width=\textwidth]{figs/background.png}
        \vspace{-5mm}
        \caption*{\textcolor{white}{()}}
    \end{minipage}

    \caption{\textbf{Additional generated Samples} (\secref{subsec:mnist-sdf}): Additional samples (masked) of the learned GANO, MultilevelDiff, and DDO models at various resolutions; (a) 64$\times$64, (b) 128$\times$128, and (c) 256$\times$256-resolutions. 64 samples are plotted for all resolutions. All models are trained on 64$\times$64-resolution images, which are upsampled from 32$\times$32-resolution observations of 2D SDFs.}
    \label{fig:mnist-sdf-samples-8x8}
\end{figure}

\subsection{MNIST-SDF Dataset}
\label{app:mnist-sdf}
% To demonstrate the efficacy of our proposed method, we conducted experiments on MNIST-SDF \citep{sitzmann2019metasdf} by comparing it to GANO \citep{rahman2022generative}. %and an independent Gaussian noise version.

% Rather than resizing finite-dimensional datasets for various resolutions, we select MNIST-SDF. Its simplicity makes it an ideal choice, and since the dataset is defined on the function space, we can consistently compute evaluation statistics like FID \citep{heusel2017gans} and precision-recall \citep{kynkaanniemi2019improved} metrics across different resolutions.

% We train models on a 32x32 version of the dataset that has been upsampled to 64x64 and evaluate the FID and precision-recall at different resolutions. We follow the styleGAN3's evaluation protocol \citep{karras2021alias}. The necessity of the upsampling will be discussed in the following section, as well as the other implementation details.

For the noising processes in the MNIST-SDF experiments, we adopt \eqref{eq:forwardprocess_DDPM} in \appref{sec:ddpm}. More specifically, we follow a more generalized formulation described in \citet{kingma2021variational}. For $t \in (0, 1]$, we first define $u_t$s as in,
\begin{align} \label{eq:forwardprocess_vdm}
    u_t = \alpha_t u + \sigma_t \eta,
\end{align}
where $\eta \sim \mu_0 =  N(0,C)$ and $u \sim \mu$. In this experiment, we use \textit{variance-preserving} setting for the forward process; thus, $\alpha_t$ and $\sigma_t$ will satisfy $\alpha_t^2 + \sigma_t^2 = 1$ for $t \in [0, 1]$.
The approximate backwards conditional in \eqref{eq:backwardprocess_DDPM} will be re-written as \(u_{s}|u_t\) for $\forall \, s, t\,$ s.t. $0 < s < t \le 1$;
\begin{align} \label{eq:backwardprocess_vdm}
    u_s = M_\theta(u_t, t) + \sigma_{\textup{back},s,t} \eta,
\end{align}
where $\eta \sim \mu_0$. $M_\theta$ and $\sigma_{\textup{back},s,t}$ are written as
\begin{align}
    M_{\theta}(u_t, t) &= \frac{\alpha_{t \vertrule s} \sigma_s^2}{\sigma_t^2} u_t - \frac{\alpha_{s \vertrule t}  \sigma_{t \vertrule s}^2}{\sigma_t^2} \left( u_t - \sigma_t F_\theta(u_t, t) \right) \\
    \sigma_{\textup{back},s,t}  &= \frac{\sigma_{t \vertrule s}^2 \sigma_{s}^2}{\sigma_{t}^2},
\end{align}
where $\alpha_{t \vertrule s} = \alpha_t  / \alpha_s$ and $\sigma_{t \vertrule s} = \sigma_t^2 - \alpha_{t \vertrule s}^2 \sigma_s^2$.

The aforementioned formulations provide us several benefits. Most importantly, we can choose arbitrary sequence of discrete time steps in order to generate the learned distribution. This implies that we can train $F_\theta$ at all $t \in (0, 1]$ and choose the sampling sequence later after training. As a result, instead of the training loss \eqref{eq:objective_DDPM}, we obtimize the following objective, 
\begin{align}
    \E_{u \sim \mu, \eta \sim \mu_0, t \sim U(0,1)} \|F_\theta \big( \alpha_t u + \sigma_t \eta , t \big ) - \eta \|^2.
\end{align}
In the following sections, we will continue on the trace-class noises used in the experiments and other details.

\paragraph{GRFs with $p$-th Order Exponential Covariance Functions}
We observe that the models often generate artifacts with the blurring diffusions when the noise process is non-periodic Gaussian measures (\secref{subsec:gaussian_example}), as the Gaussian smoothing operator is periodic. Thus, we use periodic Gaussian measures with $p$-th order exponential covariance function, whose kernel function is written as,
\begin{align}
\label{eq:pth-exp-kernel}
    k(x, x') = \sigma \exp \left( - \left\vert \frac{x-x'}{l} \right\vert^p \right),
\end{align}
where $l$ is a length scale, $p$ is order, and $\sigma$ is a magnitude. Note that the RBF kernel used in the volcano experiments is a special case of \eqref{eq:pth-exp-kernel}

As the data is assumed to be sampled on a lattice in $\mathbb{R}^d$, we employ the convolution construction of the Gaussian measures for efficient sampling \citep{higdon2002space}. For a given exponential kernel $k$, the corresponding convolution kernel $c$ is a function of the absolute difference $x$ and is written as
\[
c(x) = \mathcal{F}^{-1}\left[ \sqrt{\mathcal{F}[k]} \right](x)
\]
where $\mathcal{F}$ and $\mathcal{F}^{-1}$ are Fourier and inverse Fourier transform, respectively. We denote the closed form $\mathcal{F}(k)(\omega)$ for $p$-th order exponential kernel as $\phi_p(\omega)$, and $\phi_p$ is written as \[
\phi_p(\omega) = \begin{cases}
    \frac{2}{1 + \omega^2}, & p = 1, \\
    2 \pi c_p \frac{p |\omega|^{\frac{1}{1-p}}}{2 |p-1|} \int_{0}^{1} U_{p}(x) \exp \left( -|\omega|^{\frac{p}{p-1}} U_p(x)\right) dx, & p \in (0, 2] \setminus \{1\}, \\
\end{cases}
\]
where $c_p$ and $U_p$ are defined as\[
c_p = \frac{ p }{ 2^{\frac{p+1}{p}} \Gamma \left( \frac{1}{p}\right) }\quad\,\, \textrm{and}\quad\,\,
U_p(x) = \left( \frac{\sin\left( \frac{\pi x p }{2}\right)}{\cos \left( \frac{\pi x}{2} \right)} \right)^{\frac{p}{1-p}} \frac{ \cos \left( \frac{\pi x (p-1) }{2} \right) }{ \cos \left( \frac{\pi x }{2} \right) },
\]
and $\Gamma$ is the Gamma function.
The detailed information about $\phi_p$, including its derivation, is described in \citet{dytso2018analytical}.

% \begin{table}[!htb]
% \centering
% \begin{tabular}{cccccccccc}
% \toprule
%          & \multicolumn{3}{c}{$64\times64$} & \multicolumn{3}{c}{$128\times128$} & \multicolumn{3}{c}{$256\times256$} \\
% \midrule
%          % & \textbf{FID}   & \textbf{Precision}      & \textbf{Recall}  & \textbf{FID}     & \textbf{Precision}      & \textbf{Recall}  & \textbf{FID}      & \textbf{Precision}    & \textbf{Recall}   \\
%          & FID   & Precision      & Recall  & FID     & Precision      & Recall  & FID      & Precision    & Recall   \\
% \midrule
% GANO-UNO & 3.41  & \textbf{0.75}  & 0.63    & 13.05   & \textbf{0.68}  & 0.50    & 23.89    & 0.60         & 0.32    \\
% DDO-UNet (Ours) & \textbf{3.36} & 0.72 & \textbf{0.67} & \textbf{11.30} & \textbf{0.68} & \textbf{0.57} & \textbf{21.90} & \textbf{0.62} & \textbf{0.35} \\
% \bottomrule
% \end{tabular}
% \caption{\textbf{MNIST-SDF.} \,\,\textcolor{red}{description.}}
% \captionsetup{justification=centering}
% \label{table:mnist-sdf-results}
% \end{table}

\paragraph{Architecture and Training Details}
This section provides a detailed description of the experimental configuration used in our study. 
Before elaborating on the hyperparameters, we introduce a neural operator in the image experiments. We modify the UNet network architectures \citep{ronneberger2015u}, which has been widely adopted in the context of diffusion-based generative models \citep{ho2020denoising, song2020score, nichol2021improved} and elaborated with regards to neural operators~\citep{rahman2022generative, rahman2023u}. We adopt the Improved Denoising Diffusion Probabilistic Models (IDDPM, \citealt{nichol2021improved}), whose time-conditional modulation uses elementwise affine transformations (shift and scale) instead of shifting-only modulation introduced in \cite{ho2020denoising}. To do that, we introduce four major modifications. First, we replace all regular convolutional layers with spectral convolutions. 
Second, we apply group normalizations on Fourier spaces instead of Euclidean spaces. Here, the normalization coefficients will be computed using the first $k$ modes in Fourier spaces, but the coefficient will be applied to all modes; thus, the normalization works in a resolution-invariant manner. 
Third, we deprecate the self-attention layers and dropouts.
Moreover, for the downsample and upsample operations, which are one of the key components of the UNet-like hierarchical network architectures, we adopt filtered downsample and upsample algorithms discussed in \citet{karras2021alias}.
Therefore, we conclude that the resulting deep learning architecture is a valid neural operator as it is a combination of convolutional operators, normalizations defined in Fourier spaces, and point-wise transformations.
% Note that there has been multiple adaptations of UNet to neural operators, such as U-shaped neural operators (U-NO, \citet{rahman2023u}); however, we find it that such network architectures are not best fit to the current tasks.

Specifically for the MNIST-SDF experiment, we set the number of base channels to 64 and use the three-stage model. The channel multipliers for each stage are set to 1, 2, and 2, respectively. For each stage, four residual blocks are used. At the first stage, all spectral convolution layers comprise 32 modes, and we halve the modes as the stage increases.

We chose the cosine noise scheduling following IDDPM with a variance-preserving form for the noise schedule.
We follow the sine schedule introduced in \citet{hoogeboom2022blurring} for the blurring schedule. We perform early stopping based on the FID of 5000 generated samples relative to a subset of the training data. We use the exponential moving average (EMA) technique for the evaluation and test; we set the EMA rate to be 0.999. We describe the hyperparameter details in the Table \ref{table:mnist-sdf-exp}.

\begin{updaterequired}[black]
\paragraph{Baseline Models}

For GANO \citep{rahman2022generative}, we follow the methodology outlined in its paper and codebase,\hspace{-0.1em}\footnote{\href{https://github.com/neuraloperator/GANO}{https://github.com/neuraloperator/GANO}} with specific configurations adapted to the study. In particular, the input random field provided to the generator is the Gaussian Random Field (GRF) described in \eqref{eq:example_C1}, with parameter settings of $\sigma_1 = 1$, $\alpha_1 = 1.5$, and $\tau_1 = 1.0$. For the generator utilizing the U-NO architecture \citep{rahman2023u}, the number of modes is set to 32 (with half-modes set to 16), and the number of channels is set to 64.

Training of GANO is conducted using a batch size of 32, with a gradient penalty parameter $\lambda = 10.0$, to stabilize the adversarial learning process. The learning rate is set to 0.0001, and optimization is performed using the default ADAM optimizer, consistent with the configuration in DDO. For evaluation, the Exponential Moving Average (EMA) technique from DDO's experimental framework is employed.

The experiments involving MultilevelDiff are conducted using the official codebase available at its repository.\hspace{-0.1em}\footnote{\href{https://github.com/PaulLyonel/multilevelDiff/commit/9bddd34abcf26591532a125f79ff420807af9c72}{https://github.com/PaulLyonel/multilevelDiff}} For the score operator, the Fourier Neural Operator (FNO) is employed with a configuration of 32 modes (half-modes set to 16) and 256 channels. The noise perturbation is applied with the prior, which combines a spectral convolution-based noise as well as a random field with a fixed kernel. Further details about the hyperparameters used can be found in Table 1 of \citet{hagemann2023multilevel}.

The training process of MultilevelDiff follows the same approach as in DDO to ensure consistency with established methodologies. For evaluation, the Exponential Moving Average (EMA) technique is not used. Instead, evaluations are performed directly without applying EMA, yielding results that remain robust despite this omission.

\begin{table}%[H]
\small
\caption{Training details of DDO for the MNIST-SDF experiments}
\label{table:mnist-sdf-exp}
\centering
\begin{tabular}{lll}
\toprule
\textbf{Architecture} & Base channels             & 64  \\
                           & \# of ResBlocks per stage & 4  \\
                           & Channel multiplier        & 1,2,2  \\
                           & \# of modes per stage     & 32, 16, 8  \\
                           & Activation function       & GeLU$^\ddagger$  \\ %Swish$^\ddagger$ \\
                           &\# of params      & 258M  \\
\midrule
\textbf{Diffusion}   & Noise schedule            & Cosine  \\
                           & $\log( \alpha_0^2 / \sigma_0^2 )$ & 10  \\
                           & $\log( \alpha_1^2 / \sigma_1^2 )$ & -10  \\
\midrule
\textbf{Blurring}    & Blurring schedule         & Sine  \\
                           & $d_0$$^{\mathsection}$   & 0.05  \\
                           & $d_1$                    & 0.25  \\
\midrule
\textbf{Learning}    & Optimizer                 & Adam, $\beta_1$=0.9, $\beta_2$=0.999  \\
                           & Learning rate             & 0.0001  \\
                           & Batch size                & 32  \\
                           & \# of iterations          & 2M  \\
\midrule
\textbf{Sampling}    & \# of steps               & 250  \\
\midrule
\textbf{GRFs}    & Length scale & 0.05  \\
                 & Magnitude    & 1 \\
                 & Order        & 2 \\
\bottomrule
\multicolumn{3}{l}{\scriptsize $^\mathsection$$d_0$ and $d_1$ are frequency scalings at $t=0$ and $d_1$, respectively.} \\
% \multicolumn{3}{l}{\scriptsize $^\ddagger$Swish function \citep{ramachandran2017searching}.}
\multicolumn{3}{l}{\scriptsize $^\ddagger$Gaussian Error Linear Units function\citep{hendrycks2016gelu}.}
\end{tabular}
\end{table}

\end{updaterequired}

\begin{updaterequired}[black]
\subsection{Darcy Flow Bayesian Inverse Problem}
\label{app:darcyflow}

In the Darcy Flow Bayesian Inverse Problem, we follow the methodology outlined in \appref{app:mnist-sdf}, with adjustments made only to specific hyperparameters to suit the dataset better. Note that the data in this experiment is presented at a $64\times64$ resolution, while the MNIST-SDF experiment uses a $32\times32$ resolution for training. To condition on the $8\times8$-size observation using the neural operators, we upsample the observation to $64\times64$ and concatenate it with the original input to the operators.

In GANO, the input Gaussian Random Field (GRF) is configured with $\sigma_1 = 1$, $\alpha_1 = 1.0$, and $\tau_1 = 1.0$. Additionally, only modes up to 64 (with respect to DCT) are utilized, and thus the model will ignore any higher modes when it is asked to generate at higher resolutions. For the U-NO architecture in the generator, the number of modes is set to 64 (with half-modes set to 32), and the number of channels is configured to 64. Although experiments with increased channel numbers were tested, this configuration yields the best results.

For MultilevelDiff, the Fourier Neural Operator (FNO) is used with a configuration of 64 modes (half-modes set to 32) and 128 channels. For modifications to DDO, Table \ref{table:darcyflow-exp} provides a comprehensive overview of the changes applied.

As mentioned in Section \ref{subsec:darcyflow}, we also present the generated samples at $128\times128$ and $256\times256$ resolutions in Figures \ref{fig:darcyflow-samples-128x128} and \ref{fig:darcyflow-samples-256x256}, respectively.

\begin{table}%[H]
\small
\caption{Training details of DDO for Bayesian Inverse Problems}
\label{table:darcyflow-exp}
\centering
\begin{tabular}{lll}
\toprule
\textbf{Architecture} & Base channels             & 32  \\
                        & \# of ResBlocks per stage & 4  \\
                        & Channel multiplier        & 1,2,2  \\
                        & \# of modes per stage     & 48, 32, 16  \\
                        & Activation function       & Swish$^\ddagger$ \\
                        &\# of params      & 139M  \\
\midrule
\textbf{Diffusion}   & Noise schedule            & Cosine  \\
                           & $\log( \alpha_0^2 / \sigma_0^2 )$ & 10  \\
                           & $\log( \alpha_1^2 / \sigma_1^2 )$ & -10  \\
\midrule
\textbf{Blurring}    & Blurring schedule         & Sine  \\
                           & $d_0$$^{\mathsection}$   & 0.01  \\
                           & $d_1$                    & 0.05  \\
\midrule
\textbf{Learning}    & Optimizer                 & Adam, $\beta_1$=0.9, $\beta_2$=0.999  \\
                           & Learning rate             & 0.00001  \\
                           & Batch size                & 16  \\
                           & \# of iterations          & 300,000 \\
\midrule
\textbf{Sampling}    & \# of steps               & 500  \\
\midrule
\textbf{GRFs}    & Length scale & 0.02  \\
                 & Magnitude    & 1 \\
                 & Order        & 2 \\
\bottomrule
\multicolumn{3}{l}{\scriptsize $^\mathsection$$d_0$ and $d_1$ are frequency scalings at $t=0$ and $d_1$, respectively.} \\
\multicolumn{3}{l}{\scriptsize $^\ddagger$Swish function \citep{ramachandran2017searching}.}
% \multicolumn{3}{l}{\scriptsize $^\ddagger$Gaussian Error Linear Units function\citep{hendrycks2016gelu}.}
\end{tabular}
\end{table}

% \begin{figure}[H]
\begin{figure}[htb!]
\captionsetup{skip=8pt}
\centering
\begin{subfigure}[b]{0.92\textwidth}
    \centering
    \begin{minipage}[b]{0.029\textwidth}
        \centering
        \includegraphics[width=\textwidth, trim={0 2.5cm 0 3cm}, clip=true]{figs/darcyflow/mcmc.png}
    \end{minipage}
    \begin{minipage}[b]{0.95\textwidth}
        \centering
        \includegraphics[width=\textwidth]{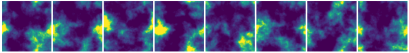}
    \end{minipage}
    % \begin{minipage}[b]{0.021\textwidth}
    %     \centering
    %     \includegraphics[width=\textwidth, trim={9.35cm 0 0 0}, clip=true]{figs/darcyflow/darcy_mcmc_mean_64x64_colorbar.pdf}
    % \end{minipage}

    \begin{minipage}[b]{0.029\textwidth}
        \centering
        \includegraphics[width=\textwidth, trim={0 3.2cm 0 2.3cm}, clip=true]{figs/darcyflow/gano.png}
    \end{minipage}
    \begin{minipage}[b]{0.95\textwidth}
        \centering
        \includegraphics[width=\textwidth]{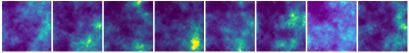}
    \end{minipage}
    % \begin{minipage}[b]{0.021\textwidth}
    %     \centering
    %     \includegraphics[width=\textwidth, trim={9.35cm 0 0 0}, clip=true]{figs/darcyflow/darcy_mcmc_mean_64x64_colorbar.pdf}
    % \end{minipage}

    \begin{minipage}[b]{0.029\textwidth}
        \centering
        \includegraphics[width=0.9\textwidth, trim={0 2.5cm 0 1.6cm}, clip=true, bmargin=0.0mm]{figs/darcyflow/multileveldiff.png}
    \end{minipage}
    \begin{minipage}[b]{0.95\textwidth}
        \centering
        \includegraphics[width=\textwidth]{figs/darcyflow/darcy_multileveldiff_samples_64x64.pdf}
    \end{minipage}
    % \begin{minipage}[b]{0.021\textwidth}
    %     \centering
    %     \includegraphics[width=\textwidth, trim={9.35cm 0 0 0}, clip=true]{figs/darcyflow/darcy_mcmc_mean_64x64_colorbar.pdf}
    % \end{minipage}

    \begin{minipage}[b]{0.029\textwidth}
        \centering
        \includegraphics[width=\textwidth, trim={0 3.0cm 0 3.0cm}, clip=true]{figs/darcyflow/ddo.png}
    \end{minipage}
    \begin{minipage}[b]{0.95\textwidth}
        \centering
        \includegraphics[width=\textwidth]{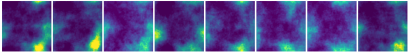}
    \end{minipage}
    % \begin{minipage}[b]{0.021\textwidth}
    %     \centering
    %     \includegraphics[width=\textwidth, trim={9.35cm 0 0 0}, clip=true]{figs/darcyflow/darcy_mcmc_mean_64x64_colorbar.pdf}
    % \end{minipage}
\end{subfigure}
\begin{subfigure}[b]{0.063\textwidth}
    % \captionsetup{skip=0pt}
    % \centering
    \includegraphics[width=\textwidth]{figs/darcyflow/colorbar_mean_2.png}
    % \includegraphics[width=\textwidth, trim={9.35cm 0 0 0}, clip=true, bmargin=0.0mm]{figs/darcyflow/darcy_mcmc_mean_64x64_colorbar_test.pdf}
    % \caption*{\scriptsize \textcolor{white}{a a a a a a a a a a a a a}}
\end{subfigure}
\caption{\textbf{Posterior Samples at 128$\times$128 resolution} (\secref{subsec:darcyflow}): The samples from MCMC as well as the learned GANO, MultilevelDiff, and DDO models at 128$\times$128 resolution. The models are trained at 64$\times$64 resolution. 
}
\label{fig:darcyflow-samples-128x128}
\end{figure}

% \begin{figure}[H]
\begin{figure}[htb!]
\captionsetup{skip=8pt}
\centering
\begin{subfigure}[b]{0.92\textwidth}
    \centering
    \begin{minipage}[b]{0.029\textwidth}
        \centering
        \includegraphics[width=\textwidth, trim={0 2.5cm 0 3cm}, clip=true]{figs/darcyflow/mcmc.png}
    \end{minipage}
    \begin{minipage}[b]{0.95\textwidth}
        \centering
        \includegraphics[width=\textwidth]{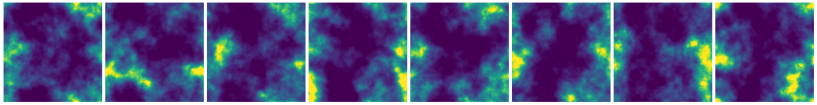}
    \end{minipage}
    % \begin{minipage}[b]{0.021\textwidth}
    %     \centering
    %     \includegraphics[width=\textwidth, trim={9.35cm 0 0 0}, clip=true]{figs/darcyflow/darcy_mcmc_mean_64x64_colorbar.pdf}
    % \end{minipage}

    \begin{minipage}[b]{0.029\textwidth}
        \centering
        \includegraphics[width=\textwidth, trim={0 3.2cm 0 2.3cm}, clip=true]{figs/darcyflow/gano.png}
    \end{minipage}
    \begin{minipage}[b]{0.95\textwidth}
        \centering
        \includegraphics[width=\textwidth]{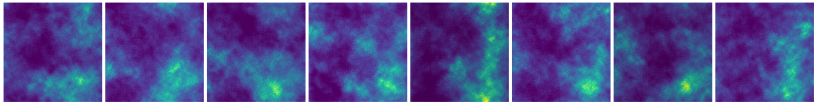}
    \end{minipage}
    % \begin{minipage}[b]{0.021\textwidth}
    %     \centering
    %     \includegraphics[width=\textwidth, trim={9.35cm 0 0 0}, clip=true]{figs/darcyflow/darcy_mcmc_mean_64x64_colorbar.pdf}
    % \end{minipage}

    \begin{minipage}[b]{0.029\textwidth}
        \centering
        \includegraphics[width=0.9\textwidth, trim={0 2.5cm 0 1.6cm}, clip=true, bmargin=0.0mm]{figs/darcyflow/multileveldiff.png}
    \end{minipage}
    \begin{minipage}[b]{0.95\textwidth}
        \centering
        \includegraphics[width=\textwidth]{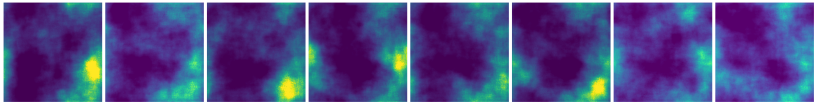}
    \end{minipage}
    % \begin{minipage}[b]{0.021\textwidth}
    %     \centering
    %     \includegraphics[width=\textwidth, trim={9.35cm 0 0 0}, clip=true]{figs/darcyflow/darcy_mcmc_mean_64x64_colorbar.pdf}
    % \end{minipage}

    \begin{minipage}[b]{0.029\textwidth}
        \centering
        \includegraphics[width=\textwidth, trim={0 3.0cm 0 3.0cm}, clip=true]{figs/darcyflow/ddo.png}
    \end{minipage}
    \begin{minipage}[b]{0.95\textwidth}
        \centering
        \includegraphics[width=\textwidth]{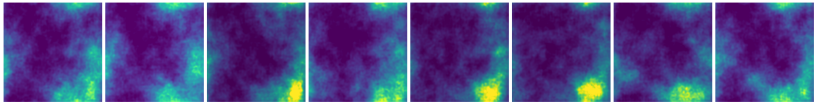}
    \end{minipage}
    % \begin{minipage}[b]{0.021\textwidth}
    %     \centering
    %     \includegraphics[width=\textwidth, trim={9.35cm 0 0 0}, clip=true]{figs/darcyflow/darcy_mcmc_mean_64x64_colorbar.pdf}
    % \end{minipage}
\end{subfigure}
\begin{subfigure}[b]{0.063\textwidth}
    % \captionsetup{skip=0pt}
    % \centering
    \includegraphics[width=\textwidth]{figs/darcyflow/colorbar_mean_2.png}
    % \includegraphics[width=\textwidth, trim={9.35cm 0 0 0}, clip=true, bmargin=0.0mm]{figs/darcyflow/darcy_mcmc_mean_64x64_colorbar_test.pdf}
    % \caption*{\scriptsize \textcolor{white}{a a a a a a a a a a a a a}}
\end{subfigure}
\caption{\textbf{Posterior Samples at 256$\times$256 resolution} (\secref{subsec:darcyflow}): The samples from MCMC as well as the learned GANO, MultilevelDiff, and DDO models at 256$\times$256 resolution. The models are trained at 64$\times$64 resolution. 
}
\label{fig:darcyflow-samples-256x256}
\end{figure}
\end{updaterequired}

\begin{updaterequired}[black]
\section{Neural Operators and Alias-free Models}
% \section{Towards alias-free models}
% \section{Alias-free neural operators}
\label{app:alias-free}
% \begin{updaterequired}[red]
% \subsection{Alias-free models}
% \subsection{Alias-free neural operators}
% \label{subsec:alias-free}

This paper has conducted various experiments to evaluate the efficacy of DDO. In particular, GANO and MultilevelDiff were chosen as baselines. Except for a few cases, DDO mostly outperformed these methods, demonstrating the effectiveness of the proposed approach.

However, it is concerning that some notable methods, which have received significant attention in the field and propose function-valued diffusion models, were excluded from the baselines. For example, $\infty$-Diff \citep{bond2024infty} successfully modeled high-fidelity image datasets such as CelebAHQ \citep{karras2018progressive} and FFHQ \citep{karras2019style}, yet their method was left out of the baselines (see also \secref{subsec:mnist-sdf}). Briefly speaking, this method proposes a discrete-time latent diffusion model on function spaces, where the latent space is also a function space akin to the data space, and encoder-decoder structure maps between those two spaces; thus, function-valued diffusion models will model the latent distribution mapped from the data. Such a framework, including an encoder-decoder structure tailored for high-fidelity image modeling, makes direct comparison with DDO less fair. 

To address these concerns and provide a meaningful comparison that includes DDO and related methods like $\infty$-Diff, we designed a simplified toy experiment. For this final experiment, we conducted a straightforward comparison to analyze the resulting differences in generated images from variations in neural operator design. Each model was given a set of two-dimensional coordinate pairs and trained to predict the RGB values of pixels at these coordinates by minimizing the mean squared error with respect to the true RGB values. As will be discussed below, the design of the neural operator plays a critical role in function-valued generative models. Therefore, this experimental setup abstracts away the influence of the application-specific frameworks, aiming to highlight the relative strengths and limitations of each neural operator. 

\paragraph{Neural Operators in Function-valued Deep Generative Models}
In deep generative modeling, defining the random variable for the generation and training of the parametric model is crucial. However, the design of the parametric model is also a very important factor. This is particularly true for function-valued deep generative models as compared to the finite-dimensional case because, in function space models, the parametric model---often referred to as a neural operator---must not only enhance expressivity but also satisfy a special property known as discretization invariance. The importance of discretization invariance lies in the fact that, while function-valued objects are theoretically handled, actual observations are ultimately discretizations of these functions.

Consequently, the design choices for the neural operator determine the performance characteristics of the parametric model on function space and the trade-offs involved. For example, spectral convolution, which leverages the Fourier series, theoretically and practically guarantees discretization invariance, but it is often susceptible to aliasing issues, such as ringing artifacts. On the other hand, continuous convolution-based methods may be more robust against aliasing but tend to overfit to specific discretizations, necessitating some form of regularization during training. Ultimately, the difference in the parametric model can have a more pronounced effect than the differences between generative models themselves.

For these reasons, conducting experiments to compare neural operators is both valuable and insightful. By doing so, we aim to provide a clearer understanding of the relative strengths and limitations of our proposed method in comparison to these baselines. 

\paragraph{Experiment details}
In the experiment, each model was learned to predict the RGB values of pixels at specified two-dimensional coordinates by minimizing the mean squared error relative to the true RGB values. 

\begin{figure}[htb!]
\captionsetup{skip=4pt}
    \centering
    \begin{minipage}[b]{0.02\textwidth}
        \centering
        \includegraphics[width=\textwidth]{figs/background.png}
    \end{minipage}
    \begin{minipage}[b]{0.05\textwidth}
        \centering
        \includegraphics[width=\textwidth]{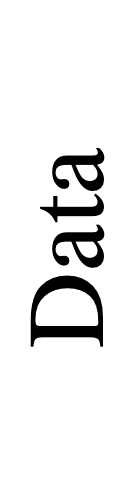}
    \end{minipage}
    \begin{minipage}[b]{0.20\textwidth}
        \centering
        \includegraphics[width=\textwidth]{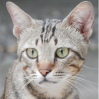}
    \end{minipage}
    \hfill
    \begin{minipage}[b]{0.20\textwidth}
        \centering
        \includegraphics[width=\textwidth]{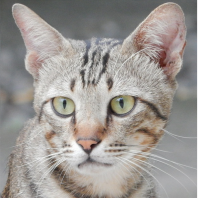}
    \end{minipage}
    \hfill
    \begin{minipage}[b]{0.20\textwidth}
        \centering
        \includegraphics[width=\textwidth]{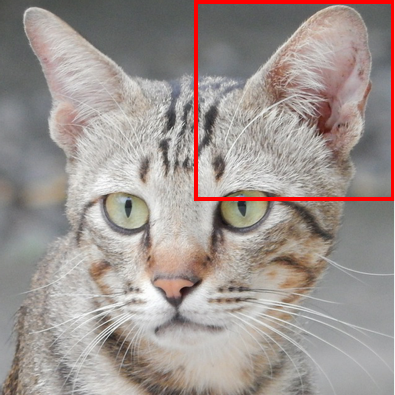}
    \end{minipage}
    \hfill
    \begin{minipage}[b]{0.20\textwidth}
        \centering
        \includegraphics[width=\textwidth]{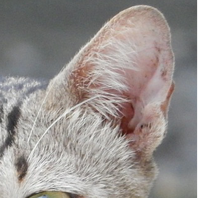}
    \end{minipage}
    \hfill
    \begin{minipage}[b]{0.02\textwidth}
        \centering
        \includegraphics[width=\textwidth]{figs/background.png}
    \end{minipage}

    \begin{minipage}[b]{0.02\textwidth}
        \centering
        \includegraphics[width=\textwidth]{figs/background.png}
    \end{minipage}
    \begin{minipage}[b]{0.05\textwidth}
        \centering
        \includegraphics[width=\textwidth]{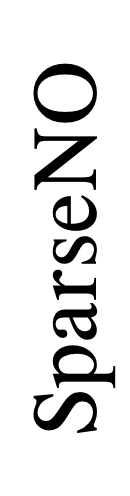}
    \end{minipage}
    \begin{minipage}[b]{0.20\textwidth}
        \centering
        \includegraphics[width=\textwidth]{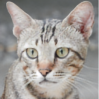}
    \end{minipage}
    \hfill
    \begin{minipage}[b]{0.20\textwidth}
        \centering
        \includegraphics[width=\textwidth]{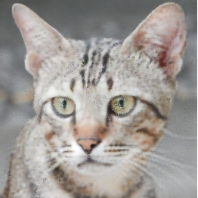}
    \end{minipage}
    \hfill
    \begin{minipage}[b]{0.20\textwidth}
        \centering
        \includegraphics[width=\textwidth]{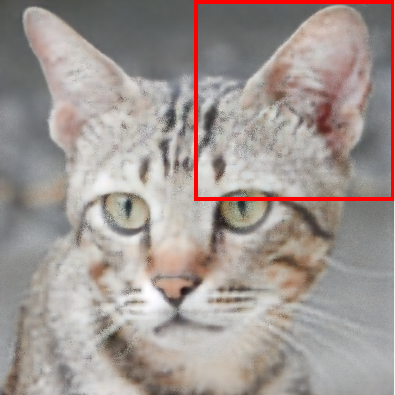}
    \end{minipage}
    \hfill
    \begin{minipage}[b]{0.20\textwidth}
        \centering
        \includegraphics[width=\textwidth]{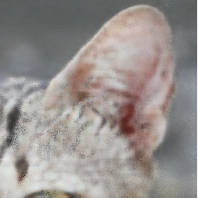}
    \end{minipage}
    \hfill
    \begin{minipage}[b]{0.02\textwidth}
        \centering
        \includegraphics[width=\textwidth]{figs/background.png}
    \end{minipage}

    \begin{minipage}[b]{0.02\textwidth}
        \centering
        \includegraphics[width=\textwidth]{figs/background.png}
    \end{minipage}
    \begin{minipage}[b]{0.05\textwidth}
        \centering
        \includegraphics[width=\textwidth]{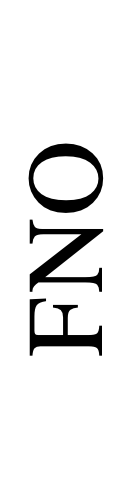}
    \end{minipage}
    \begin{minipage}[b]{0.20\textwidth}
        \centering
        \includegraphics[width=\textwidth]{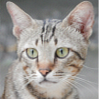}
    \end{minipage}
    \hfill
    \begin{minipage}[b]{0.20\textwidth}
        \centering
        \includegraphics[width=\textwidth]{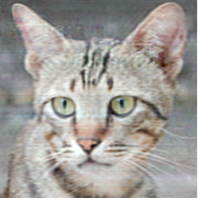}
    \end{minipage}
    \hfill
    \begin{minipage}[b]{0.20\textwidth}
        \centering
        \includegraphics[width=\textwidth]{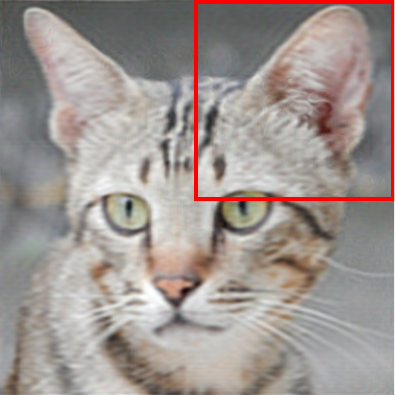}
    \end{minipage}
    \hfill
    \begin{minipage}[b]{0.20\textwidth}
        \centering
        \includegraphics[width=\textwidth]{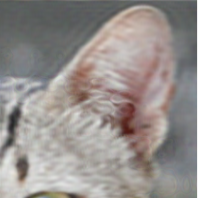}
    \end{minipage}
    \hfill
    \begin{minipage}[b]{0.02\textwidth}
        \centering
        \includegraphics[width=\textwidth]{figs/background.png}
    \end{minipage}

    \begin{minipage}[b]{0.02\textwidth}
        \centering
        \includegraphics[width=\textwidth]{figs/background.png}
    \end{minipage}
    \begin{minipage}[b]{0.05\textwidth}
        \centering
        \includegraphics[width=\textwidth]{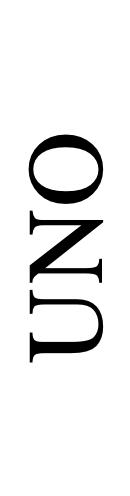}
    \end{minipage}
    \begin{minipage}[b]{0.20\textwidth}
        \centering
        \includegraphics[width=\textwidth]{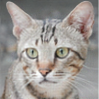}
    \end{minipage}
    \hfill
    \begin{minipage}[b]{0.20\textwidth}
        \centering
        \includegraphics[width=\textwidth]{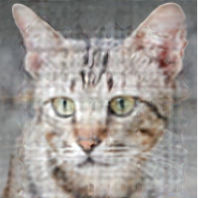}
    \end{minipage}
    \hfill
    \begin{minipage}[b]{0.20\textwidth}
        \centering
        \includegraphics[width=\textwidth]{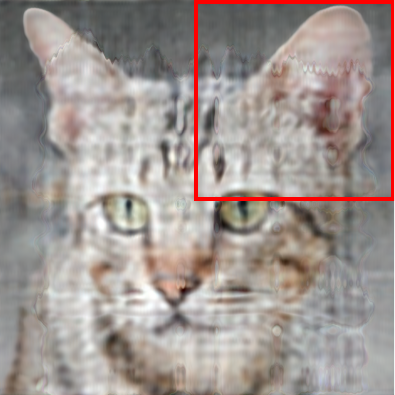}
    \end{minipage}
    \hfill
    \begin{minipage}[b]{0.20\textwidth}
        \centering
        \includegraphics[width=\textwidth]{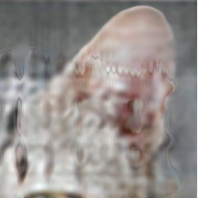}
    \end{minipage}
    \hfill
    \begin{minipage}[b]{0.02\textwidth}
        \centering
        \includegraphics[width=\textwidth]{figs/background.png}
    \end{minipage}

    \begin{minipage}[b]{0.02\textwidth}
        \centering
        \includegraphics[width=\textwidth]{figs/background.png}
    \end{minipage}
    \begin{minipage}[b]{0.05\textwidth}
        \centering
        \includegraphics[width=\textwidth]{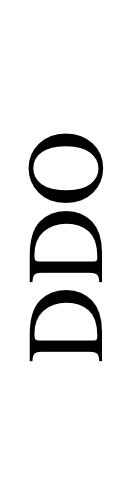}
    \end{minipage}
    \begin{minipage}[b]{0.20\textwidth}
        \centering
        \includegraphics[width=\textwidth]{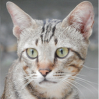}
    \end{minipage}
    \hfill
    \begin{minipage}[b]{0.20\textwidth}
        \centering
        \includegraphics[width=\textwidth]{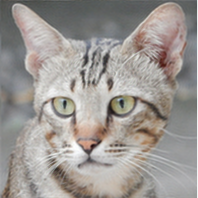}
    \end{minipage}
    \hfill
    \begin{minipage}[b]{0.20\textwidth}
        \centering
        \includegraphics[width=\textwidth]{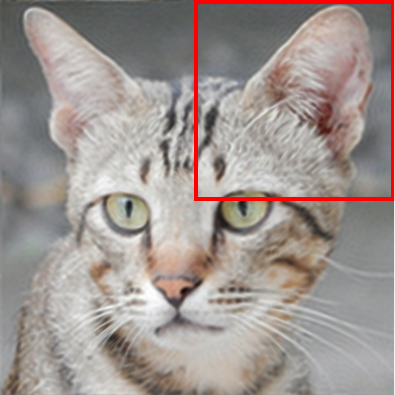}
    \end{minipage}
    \hfill
    \begin{minipage}[b]{0.20\textwidth}
        \centering
        \includegraphics[width=\textwidth]{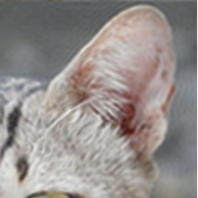}
    \end{minipage}
    \hfill
    \begin{minipage}[b]{0.02\textwidth}
        \centering
        \includegraphics[width=\textwidth]{figs/background.png}
    \end{minipage}

    \begin{minipage}[b]{0.02\textwidth}
    \captionsetup{skip=2pt}
        \centering
        \includegraphics[width=\textwidth]{figs/background.png}
        \caption*{\textcolor{white}{()}}
    \end{minipage}
    \begin{minipage}[b]{0.05\textwidth}
    \captionsetup{skip=2pt}
        \centering
        \includegraphics[width=\textwidth]{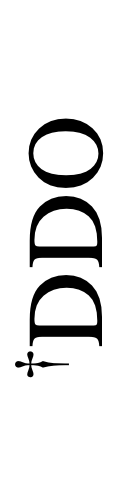}
        \caption*{\textcolor{white}{()}}
    \end{minipage}
    \begin{minipage}[b]{0.20\textwidth}
    \captionsetup{skip=2pt}
        \centering
        \includegraphics[width=\textwidth]{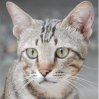}
        \caption*{(a)$\,256\times256$$\,$}
    \end{minipage}
    \hfill
    \begin{minipage}[b]{0.20\textwidth}
    \captionsetup{skip=2pt}
        \centering
        \includegraphics[width=\textwidth]{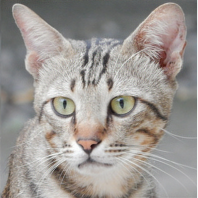}
        \caption*{(b)$\,512\times512$$\,\,$}
    \end{minipage}
    \hfill
    \begin{minipage}[b]{0.20\textwidth}
    \captionsetup{skip=2pt}
        \centering
        \includegraphics[width=\textwidth]{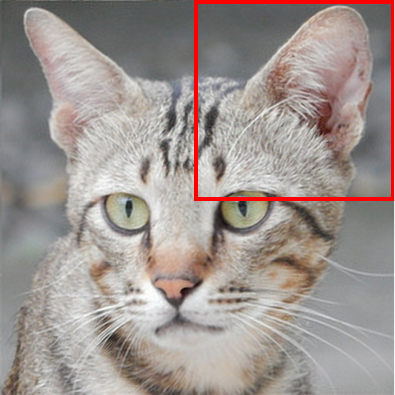}
        \caption*{(c)$\,1024\times1024$$\,$}
    \end{minipage}
    \hfill
    \begin{minipage}[b]{0.20\textwidth}
    \captionsetup{skip=2pt}
        \centering
        \includegraphics[width=\textwidth]{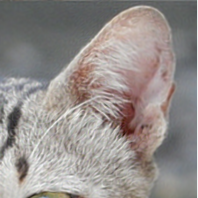}
        \caption*{(d) Zoomed\textcolor{white}{()}}
    \end{minipage}
    \hfill
    \begin{minipage}[b]{0.02\textwidth}
    \captionsetup{skip=2pt}
        \centering
        \includegraphics[width=\textwidth]{figs/background.png}
        \caption*{\textcolor{white}{()}}
    \end{minipage}

    \caption{\textbf{Comparison of neural operators} (\secref{app:alias-free}):
    Each network architecture is trained to predict a image for given coordinate values. All models are trained at 256$\times$256-resolution. Predicted images at various resolutions (a–c) are illustrated, and the red rectangular regions in (c) are enlarged and shown in (d). The number of parameters for all models is kept below 3 million (except for $^\dagger$DDO, which uses 10 times more parameters).
    }
    \label{fig:fitting-samples}
\end{figure}

We used an image from the AFHQ animal faces dataset \citep{choi2020stargan} for the experiment. All models were trained at a resolution of 256$\times$256 and tested at various resolutions. For this, we compared the UNO used in GANO \citep{rahman2022generative}, FNO used in Multileveldiff \citep{hagemann2023multilevel}, Sparse Neural Operator (SparseNO) used in \citet{bond2024infty}, as well as a UNO variant used in our DDO model.

We observed that SparseNO requires significantly more computational memory compared to other models. Specifically, it was necessary to limit the size of a SparseNO model to fewer than 3 million parameters to fit within the memory constraints of a single NVIDIA A100 GPU. To ensure a fair comparison, we limited the size of all models to a similar scale (with the number of parameters kept below 3 million) and conducted all experiments using one NVIDIA A100 GPU. Additionally, we included a DDO model, whose size is 10 times larger for reference. %comparable to ones previously used in the MNIST-SDF experiment (\secref{subsec:mnist-sdf}) and the Darcy Flow Bayesian Inverse Problem (\secref{subsec:darcyflow}), for reference. 

\paragraph{Results}
Figure \ref{fig:fitting-samples} shows the predicted images at various resolutions generated by our DDO model and the baseline models. To highlight the characteristics of the neural operators, we enlarge the region within the red rectangle of the predicted images at 1024$\times$1024 resolution in Figure \ref{fig:fitting-samples} (d).

The FNO model performs well at 256$\times$256 and 1024$\times$1024 resolutions. However, it exhibits noticeable visual artifacts at 512$\times$512, and ringing artifacts are observed at 1024$\times$1024 due to its use of spectral convolution. Similarly, the UNO model performs well at 256$\times$256 but displays consistent visual artifacts at higher resolutions, likely caused by its dependence on pointwise operations at every layer. The SparseNO model produces excellent results at the training resolution; however, artifacts appear at higher resolutions. These issues are likely due to the internal kernel resizing method, as SparseNO applies the bicubic interpolation to enlarge its fixed kernel to a higher resolution, which is not invariant to discretization and thus introduces errors.

In contrast, the DDO model produces consistent results across all resolutions. However, like FNO or UNO, it also shows some ringing artifacts at 1024$\times$1024 due to its reliance on spectral convolution. Finally, the larger-scale DDO model demonstrates minimal ringing artifacts, with fine details remaining consistently preserved across resolutions. This better quality suggests that if the neural operator used in DDO can be successfully integrated into $\infty$-Diff instead of SparseNO, it may potentially lead to performance improvements.

In this section, we briefly introduce the characteristics of neural operators employed in function-valued diffusion models. However, discussions surrounding function-valued models extend far beyond these models. Extensive studies on neural operators \citep{li2020neural, li2020fourier, kovachki2021neural, fanaskov2022spectral, bartolucci2024representation} have explored the discretization invariance, leading to advancements in resolution-adaptive network architectures \citep{demeule2024adaptive}. Additionally, alias-free network designs have been extensively discussed in works such as \citet{karras2021alias}. While these topics are highly relevant to the performance of function-valued generative models, a thorough investigation is beyond the scope of this paper and is left for future work.

\end{updaterequired}

\end{document}